\newcommand{\subalign}[1]{%
	\vcenter{%
		\Let@ \restore@math@cr \default@tag
		\baselineskip\fontdimen10 \scriptfont\tw@
		\advance\baselineskip\fontdimen12 \scriptfont\tw@
		\lineskip\thr@@\fontdimen8 \scriptfont\thr@@
		\lineskiplimit\lineskip
		\ialign{\hfil$\m@th\scriptstyle##$&$\m@th\scriptstyle{}##$\hfil\crcr
			#1\crcr
		}%
	}%
}
\DeclareMathOperator*{\argmin}{arg\,min}
\DeclareMathAlphabet{\mathpzc}{OT1}{pzc}{m}{it}
\crefname{hypothesis}{Hypothesis}{Hypotheses}
\title{A Weighted Difference OF Anisotropic AND Isotropic Total Variation for Relaxed Mumford-Shah Multiphase and Color Image Segmentation \thanks{Submitted to the editors DATE.
\funding{The work was partially supported by NSF grants IIS-1632935, DMS-1846690, and DMS-1854434.}}}
\author{Kevin Bui\thanks{Department of Mathematics, University of California at Irvine, Irvine, CA 92697.
  (\email{kevinb3@uci.edu}, \email{jxin@math.uci.edu})}
\and Fredrick Park\thanks{Department of Mathematics \& Computer Science, Whittier College, Whittier, CA 90602
  (\email{fpark@whittier.edu})}
\and Yifei Lou\thanks{Department of Mathematical Science, University of Texas at Dallas, Richardson, TX 75080 (\email{yifei.lou@utdallas.edu})}
\and Jack Xin\footnotemark[2]}
\newcommand*{\addFileDependency}[1]{
  \typeout{(#1)}
  \@addtofilelist{#1}
  \IfFileExists{#1}{}{\typeout{No file #1.}}
}
\begin{document}
	
	\maketitle
	\begin{abstract}
		In a class of piecewise-constant image segmentation models, we propose to incorporate a weighted difference of anisotropic and isotropic total variation (AITV) to regularize the partition boundaries in an image. In particular, we replace the total variation regularization in the Chan-Vese segmentation model and a fuzzy region competition model by the proposed AITV. To deal with the nonconvex nature of AITV, we apply the  difference-of-convex algorithm (DCA), in which the subproblems can be minimized by the primal-dual hybrid gradient method with linesearch. The convergence  of the DCA scheme is analyzed. In addition, a generalization to color image segmentation is  discussed. In the numerical experiments, we compare the proposed models with the classic convex approaches and the two-stage segmentation methods (smoothing and then thresholding) on various images, showing that our  models are  effective in image segmentation and robust with respect to impulsive noises.
	\end{abstract}

	\begin{keywords}
		(multiphase) image segmentation, alternating minimization, total variation, difference of convex algorithm, primal-dual algorithms
	\end{keywords}
	
	\begin{AMS} 49M20, 65D18, 65K10, 68U10, 90C90
	\end{AMS}
	
	\section{Introduction}
	\label{sec:intro}
	Image segmentation is an important problem in computer vision, where the goal is to partition a given image into salient regions that usually represent specific objects of interest.  Each partitioned region has uniform characteristics such as edges, intensities, colors, and textures. Mathematically, given an image $f: \Omega \rightarrow \mathbb{R}$, where the image domain $\Omega$ is a bounded and open subset of $\mathbb{R}^2$, the aim is to partition $\Omega$ into $N$ predetermined number of regions $\{\Omega_i\}_{i=1}^N$ such that $\Omega_i \cap \Omega_j =\emptyset$ for each $i \neq j$ and $\Omega = \bigcup_{i=1}^N \Omega_i$.  
	
	In the past two decades, image segmentation has been studied extensively using variational methods and partial differential equations as common and popular methodologies. One class of models, such as the snake model and geodesic contour model, uses edge-detection functions and evolves the curves toward sharp gradients \cite{caselles1997geodesic, cohen1991active,kass1988snakes, kichenassamy1995gradient}. However, these models are sensitive to noise. 
	As an alternative, region-based models that incorporate region and boundary information are robust to noise. 	
	One of the most fundamental region-based models is the Mumford--Shah (MS) model \cite{mumford1989optimal}, which approximates an image using piecewise-smooth functions. The MS model is formulated as
	\begin{align}
	\label{eq:ms_full}
	\min_{g, \Gamma}  \, \lambda \int_{\Omega}(f(x,y)-g(x,y))^2 \, dx\;dy +
	\mu \int_{\Omega\setminus \Gamma}|\nabla g(x,y)|^2 \, dx \;dy + \left|\Gamma\right|,
	\end{align}
	where $g: \Omega \rightarrow \mathbb{R}$ is a smooth approximation of the given image $f$, $\Gamma = \bigcup_{i=1}^N \partial \Omega_i$ is the union of the boundaries of the regions $\Omega_i$, $|\Gamma|$ denotes the arc length of $\Gamma$, and $\lambda, \mu$ are positive parameters. Unfortunately, solving \eqref{eq:ms_full} is extremely complex and difficult because it requires discretizing the unknown set of edges. 
	
	Instead of piecewise-smooth functions, the Chan--Vese (CV) model  \cite{chan-vese-2001} approximates $f$ by piecewise-constant functions with two constant values $c_1,c_2$ for the regions inside  and outside $\Gamma$. The CV model is expressed as
	\begin{align}
	\label{eq:original_CV}
	\min_{c_1, c_2, \Gamma} \lambda \int_{\text{inside}(\Gamma)}(f(x,y)-c_1)^2 \;dx \;dy +\lambda \int_{\text{outside}(\Gamma)}(f(x,y)-c_2)^2 \;dx \;dy \,+\, |\Gamma|.
	\end{align}
	Note that CV simultaneously optimizes $c_1$ and $c_2$  together with $\Gamma.$
	It is popular   to minimize \eqref{eq:original_CV}  via the level-set method \cite{osher1988fronts}. Let $\phi: \Omega \rightarrow \mathbb{R}$ be a Lipschitz function such that $	\Gamma = \{(x,y) \in \Omega: \phi(x,y) = 0\}$ and
	\begin{align*}
 \text{inside}(\Gamma) = \{(x,y) \in \Omega: \phi(x,y) > 0\}, \; \text{outside}(\Gamma) = \{ (x,y)\in \Omega: \phi(x,y) < 0\}.
	\end{align*}
	We denote the Heaviside function
	\begin{align*}
	H(\phi(x,y)) = \begin{cases}
	1 & \text{ if } \phi(x,y) \geq 0, \\
	0 & \text{ if } \phi(x,y) < 0.
	\end{cases}
	\end{align*}
	The level-set reformulation of \eqref{eq:original_CV} is
	\begin{gather}
	\begin{aligned} \label{eq:level_set_CV}
	\min_{\phi, c_1, c_2} &\lambda \int_{\Omega}(f(x,y)-c_1)^2 H(\phi(x,y)) + (f(x,y)-c_2)^2  (1- H(\phi(x,y))) \;dx \;dy\\  &+ \int_{\Omega} |\nabla H(\phi(x,y))| \;dx \;dy.
	\end{aligned}
	\end{gather}
	
	A numerical scheme for \eqref{eq:level_set_CV} requires solving the Euler--Lagrange equation for $\phi$, followed by updating $c_1,c_2$ as average intensities inside and outside of $\Gamma$, respectively; please see \cite{chan-vese-2001,getreuer2012chan} for details.
	Lie et al.~\cite{lie2006binary} introduced a binary level-set formulations of the MS model. Esedoglu and Tsai \cite{esedog2006threshold} later developed a more efficient algorithm using the Merriman--Bence--Osher scheme \cite{merriman1994motion}. Chan et al.~\cite{chan-esedoglu-nikolova-2004} proposed a convex relaxation of the CV model, formulated as
	\begin{gather}
	\begin{aligned} \label{eq:convex_CV}
	\min_{u(x,y) \in [0,1], c_1, c_2} &\lambda \int_{\Omega} (f(x,y) - c_1)^2 u(x,y) + (f(x,y)-c_2)^2(1-u(x,y)) \;dx \;dy\\ &+ \int_{\Omega} |\nabla u(x,y)| \; dx \;dy.
	\end{aligned}
	\end{gather}
	The segmented regions can be defined by thresholding $u$ as follows:
	\begin{align*}
	\text{inside}(\Gamma) = \{(x,y) \in \Omega: u(x,y) > \tau\}, \quad\text{outside}(\Gamma) = \{(x,y) \in \Omega: u(x,y) \leq \tau \},
	\end{align*}
	with a chosen constant $\tau \in [0,1]$. Since the objective function in \eqref{eq:convex_CV} is convex with respect to $u$, it can be minimized using popular convex optimization algorithms, such as split Bregman \cite{goldstein2009split}, alternating direction method of multipliers (ADMM) \cite{boyd2011distributed, gabay1983chapter}, and primal-dual hybrid gradient (PDHG) \cite{chambolle-pock-2011,esser2010general}. As a result, \eqref{eq:convex_CV} inspired various segmentation models  \cite{bae2011global,chambolle-cremers-2012,jung2014variational,lellmann2009convex,pock2008convex,yuan2010continuous, yuan2017bregman,zach2008fast} that can be solved by convex optimization.
	
		In \eqref{eq:convex_CV}, the total variation (TV) term $\|\nabla u\|_1 = \int_{\Omega} |\nabla u(x,y)| \;dx \;dy$ approximates the length of the curves that partition the segmented regions. Furthermore, it is the tightest convex relaxation of the jump term $\|\nabla u\|_0$, which counts the number of jump discontinuities. When $u$ is piecewise constant, $\|\nabla u\|_0$ is exactly the total arc length of the curves \cite{storath2015joint}. Unfortunately,  minimizing $\|\nabla u\|_0$ is an NP-hard combinatorial problem, and it is often replaced by $\|\nabla u\|_1$ that is algorithmically and theoretically easier to work with. Numerically, $\|\nabla u\|_1$ can be approximated isotropically \cite{rof-1992} or  anisotropically  \cite{choksi-2011,esedoglu2004decomposition}:
	\begin{align}
	J_{\text{iso}}(u) &= \int_{\Omega} \sqrt{|D_x u(x,y)|^2 + |D_y u(x,y)|^2} \;dx \;dy, \\
	J_{\text{ani}}(u) &=  \int_{\Omega} |D_x u(x,y)| + |D_y u(x,y)| \;dx \;dy,
	\end{align}
	where $D_x$ and $D_y$ denote the horizontal and vertical partial derivative operators, respectively. 
	
		\begin{figure}
		\centering
			\includegraphics[width=0.75\textwidth]{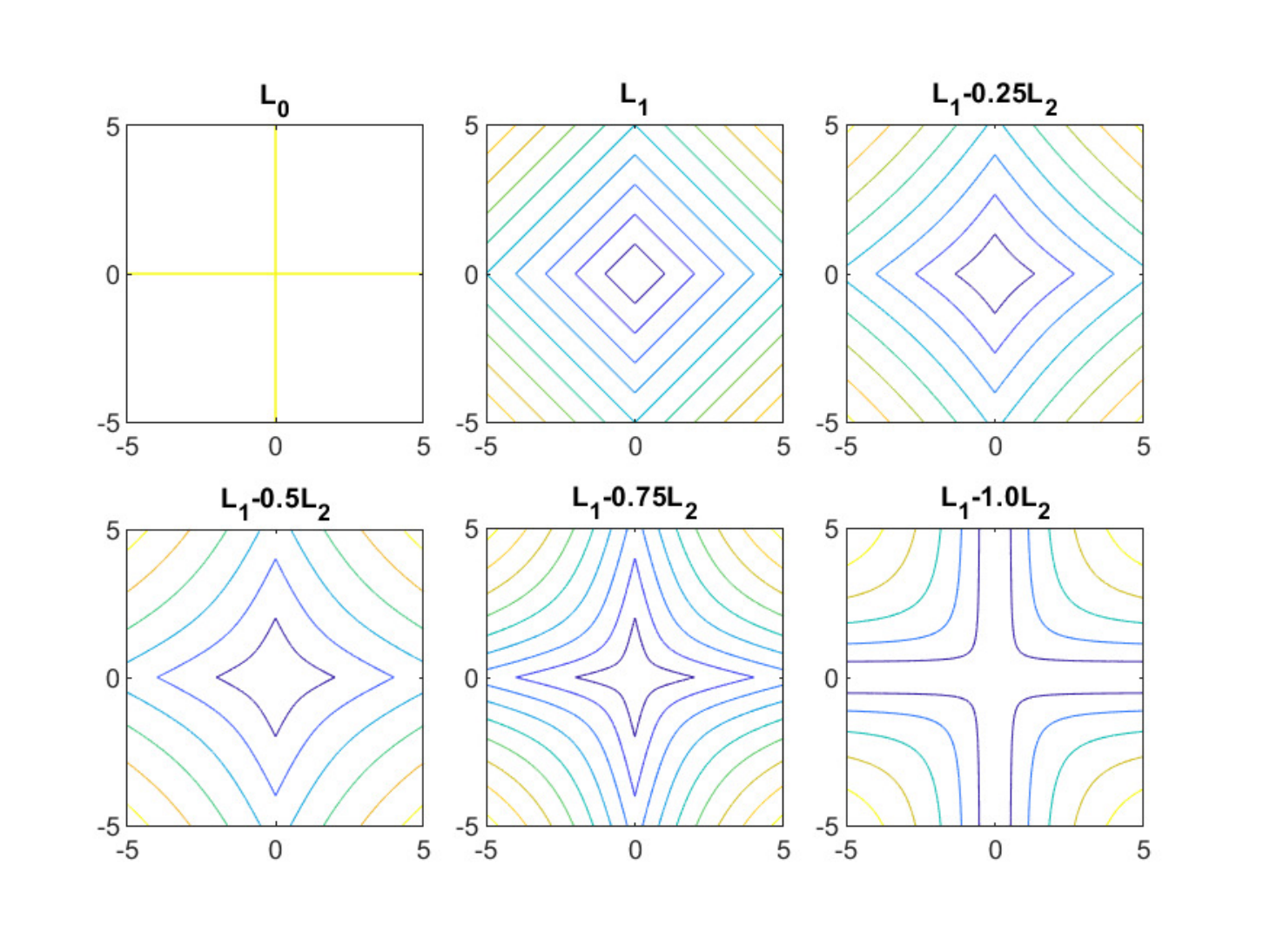}
		\caption{Contour lines of $\|x\|_0$ ($L_0$)  and $\|x\|_1 -\alpha \|x\|_2$ ($L_1-\alpha L_2$), where $x \in \mathbb{R}^2$ and $\alpha \in \{0, 0.25, 0.5, 0.75, 1.0\}$. As $\alpha$ increases, the contour lines of $L_1-\alpha L_2$ are closer to the ones of $L_0$.}
		\label{fig:contour}
		\vspace{-8mm}
	\end{figure}
	
In order to better approximate $\|\nabla u\|_0$, we consider the weighted anisotropic-isotropic TV (AITV), 
	\begin{gather}
	\begin{aligned} \label{eq:WDTV}
&J_{\text{ani}}(u) - \alpha J_{\text{iso}}(u)\\  &= \int_{\Omega} |D_x u(x,y)| + |D_y u(x,y)| - \alpha \sqrt{|D_x u(x,y)|^2 + |D_y u(x,y)|^2} \;dx \;dy
	\end{aligned} 
	\end{gather}
	with $\alpha \in [0,1]$. The AITV term was inspired by recent successes of $L_1-L_2$ minimization \cite{ding2019regularization,louY18,lou-2015-cs,louYX16,yin2014ratio,yin2015minimization} in compressed sensing. Compared with $L_1$, $L_p$ for $p \in (0,1)$ \cite{chartrand2007exact, lai2013improved, xu2012}, and $L_0$ \cite{tropp2004greed}, the $L_1-L_2$ penalty was shown to have the best performance in recovering sparse solutions when the sensing matrix is highly coherent or violates the restricted isometry property \cite{candes-2006}. 
	Figure \ref{fig:contour}  compares $L_0$, $L_1$, and $L_1-\alpha L_2$  by their contour lines in 2D. We observe that as $\alpha$ increases, the contour lines of $L_1-\alpha L_2$ are bending more inward and closer to the ones of $L_0$. This phenomenon illustrates that $L_1-\alpha L_2$ can encourage sparsity, and the constant $\alpha$ acts like a parameter controlling to what extent.
	By applying $L_1 - \alpha L_2$ on the gradient, Lou et al.~\cite{lou-2015}  proposed AITV  with a difference-of-convex algorithm (DCA) \cite{le2018dc,tao-1997,tao-1998}  for image denoising, deconvolution, and MRI reconstruction. Later, Li et al.~\cite{li2020} demonstrated the robustness of AITV with respect to impulsive noise corruption of the data. Both works \cite{li2020,lou-2015} showed that AITV preserves sharper image edges than the anisotropic TV. Moreover, AITV is preferred over the isotropic TV that tends to blur oblique edges \cite{birkholz2011unifying,condat2017discrete}. 
	
As edges are defined by gradient vectors, it is expected that AITV $(L_1-\alpha L_2)$ should produce sparser gradients and maintain sharper edges compared to TV $(L_1).$ A preliminary work that replaced $\|\nabla u\|_1$ by AITV in \eqref{eq:convex_CV} was conducted by Park et al.~\cite{park2016weighted}, showing better segmentation results than TV. However, this approach was limited to pre-determined values of $c_1$/$c_2$,  grayscale images, and two-phase segmentation (rather than multiphase).

	The CV model can be extended to vector-valued images \cite{chan2000active} and to multiphase segmentation \cite{brown2012completely,vese2002multiphase}. The vector-valued extension is  straightforward, i.e., replacing $f$ with a vector-valued input $\mathbf{f}: \Omega \rightarrow \mathbb{R}^C$ and replacing $c_1, c_2$ with vector-valued constants $\mathbf{c}_1, \mathbf{c}_2 \in \mathbb{R}^C,$ where $C$ is the number of channels in an image. The multiphase CV model relies on $\log_2(N)$ level-set functions to partition $\Omega$ into $N$ regions $\{\Omega_i\}_{i=1}^N$,
	and, 
	hence most CV-based multiphase segmentation methods are limited to power-of-two number of regions so that $\log_2(N)$ is an integer. There are two approaches that can deal with an arbitrary number of regions. One approach represents each region by a single level-set function \cite{samson2000level}, which unfortunately causes vacuums and  overlapping regions to appear. The other approach defines regions by membership functions,  referred to as  fuzzy region (FR) competition  \cite{li2010multiphase}.

	In this paper, we propose to incorporate the AITV term into both  CV and FR models together with an extension to color image segmentation. 
 To solve these models, we develop an alternating minimization framework that involves DCA  and PDHG with linesearch (PDHGLS) \cite{malitsky2018first}. We  provide convergence analysis of the proposed algorithms. Experimentally, we compare the proposed  models with the classic convex approaches and other segmentation methods to showcase the effectiveness and robustness of the AITV penalty. The major contributions of this work are threefold:
 \begin{itemize}
     \item We study the AITV regularization comprehensively  in image segmentation, including grayscale/color image and multiphase segmentation. 
     \item We propose an efficient algorithm that combines DCA and PDHGLS with guaranteed convergence.  To the best of our knowledge, this paper pioneers the implementation of PDHGLS in image segmentation.
     \item We conduct extensive experiments to demonstrate the effect of the constant $\alpha$ in AITV  on the segmentation performance and the robustness to impulsive noise. We compare the results with the two-stage segmentation methods. 
 \end{itemize}
	
	The paper is organized as follows. Section \ref{sec:notation} describes notations that will be used throughout the paper. In Section \ref{sec:AICV}, we introduce the AITV extension of the CV model, which can be solved by DCA with convergence analysis. In Section \ref{sec:AIFR}, we incorporate AITV into the FR model \cite{li2010multiphase} for multiphase segmentation with an algorithm similar to the CV model. In Section \ref{sec:color}, we  extend both CV and FR models to color image segmentation. Numerical results   are shown in Section \ref{sec:result}. Lastly, conclusions and future works are given in Section \ref{sec:conclusions}.

	\section{Notations} \label{sec:notation}
	 For  simplicity, we adopt the discrete notations for images and related models. 	The space $\mathbb{R}^n$ is equipped with the standard inner product $\langle x, y \rangle = \sum_{i=1}^n x_iy_i$ and standard Euclidean norm $\|x\|_2 = \sqrt{\langle x, x \rangle}$ for $x,y \in \mathbb{R}^n$.
	
	Without loss of generality, an image is represented as an $m \times n$ matrix, i.e. the image domain is $\Omega = \{ 1, 2, \ldots, m\} \times \{1,2, \ldots, n\}$. We denote $X \coloneqq \mathbb{R}^{m \times n}$ and the all-ones matrix in $X$ as $\mathbbm{1}$. The vector space $X$ is equipped with following inner product and norm:
	\begin{align*}
	\langle u, v \rangle_X &= \sum_{i=1}^m \sum_{j=1}^n u_{i,j} v_{i,j}, \quad\|u \|_X = \sqrt{\sum_{i=1}^m \sum_{j=1}^n u_{i,j}^2} \qquad \forall u,v \in X.
	\end{align*}
We denote	$D_x, D_y$ by the horizontal and vertical partial derivative operators, respectively, i.e.,
	\begin{align*}
	(D_x u)_{i,j} = \begin{cases}
	u_{i,j+1} - u_{i,j} &\text{ if } 1 \leq j \leq n-1, \\
	u_{i,1} - u_{i,n} &\text{ if } j = n,
	\end{cases}\\
	(D_y u)_{i,j} = \begin{cases}
	u_{i+1,j} - u_{i,j} &\text{ if } 1 \leq i \leq m-1, \\
	u_{1,j} - u_{m,j} &\text{ if } i = m.
	\end{cases}
	\end{align*}
	Let $Y \coloneqq X \times X$. Then the discrete gradient operator $D: X \rightarrow Y$ is defined as
	\begin{align*}
	(D u)_{i,j} = \left((D_x u)_{i,j}, (D_y u)_{i,j}\right) \in Y.
	\end{align*}
	For any $p=(p_x,p_y), q = (q_x, q_y) \in Y$, the inner product on $Y$ is defined by
	\begin{align*}
	\langle p,q \rangle_Y &= \langle p_x, q_x \rangle_X + \langle p_y , q_y \rangle_X,
	\end{align*}
	and the norms on $Y$ are
	\begin{align*}
	\|p\|_Y & = \sqrt{\sum_{i=1}^m \sum_{j=1}^n |(p_x)_{i,j}|^2 + |(p_y)_{i,j}|^2}, \qquad \|p\|_1 =  \sum_{i=1}^m \sum_{j=1}^n \left(|(p_x)_{i,j}| + |(p_y)_{i,j}| \right),\\
	\|p\|_{2,1} &= \sum_{i=1}^m \sum_{j=1}^n\sqrt{|(p_x)_{i,j}|^2 + |(p_y)_{i,j}|^2} = \sum_{i=1}^m \sum_{j=1}^n \|((p_x)_{i,j}, (p_y)_{i,j})\|_2.
	\end{align*}
	
We use a bold letter to denote a 3D tensor, e.g., $\mathbf{u} = (u_1, u_2, \ldots, u_N) \in X^N$. We further denote $\mathbf{u}_{< k} \coloneqq (u_1, \ldots, u_{k-1})$ and $\mathbf{u}_{>k} \coloneqq (u_{k+1}, \ldots, u_N)$ for $1 \leq k \leq N$. 
The notations $\mathbf{u}_{\leq k}$ and $\mathbf{u}_{\geq k}$ are defined similarly by including $u_k$. 
Note that  $\mathbf{u}_{<1}$ and $\mathbf{u}_{>N}$ are null or empty variables. 
	
	\section{Anisotropic-Isotropic Chan-Vese Model} \label{sec:AICV}
	
	Let $f \in X$ be an observed image. Suppose the image domain $\Omega$ has  $N=2^M$ non-overlapping regions, i.e. $\Omega = \bigcup_{i=1}^N \Omega_i$ and $\Omega_i \cap \Omega_j = \emptyset$ for each $i\neq j$. Let $\mathbf{u} = (u_1, \ldots, u_M) \in X^M$ and $\mathbf{c} = (c_1, \ldots, c_N) \in \mathbb{R}^N$. We propose an AITV-regularized  Chan-Vese (AICV) model for multiphase  segmentation as follows:
	\begin{align}\label{eq:AITV_MCV}
	\min_{\subalign{ \mathbf{u} &\in \mathcal{B}\\ \mathbf{c} &\in \mathbb{R}^N}}\sum_{k=1}^M \left(\|Du_k\|_1 - \alpha \|Du_k \|_{2,1} \right) + \lambda \sum_{\ell=1}^N \langle  f_{\ell}(\mathbf{c}), R_{\ell} (\mathbf{u}) \rangle_X, 
	\end{align}
	where 
$\mathcal{B} = \left \{\mathbf{u} \in X^M: (u_k)_{i,j} \in \{0,1\} \forall i,j,k  \right\},
$
	$f_{\ell}(\mathbf{c}) = (f-c_{\ell}\mathbbm{1})^2$ with square defined elementwise, and $R_{\ell}(\mathbf{u})$ is a function of $\mathbf u$ related to the region $\Omega_{\ell}$ such that
	\begin{align*}
    R_{\ell}(\mathbf{u})_{i,j} = \begin{cases}
    1 &\text{ if } (i,j) \in \Omega_{\ell}, \\
    0 &\text{ if } (i,j) \not \in \Omega_{\ell},
    \end{cases}
	\end{align*}
	with $\sum_{\ell=1}^N R_{\ell}(\mathbf{u})  = \mathbbm{1}$. Specifically when $N=2$ $(M=1)$, we have $R_1(\mathbf{u}) = u_1$ and $R_2(\mathbf{u}) = \mathbbm{1}-u_1$. When $N =4$ $(M=2)$, we have 
	\begin{alignat*}{3}
	&R_1(\mathbf{u})_{i,j} = (u_1)_{i,j}(u_2)_{i,j}, \quad &&R_2(\mathbf{u})_{i,j}  = (u_1)_{i,j}[1-(u_2)_{i,j}], \\ &R_3(\mathbf{u})_{i,j} = [1-(u_1)_{i,j}](u_2)_{i,j}, \quad &&R_4(\mathbf{u})_{i,j} =[1-(u_1)_{i,j}][1-(u_2)_{i,j}].
	\end{alignat*}
	When $N=8$ $(M=3)$, we have
	{\small
		\begin{alignat*}{3}
	&R_1(\mathbf{u})_{i,j}  = (u_1)_{i,j}(u_2)_{i,j}(u_3)_{i,j},  &&R_2(\mathbf{u})_{i,j}  = (u_1)_{i,j}(u_2)_{i,j}[1-(u_3)_{i,j}],\\ &R_3(\mathbf{u})_{i,j} = (u_1)_{i,j}[1-(u_2)_{i,j}] (u_3)_{i,j} , &&R_4(\mathbf{u})_{i,j} =(u_1)_{i,j}[1-(u_2)_{i,j}][1-(u_3)_{i,j}],\\
	&R_5(\mathbf{u})_{i,j}  = [1-(u_1)_{i,j}](u_2)_{i,j}(u_3)_{i,j},  && R_6(\mathbf{u})_{i,j}  =[1-(u_1)_{i,j}](u_2)_{i,j}[1-(u_3)_{i,j}],\\
	&R_7(\mathbf{u})_{i,j}  = [1-(u_1)_{i,j}][1-(u_2)_{i,j}](u_3)_{i,j},&& R_8(\mathbf{u})_{i,j} = [1-(u_1)_{i,j}][1-(u_2)_{i,j}][1-(u_3)_{i,j}].
	\end{alignat*}}%
	For $N =2^M$ with $M \geq 4$,  $R_{\ell}$ depends on $\ell$'s binary representation to decide whether to include $u_k$ or $\mathbbm{1}-u_k$ as a factor in $R_{\ell}$. 
	
	Due to the binary constraint set $\mathcal B$,  \eqref{eq:AITV_MCV} is a nonconvex optimization problem, thus numerically difficult to solve. We relax the binary constraint $\{0,1\}$ by a $[0,1]$ box constraint, which in turn has $R_{\ell} (\mathbf{u})_{i,j} \in [0,1]$. In particular, we rewrite \eqref{eq:AITV_MCV}   as an unconstrained formulation by introducing the indicator function
	\begin{align*}
\chi_{U}(u) = \begin{cases}
0 &\text{ if } u_{i,j} \in [0,1] \text{ for all } i,j, \\
+\infty &\text{ otherwise}.
\end{cases}
\end{align*} 
Hence, a relaxed model of \eqref{eq:AITV_MCV} can be expressed as
\begin{gather}
	\begin{aligned}
	\label{eq:relax_AITV_MCV}
	\min_{\subalign{\mathbf{u} &\in X^M\\ \mathbf{c} &\in \mathbb{R}^N}} \tilde{F}(\mathbf{u}, \mathbf{c}) \coloneqq \sum_{k=1}^M \Big(\|Du_k\|_1 - \alpha \|Du_k \|_{2,1} + \chi_{U}(u_k) \Big) + \lambda \sum_{\ell=1}^N \langle  f_{\ell}(\mathbf{c}), R_{\ell}(\mathbf{u}) \rangle_X.
	\end{aligned} 
	\end{gather}
	\subsection{Numerical Algorithm}
We propose an alternating minimization algorithm to find a solution of \eqref{eq:relax_AITV_MCV} with the following framework:
	\begin{align} \label{eq:minimize_u}
	\mathbf{u}^{t+1} & \in \argmin_{\mathbf{u}} \tilde{F}(\mathbf{u}, \mathbf{c}^t),  \\ \label{eq:minimize_c} 
	\mathbf{c}^{t+1} &\in \argmin_{\mathbf{c}} \tilde{F}(\mathbf{u}^{t+1}, \mathbf{c}),
	\end{align}
	where $t$ counts the (outer) iterations.
	Below, we discuss how to solve each subproblem. 
	
	
	We start with the $\mathbf c$-subproblem \eqref{eq:minimize_c}, as it is simpler than the other. Notice that
	we can solve  $c_{\ell}$ separately for each $\ell = 1, \ldots, N$, i.e.,
	\begin{align} \label{eq:reduced_c_prob}
	c^{t+1}_{\ell} &\in \argmin_{c_{\ell}} \lambda \langle f_{\ell}(\mathbf{c}), R_{\ell}(\mathbf{u}^{t+1}) \rangle_X =  \argmin_{c_{\ell}} \lambda \sum_{i=1}^m \sum_{j=1}^n (f_{i,j}-c_{\ell})^2 R_{\ell}(\mathbf{u}^{t+1})_{i,j}.
	\end{align}
If $\sum_{i=1}^m \sum_{j=1}^n R_{\ell}(\mathbf{u}^{t+1})_{i,j} \neq 0$, we differentiate the objective function in \eqref{eq:reduced_c_prob} with respect to $c_{\ell},$ set the derivative equal to zero, and solve for $c_{\ell};$ otherwise, since the objective function does not depend on $c_{\ell},$ the solution can take on any value, so we set the solution to 0 as a default. In summary, there is a closed-form solution to  \eqref{eq:reduced_c_prob} for updating $c_{\ell}^{t+1}$, i.e.,
	\begin{align}\label{eq:c_update}
	    	c^{t+1}_{\ell} = \begin{cases} \frac{\displaystyle\sum_{i=1}^m \sum_{j=1}^n f_{i,j} R_{\ell}(\mathbf{u}^{t+1})_{i,j}}{\displaystyle\sum_{i=1}^m \sum_{j=1}^n R_{\ell}(\mathbf{u}^{t+1})_{i,j}} &\text{ if }\displaystyle \sum_{i=1}^m \sum_{j=1}^n R_{\ell}(\mathbf{u}^{t+1})_{i,j} \neq 0, \\
	    	0 &\text{ if } \displaystyle \sum_{i=1}^m \sum_{j=1}^n R_{\ell}(\mathbf{u}^{t+1})_{i,j}  = 0.
	    	\end{cases}
	\end{align}
	The formula \eqref{eq:c_update} implies that $c_{\ell}^{t+1}$ is the mean intensity value of the region $\Omega_{\ell} \subset \Omega$ at the $(t+1)$-th iteration.
	
	The $\mathbf u$-subproblem \eqref{eq:minimize_u}  is   separable with respect to each $k$, i.e.,
	\begin{align}\label{eq:u_subproblem}
	u_{k}^{t+1} \in \argmin_{u_{k}} \|Du_k\|_1 - \alpha \|Du_k\|_{2,1} + \chi_{U}(u_k) + \lambda \langle r_{k}(\mathbf{c}^{t}, \mathbf{u}_{<k}^{t+1}, \mathbf{u}_{>k}^t), u_k \rangle_X,
	\end{align}
	where $r_{k}(\mathbf{c}^{t}, \mathbf{u}_{<k}^{t+1}, \mathbf{u}_{>k}^t)$ is
	a multivariate polynomial of $(\mathbf{u}_{<k}^{t+1}, \mathbf{u}_{>k}^t)$
	obtained by rewriting $\sum_{\ell=1}^{N} \langle f_{\ell}(\mathbf{c}), R_{\ell}(\mathbf{u})\rangle_X$ in \eqref{eq:relax_AITV_MCV} and getting the coefficients in front of $u_k$.  Because a general form of $r_k$ is complicated, we provide some specific examples in smaller dimensions.
When $N=2$ $(M=1)$, we have $r_1(\mathbf{c}, \mathbf{u}_{<1}, \mathbf{u}_{>1})_{i,j}  = (f_{i,j}-c_1)^2 -(f_{i,j}-c_2)^2$;
	when $N=4$ $(M=2)$, we have
	\begin{alignat*}{2}
	r_1(\mathbf{c}, \mathbf{u}_{<1}, \mathbf{u}_{>1})_{i,j} &= \left[(f_{i,j}-c_1)^2 - (f_{i,j}-c_2)^2 -(f_{i,j}-c_3)^2 + (f_{i,j}-c_4)^2\right](u_2)_{i,j},\\
	 &+(f_{i,j}-c_2)^2-(f_{i,j}-c_4)^2\\
	 r_2(\mathbf{c}, \mathbf{u}_{<2}, \mathbf{u}_{>2})_{i,j} &= \left[(f_{i,j}-c_1)^2 - (f_{i,j}-c_2)^2 - (f_{i,j}-c_3)^2 + (f_{i,j}-c_4)^2\right](u_1)_{i,j} \\
	 &+(f_{i,j}-c_3)^2-(f_{i,j}-c_4)^2.
	\end{alignat*}
	 
In order to minimize \eqref{eq:u_subproblem}, we apply a descent algorithm called DCA \cite{le2018dc,tao-1997,tao-1998} for solving a difference-of-convex (DC) optimization problem of the form $\displaystyle \min_{u \in X} g(u)-h(u),$
where $g$ and $h$ are proper, lower semicontinuous, and strongly convex functions. The algorithm consists of two steps per iteration with $u^0$ as initialization:
\begin{align} \label{eq:DCA}
\begin{cases}
v^t &\in \partial{h}(u^t), \\
u^{t+1} &\in \displaystyle \argmin_{u \in X} g(u) - \langle v^t, u \rangle_X.
\end{cases}
\end{align}
	 
	For each $k=1,\ldots, M$, we can express \eqref{eq:u_subproblem} as a DC function $g(u_k) - h(u_k)$
	with
	\begin{align}
	\begin{cases}
	g(u_k) &= \|Du_k\|_1 + \chi_{U}(u_k) + \lambda \langle r_{k}(\mathbf{c}^{t}, \mathbf{u}_{<k}^{t+1}, \mathbf{u}_{>k}^t), u_k \rangle_X + c\|u_k\|_X^2,\\
	h(u_k) &= \alpha \|Du_k\|_{2,1} + c\|u_k\|_X^2,
	\end{cases}
	\end{align}
	where $c > 0$ enforces strong convexity on the functions $g$ and $h$. Experimentally, $c$ can be chosen arbitrarily small for better performance. We then compute the subgradient of $h(u),$ i.e.,
	\begin{align*}
	\alpha\frac{D_x^{\top} D_x u + D_y^{\top}D_yu}{\sqrt{|D_xu|^2+|D_yu|^2 }} +2cu \in \partial{h(u)}. 
	\end{align*}
	Therefore, the $u$-subproblem in \eqref{eq:DCA} can be expressed as
	\begin{gather}
	\begin{aligned}\label{eq:DC_linear}
	u_k^{t+1} = \argmin_{u_k}  &\|Du_k\|_1 + \chi_{U}(u_k) + \lambda \langle r_{k}(\mathbf{c}^{t}, \mathbf{u}_{<k}^{t+1}, \mathbf{u}_{>k}^t), u_k \rangle_X + c\|u_k\|_X^2\\
	&-\alpha \langle Du_k, q_k^t \rangle_Y - 2c \langle u_k, u_k^t \rangle_X,
	\end{aligned}
	\end{gather}
where $q_k^t \coloneqq ((q_x)_k^t, (q_y)_k^t) = (D_x u_k^t, D_y u_k^t)/\sqrt{|D_xu_k^t|^2 + |D_yu_k^t|^2}$. Note that we compute $q_k^t$ elementwise and  adopt the convention that if the denominator is zero at some point, the corresponding $q_k^t$ value is set to zero, which aligns with the subgradient definition. 
To solve the convex problem \eqref{eq:DC_linear}, we apply the PDHG algorithm \cite{chambolle-pock-2011, esser2010general, zhu2008efficient} since it was demonstrated in \cite{chambolle-pock-2011} that PDHG  solves imaging models with the TV term \cite{rof-1992} efficiently.

In general,	the PDHG algorithm \cite{chambolle-pock-2011, esser2010general, zhu2008efficient} targets at a saddle-point problem
	\begin{align*}
	\min_u \max_v \Psi(u) + \langle Au, v \rangle_Y - \Phi(v),
	\end{align*}
	where $\Psi,\Phi$ are convex functions and $A$ is a linear operator. The PDHG algorithm is outlined as
	\begin{align*}
	u^{\eta+1} &= (I+ \tau \partial{\Psi})^{-1}(u^{\eta} - \tau A^{\top} v^{\eta}), \\
	\bar{u}^{\eta+1} &= u^{\eta+1} + \theta(u^{\eta+1}- u^{\eta}) ,\\
	v^{\eta+1} &=(I+ \sigma \partial \Phi)^{-1}(v^{\eta}+ \sigma A \bar{u}^{\eta+1}),
	\end{align*}
	with $\tau, \sigma >0, \theta \in [0,1]$. The inverse  is defined by the proximal operator, i.e.,
	\begin{align*}
	(I+ \tau \partial{\Psi})^{-1}(z) = \min_u \left(\Psi(u)+ \frac{\|u-z\|_X^2}{2 \tau} \right),
	\end{align*}
	and similarly for $(I+ \sigma \partial \Phi)^{-1}$.

	In order to apply PDHG for
	the $u_k$-problem in  \eqref{eq:DC_linear}, we define its saddle-point formulation:
	\begin{gather}
	\begin{aligned}\label{eq:saddle_point_reformulation}
	\min_{u_k} \max_{(p_x)_k, (p_y)_k}   &\langle D_x u_k, (p_x)_k \rangle_X + \langle D_y u_k, (p_y)_k \rangle_X + \chi_{U}(u_k)\\  &+ \lambda \langle r_{k}(\mathbf{c}^{t}, \mathbf{u}_{<k}^{t+1}, \mathbf{u}_{>k}^t), u_k \rangle_X + c\|u_k\|_X^2
	-\alpha\langle Du_k, q^t_k \rangle_Y -2c\langle u_k,\, u^t_k\rangle_X\\ &- \chi_{P}((p_x)_k) - \chi_{P}((p_y)_k),
	\end{aligned}
\end{gather}
	where $(p_x)_k,(p_y)_k$ are dual variables of $D_xu_k, D_y u_k,$ and $P = \{ p: |p_{i,j}| \leq 1  \ \forall i,j\}$ is a convex set.
	Please refer to \cite{chambolle2010introduction, chambolle-pock-2011} for the derivation of the saddle-point formulation in more details.
	 Then we have
	\begin{align*}
	\Psi_{k,t}(u_k) &= \chi_{U}(u_k) + \lambda \langle r_{k}(\mathbf{c}^{t}, \mathbf{u}_{<k}^{t+1}, \mathbf{u}_{>k}^t), u_k \rangle_X + c\|u_k\|_X^2,\\
	&\qquad-\alpha\langle Du_k, q^t_k \rangle_Y -2c\langle u_k,\, u^t_k\rangle_X ,\\
	Au_k &= (D_x u_k, D_y u_k) \\
	\Phi((p_x)_k, (p_y)_k) &= \chi_P((p_x)_k) + \chi_P((p_y)_k). 
	\end{align*}
	
	With the initial condition $u^{t,0}_k = u^t_k$, the $u$-subproblem can be computed as
	\begin{gather}
	\begin{aligned}\label{eq:u_opt}
	u_k^{t,\eta+1} = &\left(I+ \tau \partial{\Psi}_{k,t}\right)^{-1}\left(u_k^{t,\eta} - \tau \left(D_x^{\top}(p_x)_k^{\eta} +D_y^{\top}(p_y)_k^{\eta} \right)\right)\\ 
	 =&\min_{0 \leq (u_k)_{i,j} \leq 1} \Bigg\{ \lambda \langle r_{k}(\mathbf{c}^{t}, \mathbf{u}_{<k}^{t+1}, \mathbf{u}_{>k}^t), u_k \rangle_X + c\|u_k\|_X^2 \\ 
	& \qquad \qquad -\alpha\langle Du_k, q^t_k \rangle_Y -2c\langle u_k,\, u^t_k\rangle_X \\
&\qquad \qquad + \frac{\|u_k- \left(u_k^{t,\eta} - \tau \left(D_x^{\top}(p_x)_k^{\eta} +D_y^{\top}(p_y)_k^{\eta} \right)\right)\|_X^2}{2 \tau}\Bigg\},
	\end{aligned}
	\end{gather}
	where $\eta$ indexes the inner iteration, as opposed to $t$ for the outer iteration.
To solve \eqref{eq:u_opt}, we  derive a closed-form solution  that is similar to the one for the $u$-subproblem of \eqref{eq:convex_CV} determined in \cite{goldstein2010geometric}. In particular, we observe that the objective function in \eqref{eq:u_opt} is proper, continuous, and strongly convex with respect to $u_k$, so it has a unique minimizer. By ignoring the constraint and differentiating the  objective function in \eqref{eq:u_opt} with respect to $u_k$, we obtain
	\begin{align*}
\tilde u_k^{t,\eta+1}=	\frac{2c u_k^t + \frac{1}{\tau} u_k^{t,\eta}}{2c + \frac{1}{\tau}} - \frac{\lambda r_k(\mathbf{c}^t, \mathbf{u}_{<k}^{t+1}, \mathbf{u}_{>k}^t)- \alpha D^{\top} q_k^t + (D_x^{\top} (p_x)_k^{\eta} + D_y^{\top} (p_y)_k^{\eta})}{2c+ \frac{1}{\tau}}.
	\end{align*}
If $(\tilde u_k^{t,\eta+1})_{i,j}$ lies in the interval $[0,1]$, then the $(i,j)$-entry of the unique minimizer also coincides with the minimizer of the constrained problem \eqref{eq:u_opt}. If $(\tilde u_k^{t, \eta+1})_{i,j}$ is outside of the interval, then the $(i,j)$-entry of the unique minimizer lies at the interval endpoint closest to the unconstrained minimizer due to the quadratic  objective function. As a result, we project $\tilde u_k^{t,\eta+1}$ onto  $[0,1]$, leading to a closed-form solution for $u_k^{t,\eta+1}$:
	\begin{equation}\label{eq:u_descent}
	u^{t,\eta+1}_k
	= \min\{\max\{\tilde u_k^{t,\eta+1}, 0 \}, 1\},
	\end{equation}
	where $\min$ and $\max$ are executed elementwise. 
	
It is straightforward to derive 	closed-form solutions for $(p_x)_k,(p_y)_k$ in \eqref{eq:saddle_point_reformulation} given by
	\begin{gather}
	\begin{aligned}\label{eq:p_ascent}
	(p_x)_k^{\eta+1} &=\text{Proj}_P ((p_x)_k^{\eta} + \sigma D_x \bar{u}_k^{t,\eta+1}), \\
	(p_y)_k^{\eta+1} &=\text{Proj}_P ((p_y)_k^{\eta} + \sigma D_y \bar{u}^{t,\eta+1}_k)
	\end{aligned}
	\end{gather}
	with $\bar{u}_k^{t,\eta+1} = u_k^{t,\eta+1}+\theta(u_k^{t,\eta+1}-u_k^{t,\eta})$
and
$
	\text{Proj}_P(p) = \frac{p}{\max\{|p|, 1\}}.
$	We see that \eqref{eq:u_descent} is projected gradient descent of the primal variable $u$  with entrywise box constraint $[0,1]$, while \eqref{eq:p_ascent} is projected gradient ascent of the dual variable $(p_x, p_y)$ that is constrained to the set $P$.
The update order between the primal variable $u_k^{t, \eta}$ and the dual variables $(p_x)_k^{\eta}, (p_y)_k^{\eta}$ does not matter for PDHG \cite{chambolle-pock-2011, malitsky2018first}.
To further improve the speed and solution quality of PDHG, we incorporate a linesearch technique \cite{malitsky2018first} that starts with the primal variable, followed by the dual update. The PDHG algorithm with linesearch is referred to as PDHGLS. Both PDHG and PDHGLS provide a saddle-point solution $(u_k^*, (p_x)_k^*, (p_y)_k^*)$ for \eqref{eq:saddle_point_reformulation}  upon convergence \cite{chambolle-pock-2011, malitsky2018first}. Since \eqref{eq:DC_linear} is convex,
$u_k^*$ is indeed its solution, independent of the choice between using PDHG or PDHGLS. We summarize the proposed DCA-PDHGLS algorithm to solve \eqref{eq:relax_AITV_MCV}  in Algorithm \ref{alg:DCA_PDHG}. 
	\begin{algorithm}[t]
	\caption{DCA-PDHGLS algorithm to solve \eqref{eq:relax_AITV_MCV}}
	\label{alg:DCA_PDHG}\smallskip
	\textbf{Input: }\begin{itemize}
		\item Image $f$
		\item model parameters $\alpha, \lambda >0$ 
		\item strong convexity parameter $c > 0$
		\item PDHGLS initial step size $\tau_0 > 0$
		\item PDHGLS primal-dual step size ratio $\beta > 0$
		\item PDHGLS parameter $\delta \in (0,1)$
		\item PDHGLS step size multiplier $\mu \in (0,1)$
	\end{itemize}
	\begin{algorithmic}[1]
			\STATE  Set $u_k^0 = 1$ $(k=1, \ldots, M)$ for some region $\Sigma \subset \Omega$ and $0$ elsewhere.
			\STATE Compute $\mathbf{c}^0 = (c_1^0, \ldots, c_N^0)$ by \eqref{eq:c_update}.
			\STATE Set $t \coloneqq 0$.
			\WHILE{stopping criterion for DCA is not satisfied}
			\FOR{$k=1$ to $M$}
			\STATE Set $u_k^{t,0} \coloneqq u_k^t$ and $(p_x)_k^0 = (p_y)_k^0 = 0$.
			\STATE Compute $((q_x)_k^{t}, (q_y)_k^{t}) = (D_x u_k^{t}, D_y u_k^{t})/\sqrt{|D_xu_k^{t}|^2 + |D_yu_k^{t}|^2}$.
			\STATE Set $\theta_0 =1$.
			\STATE Set $\eta \coloneqq  0$.
			\WHILE{stopping criterion for PDHGLS is not satisfied}
			\STATE Compute $u_k^{t,\eta+1}$ by \eqref{eq:u_descent} with $\tau \coloneqq \tau_{\eta}$.
			\STATE Set $\tau_{\eta+1} = \tau_{\eta} \sqrt{1+\theta_{\eta}}$.\\
			\textbf{Linesearch:}
			\STATE Compute $\theta_{\eta+1} = \frac{\tau_{\eta+1}}{\tau_{\eta}}$ and $\sigma_{\eta+1} = \beta \tau_{\eta+1}$.
			\STATE Compute $\bar{u}_k^{t,\eta+1} = u_k^{t,\eta+1}+\theta_{\eta+1}(u_k^{t,\eta+1}-u_k^{t,\eta})$.
			\STATE Compute $p_k^{\eta+1} \coloneqq ((p_x)_k^{\eta+1}, (p_y)_k^{\eta+1})$ by \eqref{eq:p_ascent} with $\sigma \coloneqq \sigma_{\eta+1}$.
			\IF{$\sqrt{\beta} \tau_{\eta+1} \|(D_x^{\top} (p_x)_k^{\eta+1},  D_y^{\top} (p_y)_k^{\eta+1}) -(D_x^{\top}(p_x)_k^{\eta}, D_y^{\top}(p_y)_k^{\eta}) \|_Y \leq  \delta \|p_k^{\eta+1}-p_k^{\eta}\|_Y$}
			\STATE Set $\eta \coloneqq \eta+1$, and break linesearch
			\ELSE 
			\STATE Set $\tau_{\eta+1} \coloneqq \mu \tau_{\eta+1}$ and go back to line 13.
			\ENDIF\\
			\textbf{End of linesearch}
			\ENDWHILE
			\STATE Set $u_k^{t+1} \coloneqq u_k^{t,\eta}$.
			\ENDFOR
			\STATE Compute $\mathbf{c}^{t+1}$ by \eqref{eq:c_update}. 
			\STATE Set $t \coloneqq t+1.$
			\ENDWHILE
	\end{algorithmic}
	\textbf{Output:} $(\mathbf{u}, \mathbf{c}) \coloneqq (\mathbf{u}^t, \mathbf{c}^t)$.
	\end{algorithm}
	\subsection{Convergence Analysis} \label{subsec:converge} We analyze the convergence of the sequence\\ $\{(\mathbf{u}^t, \mathbf{c}^t)\}_{t=1}^{\infty}$ generated by \eqref{eq:minimize_u} and \eqref{eq:minimize_c}, which are solved by \eqref{eq:DC_linear} and \eqref{eq:c_update}, respectively.
We establish in Lemma~\ref{lemma:decrease1} that the sequence $\{\tilde{F}(\mathbf{u}^t, \mathbf{c}^t)\}_{t=1}^{\infty}$ decreases sufficiently, followed by the convergence result in Theorem~\ref{thm:opt}.

	\begin{lemma}\label{lemma:decrease1} Suppose $\alpha \in [0,1]$ and $\lambda > 0$. Let $\{(\mathbf{u}^t, \mathbf{c}^t)\}_{t=1}^{\infty}$ be a sequence such that $\mathbf{u}^t$ is generated by \eqref{eq:DC_linear} and $\mathbf{c}^t$ is generated by  \eqref{eq:c_update}. Then we have
		\begin{align*}
		\tilde{F}(\mathbf{u}^t, \mathbf{c}^t) - \tilde{F}(\mathbf{u}^{t+1}, \mathbf{c}^{t+1}) \geq 2c \sum_{k=1}^M \|u_k^t - u_k^{t+1}\|_X^2.
		\end{align*}
	\end{lemma}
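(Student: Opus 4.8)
The plan is to track the decrease of $\tilde F$ across the two stages of one outer iteration: the $M$ sequential (Gauss--Seidel) DCA updates of $u_1,\dots,u_M$ via \eqref{eq:DC_linear}, followed by the exact minimization over $\mathbf c$ via \eqref{eq:c_update}. First I would record a per-block descent estimate for each DCA update, then telescope over $k$, and finally add the (non-negative) decrease contributed by the $\mathbf c$-step.

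For the per-block estimate, fix $t$ and $k$ and freeze $\mathbf c^t$, $\mathbf u_{<k}^{t+1}$, and $\mathbf u_{>k}^t$. The key observation is that, up to an additive constant independent of $u_k$, the map $u_k\mapsto \tilde F(\mathbf u_{<k}^{t+1}, u_k, \mathbf u_{>k}^t, \mathbf c^t)$ coincides with the objective of \eqref{eq:u_subproblem}: each $R_\ell(\mathbf u)$ is multilinear in $u_1,\dots,u_M$, hence \emph{affine} in $u_k$ when the other blocks are frozen, so $\lambda\sum_{\ell}\langle f_\ell(\mathbf c^t), R_\ell(\mathbf u)\rangle_X = \lambda\langle r_k(\mathbf c^t, \mathbf u_{<k}^{t+1}, \mathbf u_{>k}^t), u_k\rangle_X + \mathrm{const}$. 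Writing this objective as $g(u_k)-h(u_k)$ with $g,h$ as in the text, both are proper, lower semicontinuous, and $2c$-strongly convex (the modulus is $2c$, coming from the term $c\|u_k\|_X^2$), and $v_k^t := \alpha D^{\top} q_k^t + 2c\,u_k^t$ is a genuine element of $\partial h(u_k^t)$ under the stated zero-denominator convention, since $0$ lies in the unit ball that is the subdifferential of $\|\cdot\|_{2,1}$ at a vanishing component. The DCA step \eqref{eq:DCA}--\eqref{eq:DC_linear} makes $u_k^{t+1}$ the minimizer of the $2c$-strongly convex function $u_k\mapsto g(u_k)-\langle v_k^t, u_k\rangle_X$; combining the resulting strong-convexity inequality at $u_k^t$ with the $2c$-strong-convexity inequality for $h$ at $u_k^t$ (using $v_k^t\in\partial h(u_k^t)$), the linear cross terms cancel and one obtains
\[
\tilde F(\mathbf u_{<k}^{t+1}, u_k^{t}, \mathbf u_{>k}^t, \mathbf c^t) - \tilde F(\mathbf u_{<k}^{t+1}, u_k^{t+1}, \mathbf u_{>k}^t, \mathbf c^t) \;\ge\; 2c\,\|u_k^t - u_k^{t+1}\|_X^2.
\]

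Summing these estimates over $k=1,\dots,M$ telescopes to $\tilde F(\mathbf u^t,\mathbf c^t) - \tilde F(\mathbf u^{t+1},\mathbf c^t) \ge 2c\sum_{k=1}^M\|u_k^t - u_k^{t+1}\|_X^2$. For the $\mathbf c$-stage, \eqref{eq:c_update} furnishes a global minimizer of the separable map $\mathbf c\mapsto \tilde F(\mathbf u^{t+1},\mathbf c)$ over $\mathbb{R}^N$ --- in the degenerate case $\sum_{i,j}R_\ell(\mathbf u^{t+1})_{i,j}=0$ the objective does not depend on $c_\ell$, so the default choice $c_\ell^{t+1}=0$ is still optimal --- whence $\tilde F(\mathbf u^{t+1},\mathbf c^t) \ge \tilde F(\mathbf u^{t+1},\mathbf c^{t+1})$. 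Adding this to the telescoped inequality yields the stated bound. The main obstacle is not the DCA descent mechanics (standard once $g,h$ are recognized as $2c$-strongly convex), but the two bookkeeping points that make the constant come out right: justifying that $\tilde F$ restricted to a single block $u_k$ really is the subproblem objective of \eqref{eq:u_subproblem}, which hinges on the multilinear structure of the $R_\ell$, and keeping the strong-convexity modulus as $2c$ rather than $c$ throughout, so that the two DCA inequalities combine to exactly $2c\|u_k^t-u_k^{t+1}\|_X^2$ per block.
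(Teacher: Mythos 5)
Your proposal is correct and follows essentially the same route as the paper's proof: per-block descent for each DCA update of $u_k$ (relying, as the paper does, on the multilinearity of $R_\ell$ so that the coupling term is affine in $u_k$ and reduces to $\langle r_k,\cdot\rangle_X$), telescoping over $k$, and then adding the non-increase from the exact $\mathbf{c}$-minimization. The only cosmetic difference is that you obtain the per-block bound $2c\|u_k^t-u_k^{t+1}\|_X^2$ from the $2c$-strong convexity of both $g$ and $h$ (each contributing $c$), whereas the paper extracts the same constant from the first-order optimality condition of \eqref{eq:DC_linear} combined with plain subgradient inequalities for $\|D\cdot\|_1+\chi_U$ and $\|D\cdot\|_{2,1}$; the two derivations are equivalent.
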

\begin{proof}
	Since $\mathbf{c}^{t+1}$  satisfies $\eqref{eq:c_update}$, we have
	\begin{align} \label{eq:c_decrease}
	\tilde{F}(\mathbf{u}^{t+1}, \mathbf{c}^{t+1}) \leq \tilde{F}(\mathbf{u}^{t+1}, \mathbf{c}^{t}).
	\end{align} 
Then we estimate 
\begin{gather}
			\begin{aligned}\label{eq:diff}
			&\tilde{F}((\mathbf{u}_{\leq k-1}^{t+1}, \mathbf{u}_{\geq k}^t), \mathbf{c}^t) - 		\tilde{F}((\mathbf{u}_{\leq k}^{t+1}, \mathbf{u}_{\geq k+1}^t), \mathbf{c}^t)\\ =& \|Du_k^t\|_1 - \|Du_k^{t+1}\|_1
			  - \alpha(\|Du_k^t\|_{2,1} - \|Du_k^{t+1}\|_{2,1}) + \chi_{U}(u_k^t)-\chi_{U}(u_k^{t+1})\\   &+ \lambda \sum_{\ell=1}^N \langle f_{\ell}(\mathbf{c}), R_{\ell} (\mathbf{u}_{\leq k-1}^{t+1}, \mathbf{u}_{\geq k}^t) -  R_{\ell} (\mathbf{u}_{\leq k}^{t+1}, \mathbf{u}_{\geq k+1}^t)\rangle_X.
			\end{aligned}
			\end{gather}
	It follows from the first-order optimality condition of \eqref{eq:DC_linear} at $u^{t+1}_k$ that there exists $p_k^{t+1} \in \partial \left(\|Du_k^{t+1}\|_1 + \chi_{U}(u_k^{t+1}) \right)$ such that
			\begin{align*}
			0 = p_k^{t+1} -\alpha D^{\top} q^{t}_k + 2c(u_k^{t+1} - u_k^{t}) + \lambda r_{k}(\mathbf{c}^{t}, \mathbf{u}_{<k}^{t+1}, \mathbf{u}_{>k}^t).
			\end{align*}
			Taking the inner product with $u_k^{t} - u_k^{t+1}$ and rearranging it, we obtain
\begin{gather}
\begin{aligned} \label{eq:equal}
&\lambda \langle r_{k}(\mathbf{c}^{t}, \mathbf{u}_{<k}^{t+1}, \mathbf{u}_{>k}^t), u^{t}_k - u_k^{t+1} \rangle_X \\
	=& - \langle  p_k^{t+1} - \alpha D^{\top} q_k^t, u_k^t -u_k^{t+1} \rangle_X +2c \|u_k^{t+1}-u_k^t\|_X^2.
\end{aligned}
\end{gather}
The last term in \eqref{eq:diff} can be simplified to \begin{align*}
\displaystyle \sum_{\ell=1}^N \langle f_{\ell}, R_{\ell} (\mathbf{u}_{\leq k-1}^{t+1}, \mathbf{u}_{\geq k}^t)  -  R_{\ell} (\mathbf{u}_{\leq k}^{t+1}, \mathbf{u}_{\geq k+1}^t)\rangle_X = \langle r_k(\mathbf{c}^t, \mathbf{u}_{<k}^{t+1}, \mathbf{u}_{>k}^t), u_k^t -u_{k}^{t+1} \rangle_X,
\end{align*}
as $R_{\ell}(\mathbf{u})$ consists of terms with at most one $u_k$, and the terms without $u_k^t$ and $u_k^{t+1}$ are cancelled out.
Together with  \eqref{eq:diff} and \eqref{eq:equal}, we get
\begin{gather}
\begin{aligned}\label{eq:long_eq}
&\tilde{F}((\mathbf{u}_{\leq k-1}^{t+1}, \mathbf{u}_{\geq k}^t), \mathbf{c}^t) - 		\tilde{F}((\mathbf{u}_{\leq k}^{t+1}, \mathbf{u}_{\geq k+1}^t), \mathbf{c}^t)\\
  =& \|Du_k^t\|_1 - \|Du_k^{t+1}\|_1- \alpha(\|Du_k^t\|_{2,1} - \|Du_k^{t+1}\|_{2,1})\\&+ \chi_{U}(u_k^t)-\chi_{U}(u_k^{t+1}) + \lambda \langle r_k(\mathbf{c}^t, \mathbf{u}_{<k}^{t+1}, \mathbf{u}_{>k}^t), u_k^t -u_{k}^{t+1} \rangle_X \\
=&\|Du_k^t\|_1 - \|Du_k^{t+1}\|_1- \alpha(\|Du_k^t\|_{2,1} - \|Du_k^{t+1}\|_{2,1})\\
  &+ \chi_{U}(u_k^t)-\chi_{U}(u_k^{t+1})-  \langle p_k^{t+1} - \alpha D^{\top} q_k^t, u_k^t -u_k^{t+1} \rangle_X +2c \|u_k^{t+1}-u_k^t\|_X^2 \\
= &\left[ \left(\|Du_k^t\|_1 - \langle p_k^{t+1}, u_k^t -u_k^{t+1} \rangle_X +\chi_{U}(u_k^t) \right)- \|Du_k^{t+1}\|_1 -\chi_{U}(u_k^{t+1}) \right] \\
 &+\alpha (\|Du_k^{t+1}\|_{2,1} - \langle D^{\top}q_k^t, u_k^t - u_k^{t+1}\rangle_X - \|Du_k^t\|_{2,1}) + 2c  \|u_k^{t+1}-u_k^t\|_X^2.
\end{aligned}
\end{gather}
The definitions of convexity and subgradient yield that
\begin{align}
\|Du^{t}_k\|_1 +\chi_{U}(u^t_k) - \langle p_k^{t+1}, u_k^t - u_k^{t+1} \rangle_X  &\geq \|Du^{t+1}_k\|_1  +\chi_{U}(u^{t+1}_k), \\ \label{eq:convex_ineq}
\|Du_k^{t+1}\|_{2,1} - \langle D^{\top}q_k^t, u_k^{t+1}-u_k^t \rangle_X &\geq \|Du_k^t\|_{2,1}.
\end{align}
Combining \eqref{eq:long_eq}-\eqref{eq:convex_ineq}, we have
\begin{align*}
\tilde{F}((\mathbf{u}_{\leq k-1}^{t+1}, \mathbf{u}_{\geq k}^t), \mathbf{c}^t) - 		\tilde{F}((\mathbf{u}_{\leq k}^{t+1}, \mathbf{u}_{\geq k+1}^t), \mathbf{c}^t) \geq 2c \|u_k^{t+1}-u_k^t\|_X^2.
\end{align*}
Summing over $k = 1, \ldots, M$ leads to
\begin{gather}
\begin{aligned} \label{eq:sum_k}
\tilde{F}(\mathbf{u}^t, \mathbf{c}^{t}) - \tilde{F}(\mathbf{u}^{t+1}, \mathbf{c}^{t}) &= \sum_{k=1}^M \tilde{F}((\mathbf{u}_{\leq k-1}^{t+1}, \mathbf{u}_{\geq k}^t), \mathbf{c}^t) - 		\tilde{F}((\mathbf{u}_{\leq k}^{t+1}, \mathbf{u}_{\geq k+1}^t), \mathbf{c}^t) \\
&\geq 2c \sum_{k=1}^M \|u_k^{t+1} - u_k^t \|_X^2.
\end{aligned}
\end{gather}
Therefore, \eqref{eq:c_decrease} and \eqref{eq:sum_k} establish the desired result.
\end{proof}
\begin{theorem}\label{thm:opt}
	Suppose $\alpha \in [0,1]$ and $\lambda >0$.  Let $\{(\mathbf{u}^t, \mathbf{c}^t)\}_{t=1}^{\infty}$ be a sequence such that $\mathbf{u}^t$ is generated by \eqref{eq:DC_linear} and $\mathbf{c}^t$ is generated by  \eqref{eq:c_update}. We have the following:
	\begin{enumerate}[label=(\alph*)]
		\item $\{(\mathbf{u}^t, \mathbf{c}^t)\}_{t=1}^{\infty}$ is bounded.
		\item For $k=1, \ldots, M$, we have $\|u_k^{t+1}-u_k^t\|_X \rightarrow 0$ as $t \rightarrow \infty$.
		\item The sequence $\{(\mathbf{u}^t, \mathbf{c}^t)\}_{t=1}^{\infty}$ has a limit point $(\mathbf{u}^*, \mathbf{c}^*)$ satisfying
	\begin{align}\label{eq:crit_u}
	\mathbf{0} \in \partial\|Du_k^*\|_1 - \alpha \partial \|Du_k^*\|_{2,1} + \partial\chi_{U}(u_k^*)+ \lambda r_k(\mathbf{c}^*, \mathbf{u}_{<k}^{*}, \mathbf{u}_{>k}^*) 
	\end{align}
	for $k = 1, \ldots, M,$ and
	 \begin{align} \label{eq:c_opt}
	0 &\in \frac{\partial\tilde{F}(\mathbf{u}^*, \mathbf{c}^*)}{\partial c_{\ell}}, \quad \ell= 1, \ldots, N.	\end{align}
		\end{enumerate} 
\end{theorem}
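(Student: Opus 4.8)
The plan is to exploit the sufficient-decrease estimate of Lemma~\ref{lemma:decrease1} together with boundedness below of $\tilde F$. For \textbf{(a)}, I would note that each $u_k^{t+1}$ minimizes \eqref{eq:DC_linear}, whose objective contains $\chi_U(u_k)$; since $\tilde F$ is finite at the initialization (because $u_k^0\in\{0,1\}^{m\times n}$), the minimizer must have $\chi_U(u_k^{t+1})=0$, i.e.\ $u_k^{t+1}\in[0,1]^{m\times n}$, a compact box. Moreover the closed form \eqref{eq:c_update} expresses each $c_\ell^{t+1}$ either as $0$ or as a weighted average of the entries of $f$ with nonnegative weights $R_\ell(\mathbf{u}^{t+1})_{i,j}$, so $|c_\ell^{t+1}|\le\max_{i,j}|f_{i,j}|$; hence $\{(\mathbf{u}^t,\mathbf{c}^t)\}$ stays in a fixed compact set. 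For \textbf{(b)}, since $\alpha\in[0,1]$ we have $|(D_xu_k)_{i,j}|+|(D_yu_k)_{i,j}|\ge\sqrt{|(D_xu_k)_{i,j}|^2+|(D_yu_k)_{i,j}|^2}$ pointwise, so $\|Du_k\|_1-\alpha\|Du_k\|_{2,1}\ge0$; combined with $\chi_U\ge0$ and $\langle f_\ell(\mathbf{c}),R_\ell(\mathbf{u})\rangle_X\ge0$, this gives $\tilde F\ge0$. Thus $\{\tilde F(\mathbf{u}^t,\mathbf{c}^t)\}$ is non-increasing and bounded below, hence convergent, so the left side of the inequality in Lemma~\ref{lemma:decrease1} tends to $0$, forcing $\sum_{k=1}^M\|u_k^{t+1}-u_k^t\|_X^2\to0$ and therefore $\|u_k^{t+1}-u_k^t\|_X\to0$ for every $k$.

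\textbf{Part (c), the $\mathbf{u}$-condition.} By (a) and Bolzano--Weierstrass pick a subsequence $(\mathbf{u}^{t_j},\mathbf{c}^{t_j})\to(\mathbf{u}^*,\mathbf{c}^*)$; by (b) also $\mathbf{u}^{t_j+1}\to\mathbf{u}^*$. The first-order optimality condition of \eqref{eq:DC_linear} gives, for each $t$ and each $k$, some $p_k^{t+1}\in\partial(\|Du_k^{t+1}\|_1+\chi_U(u_k^{t+1}))$ with
\[
0 = p_k^{t+1} - \alpha D^\top q_k^{t} + 2c(u_k^{t+1}-u_k^{t}) + \lambda\, r_k(\mathbf{c}^{t},\mathbf{u}_{<k}^{t+1},\mathbf{u}_{>k}^{t}).
\]
The crucial observation is that, although $\partial\chi_U$ is unbounded in general, this identity forces $p_k^{t+1}$ to be a sum of quantities bounded uniformly in $t$ (the entries of $q_k^t$ have modulus $\le1$, $u_k^{t\pm1}$ lie in the box, and $r_k$ is a polynomial evaluated at bounded arguments). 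Hence $\{p_k^{t_j+1}\}$ and $\{q_k^{t_j}\}$ are bounded, and passing to a further subsequence (finitely many $k$) we may assume $p_k^{t_j+1}\to p_k^*$ and $q_k^{t_j}\to q_k^*$. Since the subdifferential of a closed proper convex function has closed graph, $p_k^*\in\partial(\|Du_k^*\|_1+\chi_U(u_k^*))$; likewise, as the entrywise subdifferential of the Euclidean norm is outer semicontinuous (and the convention setting $q=0$ at zero denominators is consistent with $0$ lying in the subdifferential of $\|\cdot\|_2$ at the origin), $D^\top q_k^*\in\partial\|Du_k^*\|_{2,1}$. Letting $j\to\infty$ in the identity, with $2c(u_k^{t_j+1}-u_k^{t_j})\to0$ by (b) and $r_k$ continuous, and using the sum rule $\partial(\|D\cdot\|_1+\chi_U)=\partial\|D\cdot\|_1+\partial\chi_U$ (valid as $\|D\cdot\|_1$ is finite everywhere and the box has nonempty interior), yields \eqref{eq:crit_u}.

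\textbf{Part (c), the $\mathbf{c}$-condition, and the main obstacle.} For \eqref{eq:c_opt} I would split on $S_\ell:=\sum_{i,j}R_\ell(\mathbf{u}^*)_{i,j}$. Since $R_\ell$ is a polynomial, $R_\ell(\mathbf{u}^{t_j})\to R_\ell(\mathbf{u}^*)$. If $S_\ell\ne0$, then for large $j$ formula \eqref{eq:c_update} gives $c_\ell^{t_j}=\big(\sum_{i,j}f_{i,j}R_\ell(\mathbf{u}^{t_j})_{i,j}\big)\big/\big(\sum_{i,j}R_\ell(\mathbf{u}^{t_j})_{i,j}\big)\to\big(\sum_{i,j}f_{i,j}R_\ell(\mathbf{u}^*)_{i,j}\big)/S_\ell$, and comparing with $c_\ell^{t_j}\to c_\ell^*$ shows $c_\ell^*$ is the minimizer of $c_\ell\mapsto\lambda\sum_{i,j}(f_{i,j}-c_\ell)^2R_\ell(\mathbf{u}^*)_{i,j}$, hence satisfies the vanishing-derivative condition \eqref{eq:c_opt}; if $S_\ell=0$, then $R_\ell(\mathbf{u}^*)\equiv0$, so $\tilde F(\mathbf{u}^*,\mathbf{c})$ is independent of $c_\ell$ and \eqref{eq:c_opt} holds trivially. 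I expect the delicate point of the whole argument to be in part (c): recognizing that the box-constraint subgradient $p_k^{t+1}$ cannot be bounded directly but only read off from the optimality identity, and handling the closed-graph / outer-semicontinuity passage to the limit for $\partial\|\cdot\|_{2,1}$ at pixels where $Du_k^*$ vanishes (including checking the zero-denominator convention). Parts (a) and (b) are routine once Lemma~\ref{lemma:decrease1} is in hand.
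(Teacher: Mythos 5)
Your proposal is correct and follows essentially the same route as the paper for (a), (b), and the $\mathbf{u}$-part of (c): boundedness from the box constraint together with the averaging formula \eqref{eq:c_update}, nonnegativity of $\tilde F$ plus Lemma~\ref{lemma:decrease1} for the vanishing successive differences, and passing to the limit in the first-order optimality condition of \eqref{eq:DC_linear}, with the same case analysis for $q_k$ at pixels where the gradient vanishes (the zero convention sitting inside the unit ball, i.e.\ inside $\partial\|\cdot\|_2(0)$). The only genuine divergences are in the $\mathbf{c}$-condition and in presentation: the paper obtains \eqref{eq:c_opt} by taking limits in the inequality $\tilde F(\mathbf{u}^{t_L},\mathbf{c}^{t_L})\le\tilde F(\mathbf{u}^{t_L},\mathbf{c})$ coming from \eqref{eq:minimize_c}, concluding that $\mathbf{c}^*$ minimizes $\tilde F(\mathbf{u}^*,\cdot)$, whereas you read off the limit of the closed-form averages \eqref{eq:c_update} directly, using $R_{\ell}(\mathbf{u}^*)\ge 0$ entrywise so that a vanishing sum forces $R_{\ell}(\mathbf{u}^*)\equiv 0$; this is equally valid and somewhat more concrete. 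Likewise, where you invoke the closed-graph/outer-semicontinuity property of convex subdifferentials (after extracting further subsequences for the bounded $p_k$ and $q_k$), the paper proves the same limiting inclusion by hand via the subgradient inequality applied to the combined vector $s_k^L=-\lambda r_k-2c(u_k^{t_L+1}-u_k^{t_L})+\alpha D^{\top}q_k^{t_L}$, and both use the same sum rule $\partial(\|D\cdot\|_1+\chi_U)=\partial\|D\cdot\|_1+\partial\chi_U$. Both routes deliver exactly the stated coordinatewise critical-point conditions \eqref{eq:crit_u} and \eqref{eq:c_opt}.
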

\begin{proof}
	(a) As each entry of $u_k^t$ is bounded by $[0,1]$ for $k=1, \ldots, M$, $\{\mathbf{u}^t\}_{t=1}^{\infty}$ is a bounded sequence. It further follows from  \eqref{eq:c_update} that 
$
0 \leq |c_{\ell}^{t+1}| \leq \max_{i,j} |f_{i,j}|.
$
 Therefore, $\{\mathbf{c}^t\}_{t=1}^{\infty}$ is also bounded, and altogether so is the sequence $\{(\mathbf{u}^t, \mathbf{c}^t)\}_{t=1}^{\infty}$. 
	
	\medskip
	(b) Since $\alpha \|Du_k\|_{2,1} \leq \|Du_k\|_1$ for $\alpha \in [0,1]$, we have
	\begin{align}\label{eq:nonnegativity}
	\tilde{F}(\mathbf{u}, \mathbf{c}) \geq \sum_{k=1}^M \chi_{U}(u_k) + \lambda \sum_{\ell=1}^N \langle f_{\ell}, R_{\ell}(\mathbf{u}) \rangle_X \geq 0,
 	\end{align}
 which implies that $\tilde{F}(\mathbf{u}, \mathbf{c})$ is lower bounded. As it is also decreasing by Lemma \ref{lemma:decrease1},  the sequence $\{\tilde{F}(\mathbf{u}^t, \mathbf{c}^t)\}_{t=1}^{\infty}$  converges. By a telescope summation of \eqref{eq:sum_k}, we obtain
	\begin{align*}
	\tilde{F}(\mathbf{u}^1, \mathbf{c}^1) - \lim_{t \rightarrow \infty} 	\tilde{F}(\mathbf{u}^t, \mathbf{c}^t) \geq 2c \sum_{t=1}^{\infty} \sum_{k=1}^M \|u_k^t -u_k^{t+1}\|_X^2 = 2c \sum_{k=1}^M \sum_{t=1}^{\infty} \|u_k^t -u_k^{t+1}\|_X^2.
	\end{align*}
	Therefore, $\sum_{t=1}^{\infty} \|u_k^t -u_k^{t+1}\|_X^2<\infty$,  leading to $\displaystyle \lim_{t \rightarrow \infty} \|u_k^t - u_k^{t+1}\|_X^2=0$ for $k=1, \ldots, M$.
	
	\medskip
	(c) By Bolzano-Weierstrass Theorem,  the bounded sequence $\{(\mathbf{u}^t, \mathbf{c}^t)\}_{t=1}^{\infty}$  has a convergent subsequence $\{(\mathbf{u}^{t_L}, \mathbf{c}^{t_L})\}_{L=1}^{\infty}$ such that $\displaystyle \lim_{L \rightarrow \infty} (\mathbf{u}^{t_L}, \mathbf{c}^{t_L}) = (\mathbf{u}^*, \mathbf{c}^*)$ . 
By (b), $\displaystyle \lim_{L\rightarrow \infty} u_k^{t_L+1} - u_k^{t_L} = 0$. As $\displaystyle \lim_{L \rightarrow \infty} u_k^{t_L+1} = \lim_{L \rightarrow \infty} u_k^{t_L} =u_k^{*},$ we have $\displaystyle \lim_{L \rightarrow \infty} \mathbf{u}^{t_L+1} = \mathbf{u}^*$.
Since $\mathbf{u}^{t_L}$ is generated by \eqref{eq:DC_linear}, all of its entries are bounded by $[0,1]$; otherwise, the objective function would be at $+\infty$. Hence, $\chi_{U}(u_k^{t_L}) = 0$ and similarly $\chi_{U}(u_k^{t_L+1}) = 0$ for all $L$, from which follows that $\chi_{U}(u_k^*) = 0$. In short, we have
	\begin{align} \label{eq:lim_chi}
	\lim_{L \rightarrow \infty} \chi_{U}(u_k^{t_L}) = \chi_{U}(u_k^*) \quad \text{ for } k = 1, \ldots, M.
	\end{align}

Now we establish \eqref{eq:c_opt} by showing that $ \tilde{F}(\mathbf{u}^*, \mathbf{c}^*) \leq  \tilde{F}(\mathbf{u}^*, \mathbf{c})$ for all   $\mathbf{c} \in \mathbb{R}^n.$ On one hand, we have
\begin{gather}\label{eq:thm1}
	\begin{aligned}
&\lim_{L \rightarrow \infty} \tilde{F}(\mathbf{u}^{t_L}, \mathbf{c}^{t_L})\\
=&\lim_{L \rightarrow \infty} \left[\sum_{k=1}^M \left( \|Du_k^{t_L}\|_1 - \alpha \|Du_k^{t_L}\|_{2,1} + \chi_U (u_k^{t_L})\right) +\lambda \sum_{\ell=1}^N \langle f_{\ell}(\mathbf{c}^{t_L}), R_{\ell}(\mathbf{u}^{t_L}) \rangle_X \right] \\
=&\sum_{k=1}^M \lim_{L \rightarrow \infty}\left( \|Du_k^{t_L}\|_1 - \alpha \|Du_k^{t_L}\|_{2,1} + \chi_U (u_k^{t_L})\right) +\lambda \sum_{\ell=1}^N \lim_{L \rightarrow \infty}\langle f_{\ell}(\mathbf{c}^{t_L}), R_{\ell}(\mathbf{u}^{t_L}) \rangle_X\\
=& \sum_{k=1}^M \left( \|Du_k^{*}\|_1 - \alpha \|Du_k^*\|_{2,1} + \chi_U (u_k^{*})\right) +\lambda \sum_{\ell=1}^N \langle f_{\ell}(\mathbf{c}^*), R_{\ell}(\mathbf{u}^*) \rangle_X =\tilde{F}(\mathbf{u}^*, \mathbf{c}^*).\\
		\end{aligned}
	\end{gather}
We can take the limit as all the terms of $\tilde{F}$ except for $\chi_U$ are continuous with respect to $(\mathbf{u}, \mathbf{c}).$ On the other hand, we have
\begin{gather}
	\begin{aligned}
	&\lim_{L \rightarrow \infty} \tilde{F}( \mathbf{u}^{t_L}, \mathbf{c}) \\
	=& \lim_{L \rightarrow \infty} \left[\sum_{k=1}^M \left( \|Du_k^{t_L}\|_1 - \alpha \|Du_k^{t_L}\|_{2,1} + \chi_U (u_k^{t_L})\right) +\lambda \sum_{\ell=1}^N \langle f_{\ell}(\mathbf{c}), R_{\ell}(\mathbf{u}^{t_L}) \rangle_X \right] \\
	=& \sum_{k=1}^M \lim_{L \rightarrow \infty}\left( \|Du_k^{t_L}\|_1 - \alpha \|Du_k\|_{2,1} + \chi_U (u_k^{t_L})\right) +\lambda \sum_{\ell=1}^N \lim_{L \rightarrow \infty}\langle f_{\ell}(\mathbf{c}), R_{\ell}(\mathbf{u}^{t_L}) \rangle_X\\
	=& \sum_{k=1}^M \left( \|Du_k^{*}\|_1 - \alpha \|Du_k^*\|_{2,1} + \chi_U (u_k^{*})\right) +\lambda \sum_{\ell=1}^N \langle f_{\ell}(\mathbf{c}), R_{\ell}(\mathbf{u}^*) \rangle_X = \tilde{F}(\mathbf{u}^*, \mathbf{c}).
	\end{aligned}\label{eq:thm2}
	\end{gather}
It follows from \eqref{eq:minimize_c} that for all $L \in \mathbb{N}$, we have
	\begin{align}\label{eq:F_c_ineq}
	\tilde{F}(\mathbf{u}^{t_L}, \mathbf{c}^{t_L}) \leq \tilde{F}(\mathbf{u}^{t_L}, \mathbf{c}) \quad \forall \; \mathbf{c} \in \mathbb{R}^N.
	\end{align}	
Combined with \eqref{eq:thm1}-\eqref{eq:thm2}, 
\begin{align*}
    \tilde{F}(\mathbf{u}^*, \mathbf{c}^*) = \lim_{L \rightarrow \infty} \tilde{F}(\mathbf{u}^{t_L}, \mathbf{c}^{t_L}) \leq \lim_{L \rightarrow \infty} \tilde{F}(\mathbf{u}^{t_L}, \mathbf{c}) = \tilde{F}(\mathbf{u}^*, \mathbf{c}) \quad \forall \; \mathbf{c} \in \mathbb{R}^N
\end{align*}
or, equivalently $\tilde{F}(\mathbf{u}^*, \mathbf{c}^*) = \displaystyle \inf_{\mathbf{c} \in \mathbb{R}^N} \tilde{F}(\mathbf{u}^*, \mathbf{c})$. 
	The minimization with respect to $\mathbf c$ can be expressed elementwise for each $c_{\ell}$, leading to the optimality condition of \eqref{eq:c_opt}.
	
	For the rest of the proof, we establish \eqref{eq:crit_u}. For each $k = 1, \ldots, M$, the optimality condition at the $(t_L +1)$th step of \eqref{eq:DC_linear} is 
\begin{gather}
	\begin{aligned}\label{eq:opt329}
	\mathbf{0} \in \partial & (\|Du_k^{t_L+1}\|_1 + \chi_{U} (u_k^{t_L+1})) + \lambda r_k(\mathbf{c}^{t_L}, \mathbf{u}_{<k}^{t_L+1}, \mathbf{u}_{>k}^{t_L}) +2c(u_k^{t_L+1} -u_k^{t_L})\\&- \alpha D^{\top} q_k^{t_L}.
	\end{aligned}
\end{gather}
Denote $s_k^L \coloneqq - \lambda r_k(\mathbf{c}^{t_L}, \mathbf{u}_{<k}^{t_L+1}, \mathbf{u}_k^{t_L}) -2c(u_k^{t_L+1} -u_k^{t_L})+ \alpha D^{\top} q_k^{t_L}.$ Then \eqref{eq:opt329} implies that
	\begin{align} \label{eq:opt_eq3}
s_k^L\in \partial (\|Du_k^{t_L+1}\|_1 + \chi_{U} (u_k^{t_L+1})).
	\end{align}
Since $r_k(\mathbf{c}, \mathbf{u}_{<k}, \mathbf{u} _{>k})$ is continuous in $(\mathbf{c},\mathbf{u}_{<k}, \mathbf{u}_{>k})$, we have \[ \lim_{L \rightarrow \infty} r_k(\mathbf{c}^{t_L}, \mathbf{u}_{<k}^{t_L+1}, \mathbf{u}_{>k}^{t_L}) = r_k(\mathbf{c}^*, \mathbf{u}^*_{<k}, \mathbf{u}^*_{>k}).\]

To compute the limit of $D^{\top} q_k^{t_L},$ we recall the multivariate subgradient of\\
$
\partial \|Du_k\|_{2,1} = \prod_{(i,j)} \partial \|(Du_k)_{i,j}\|_2,
$
where 
\begin{align*}
\partial \|(x_1, x_2)\|_2 = \begin{cases}
\left\{\frac{(x_1,x_2)}{\sqrt{x_1^2+x_2^2}}\right\} &\text{ if } (x_1,x_2) \neq (0,0) \in \mathbb{R}^2, \\
\{(y_1,y_2) \in \mathbb{R}^2: y_1^2+y_2^2 \leq 1\} & \text{ if } (x_1, x_2) = (0,0).
\end{cases}
\end{align*}
Let  $((v^*_{x,k})_{i,j}, (v^*_{y,k})_{i,j}) \coloneqq ((D_x u^*_k)_{i,j},(D_y u^*_k)_{i,j})$ be the discrete gradient of $u^*_k$ at entry $(i,j)$ for $k=1, \ldots, M,$ which satisfies
\begin{align*}
&\partial \|(v_{x,k}^*)_{i,j}, (v_{y,k}^*)_{i,j}\|_2=\\ &\qquad \qquad \begin{cases} \left\{\frac{((v_{x,k}^*)_{i,j},(v_{y,k}^*)_{i,j} )}{\sqrt{|(v_{x,k}^*)_{i,j}|^2 + |(v_{y,k}^*)_{i,j}|^2}}\right\} &\text{ if }  ((v_{x,k}^*)_{i,j},(v_{y,k}^*)_{i,j} ) \neq (0,0), \\
\{(y_1,y_2) \in \mathbb{R}^2: y_1^2+y_2^2 \leq 1\} & \text{ if } ((v_{x,k}^*)_{i,j},(v_{y,k}^*)_{i,j} ) = (0,0).
\end{cases}
\end{align*}
Note that we define $q_k^{t_L}$ in the following way
\begin{align}\label{eq:q_formula}
(q_k^{t_L})_{i,j} = \begin{cases} \frac{((D_x u_k^{t_L})_{i,j},(D_y u_k^{t_L})_{i,j} )}{\sqrt{|(D_x u_k^{t_L})_{i,j}|^2 + |(D_y u_k^{t_L})_{i,j}|^2}} &\text{ if }  ((D_x u_k^{t_L})_{i,j},(D_y u_k^{t_L})_{i,j} ) \neq (0,0), \\
(0,0) &\text{ if }  ((D_x u_k^{t_L})_{i,j},(D_y u_k^{t_L})_{i,j} ) = (0,0).
\end{cases}
\end{align}
Denote $q_k^* \coloneqq \displaystyle \lim_{L \rightarrow \infty} q_k^{t_L}$. 
Therefore, by \eqref{eq:q_formula},  when $((v_x^*)_{i,j}, (v_y^*)_{i,j}) \neq (0,0)$, we have
\begin{align*}
(q_k^*)_{i,j} = \lim_{L \rightarrow \infty} (q_k^{t_L})_{i,j} =  \frac{((v_{x,k}^*)_{i,j},(v_{y,k}^*)_{i,j} )}{\sqrt{|(v_{x,k}^*)_{i,j}|^2 + |(v_{y,k}^*)_{i,j}|^2}} \in \partial \|((v_{x,k}^*)_{i,j}, (v_{y,k}^*)_{i,j})\|_2,
\end{align*}
and when $((v_x^*)_{i,j}, (v_y^*)_{i,j}) = (0,0)$, we have
\begin{align*}
(q_k^{t_L})_{i,j}  \in \{(y_1,y_2) \in \mathbb{R}^2: y_1^2+y_2^2 \leq 1\} \subseteq \partial \|((v_{x,k}^*)_{i,j}, (v_{y,k}^*)_{i,j})\|_2
\end{align*}
for all $L \in \mathbb{N}$ so that taking the limit $L \rightarrow \infty$ yields  $(q_k^*)_{i,j} \in \partial \|((v_{x,k}^*)_{i,j}, (v_{y,k}^*)_{i,j})\|_2$. By the chain rule of the subgradient (Corollary 16 in \cite{hantoute2008subdifferential}), we have \[\partial\|(Du_k^*)_{i,j}\|_2 = D^{\top} \partial \|((v_{x,k}^*)_{i,j}, (v_{y,k}^*)_{i,j})\|_2.\]
Since $D^\top$ is a linear operator (thus continuous), we get 
\begin{align}
\lim_{L\rightarrow \infty} D^{\top} q_k^{t_L} = D^{\top} q_k^* \in  \partial \|Du_k^*\|_{2,1}.
\end{align}
In short, we obtain that 
$s_k^* \coloneqq \displaystyle
\lim_{L \rightarrow \infty} s_k^L =  -\lambda r_k(\mathbf{c}^*, \mathbf{u}_{<k}^*, \mathbf{u}_{>k}^*)+ \alpha D^{\top}q_k^*.$

It further follows from \eqref{eq:opt_eq3} and the subgradient definition that \begin{gather}
\begin{aligned}
\|Du_k\|_1 + \chi_{U}(u_k) &\geq \|Du_k^{t_L+1}\|_1 + \chi_{U}(u_k^{t_L+1}) + \langle  s_k^L, u_k - u_k^{t_L+1} \rangle\\ &= \|Du_k^{t_L+1}\|_1 + \langle  s_k^L, u_k - u_k^{t_L+1} \rangle
\end{aligned}
\end{gather}
for all $u_k \in X$ and $L \in \mathbb{N}$. By continuity, we obtain
\begin{align*}
\|Du_k\|_1 + \chi_{U}(u_k) &\geq \lim_{L \rightarrow \infty} \left(\|Du_k^{t_L+1}\|_1 + \langle  s_k^L, u_k - u_k^{t_L+1} \rangle \right)\\ &= \|Du_k^*\|_1 + \langle s_k^*, u_k - u_k^* \rangle = \|Du_k^*\|_1 + \chi_{U}(u_k^*) + \langle s_k^*, u_k - u_k^* \rangle,
\end{align*}
 where the last equality is due to $\chi_{U}(u_k^*) = 0$. Since both $\|Du\|_1$ and $\chi_{U}(u)$ are convex,  $s_k^* \in \partial(\|Du_k^*\|_1 + \chi_{U}(u_k^*)) =  \partial\|Du_k^*\|_1 + \partial \chi_{U}(u_k^*)$.
 Therefore, we have
\begin{align*}
\mathbf{0} &\in \partial\|Du_k^*\|_1  + \partial\chi_{U}(u_k^*)+ \lambda r_k(\mathbf{c}^*, \mathbf{u}_{<k}^{*}, \mathbf{u}_{>k}^*)- \alpha D^{\top} q_k^* \\
&\subseteq \partial\|Du_k^*\|_1 - \alpha \partial \|Du_k^*\|_{2,1} + \partial\chi_{U}(u_k^*)+ \lambda r_k(\mathbf{c}^*, \mathbf{u}_{<k}^{*}, \mathbf{u}_{>k}^*).
\end{align*}
This concludes the proof.
\end{proof}
\begin{remark}
	The limit point $(\mathbf{u}^*, \mathbf{c}^*)$ is not guaranteed to be a global optimal solution for \eqref{eq:relax_AITV_MCV} because the objective function is nonconvex, and $(\mathbf{u}^*, \mathbf{c}^*)$ may not even satisfy a first-order optimality condition $\mathbf{0} \in \partial_{(\mathbf{u}, \mathbf{c})} \tilde{F}(\mathbf{u}^*, \mathbf{c}^*)$. However, according to Theorem \ref{thm:opt} (c),  each coordinate $u_k^*$ or $c_{\ell}^*$ satisfies its respective first-order optimality condition, since $(\mathbf{u}^*, \mathbf{c}^*) = (u_1^*, \ldots, u_M^*, c_1^*, \ldots, c_N^*)$. In convex optimization, if $g$ is convex, a point $x^*$ is a critical point if $0 \in \partial{g}(x^*)$. \eqref{eq:c_opt} establishes $c_{\ell}^*$ to be a critical point of the function convex in $c_{\ell}$,
	\begin{align*}
	  \sum_{i=1}^m \sum_{j=1}^n (f_{i,j} - c_{\ell})^2 R_{\ell}(\mathbf{u})_{i,j},
	\end{align*}
	which is derived from \eqref{eq:relax_AITV_MCV} when minimizing for $c_{\ell}$.
	In DC optimization, a point $x^*$  is a critical point of DC function $g-h$ if $0 \in \partial g(u^*)-\partial h(u^*)$ \cite{le2018dc}. However, this optimality condition is not as strong as the optimality condition $0 \in \partial(g-h)(u^*)$ because  $\partial(g-h)(u^*) \subset \partial g(u^*)-\partial h(u^*)$ in terms of either the Clarke subdifferential or the Fréchet subdifferential \cite{le2018dc}. \eqref{eq:crit_u} establishes $u_k^*$ to be a DC critical point of the DC function
	\begin{align*}
	   \underbrace{\|Du_k\|_1  + \chi_{U}(u_k) + \lambda \langle r_k (\mathbf{c}, \mathbf{u}_{<k}, \mathbf{u}_{>k}), u_k \rangle_X}_{g(u_k)} -\underbrace{\alpha \|Du_k\|_{2,1}}_{h(u_k)},
	\end{align*}
	which is derived from \eqref{eq:relax_AITV_MCV} when minimizing for $u_k$.
\end{remark}
\section{Fuzzy Extension of the AICV Model}\label{sec:AIFR}
One limitation of the CV models is that they are only applicable for image segmentation that has specifically power-of-two number (i.e., $2^M$) of  regions.  To generalize to an arbitrary number of regions $N$, we associate each region $\Omega_{\ell}$  with a membership function $u_{\ell}$ for $\ell=1, \ldots, N$.  A membership function $u_{\ell}$ represents a region $\Omega_{\ell}$ in the following way:
\begin{align*}
    (u_{\ell})_{i,j} = \begin{cases}
    1 &\text{ if } (i,j) \in \Omega_{\ell}, \\
    0 & \text{ if } (i,j) \not \in \Omega_{\ell}.
    \end{cases}
\end{align*}
To avoid overlap between $u_{\ell}$'s, we enforce the constraint $\sum_{\ell=1}^N u_{\ell} = \mathbbm{1}$, but we relax it with a quadratic penalty to make the model numerically tractable. As such, we propose an AITV extension to the FR model, referred to as AIFR,
\begin{gather}
\begin{aligned}\label{eq:fuzzy_AITV}
	\min_{\subalign{\mathbf{u} &\in X^N\\ \mathbf{c} &\in \mathbb{R}^N}} \hat{F}(\mathbf{u}, \mathbf{c}) \coloneqq  &\sum_{\ell=1}^N \left(\|Du_{\ell}\|_1 - \alpha \|Du_{\ell}\|_{2,1} + \chi_{U}(u_{\ell})\right) + \lambda \sum_{\ell=1}^N \langle  f_{\ell}(\mathbf{c}),u_{\ell}\rangle_X\\  &+ \frac{\nu}{2} \left\|\sum_{\ell=1}^N u_{\ell}  - \mathbbm{1}\right\|_X^2
\end{aligned}
\end{gather}
with $\nu > 0$. 
Similarly to \eqref{eq:minimize_u}-\eqref{eq:minimize_c}, we adopt the alternating minimization framework to solve \eqref{eq:fuzzy_AITV}, i.e.,
	\begin{align} \label{eq:minimize_u2}
\mathbf{u}^{t+1} & \in \argmin_{\mathbf{u}} \hat{F}(\mathbf{u}, \mathbf{c}^t),  \\ \label{eq:minimize_c2} 
\mathbf{c}^{t+1} &\in \argmin_{\mathbf{c}} \hat{F}(\mathbf{u}^{t+1}, \mathbf{c}).
\end{align}
The $\mathbf c$-subproblem \eqref{eq:minimize_c2} has a closed-form solution for $\ell=1, \ldots, N$,
\begin{align}
\label{eq:c_update2}
c^{t+1}_{\ell} =\begin{cases} \frac{\displaystyle\sum_{i=1}^m \sum_{j=1}^n f_{i,j}(u_{\ell}^{t+1})_{i,j}}{\displaystyle\sum_{i=1}^m \sum_{j=1}^n (u_{\ell}^{t+1})_{i,j}} &\text{ if } \displaystyle\sum_{i=1}^m \sum_{j=1}^n (u_{\ell}^{t+1})_{i,j} \neq 0, \\
0 &\text{ if } \displaystyle\sum_{i=1}^m \sum_{j=1}^n (u_{\ell}^{t+1})_{i,j} = 0.
\end{cases}
\end{align}
 For \eqref{eq:minimize_u2}, we can find $u_{\ell}^{t+1}$ coordinatewise with respect to $\ell$ by solving
\begin{gather}
\begin{aligned} \label{eq:fuzzy_u}
u_{\ell}^{t+1} \in \argmin_{u_{\ell}} &\|Du_{\ell}\|_1 - \alpha \|Du_{\ell}\|_{2,1}  + \chi_{U}(u_{\ell}) + \lambda  \langle f_{\ell}(\mathbf{c}), u_{\ell} \rangle_X\\ &+ \frac{\nu}{2} \left \|\sum_{j < \ell } u_j^{t+1} + u_{\ell} + \sum_{j > \ell} u_{\ell}^t  - \mathbbm{1} \right \|_X^2.
\end{aligned}
\end{gather}
Applying  DCA \eqref{eq:DCA} to solve for \eqref{eq:fuzzy_u} gives
\begin{gather}
\begin{aligned}\label{eq:DC_linear2}
u_{\ell}^{t+1} = \argmin_{u_{\ell}}  &\|Du_{\ell}\|_1 + \chi_{U}(u_{\ell}) + \lambda \langle f_{\ell}(\mathbf{c}), u_{\ell} \rangle_X\\ & + \frac{\nu}{2} \left \|\sum_{j < \ell } u_j^{t+1} + u_{\ell} + \sum_{j > \ell} u_{\ell}^t - \mathbbm{1} \right \|_X^2+ c\|u_{\ell}\|_X^2\\ &-\alpha \langle Du_{\ell}, q_{\ell}^t \rangle_Y - 2c \langle u_{\ell}, u_{\ell}^t \rangle_X,
\end{aligned}
\end{gather}
 where $q_{\ell}^t \coloneqq ((q_x)_{\ell}^t, (q_y)_{\ell}^t) = (D_x u_{\ell}^t, D_y u_{\ell}^t)/\sqrt{|D_xu_{\ell}^t|^2 + |D_yu_{\ell}^t|^2}$ if the denominator is not zero. Similarly to \eqref{eq:DC_linear},  we  apply PDHGLS to find $u_{\ell}^{t+1}$ in \eqref{eq:DC_linear2} with the following iteration:
 \begin{align} \label{eq:fuzzy_pdhg_u}
 u^{t,\eta+1}_{\ell} &= \min\left\{\max\left\{\frac{2c u_{\ell}^t + \frac{1}{\tau} u_{\ell}^{t,\eta} + \nu \left(\mathbbm{1} -\sum_{j < \ell } u_j^{t+1} - \sum_{j > \ell} u_{\ell}^t\right)}{2c + \frac{1}{\tau} + \nu } \right. \right. \\
 &\nonumber \qquad\qquad \qquad \left. \left.- \frac{\lambda f_{\ell}(\mathbf{c})- \alpha D^{\top} q_{\ell}^t + (D_x^{\top} (p_x)_{\ell}^{\eta} + D_y^{\top} (p_y)_{\ell}^{\eta})}{2c+ \frac{1}{\tau} + \nu}, 0 \right\}, 1\right\},\\
 \bar{u}_{\ell}^{t,\eta+1} &= u_{\ell}^{t,\eta+1}+\theta(u_{\ell}^{t,\eta+1}-u_{\ell}^{t,\eta}) \label{eq:fuzzy_px_pdhg},\\
	(p_x)_{\ell}^{\eta+1} &=\text{Proj}_P ((p_x)_{\ell}^{\eta} + \sigma D_x \bar{u}_{\ell}^{t,\eta+1}), \\ \label{eq:fuzzy_py_pdhg}
(p_y)_{\ell}^{\eta+1} &=\text{Proj}_P ((p_y)_{\ell}^{\eta} + \sigma D_y \bar{u}^{t,\eta+1}_{\ell})
\end{align}
for $u^{t,0}_{\ell} = u^t_{\ell}$ and $\tau, \sigma >0$, $\theta \in [0,1]$. 
The proposed algorithm is referred to as DCA-PDHGLS, summarized in Algorithm \ref{alg:DCA_PDHG2}. 
Convergence analysis of the sequence $\{(\mathbf{u}^t, \mathbf{c}^t)\}_{t=1}^{\infty}$ generated by \eqref{eq:DC_linear2} and \eqref{eq:c_update2} can be established similarly to the one in Section \ref{subsec:converge}. Hence, we have the following theorem, but for the sake of brevity, the proof is omitted.

	\begin{algorithm}[t]
	\caption{DCA-PDHGLS algorithm to solve \eqref{eq:fuzzy_AITV}}
	\label{alg:DCA_PDHG2}\smallskip
	\textbf{Input: }\begin{itemize}
		\item Image $f$
		\item model parameters $\alpha, \lambda >0$ 
		\item strong convexity parameter $c > 0$
		\item quadratic penalty parameter $\nu > 0$
		\item PDHGLS initial step size $\tau_0 > 0$
		\item PDHGLS primal-dual step size ratio $\beta > 0$
		\item PDHGLS parameter $\delta \in (0,1)$
		\item PDHGLS step size multiplier $\mu \in (0,1)$
	\end{itemize}
	\begin{algorithmic}[1]
		\STATE  Set $u_{\ell}^0 = 1$ $(\ell=1, \ldots, N)$ for some region $\Sigma \subset \Omega$ and $0$ elsewhere.
		\STATE Compute $\mathbf{c}^0 = (c_1^0, \ldots, c_N^0)$ by \eqref{eq:c_update2}.
		\STATE Set $t \coloneqq 0$.
		\WHILE{stopping criterion for DCA is not satisfied}
		\FOR{$\ell=1$ to $M$}
		\STATE Set $u_{\ell}^{t,0} \coloneqq u_{\ell}^t$ and $(p_x)_{\ell}^0 = (p_y)_{\ell}^0 = 0$.
		\STATE Compute $((q_x)_{\ell}^{t}, (q_y)_{\ell}^{t}) = (D_x u_{\ell}^{t}, D_y u_{\ell}^{t})/\sqrt{|D_xu_{\ell}^{t}|^2 + |D_yu_{\ell}^{t}|^2}$.
		\STATE Set $\theta_0 =1$.
		\STATE Set $\eta \coloneqq  0$.
		\WHILE{stopping criterion for PDHGLS is not satisfied}
		\STATE Compute $u_{\ell}^{t,\eta+1}$ by \eqref{eq:fuzzy_pdhg_u} with $\tau \coloneqq \tau_{\eta}$.
		\STATE Set $\tau_{\eta+1} = \tau_{\eta} \sqrt{1+\theta_{\eta}}$.\\
		\textbf{Linesearch:}
		\STATE Compute $\theta_{\eta+1} = \frac{\tau_{\eta+1}}{\tau_{\eta}}$ and $\sigma_{\eta+1} = \beta \tau_{\eta+1}$.
		\STATE Compute $\bar{u}_{\ell}^{t,\eta+1} = u_{\ell}^{t,\eta+1}+\theta_{\eta+1}(u_{\ell}^{t,\eta+1}-u_{\ell}^{t,\eta})$.
		\STATE Compute $p_{\ell}^{\eta+1} \coloneqq((p_x)_{\ell}^{\eta+1}, (p_y)_{\ell}^{\eta+1})$ by \eqref{eq:fuzzy_px_pdhg}-\eqref{eq:fuzzy_py_pdhg} with $\sigma \coloneqq \sigma_{\eta+1}$.
		\IF{$\sqrt{\beta} \tau_{\eta+1} \|(D_x^{\top} (p_x)_{\ell}^{\eta+1},  D_y^{\top} (p_y)_{\ell}^{\eta+1}) -(D_x^{\top}(p_x)_{\ell}^{\eta}, D_y^{\top}(p_y)_{\ell}^{\eta}) \|_Y \leq  \delta \|p_{\ell}^{\eta+1}-p_{\ell}^{\eta}\|_Y$}
		\STATE Set $\eta \coloneqq \eta+1$, and break linesearch
		\ELSE 
		\STATE Set $\tau_{\eta+1} \coloneqq \mu \tau_{\eta+1}$ and go back to line 13.
		\ENDIF\\
		\textbf{End of linesearch}
		\ENDWHILE
		\STATE Set $u_{\ell}^{t+1} \coloneqq u_{\ell}^{t,\eta}$.
		\ENDFOR
		\STATE Compute $\mathbf{c}^{t+1}$ by \eqref{eq:c_update2}.
		\STATE Set $t \coloneqq t+1.$
		\ENDWHILE
	\end{algorithmic}
	\textbf{Output:} $(\mathbf{u}, \mathbf{c}) \coloneqq (\mathbf{u}^t, \mathbf{c}^t)$.
\end{algorithm}

\begin{theorem}\label{thm:convFR}
	Suppose $\alpha \in [0,1]$ and $\lambda >0$.  Let $\{(\mathbf{u}^t, \mathbf{c}^t)\}_{t=1}^{\infty}$ be a sequence such that $\mathbf{u}^t$ is generated by \eqref{eq:DC_linear2} and $\mathbf{c}^t$ is generated by  \eqref{eq:c_update2}. We have the following:
	\begin{enumerate}[label=(\alph*)]
		\item $\{(\mathbf{u}^t, \mathbf{c}^t)\}_{t=1}^{\infty}$ is bounded.
		\item For $\ell=1, \ldots, N$, we have $\|u_{\ell}^{t+1}-u_{\ell}^t\|_X \rightarrow 0$ as $t \rightarrow \infty$.
		\item The sequence $\{(\mathbf{u}^t, \mathbf{c}^t)\}_{t=1}^{\infty}$ has a limit point $(\mathbf{u}^*, \mathbf{c}^*)$ satisfying
\begin{align}
		&\mathbf{0} \in \partial\|Du_{\ell}^*\|_1 - \alpha \partial \|Du_{\ell}^*\|_{2,1} + \partial\chi_{U}(u_{\ell}^*)+ \lambda f_{\ell}(\mathbf{c}^*) + \nu \left(\sum_{j=1}^N u_j^* - \mathbbm{1} \right),\\
		&0 \in \frac{\partial\hat{F}(\mathbf{u}^*, \mathbf{c}^*)}{\partial c_{\ell}} \quad \forall\; \ell = 1, \ldots, N.
		\end{align}
	\end{enumerate} 
\end{theorem}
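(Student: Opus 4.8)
The plan is to mirror Section~\ref{subsec:converge} almost verbatim, with the substitutions $\tilde F\mapsto\hat F$, index set $\{1,\dots,M\}\mapsto\{1,\dots,N\}$, linear data coefficient $\lambda r_k(\mathbf c^t,\mathbf u_{<k}^{t+1},\mathbf u_{>k}^t)\mapsto\lambda f_\ell(\mathbf c^t)$, and with the extra quadratic coupling term $\frac{\nu}{2}\|\sum_{j=1}^Nu_j-\mathbbm 1\|_X^2$ tracked through every estimate. First I would prove the analogue of Lemma~\ref{lemma:decrease1}, then deduce (a)--(c) exactly as in Theorem~\ref{thm:opt}.

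For the decrease lemma the target is $\hat F(\mathbf u^t,\mathbf c^t)-\hat F(\mathbf u^{t+1},\mathbf c^{t+1})\ge 2c\sum_{\ell=1}^N\|u_\ell^t-u_\ell^{t+1}\|_X^2$. Since \eqref{eq:c_update2} gives $\hat F(\mathbf u^{t+1},\mathbf c^{t+1})\le\hat F(\mathbf u^{t+1},\mathbf c^t)$, it suffices to bound each block-coordinate difference $\hat F((\mathbf u^{t+1}_{\le\ell-1},\mathbf u^t_{\ge\ell}),\mathbf c^t)-\hat F((\mathbf u^{t+1}_{\le\ell},\mathbf u^t_{\ge\ell+1}),\mathbf c^t)$. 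Setting $w_\ell:=\sum_{j<\ell}u_j^{t+1}+\sum_{j>\ell}u_j^t-\mathbbm 1$, the identity $\|a\|_X^2-\|b\|_X^2=\langle a-b,a+b\rangle_X$ turns the difference of the two penalty contributions into $\nu\langle w_\ell+u_\ell^{t+1},u_\ell^t-u_\ell^{t+1}\rangle_X+\frac{\nu}{2}\|u_\ell^t-u_\ell^{t+1}\|_X^2$, and the first-order optimality condition of \eqref{eq:DC_linear2} at $u_\ell^{t+1}$ supplies $p_\ell^{t+1}\in\partial(\|Du_\ell^{t+1}\|_1+\chi_U(u_\ell^{t+1}))$ with $0=p_\ell^{t+1}-\alpha D^\top q_\ell^t+2c(u_\ell^{t+1}-u_\ell^t)+\lambda f_\ell(\mathbf c^t)+\nu(w_\ell+u_\ell^{t+1})$. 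Pairing this with $u_\ell^t-u_\ell^{t+1}$ eliminates the linear data-plus-penalty term, mirroring \eqref{eq:equal}, and the convexity inequalities \eqref{eq:convex_ineq} (one for $\|D\cdot\|_1+\chi_U(\cdot)$ with subgradient $p_\ell^{t+1}$, one for $\|D\cdot\|_{2,1}$ with subgradient $D^\top q_\ell^t$) leave the block difference bounded below by $2c\|u_\ell^{t+1}-u_\ell^t\|_X^2+\frac{\nu}{2}\|u_\ell^{t+1}-u_\ell^t\|_X^2\ge 2c\|u_\ell^{t+1}-u_\ell^t\|_X^2$. Summing over $\ell$ finishes the lemma.

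Parts (a) and (b) then follow as before: each entry of $u_\ell^t$ lies in $[0,1]$ and \eqref{eq:c_update2} forces $0\le|c_\ell^{t+1}|\le\max_{i,j}|f_{i,j}|$, giving (a); and because $\alpha\|Du_\ell\|_{2,1}\le\|Du_\ell\|_1$, the indicator terms are nonnegative, the data terms $\sum_{i,j}(f_{i,j}-c_\ell)^2(u_\ell)_{i,j}$ are nonnegative on the box, and the penalty is nonnegative, we have $\hat F\ge0$; monotonicity from the lemma plus this lower bound gives convergence of $\{\hat F(\mathbf u^t,\mathbf c^t)\}$, and telescoping the lemma forces $\sum_t\|u_\ell^{t+1}-u_\ell^t\|_X^2<\infty$, hence $\|u_\ell^{t+1}-u_\ell^t\|_X\to0$. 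For (c), take a convergent subsequence $(\mathbf u^{t_L},\mathbf c^{t_L})\to(\mathbf u^*,\mathbf c^*)$ via Bolzano--Weierstrass; by (b) also $\mathbf u^{t_L+1}\to\mathbf u^*$, and since all iterates lie in the box, $\chi_U(u_\ell^{t_L})=\chi_U(u_\ell^{t_L+1})=\chi_U(u_\ell^*)=0$. The $\mathbf c$-condition follows by passing to the limit in $\hat F(\mathbf u^{t_L},\mathbf c^{t_L})\le\hat F(\mathbf u^{t_L},\mathbf c)$, using continuity of every non-indicator term, to conclude $\hat F(\mathbf u^*,\mathbf c^*)=\inf_{\mathbf c}\hat F(\mathbf u^*,\mathbf c)$, then reading off the coordinatewise optimality.

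For the $\mathbf u$-condition, write the first-order optimality of \eqref{eq:DC_linear2} at $u_\ell^{t_L+1}$ and set $s_\ell^L:=-\lambda f_\ell(\mathbf c^{t_L})-\nu(\sum_{j<\ell}u_j^{t_L+1}+u_\ell^{t_L+1}+\sum_{j>\ell}u_j^{t_L}-\mathbbm 1)-2c(u_\ell^{t_L+1}-u_\ell^{t_L})+\alpha D^\top q_\ell^{t_L}$, so that $s_\ell^L\in\partial(\|Du_\ell^{t_L+1}\|_1+\chi_U(u_\ell^{t_L+1}))$. Continuity of $f_\ell$ and of $\mathbf u\mapsto\sum_ju_j-\mathbbm 1$, together with $\mathbf u^{t_L},\mathbf u^{t_L+1}\to\mathbf u^*$, handles the smooth part of $s_\ell^L$, while $D^\top q_\ell^{t_L}$ is treated exactly as in Theorem~\ref{thm:opt}(c): extract a further subsequence along which $q_\ell^{t_L}\to q_\ell^*$, then the explicit formula for $\partial\|(\cdot,\cdot)\|_2$ and the subgradient chain rule (Corollary~16 of \cite{hantoute2008subdifferential}) give $D^\top q_\ell^*\in\partial\|Du_\ell^*\|_{2,1}$. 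Passing to the limit in the subgradient inequality for $s_\ell^L$ and using $\chi_U(u_\ell^*)=0$ yields $s_\ell^*\in\partial\|Du_\ell^*\|_1+\partial\chi_U(u_\ell^*)$, which rearranges to $\mathbf 0\in\partial\|Du_\ell^*\|_1-\alpha\partial\|Du_\ell^*\|_{2,1}+\partial\chi_U(u_\ell^*)+\lambda f_\ell(\mathbf c^*)+\nu(\sum_{j=1}^Nu_j^*-\mathbbm 1)$. The main obstacle is exactly this last passage to the limit in the nonconvex subdifferential of $\|Du_\ell\|_{2,1}$: at pixels with $(Du_\ell^*)_{i,j}=(0,0)$ the unit vectors $(q_\ell^{t_L})_{i,j}$ may fail to converge, so one genuinely needs the further subsequence and must then check that the limiting values lie in the closed unit ball $\partial\|(0,0)\|_2$; this is the only spot where the DC (rather than purely convex) structure bites, and everything else is a mechanical transcription of Section~\ref{subsec:converge} in which the extra convex quadratic penalty merely contributes additional nonnegative terms to each estimate.
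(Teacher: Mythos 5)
Your proposal is correct and follows exactly the route the paper intends: the paper omits this proof, stating it is established analogously to Section~\ref{subsec:converge}, and your transcription of Lemma~\ref{lemma:decrease1} and Theorem~\ref{thm:opt} with $\tilde F\mapsto\hat F$, $\lambda r_k\mapsto\lambda f_\ell(\mathbf{c})$, and the quadratic coupling tracked through each block estimate (contributing only the extra nonnegative term $\tfrac{\nu}{2}\|u_\ell^t-u_\ell^{t+1}\|_X^2$) is precisely that analogy carried out correctly. Your remark about passing to a further subsequence so that $q_\ell^{t_L}$ converges at pixels where $(Du_\ell^*)_{i,j}=(0,0)$ is, if anything, slightly more careful than the paper's own treatment in Theorem~\ref{thm:opt}(c).
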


\section{Extension to Color Images}\label{sec:color}
Both AICV  \eqref{eq:relax_AITV_MCV} and AIFR \eqref{eq:fuzzy_AITV} models can be extended to color image segmentation. Let $\mathbf{f} = (f_r, f_g, f_b) \in X^3$ be a color image and $(c_{\ell,r}, c_{\ell,g}, c_{\ell,b}) \in \mathbb{R}^3$ for $\ell = 1, \ldots, N$. By replacing $f_{\ell}(\mathbf{c})$ with 
\begin{align*}
\mathbf{f}_{\ell} (\mathbf{c}_r, \mathbf{c}_g, \mathbf{c}_b) = \sum_{\iota \in \{r,g,b\}} (f_{\iota} - c_{\ell,\iota} \mathbbm{1})^2,
\end{align*}
where $\mathbf{c}_{\iota} = (c_{1, \iota}, \ldots, c_{N,\iota} )$ for $\iota \in \{r,g,b\}$, the AICV model for color segmentation is
\begin{align}
\label{eq:color_relax_AITV_MCV}
\min_{\subalign{\mathbf{u} &\in X^M\\ \mathbf{c}_r, \mathbf{c}_g, &\mathbf{c}_b \in \mathbb{R}^N}} \sum_{k=1}^M \left(\|Du_k\|_1 - \alpha \|Du_k \|_{2,1} + \chi_{U}(u_k) \right) + \lambda \sum_{\ell=1}^N \langle  \mathbf{f}_{\ell} (\mathbf{c}_r, \mathbf{c}_g, \mathbf{c}_b) , R_{\ell}(\mathbf{u}) \rangle_X.
\end{align}
Similarly, the AIFR model for color segmentation is
\begin{gather}
\begin{aligned}\label{eq:color fuzzy_AITV}
\min_{\subalign{\mathbf{u} &\in X^N\\ \mathbf{c}_r, \mathbf{c}_g, &\mathbf{c}_b \in \mathbb{R}^N}} &\sum_{\ell=1}^N \left(\|Du_{\ell}\|_1 - \alpha \|Du_{\ell}\|_{2,1} + \chi_{U}(u_{\ell})\right) + \lambda \sum_{\ell=1}^N \langle  \mathbf{f}_{\ell} (\mathbf{c}_r, \mathbf{c}_g, \mathbf{c}_b),u_{\ell}\rangle_X\\  &+ \frac{\nu}{2} \left\|\sum_{\ell=1}^N u_{\ell}  - \mathbbm{1}\right\|_X^2.
\end{aligned}
\end{gather}
For \eqref{eq:color_relax_AITV_MCV} and \eqref{eq:color fuzzy_AITV}, their respective update formulas for $\mathbf{c}_{\iota}$ with $\iota \in \{r,g,b\}$  are
\begin{align}\label{eq:color_c_update2} 
c_{\ell,\iota} &= \begin{cases}\frac{\displaystyle\sum_{i=1}^m \sum_{j=1}^n (f_{\iota})_{i,j} R_{\ell}(\mathbf{u})_{i,j}}{\displaystyle\sum_{i=1}^m \sum_{j=1}^n R_{\ell}(\mathbf{u})_{i,j}} &\text{ if } \displaystyle\sum_{i=1}^m \sum_{j=1}^n R_{\ell}(\mathbf{u})_{i,j} \neq 0, \\
0 & \text{ if  } \displaystyle\sum_{i=1}^m \sum_{j=1}^n R_{\ell}(\mathbf{u})_{i,j} = 0\end{cases}
\end{align}
and
\begin{align}
\label{eq:color_c_update3}
c_{\ell, \iota} &=
\begin{cases}\frac{\displaystyle\sum_{i=1}^m \sum_{j=1}^n (f_{\iota})_{i,j}(u_{\ell})_{i,j}}{\displaystyle\sum_{i=1}^m \sum_{j=1}^n (u_{\ell})_{i,j}} &\text{ if } \displaystyle \sum_{i=1}^m \sum_{j=1}^n (u_{\ell})_{i,j} \neq 0, \\
0 &\text{ if } \displaystyle \sum_{i=1}^m \sum_{j=1}^n (u_{\ell})_{i,j} =0.
\end{cases}
\end{align}
The update formulas for $\mathbf{u}$ are similar to their grayscale counterparts since only $f_{\ell}$ needs to be replaced with $\mathbf{f}_{\ell}$. Hence, their algorithms are straightforward to derive, thus omitted.

\section{Numerical Results}\label{sec:result}
In this section, we present extensive experiments on various synthetic and real images to demonstrate the effectiveness of  AITV in image segmentation. In particular, we compare the AICV and AIFR models for $\alpha \in \{ 0, 0.25, 0.5, 0.75, 1.0\}$ with the two-stage segmentation methods that use $L_1+L_2^2$~\cite{cai2017three,cai2013two}, $L_0$ \cite{storath2014fast, xu2011image}, and real-time Mumford-Shah ($R_{MS}$) \cite{strekalovskiy2014real} penalties. When $\alpha =0$, the AICV model reduces to the original CV ($L_1$ CV) model~\cite{chan2000active,chan-vese-2001}, while the AIFR model becomes the fuzzy region competition ($L_1$ FR) model \cite{li2010multiphase}. The two-stage segmentation methods find a smooth approximation $\bar{f}$ of the underlying image $f$ with certain regularization, followed by $k$-means clustering on $\bar{f}$ to obtain the segmentation result. Specifically, Cai et al.~\cite{cai2017three,cai2013two} proposed an $L_1+L_2^2$ regularization problem\footnote{Code is available at \url{https://xiaohaocai.netlify.app/download/}.}
\begin{align}\label{eq:L1+L2^2}
\min_{u} \lambda \|f - u \|_X^2 + \gamma \|Du\|_Y^2 +  \|Du\|_{2,1}.
\end{align}
Throughout our numerical experiments, we set $\gamma = 1$, which is suggested in \cite{cai2017three,cai2013two}. The $L_0$-regularized model \cite{storath2014fast, xu2011image} is given by
\begin{align}\label{eq:L0}
\min_{u} \lambda \|f-u\|_X^2 + \|D_xu\|_0 + \|D_yu\|_0,
\end{align}
where $\| \cdot\|_0$ counts the number of nonzero entries of the matrix. The model in \eqref{eq:L0} can be solved in two different ways. One is   by alternating minimization with half-quadratic splitting \cite{xu2011image}\footnote{Code is available at \url{http://www.cse.cuhk.edu.hk/~leojia/projects/L0smoothing/}.}. Another approach \cite{storath2014fast} incorporates weights for a better isotropic discretizatation than the original $L_0$ model,
followed by ADMM\footnote{Code is available at \url{https://github.com/mstorath/Pottslab}.}.  The  $R_{MS}$ model \cite{strekalovskiy2014real} replaces  the $L_0$ norm in \eqref{eq:L0} by $R_{MS}(u) = \sum_{i=1}^m \sum_{j=1}^n \min \{\gamma u_{i,j}, 1\}$,
thus leading to 
\begin{align}\label{eq:RMS}
  \min_{u} \lambda \|f-u\|_X^2 + R_{MS}(D_xu) + R_{MS}(D_yu).
\end{align}
In our numerical experiments, we consider the piecewise-constant limit case, where $\gamma \rightarrow \infty$. Its implementation is described in \cite[Algorithm 1]{strekalovskiy2014real}.
We refer to the models \eqref{eq:L1+L2^2}, \eqref{eq:L0}, and \eqref{eq:RMS} as $L_1+L_2^2$, $L_0$, and $R_{MS}$, respectively.

For the proposed Algorithms~\ref{alg:DCA_PDHG} and \ref{alg:DCA_PDHG2}, we set $c= 10^{-8}$, $\tau_0 = 1/8$, $\beta = 1.0$, $\delta = 0.9999$, and $\mu = 7.5 \times 10^{-5}$, as suggested in \cite{lou-2015,malitsky2018first}. The parameter $\lambda$  depends on the image, which will be specified for each testing case.
For the stopping criteria, we use the relative error 
\begin{align}
\text{relerr}(u,v) = 
\frac{\|u- v\|_X}{\max \{ \|u\|_X, \|v\|_X, \epsilon\}},
\end{align}
where $\epsilon$ is the machine's precision.
Following \cite{lou-2015}, we choose the  stopping criterion for the inner PDHGLS algorithm as $\text{relerr}(u^{t,\eta+1}, u^{t,\eta})<10^{-6}$. As for the outer iterations, DCA minimization terminates when $\text{relerr}(u^{t+1}, u^{t})<10^{-6}$ and $\text{relerr}(u^{t+1}, u^{t})<10^{-4}$ for 2-phase and 4-phase AICV models, respectively.
For the AIFR models, we use the same stopping criterion in \cite{li2016multiphase} for the outer iterations, i.e.,  
 when all the relative errors of the membership functions are less than $10^{-4}$. We further adjust the maximum number of outer/inner iterations for  multiple channels and multiphase segmentation, which are selected empirically for each image.

We shall apply postprocessing to define the segmented regions. In particular, we convert the results of Algorithm \ref{alg:DCA_PDHG} to a binary output by setting any pixel values greater than or equal to 0.5 to 1, and  0 otherwise.  For the results from Algorithm \ref{alg:DCA_PDHG2}, we set a pixel value $(u_{\ell})_{i,j}$   to 1 if it is the maximum among all the membership functions $\{u_k\}_{k=1}^N$ at pixel $(i,j)$, and 0 otherwise. For a grayscale image $f$, we define its reconstructed image
\begin{align} \label{eq:reconstruct_equation}
\tilde{f} = \sum_{k=1}^{N} c_k \mathbbm{1}_{\tilde{\Omega}_k},
\end{align}
where $\{c_k\}_{k=1}^N$ and $\{\tilde{\Omega}_k\}_{k=1}^N$ are sets of constants and regions obtained by a segmentation algorithm, respectively, and $\mathbbm{1}_{\tilde{\Omega}_k}$ is a binary image corresponding to the region $\tilde{\Omega}_k$. The matrix $\mathbbm{1}_{\tilde{\Omega}_k}$ is obtained by thresholding for Algorithms \eqref{alg:DCA_PDHG} and \eqref{alg:DCA_PDHG2} or by $k$-means clustering for the two-stage segmentation framework. 
Specifically for Algorithms \ref{alg:DCA_PDHG} and \ref{alg:DCA_PDHG2}, the constants $\{c_k\}_{k=1}^N$  are the final outputs of \eqref{eq:c_update} and \eqref{eq:c_update2}, respectively. For the two-stage segmentation framework, we compute a smoothed image of $f$ by one of the models \eqref{eq:L1+L2^2}-\eqref{eq:RMS}, thus getting $\bar{f}$, and define the constants in \eqref{eq:reconstruct_equation} by
\begin{align}\label{eq:two_stage_grayscale_c}
    c_k = \frac{\displaystyle \sum_{i=1}^m \sum_{j=1}^n \bar{f}_{i,j}(\mathbbm{1}_{\tilde{\Omega}_k})_{i,j} }{\displaystyle \sum_{i=1}^m \sum_{j=1}^n (\mathbbm{1}_{\tilde{\Omega}_k})_{i,j}}, \; k = 1, \ldots, N.
\end{align}
As $k$-means clustering applied to $\bar{f}$ does not produce an empty cluster, the denominator of \eqref{eq:two_stage_grayscale_c} is nonzero. Similarly, the color image $\mathbf{f}$ is approximated by $\tilde{\mathbf{f}} = (\tilde{f}_r, \tilde{f}_g, \tilde{f}_b)$ given by
\begin{align}\label{eq:color_reconstruction_eq}
   \tilde{f}_{\iota} = \sum_{k=1}^N c_{k, \iota}  \mathbbm{1}_{\tilde{\Omega}_k} \text{ for } \iota \in \{r, g, b\},
\end{align}
where $\{c_{k, \iota}\}_{k=1}^N$ is a set of constants for channel $\iota$. For the color versions of Algorithms \ref{alg:DCA_PDHG} and \ref{alg:DCA_PDHG2}, the constants are obtained by \eqref{eq:color_c_update2} and \eqref{eq:color_c_update3}, respectively. 
For the color version of the two-stage segmentation framework, the constants are computed by \eqref{eq:two_stage_grayscale_c} applied to each channel of the smoothed image $\bar{\mathbf{f}} =(\bar{f}_r, \bar{f}_g, \bar{f}_b)$.
\begin{figure}[t!]
	\centering
	\begin{tabular}{c@{}c@{}c@{}}
		\subcaptionbox{\label{fig:synthetic_grayscale}}{\includegraphics[scale = 0.25]{./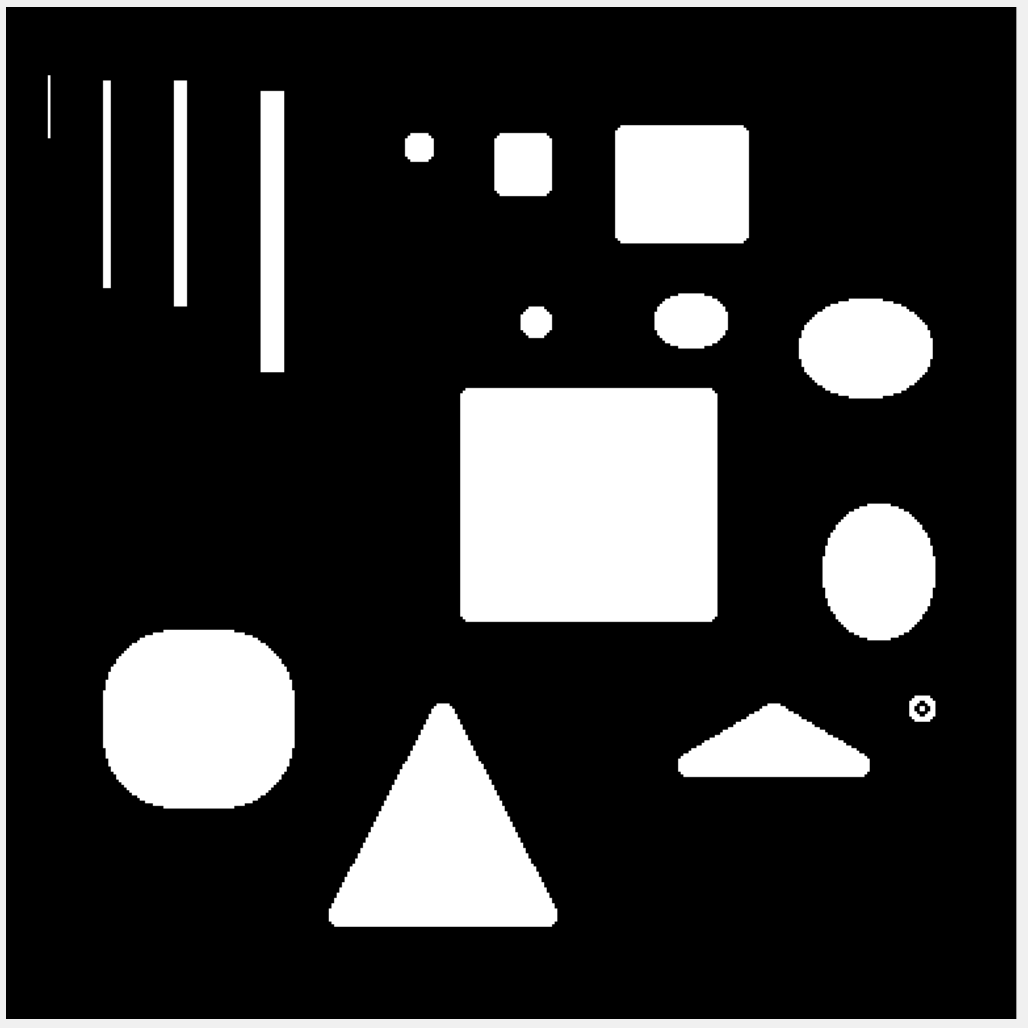}} &
		 \subcaptionbox{\label{fig:synthetic_2phase}}{\includegraphics[scale = 0.25]{./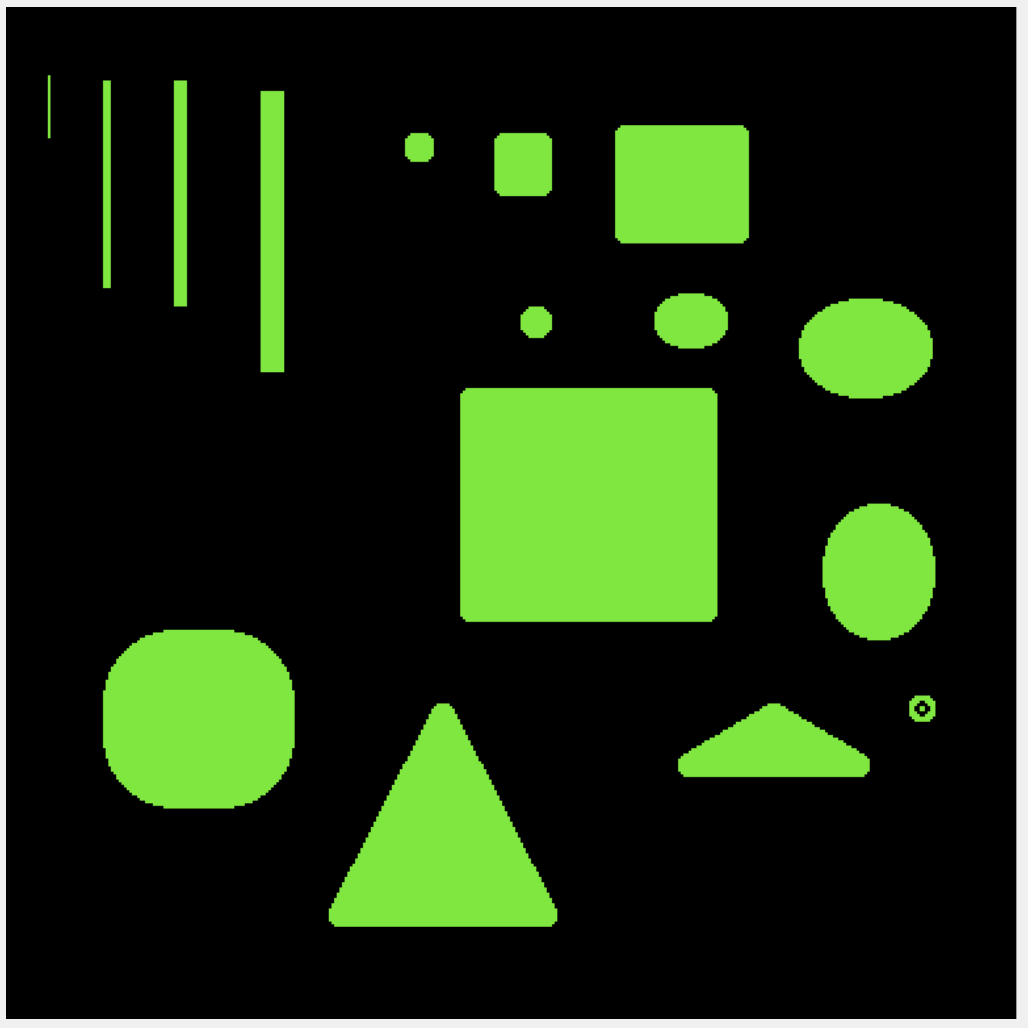}} &  \subcaptionbox{\label{fig:synthetic_4phase}}{\includegraphics[scale = 0.25]{./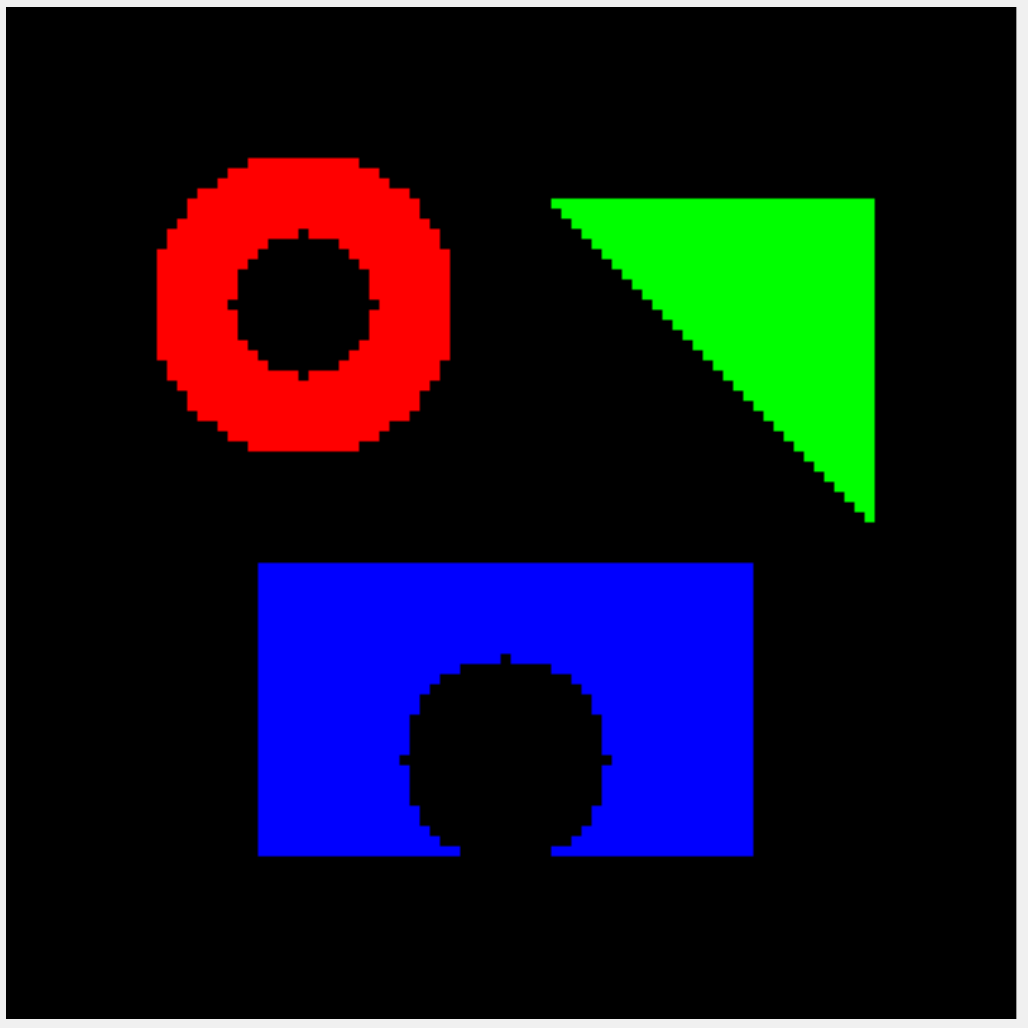}} 
	\end{tabular}
	\caption{Synthetic images for image segmentation. (a) Grayscale image for two-phase segmentation. Size: $385 \times 385$. (b) Color image for two-phase segmentation. Size: $385 \times 385.$ (c) Color image for four-phase segmentation. Size: $100 \times 100$. }
	\label{fig:synthetic}
\end{figure}

\begin{table}[t!]
	\caption{DICE indices of  various segmentation models applied to Figure \ref{fig:synthetic_grayscale} corrupted with different levels of impulsive noise.}
	\begin{center}
		\resizebox{\textwidth}{!}{%
			\begin{tabular}{l|c|c|c|c|c|c|c|c} \Xhline{5\arrayrulewidth}
				\makecell{Salt \&\\ Pepper ($\%$)} & 0& 10 & 20 & 30 & 40 & 50 & 60 & 70 \\ \Xhline{5\arrayrulewidth}
				$L_1-L_2$ CV                 & \textbf{1}      & 0.9977 & 0.9932 & 0.9854 & 0.9594 & 0.9062 & 0.8138 & 0.7643 \\ \hline
				$L_1-0.75L_2$CV             & \textbf{1}      & 0.9978 & 0.9929 & 0.9853 & 0.9795 & 0.9727 & 0.9678 & 0.9550 \\ \hline
				$L_1-0.5L_2$ CV              & \textbf{1}       & 0.9975 & 0.9941 & 0.9893 & 0.9850 & 0.9801 & \textbf{0.9726} & \textbf{0.9554} \\ \hline
				$L_1-0.25L_2$ CV             & \textbf{1}       & 0.9974 & 0.9954 & 0.9910 & 0.9870 & \textbf{0.9823} & 0.9711 & 0.9483 \\ \hline
				$L_1$ CV                    & \textbf{1}       & \textbf{0.9981} & 0.9960 & 0.9922 & 0.9877 & 0.9802 & 0.9681 & 0.9338 \\ \hline
				$L_1-L_2$ FR            &\textbf{1}       & 0.8753 & 0.7719 & 0.6833 & 0.6129 & 0.5425 & 0.4702 & 0.4138 \\ \hline
				$L_1-0.75L_2$ FR        & \textbf{1}      & 0.9896 & 0.9841 & 0.9693 & 0.9585 & 0.9437 & 0.9183 & 0.7775 \\ \hline
				$L_1-0.5L_2$ FR        & 0.9998 & 0.9978 & 0.9956 & 0.9923 & \textbf{0.9879} & 0.9788 & 0.9495 & 0.7760 \\ \hline
				$L_1-0.25L_2$ FR       & 0.9995 & 0.9979 & \textbf{0.9961} & \textbf{0.9925} & 0.9865 & 0.9737 & 0.9347 & 0.6883 \\ \hline
				$L_1$ FR & 0.9992 & 0.9978 & 0.9949 & 0.9877 & 0.9812 & 0.9663 & 0.8990 & 0.5053 \\ \hline
				$L_1+L_2^2$ & 0.9996 & 0.9961 & 0.9925 & 0.9857 & 0.9733 & 0.9328 & 0.8375 & 0.6840 \\ \hline
				$L_0$  \cite{xu2011image}                & \textbf{1}       & 0.8731 & 0.7666 & 0.6736 & 0.5943 & 0.5226 & 0.4601 & 0.4035\\ \hline
					$L_0$  \cite{storath2014fast}                & 0.9995       & 0.9944 & 0.9874 & 0.9792 & 0.9738 & 0.9690 & 0.9605 & 0.9474\\ \hline
					$R_{MS}$ & 0.9995 & 0.9969 & 0.9947 & 0.9887 & 0.9851 & 0.9784 & 0.9670 & 0.9312\\\hline
				\Xhline{5\arrayrulewidth}
				\makecell{Random-\\valued (\%)}  &0 & 10 & 20 & 30 & 40 & 50 & 60 & 70 \\ \Xhline{5\arrayrulewidth}
				$L_1-L_2$ CV                 & \textbf{1}  & 0.9986 & 0.9957 & 0.9909 & 0.9846 & 0.9739 & 0.9534 & 0.9542 \\ \hline
				$L_1-0.75L_2$ CV             &\textbf{1}   & 0.9988 & 0.9971 & 0.9948 & 0.9926 & 0.9894 & \textbf{0.9840} & \textbf{0.9712} \\ \hline
				$L_1-0.5L_2$ CV              & \textbf{1}  & 0.9989 & \textbf{0.9973} & 0.9958 & 0.9930 & \textbf{0.9899} & 0.9816 & 0.9614 \\ \hline
				$L_1-0.25L_2$ CV             & \textbf{1}  & \textbf{0.9990} & 0.9971 & 0.9957 & \textbf{0.9935} & 0.9898 & 0.9808 & 0.9560 \\ \hline
				$L_1$ CV                   & \textbf{1}  & 0.9984 & 0.9972 & \textbf{0.9959} & 0.9928 & 0.9863 & 0.9700 & 0.9332 \\ \hline
				$L_1-L_2$ FR& \textbf{1}  & 0.9505 & 0.9053 & 0.8578 & 0.8015 & 0.7369 & 0.6478 & 0.5662 \\ \hline
				$L_1-0.75L_2$ FR& \textbf{1}  & 0.9987 & 0.9971 & 0.9945 & 0.9913 & 0.9879 & 0.9715 & 0.5364 \\ \hline
				$L_1-0.5L_2$ FR & 0.9998 & 0.9984 & 0.9972 & 0.9955 & 0.9921 & 0.9833 & 0.9538 & 0.3540 \\ \hline
				$L_1-0.25L_2$ FR & 0.9995 & 0.9983 & 0.9972 & 0.9940 & 0.9880 & 0.9763 & 0.9299 & 0.5984 \\ \hline
				$L_1$ FR & 0.9992 & 0.9983 & 0.9970 & 0.9925 & 0.9833 & 0.9643 & 0.8800 & 0.4503 \\ \hline
				$L_1+L_2^2$ & 0.9996 & 0.9980 & 0.9960 & 0.9937 & 0.9903 & 0.9858 & 0.9776 & 0.9668 \\ \hline
				$L_0$ \cite{xu2011image}  & \textbf{1}  & 0.8753 & 0.7697 & 0.6768 & 0.5981 & 0.5247 & 0.4627 & 0.4054\\ \hline
					$L_0$  \cite{storath2014fast}                & 0.9995       & 0.9966 & 0.9933 & 0.9904 & 0.9874 & 0.9810 & 0.9688 & 0.9462\\ \hline
					$R_{MS}$ & 0.9995 & 0.9983 & 0.9971 & 0.9954 & 0.9932 & 0.9850 & 0.9731 & 0.9361\\\hline
		\end{tabular}}
	\end{center}
	\label{tab:twophase_grayscale}
\end{table}     

All the algorithms are coded in MATLAB R2019a and all the computations are performed on a Dell laptop with a 1.80 GHz Intel Core i7-8565U processor and 16.0 GB of RAM. The codes are available at  \url{https://github.com/kbui1993/L1mL2Segmentation}.

\begin{figure}[p]
	\begin{minipage}{\linewidth}
		\centering
		\resizebox{\textwidth}{!}{%
			\begin{tabular}{c@{}c@{}c@{}c@{}c@{}c}
				\subcaptionbox{60\% SPIN\label{fig:grayscale_sp_60}}{\includegraphics[width = 1.00in]{./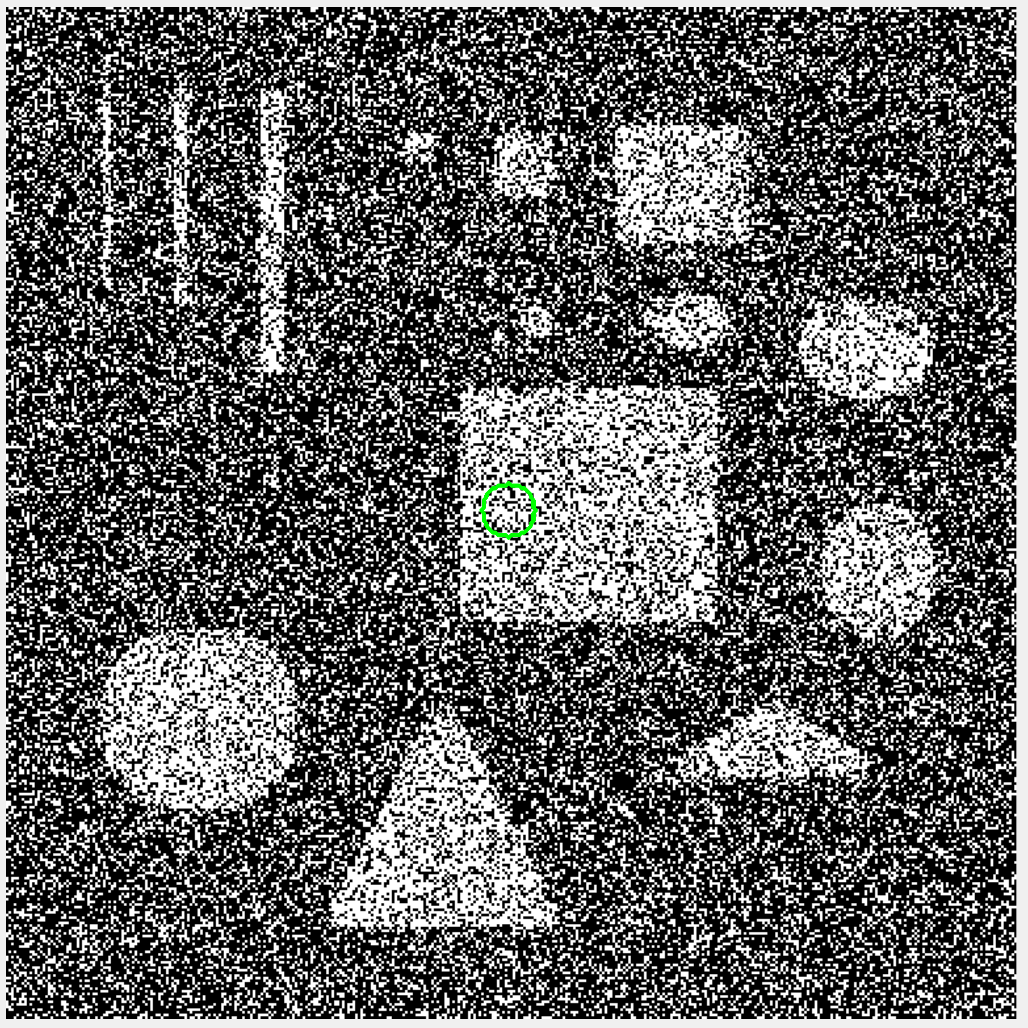}}& \subcaptionbox{$L_1+L_2^2$}{\includegraphics[width = 1.00in]{./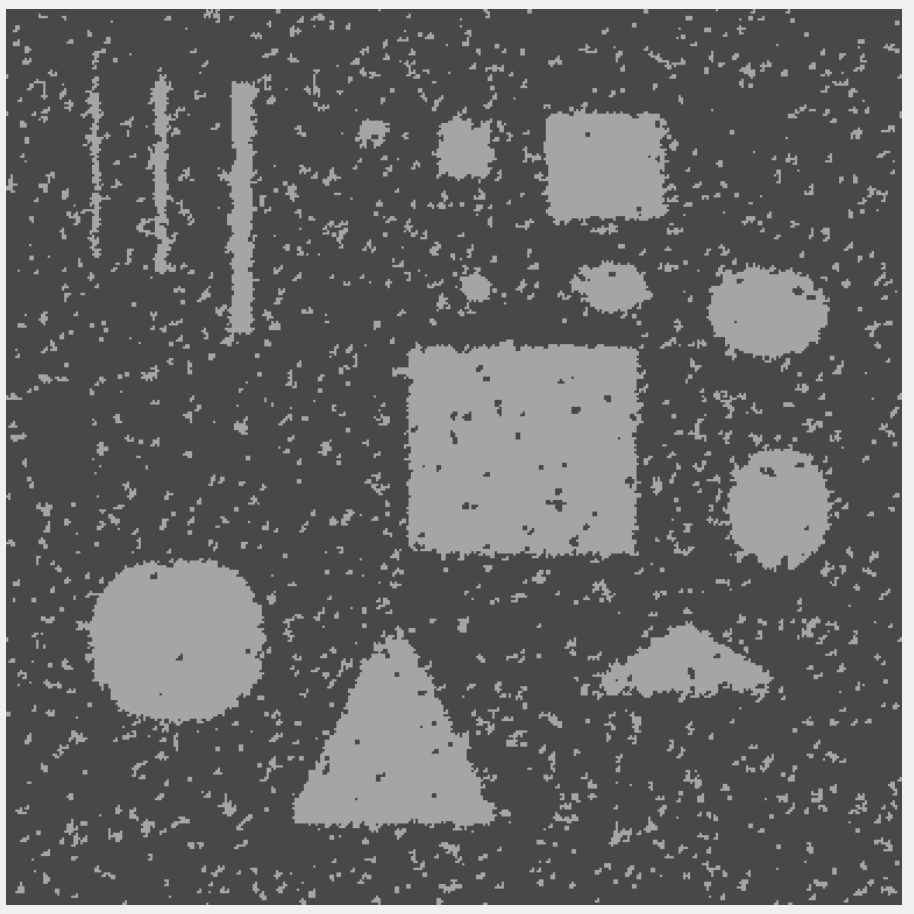}}  &	\subcaptionbox{$L_0$ \cite{xu2011image}}{\includegraphics[width = 1.00in]{./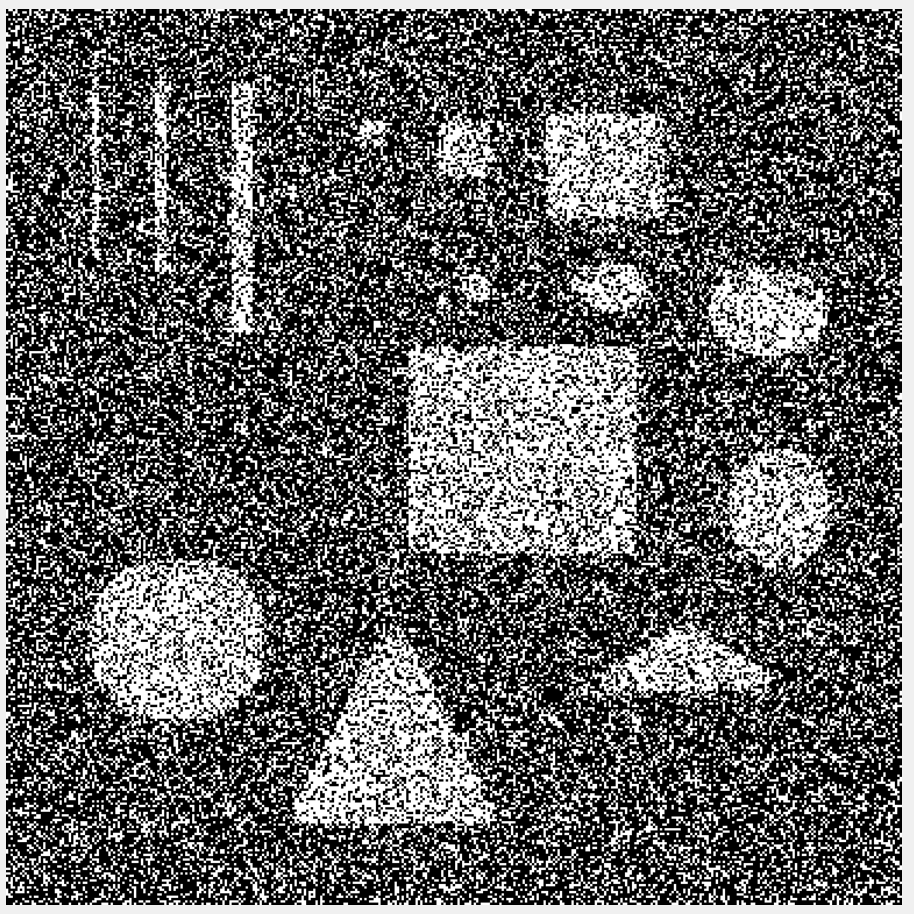}} &	\subcaptionbox{$L_0$ \cite{storath2014fast}}{\includegraphics[width = 1.00in]{./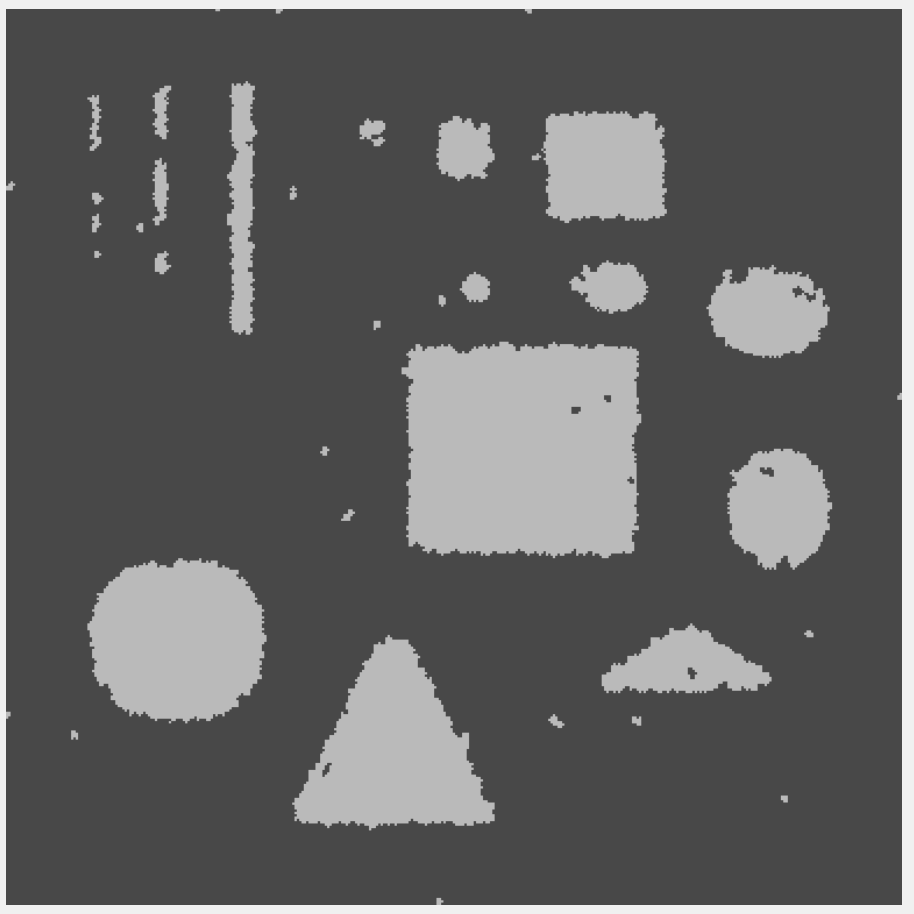}}&	\subcaptionbox{$R_{MS}$}{\includegraphics[width = 1.00in]{./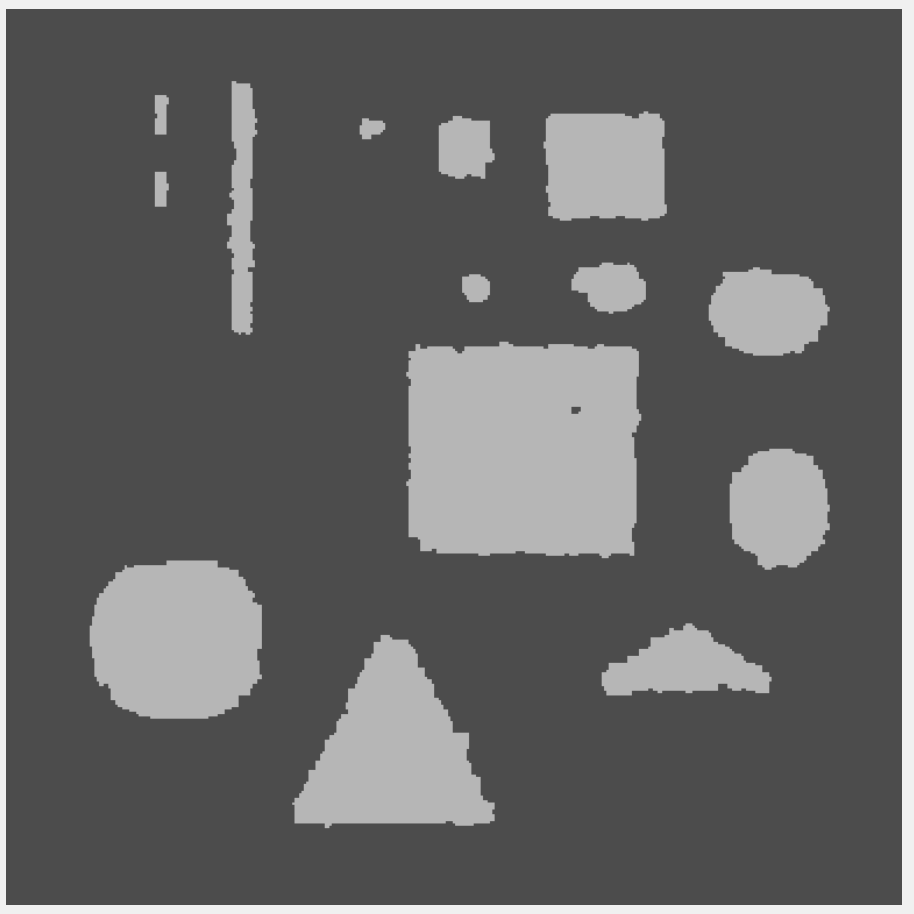}}\\
				\subcaptionbox{$L_1-L_2$ CV}{\includegraphics[width = 1.00in]{./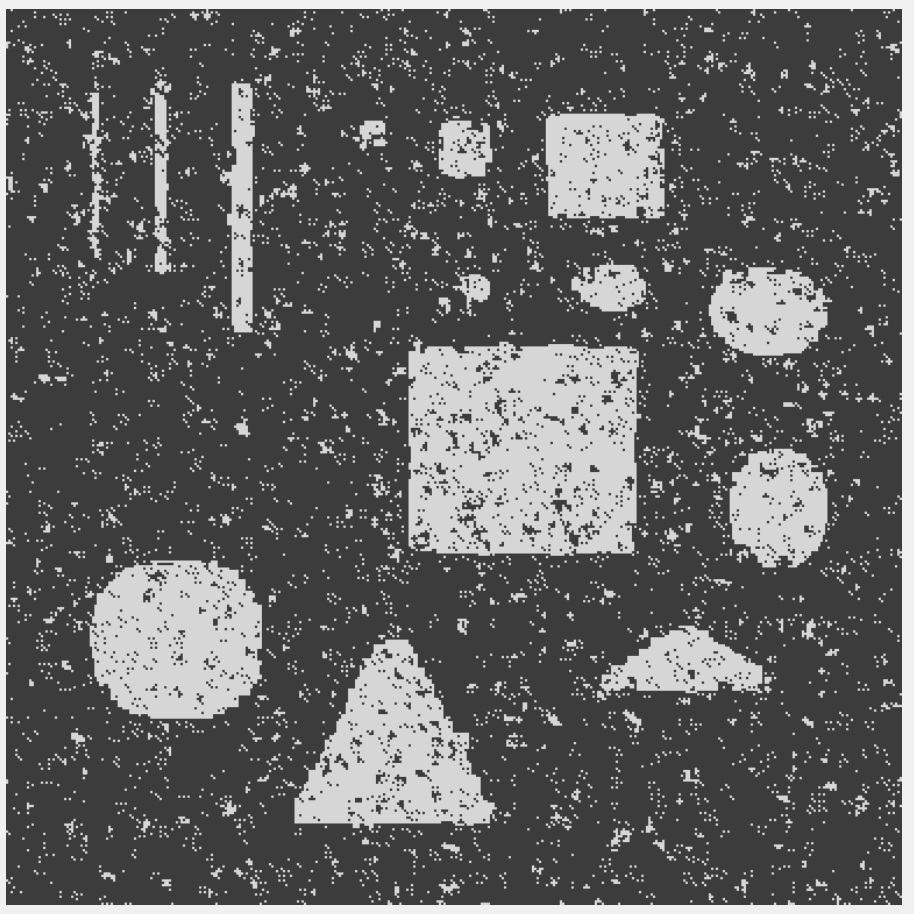}}  & \subcaptionbox{$L_1-0.75L_2$ CV}{\includegraphics[width = 1.00in]{./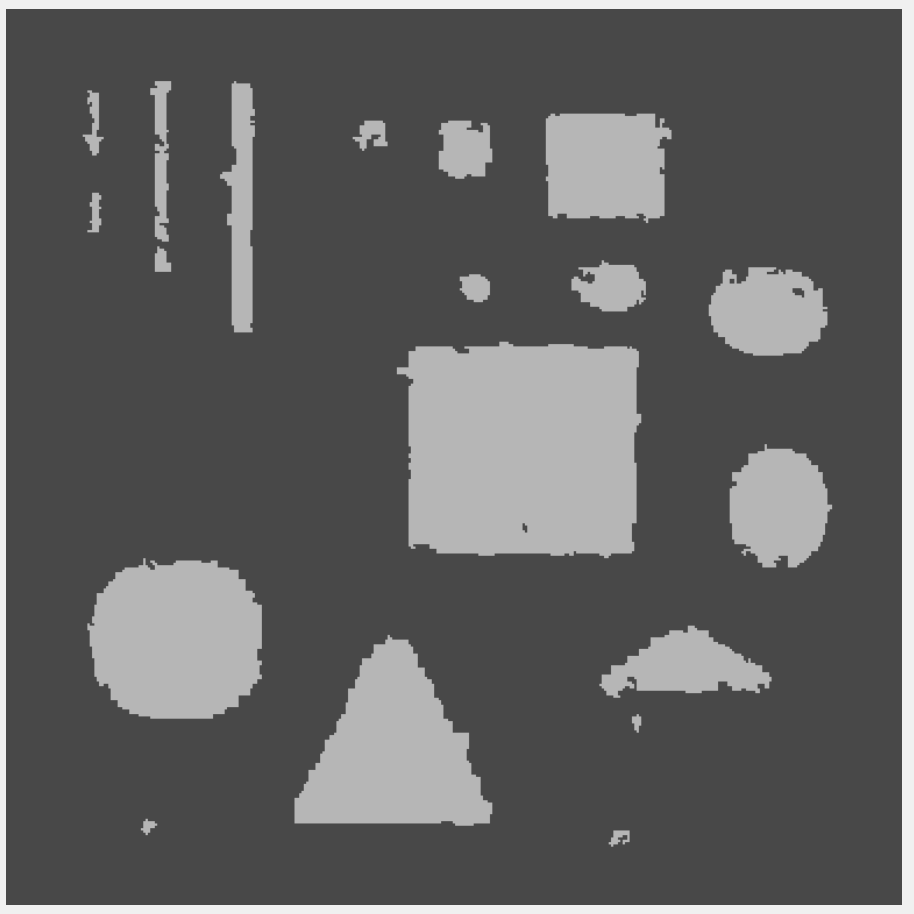}} & \subcaptionbox{$L_1-0.5L_2$ CV}{\includegraphics[width = 1.00in]{./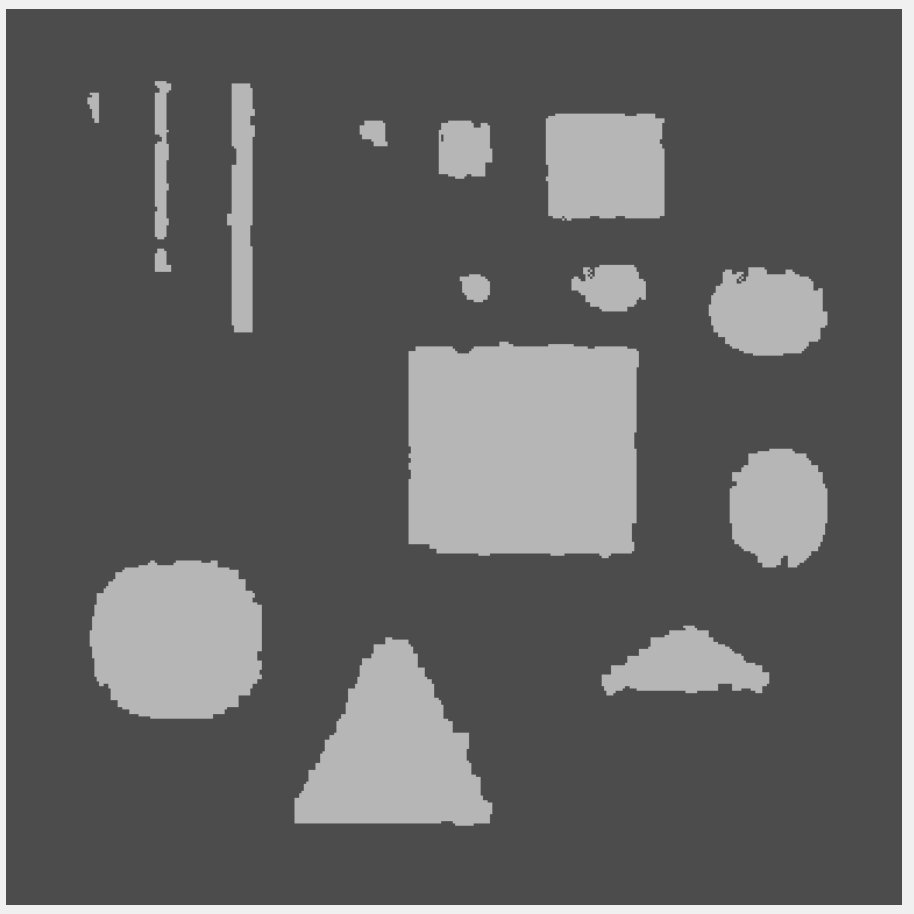}} &
				\subcaptionbox{$L_1-0.25L_2$ CV}{\includegraphics[width = 1.00in]{./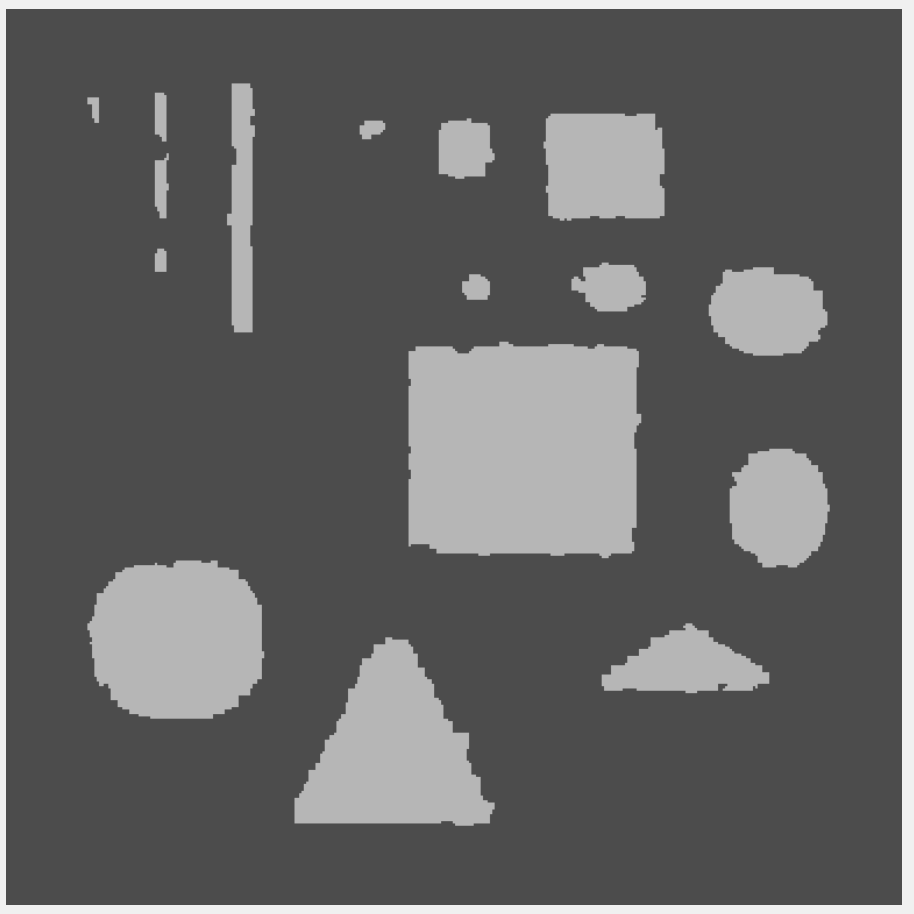}} & \subcaptionbox{$L_1$ CV}{\includegraphics[width = 1.00in]{./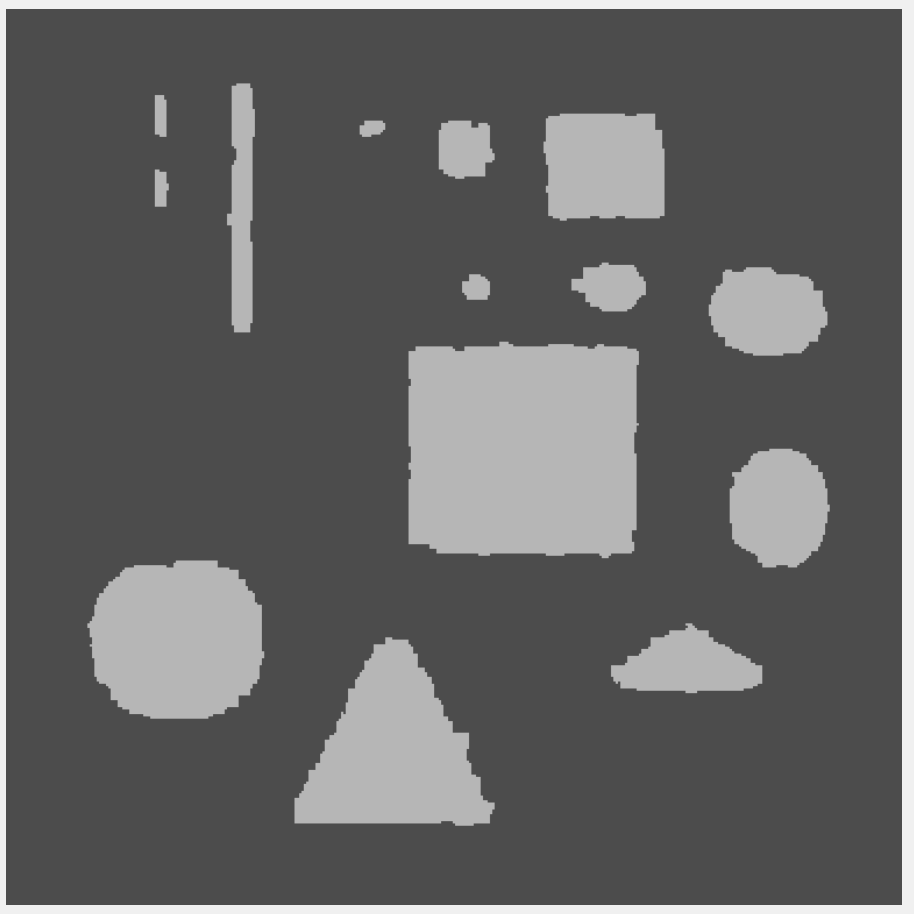}}\\ \subcaptionbox{$L_1-L_2$ FR}{\includegraphics[width = 1.00in]{./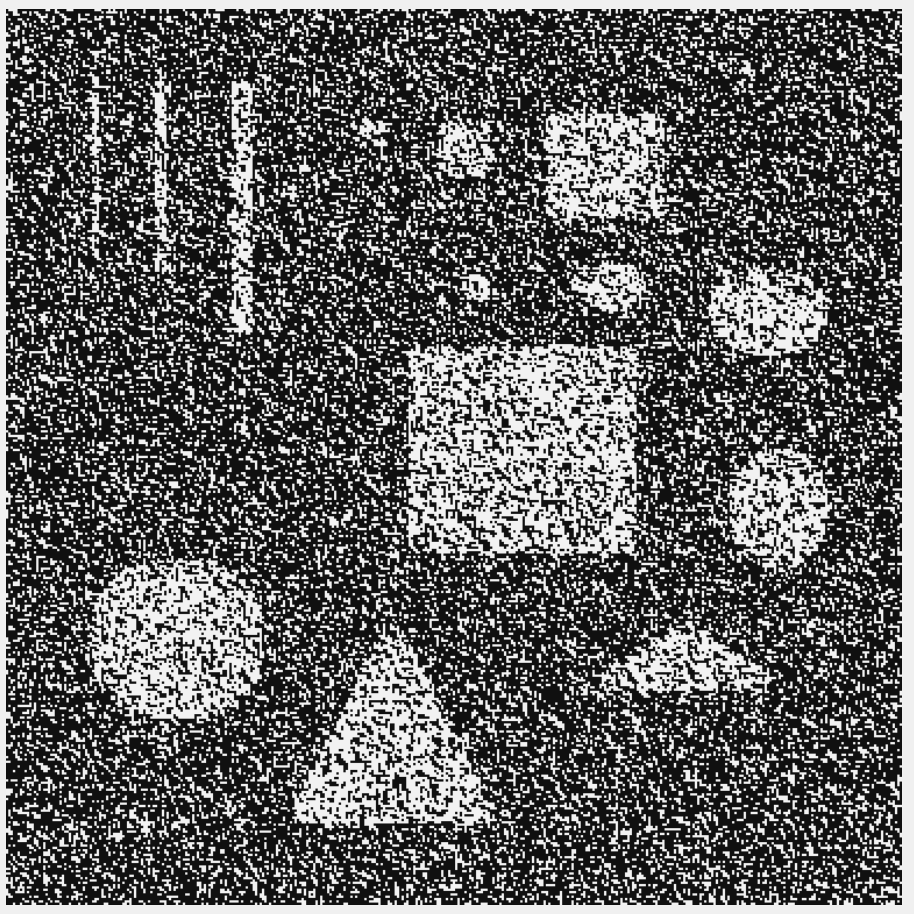}}
				 & \subcaptionbox{$L_1-0.75L_2$ FR}{\includegraphics[width = 1.00in]{./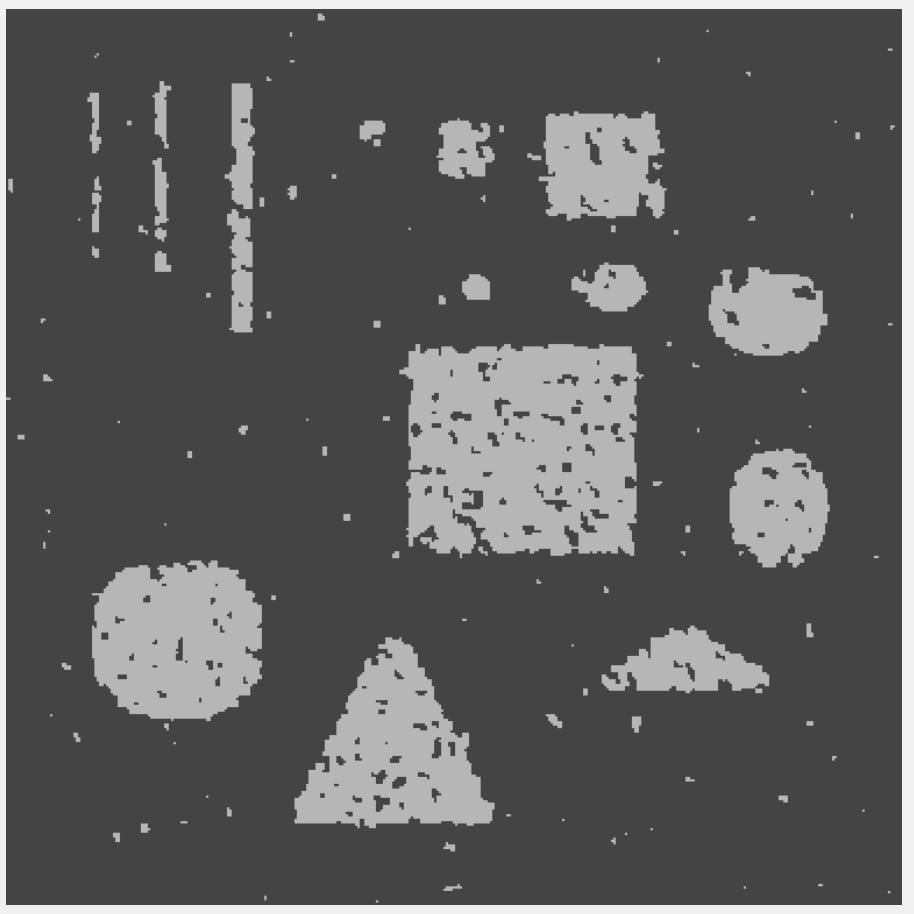}} & \subcaptionbox{$L_1-0.5L_2$ FR}{\includegraphics[width = 1.00in]{./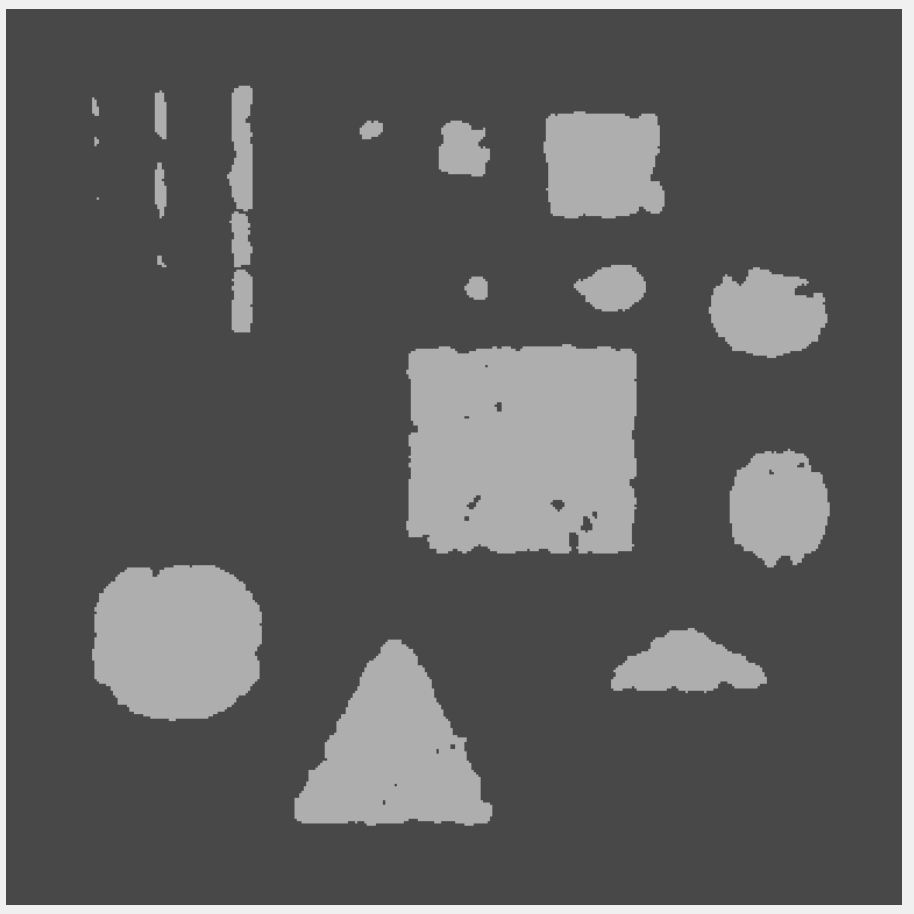}} &
				\subcaptionbox{$L_1-0.25L_2$ FR}{\includegraphics[width = 1.00in]{./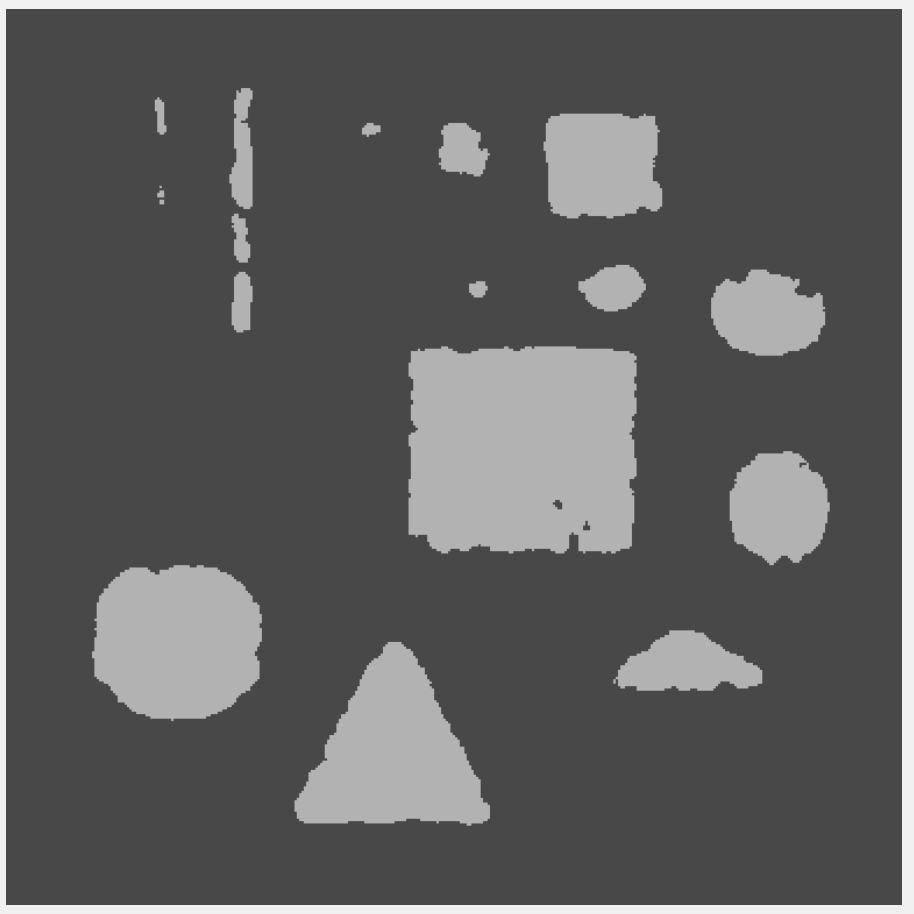}} & \subcaptionbox{$L_1$ FR}{\includegraphics[width = 1.00in]{./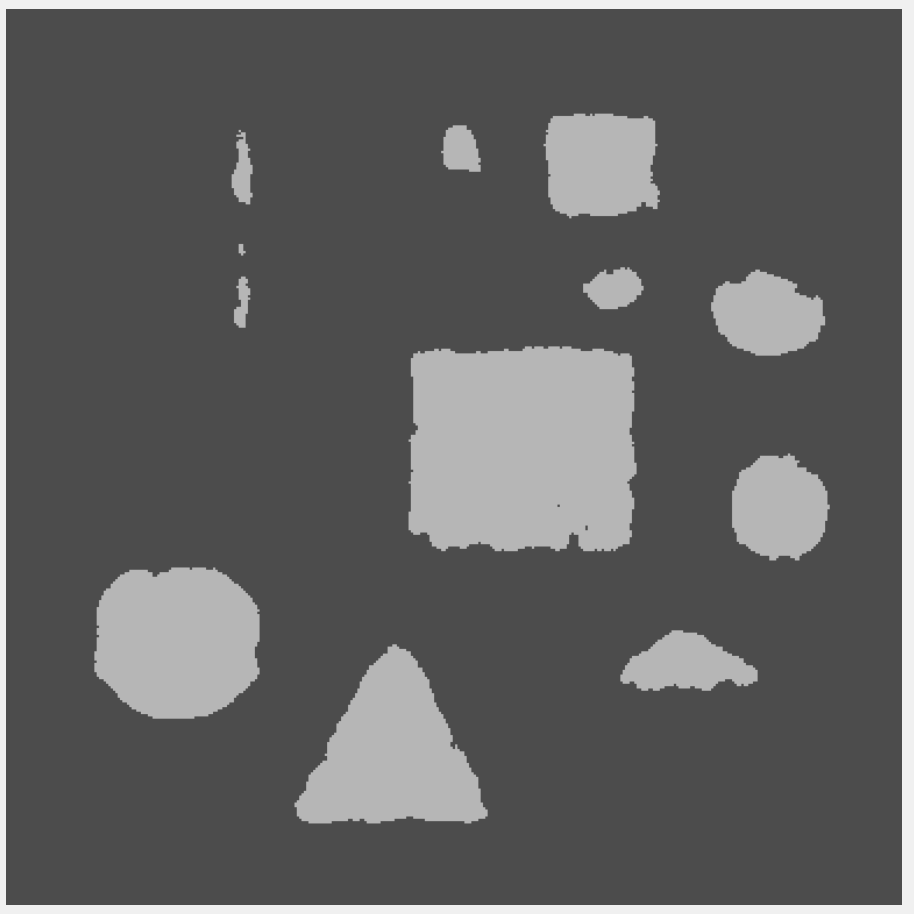}}
		\end{tabular}}
		\caption{Reconstruction results on Figure \ref{fig:synthetic_grayscale} corrupted with 60\% SPIN.  }
		\label{fig:grayscale_60_spin}
	\end{minipage}
	\begin{minipage}{\linewidth}
		\centering
		\resizebox{\textwidth}{!}{%
			\begin{tabular}{c@{}c@{}c@{}c@{}c@{}c}
				\subcaptionbox{60\% RVIN\label{fig:grayscale_rv_60}}{\includegraphics[width = 1.00in]{./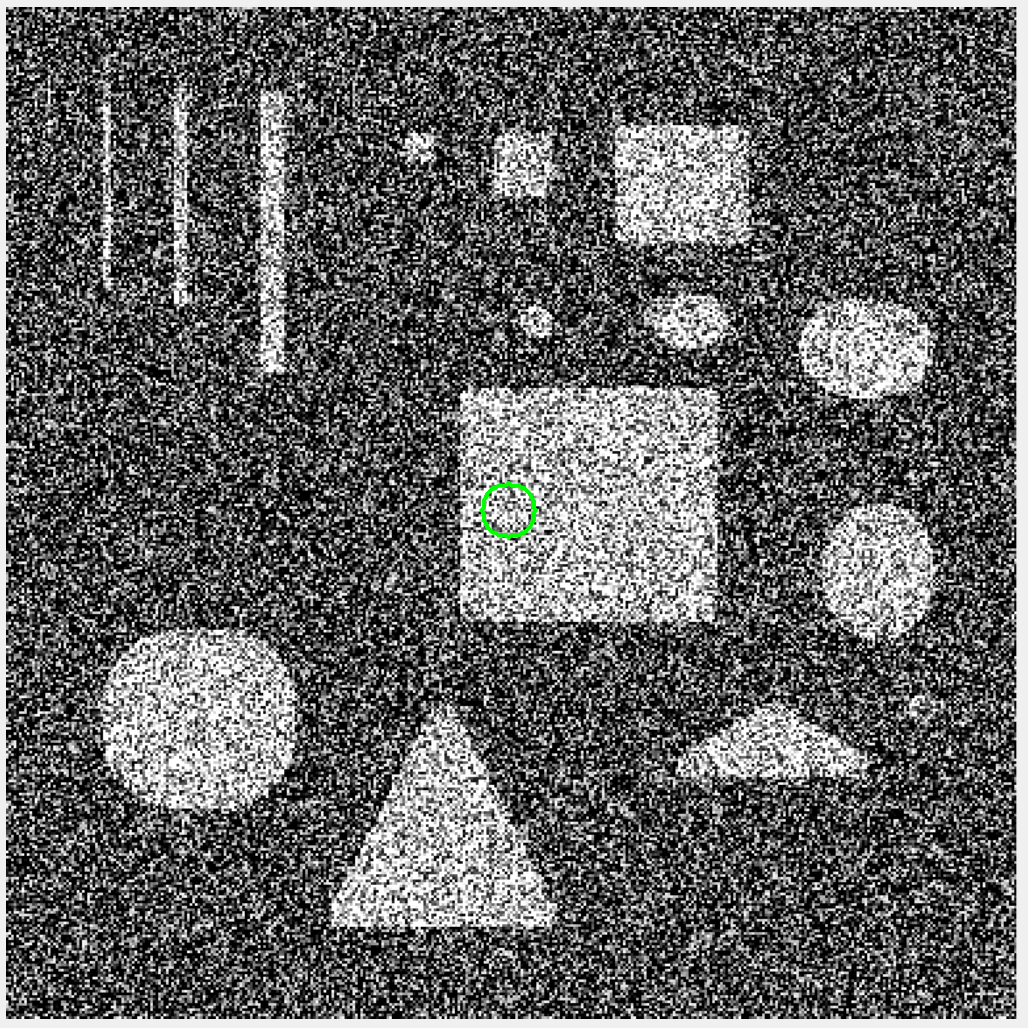}} &
				\subcaptionbox{$L_1+L_2^2$}{\includegraphics[width = 1.00in]{./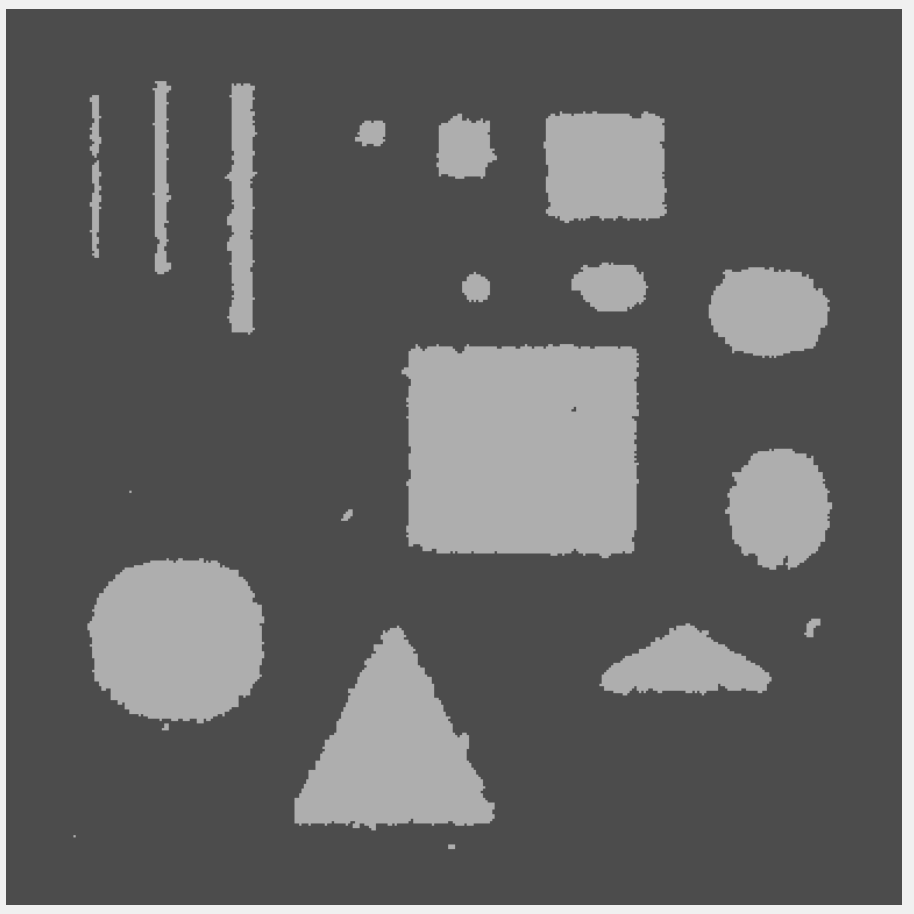}} &
				\subcaptionbox{$L_0$ \cite{xu2011image}}{\includegraphics[width = 1.00in]{./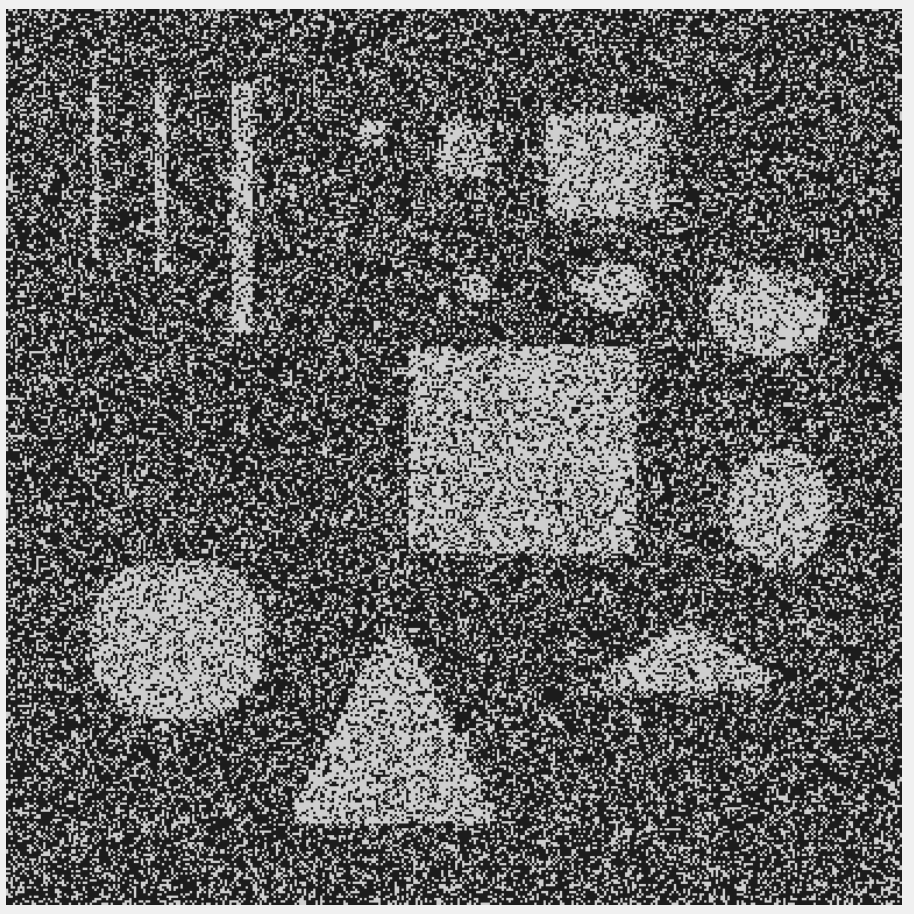}} &	\subcaptionbox{$L_0$ \cite{storath2014fast}}{\includegraphics[width = 1.00in]{./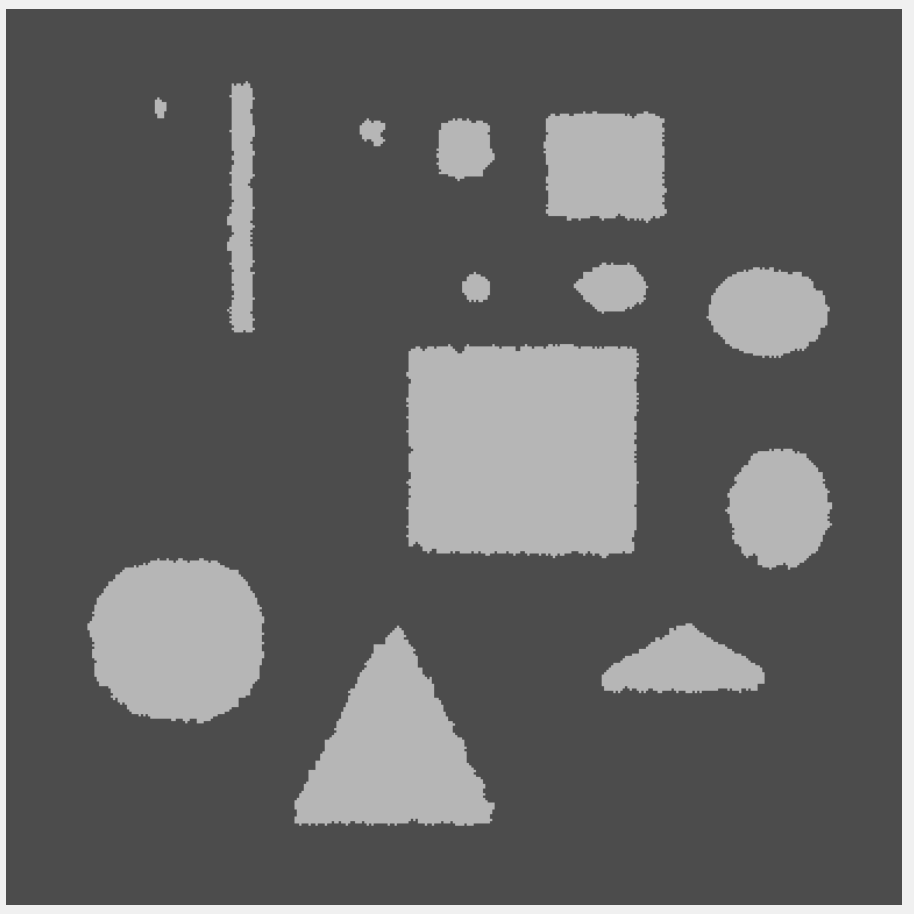}} &	\subcaptionbox{$R_{MS}$}{\includegraphics[width = 1.00in]{./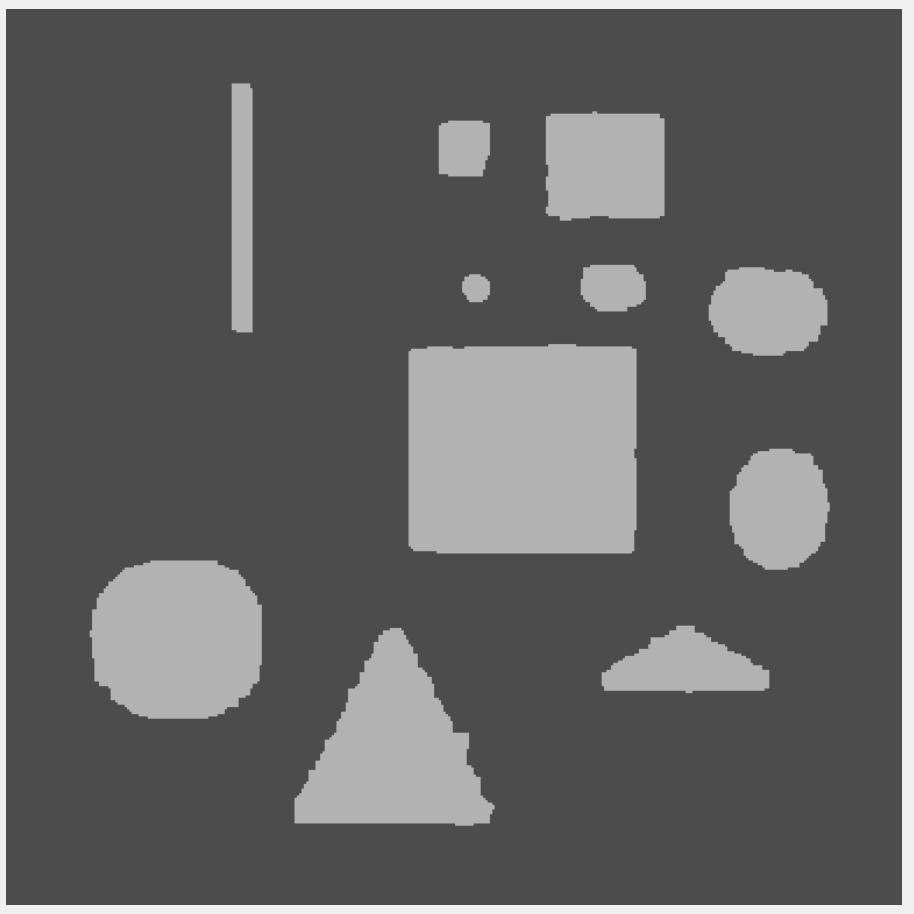}}\\ \subcaptionbox{$L_1-L_2$ CV }{\includegraphics[width = 1.00in]{./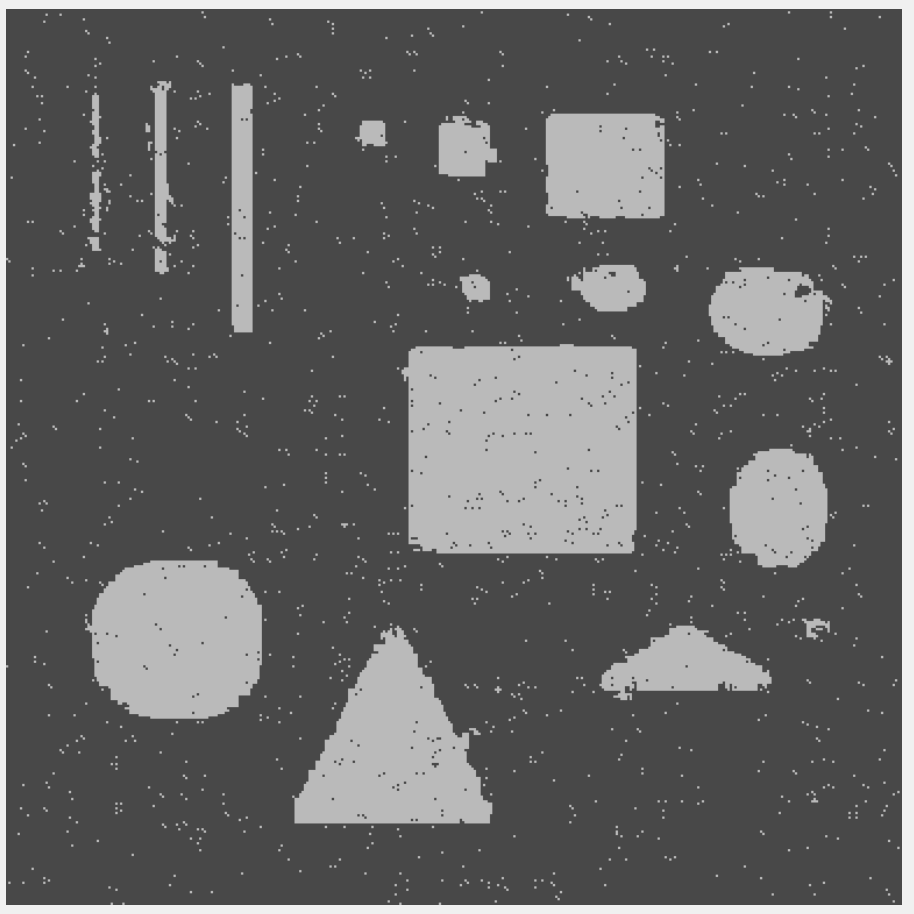}} & \subcaptionbox{$L_1-0.75L_2$ CV}{\includegraphics[width = 1.00in]{./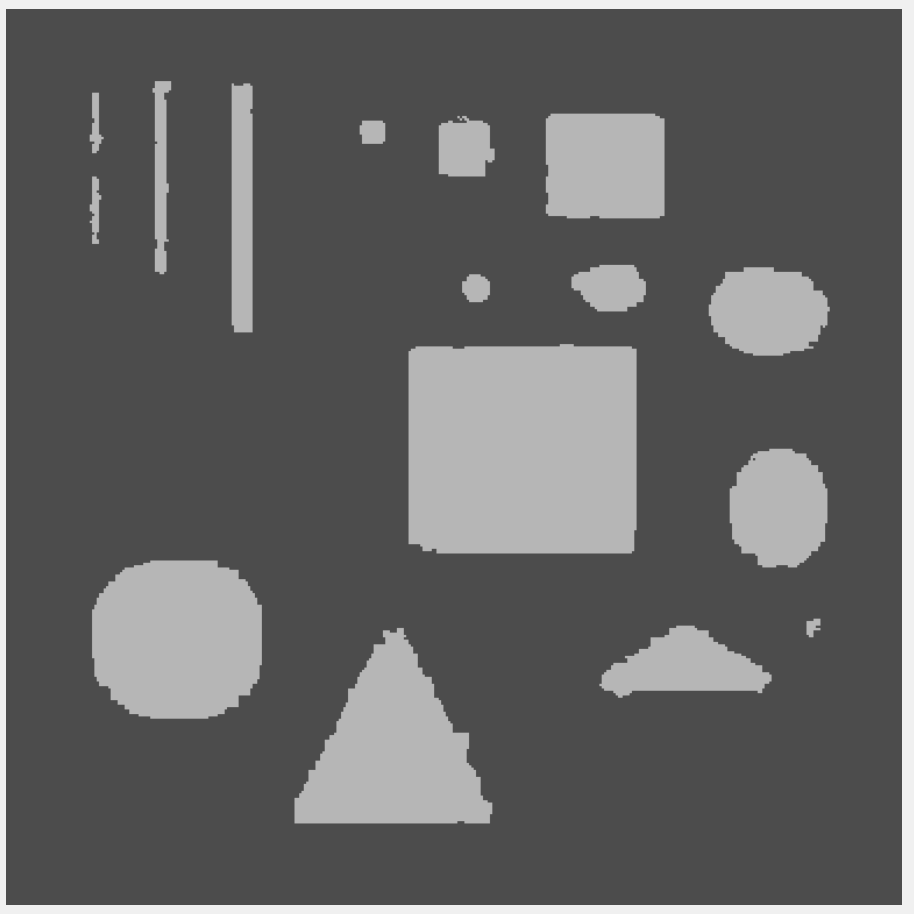}} & \subcaptionbox{$L_1-0.5L_2$ CV}{\includegraphics[width = 1.00in]{./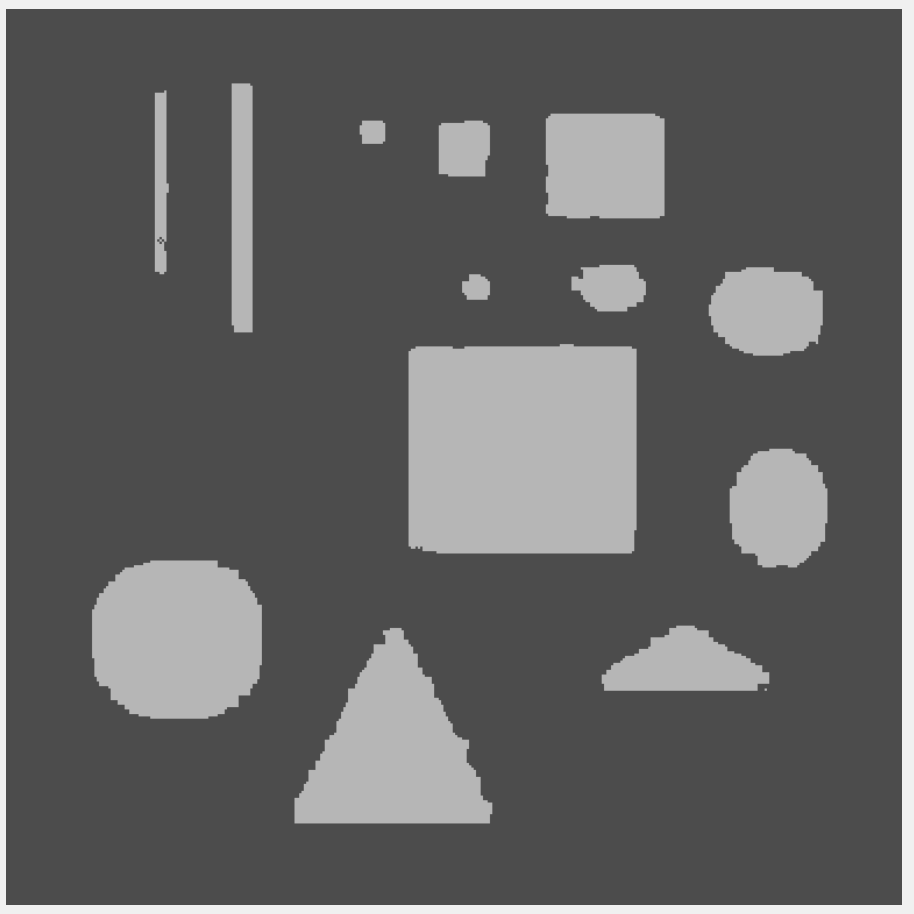}} &
				\subcaptionbox{$L_1-0.25L_2$ CV}{\includegraphics[width = 1.00in]{./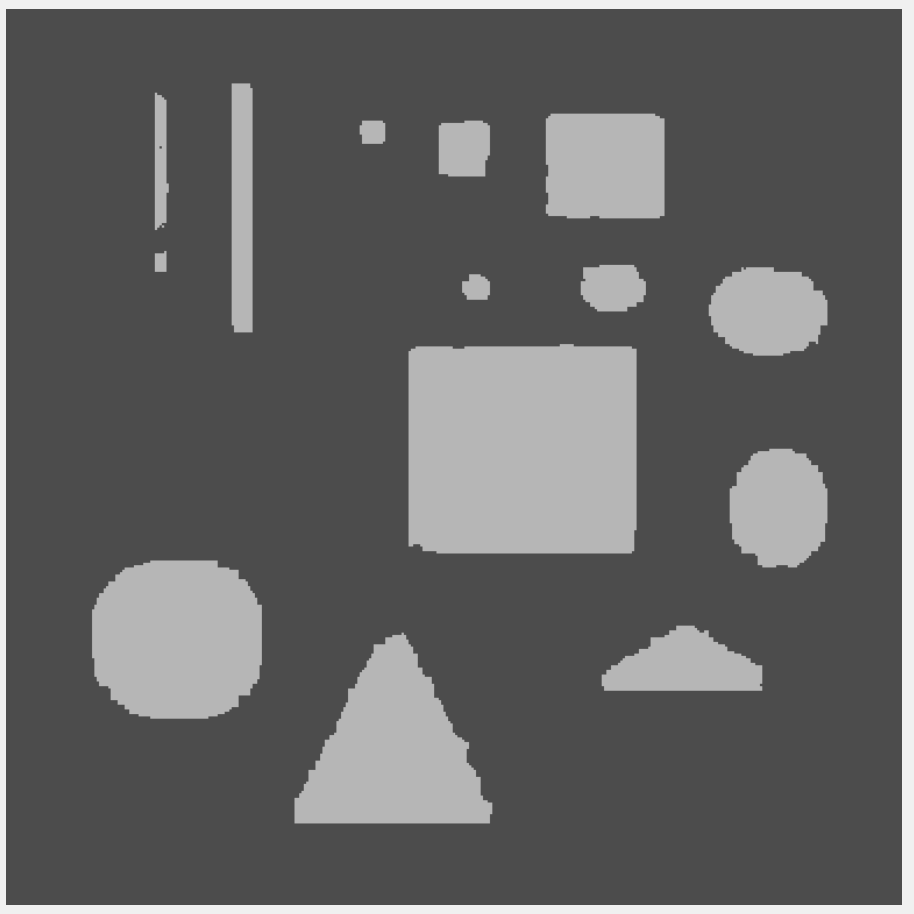}} & \subcaptionbox{$L_1$ CV}{\includegraphics[width = 1.00in]{./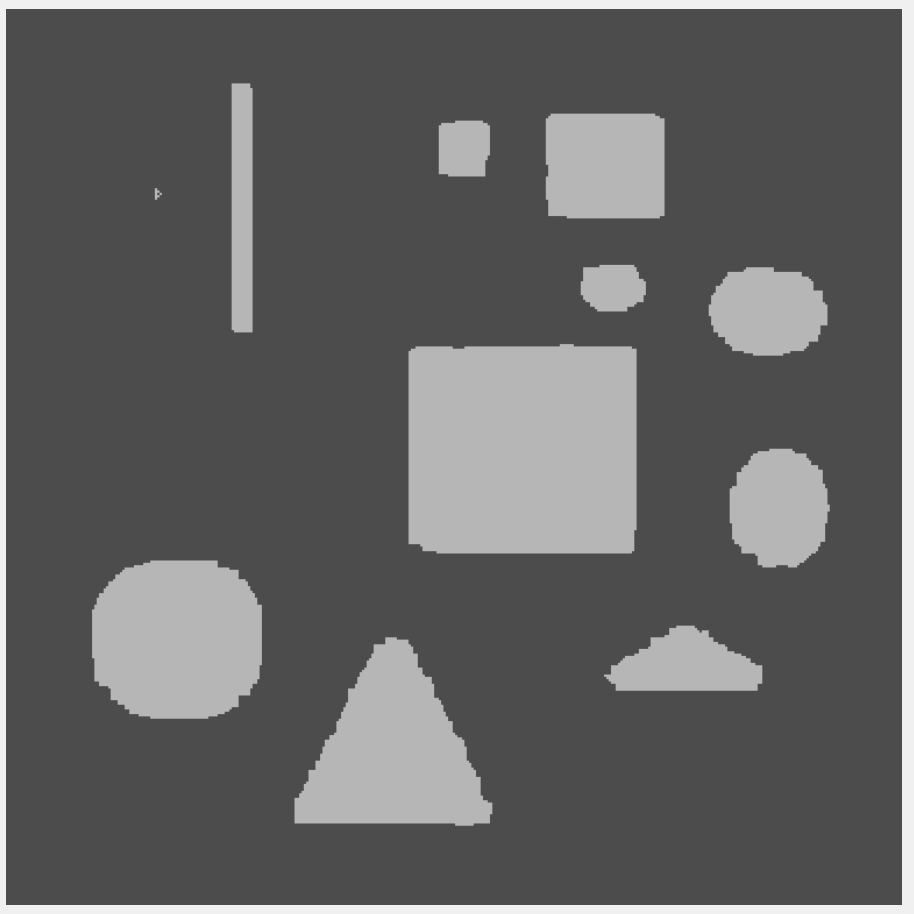}}\\
				\subcaptionbox{$L_1-L_2$ FR}{\includegraphics[width = 1.00in]{./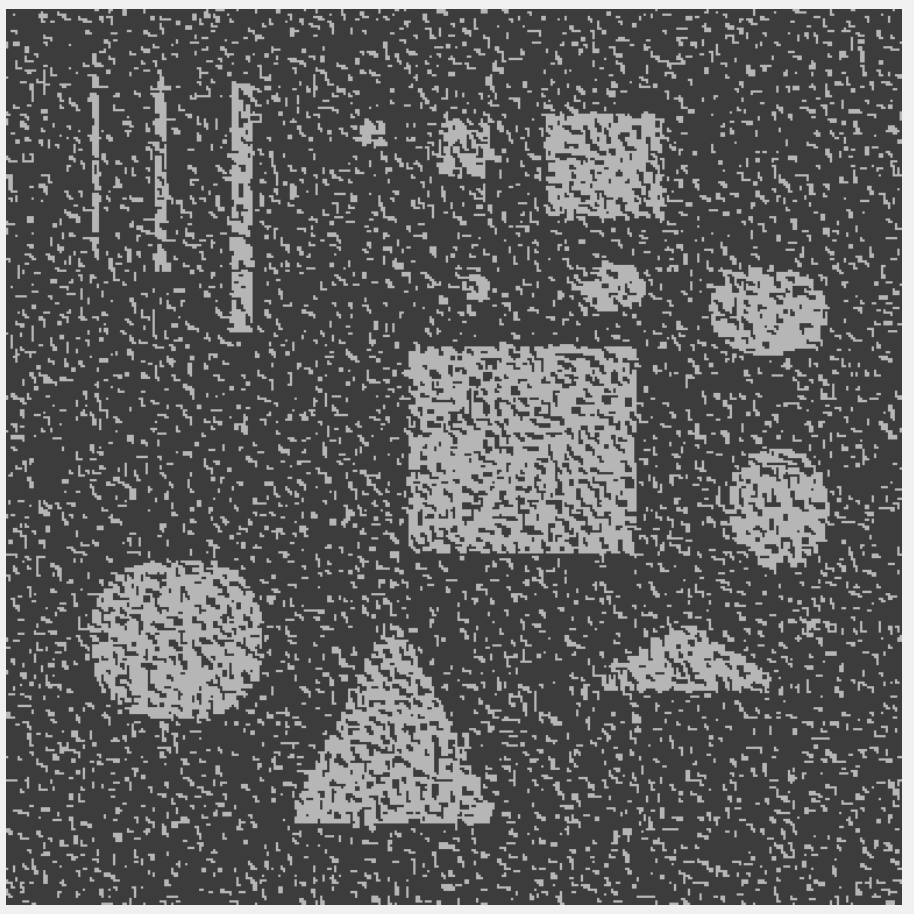}} & \subcaptionbox{$L_1-0.75L_2$ FR}{\includegraphics[width = 1.00in]{./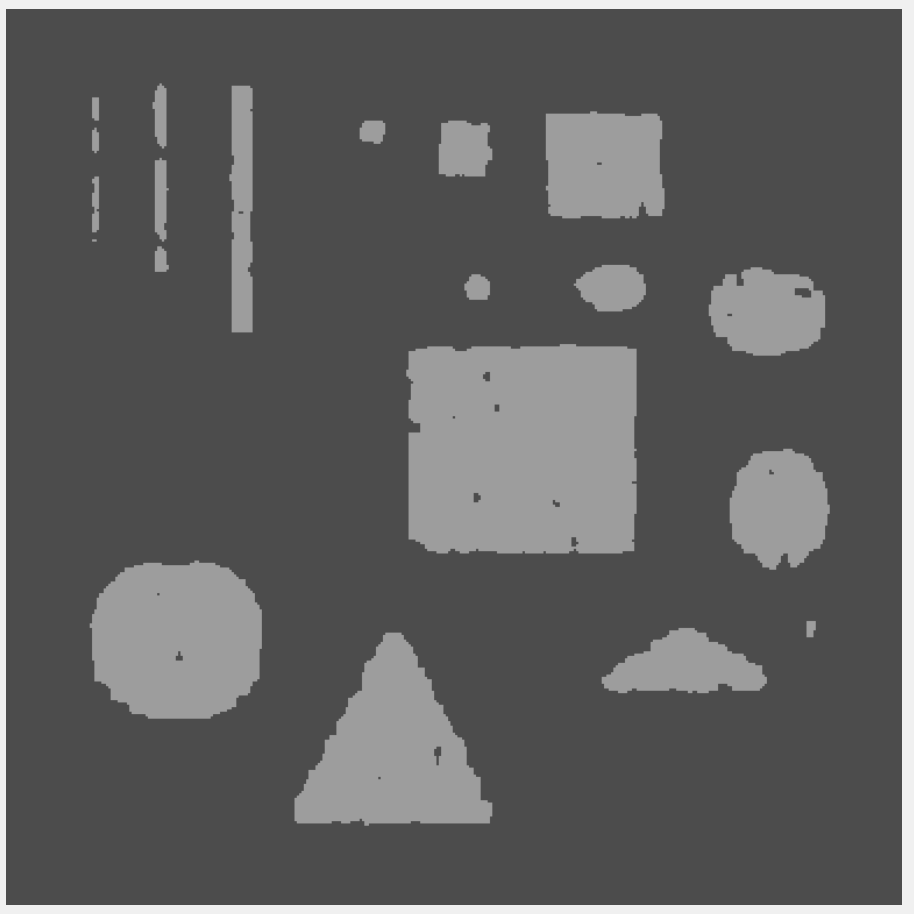}} & \subcaptionbox{$L_1-0.5L_2$ FR}{\includegraphics[width = 1.00in]{./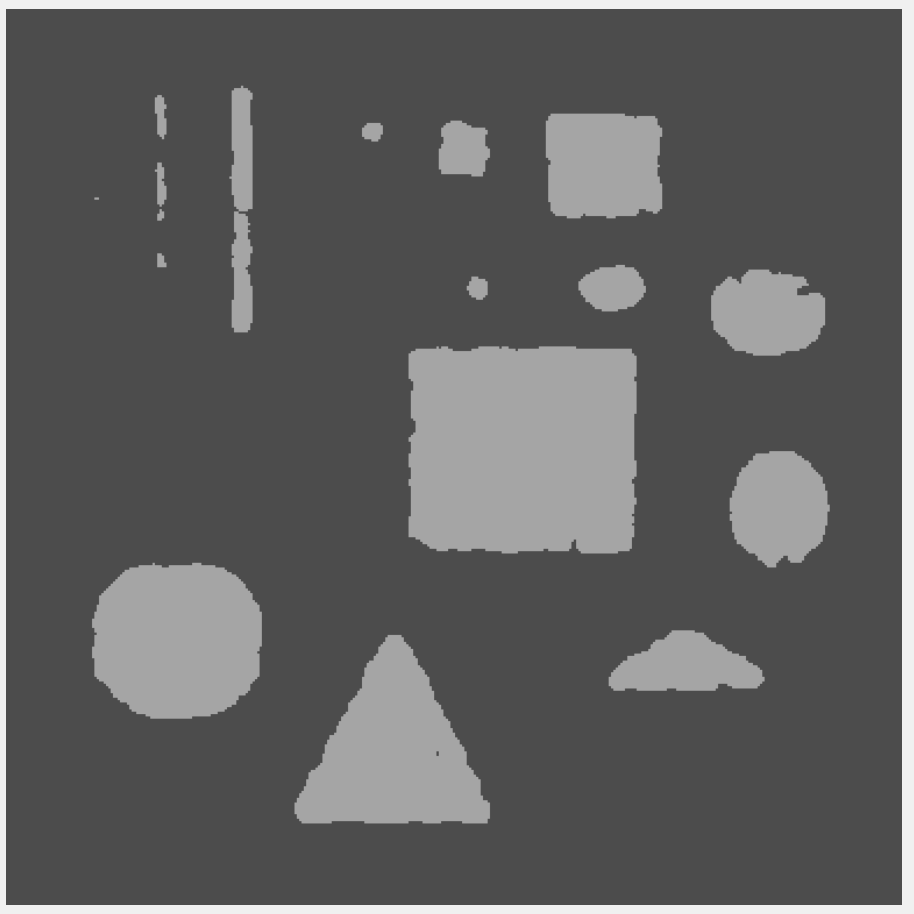}} &
				\subcaptionbox{$L_1-0.25L_2$ FR}{\includegraphics[width = 1.00in]{./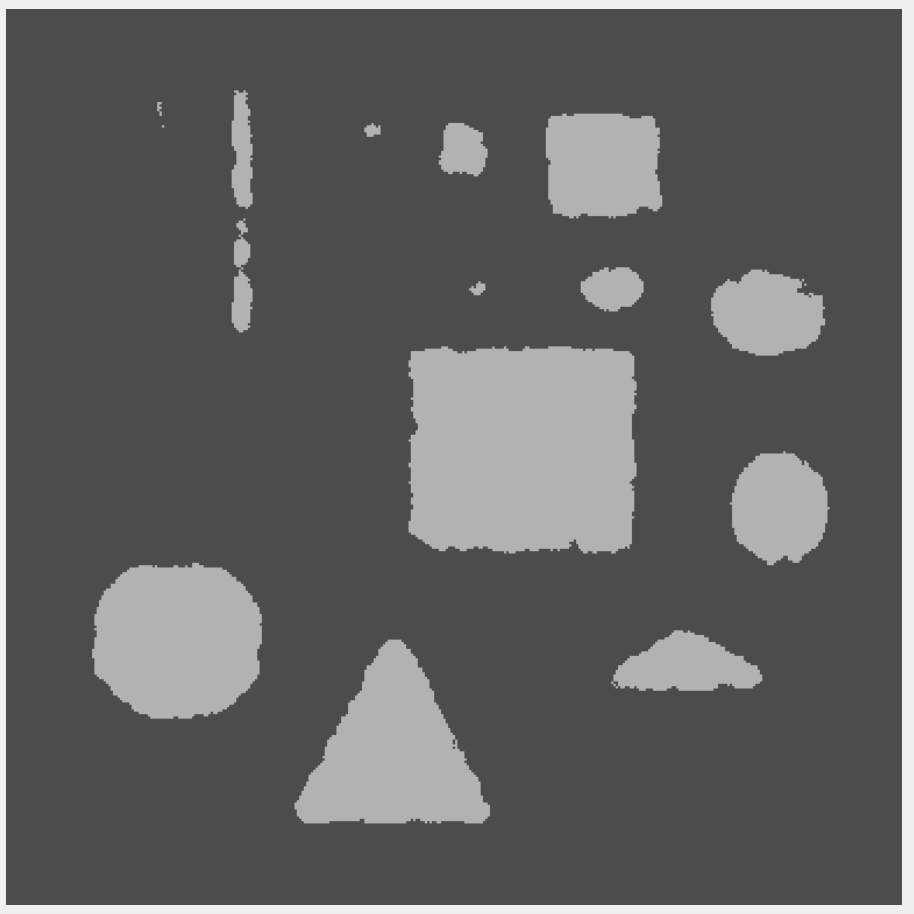}} & \subcaptionbox{$L_1$ FR}{\includegraphics[width = 1.00in]{./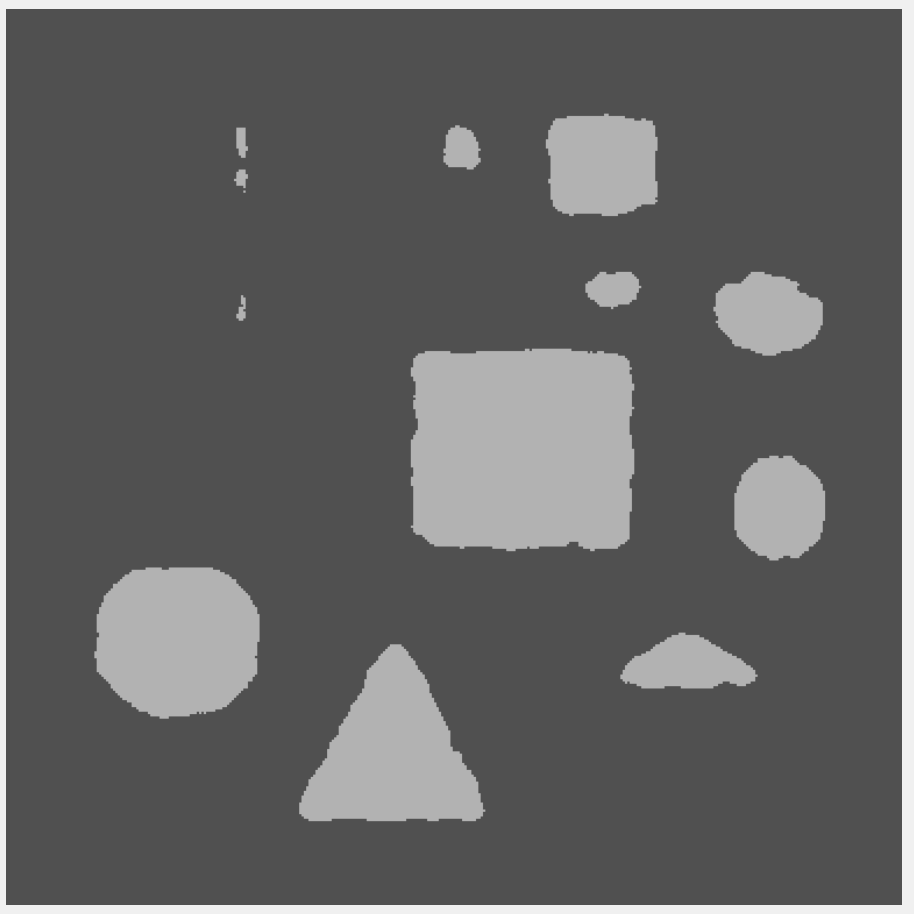}}\\ 
		\end{tabular}}
		\caption{Reconstruction results on Figure \ref{fig:synthetic_grayscale} corrupted with 60\% RVIN.  }
		\label{fig:grayscale_60_rvin}
	\end{minipage}
\end{figure}
\subsection{Synthetic Images}
We apply various segmentation algorithms on the synthetic images presented in Figure \ref{fig:synthetic}. We scale the intensity values of all the images to be $[0,1]$ to ease the parameter tuning. To demonstrate the robustness of the  algorithms with respect to noises, we  contaminate the original images with either salt-and-pepper impulsive noise (SPIN) or random-valued impulsive noise (RVIN). To  evaluate the model performance, we compute the DICE index \cite{dice1945measures} between the segmentation result and the ground truth. The metric is defined by
\begin{align*}
\text{DICE} = 2\frac{\#\{A(i) \cap A'(i)\}}{\#\{A(i)\} +\#\{A'(i)\}},
\end{align*}
where $A(i)$ is the set of pixels with label $i$ in the ground-truth image $f$ or $\mathbf{f}$, $A'(i)$ is the set of pixels with label $i$ in the segmented image $\tilde{f}$ or $\tilde{\mathbf{f}}$, and $\#\{A\}$ refers to the number of pixels in the set $A$.  If the DICE index equals 1, it means the perfect alignment of the segmentation result to the ground truth. For two-phase segmentation, we compute the DICE index only  for the object of interest, not the background. For multiphase segmentation, we compute the mean of the DICE indices across the regions, including the background.

For the two-phase AICV model, the initialization $u_1^0$ in Algorithm \ref{alg:DCA_PDHG} is a binary step function
that represents a circle of radius 10 in the center of the image (i.e., taking the value 1 if inside the circle and 0 elsewhere). Since the binary step function forms two regions in an image, it can be used as initialization for the two-phase AIFR model,  i.e., $u_1^0$ and $u_2^0 = \mathbbm{1}-u_1^0$ for Algorithm \ref{alg:DCA_PDHG2}. The initialization for the four-phase segmentation requires two step functions, which are set to be two circles of radius 30 shifted by 5 pixels to the right of the image center and another by 5 pixels to the left. The circle functions are used here for simplicity. Contours of the initialization are marked as colored circles in the noisy images.

For Figure \ref{fig:synthetic_grayscale}, we set $\lambda = 2$ for all methods, except for $L_0$ \cite{xu2011image} in which $\lambda = 50$. For the AIFR models, we set $\nu = 10$. The maximum number of inner iterations for the AITV models is 300, while the maximum number of outer iterations is 20 for AICV and 40 for AIFR. Table \ref{tab:twophase_grayscale} records the DICE indices of the segmentation results for varying levels of both SPIN and RVIN from 0\% to 70\%. 
When the noise level is at least 50\%, both $L_1-0.5L_2$ and $L_1-0.25L_2$ CV models outperform $L_1$ CV. For AIFR, $L_1-0.5L_2$ and $L_1-0.25L_2$ outperform $L_1$ across all levels of SPIN corruption. In addition,  $L_1-L_2$ FR is less robust than other values of $\alpha$ when the noise level increases. Most of the best results in the cases of intermediate to high RVIN noise levels are attained by the proposed models.
Figures \ref{fig:grayscale_60_spin}-\ref{fig:grayscale_60_rvin} display the segmentation results of Figure \ref{fig:synthetic_grayscale} corrupted with 60\% SPIN and 60\% RVIN, respectively. (We note that the contrast of the reconstructed images is different from Figure \ref{fig:synthetic_grayscale} because the impulsive noise in the corrupted image skews the values of the constants $\{c_k\}_{k=1}^N$ computed by the segmentation algorithms. This phenomenon repeats for Figures \ref{fig:synthetic_2phase}-\ref{fig:synthetic_4phase}.) As $\alpha$ decreases in both the AICV and AIFR models, the results become less noisy, but they have less segmented regions. Therefore, $\alpha =0.5$ yields the best compromise in the case of SPIN. For RVIN, the AICV and AIFR results are not as noisy as in the case of SPIN, and hence $\alpha =0.75$
is the best for RVIN. The two-stage methods generally produce  noisy results in the presence of SPIN and RVIN. 
\begin{figure}[t!]
	\begin{minipage}{\linewidth}
		\centering
		\resizebox{\textwidth}{!}{%
			\begin{tabular}{c@{}c@{}c@{}c@{}c@{}c}
				\subcaptionbox{40\% SPIN \label{fig:color_40_sp}}{\includegraphics[width = 1.00in]{./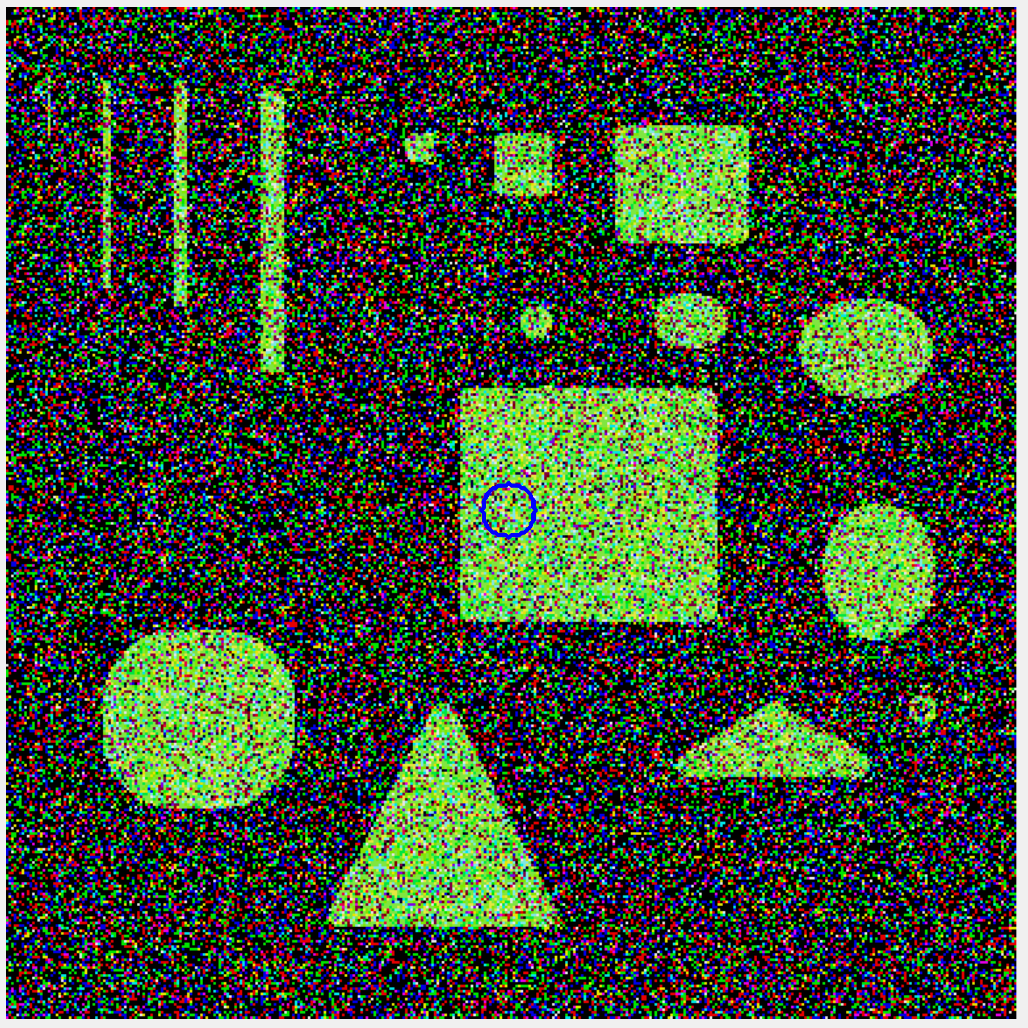}} & \subcaptionbox{$L_1-L_2$ CV}{\includegraphics[width = 1.00in]{./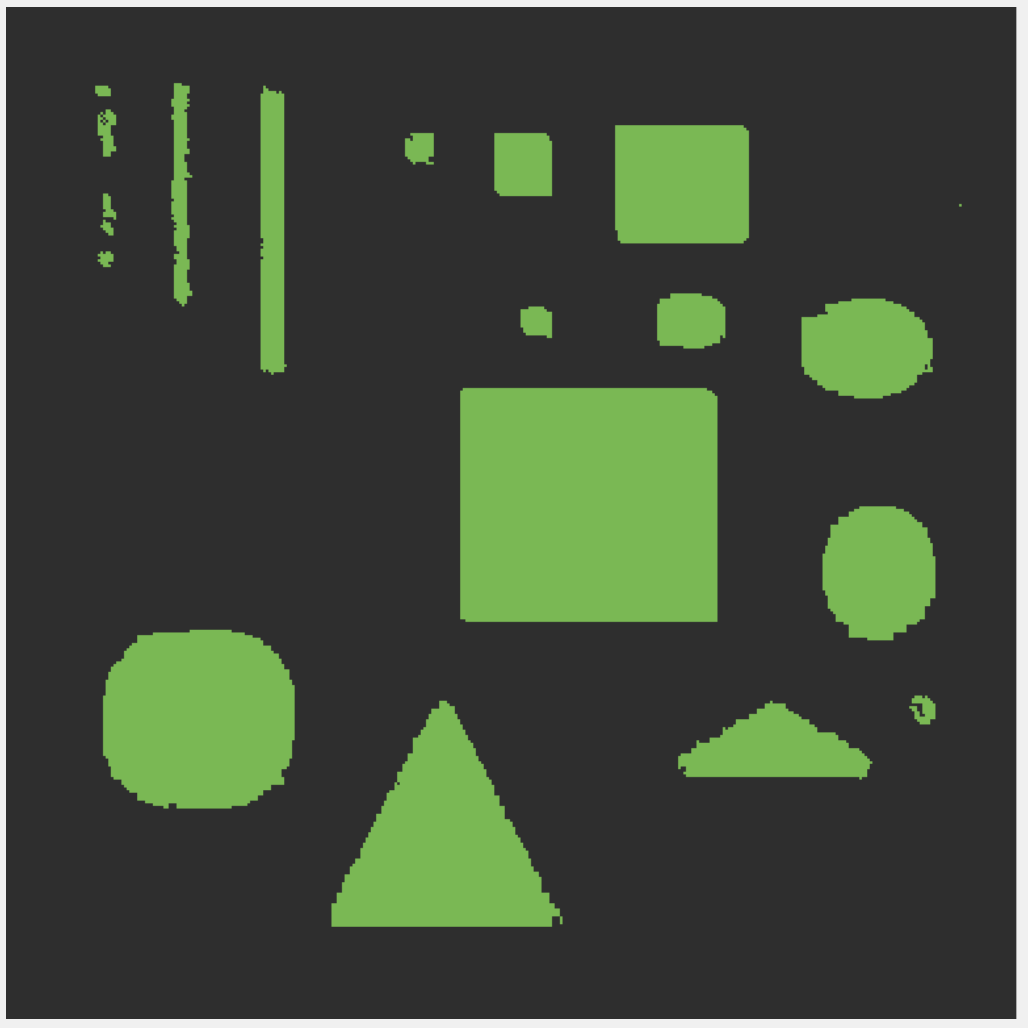}}  & \subcaptionbox{$L_1$ CV}{\includegraphics[width = 1.00in]{./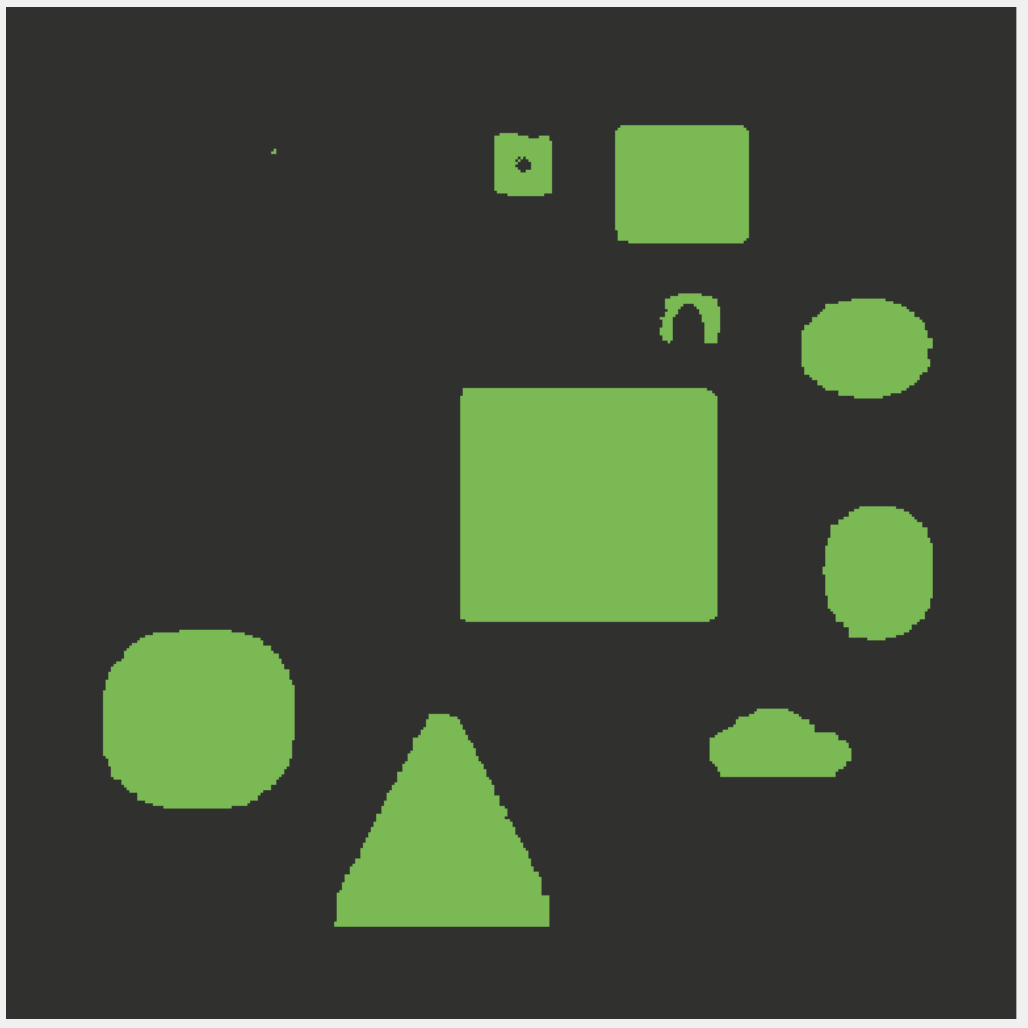}} & \subcaptionbox{$L_1 - 0.75L_2$ FR}{\includegraphics[width = 1.00in]{./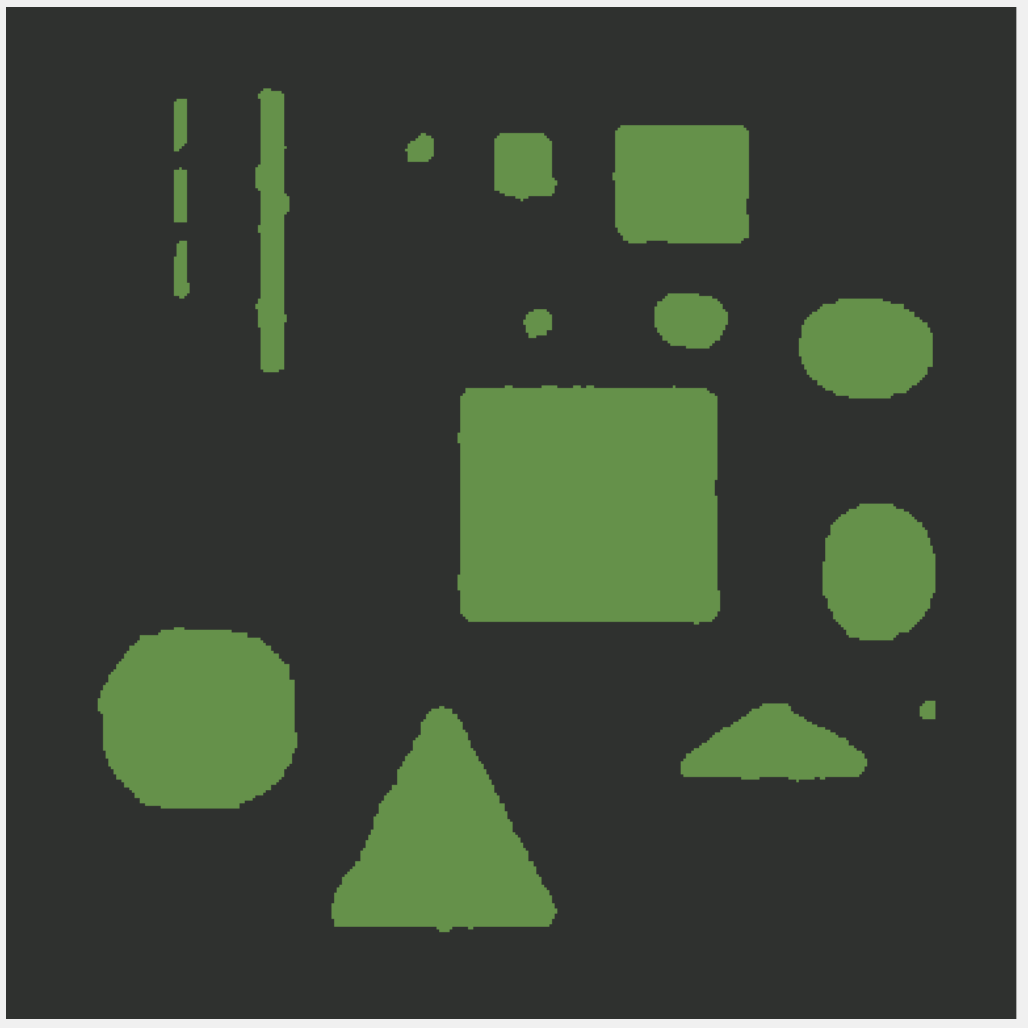}} & \subcaptionbox{$L_1$ FR}{\includegraphics[width = 1.00in]{./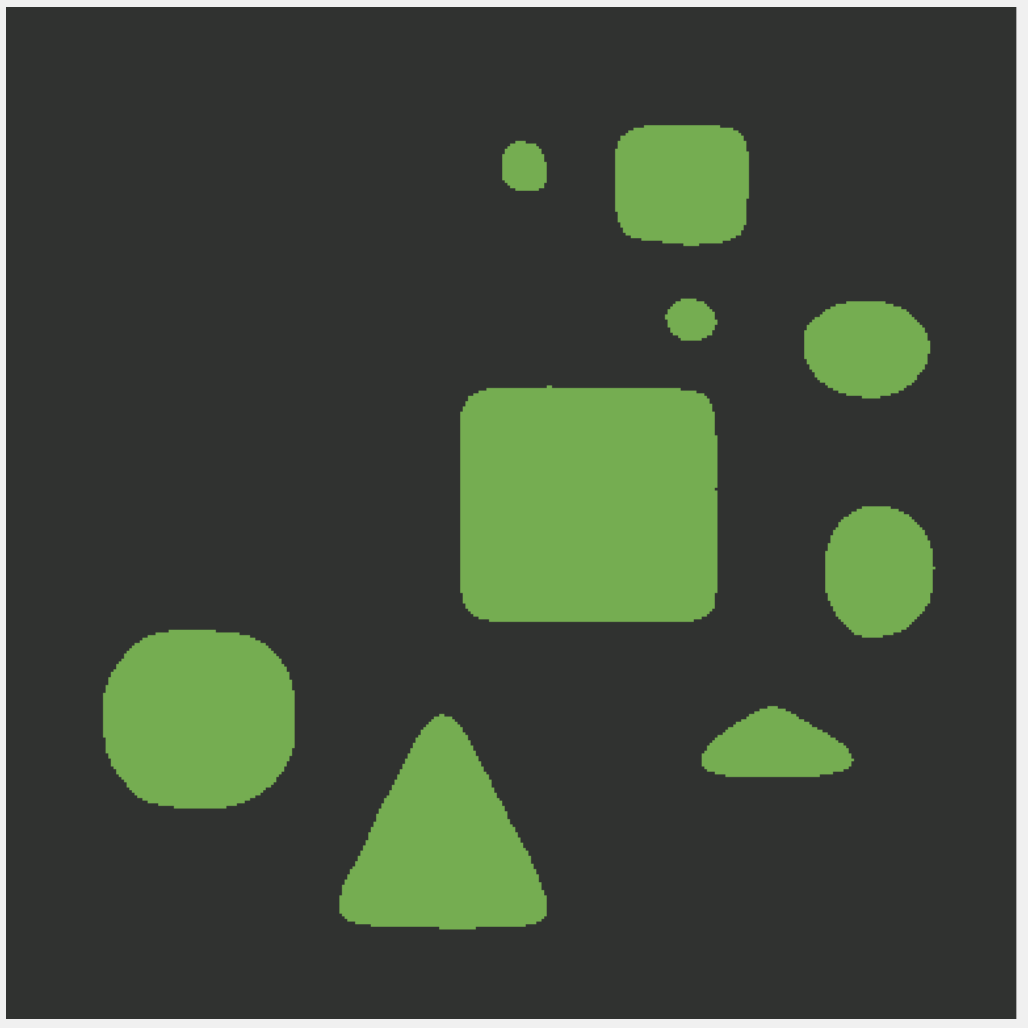}} &\subcaptionbox{$L_1+L_2^2$}{\includegraphics[width = 1.00in]{./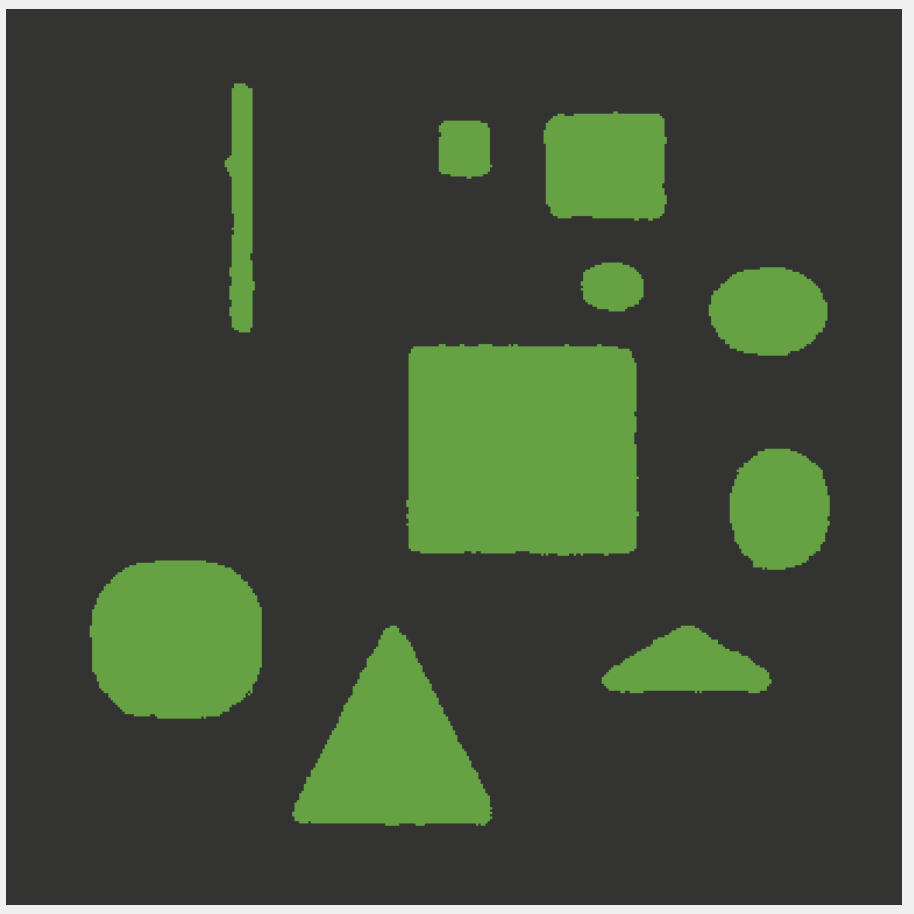}} \\
					\subcaptionbox{40\% RVIN \label{fig:color_rv_40}}{\includegraphics[width = 1.00in]{./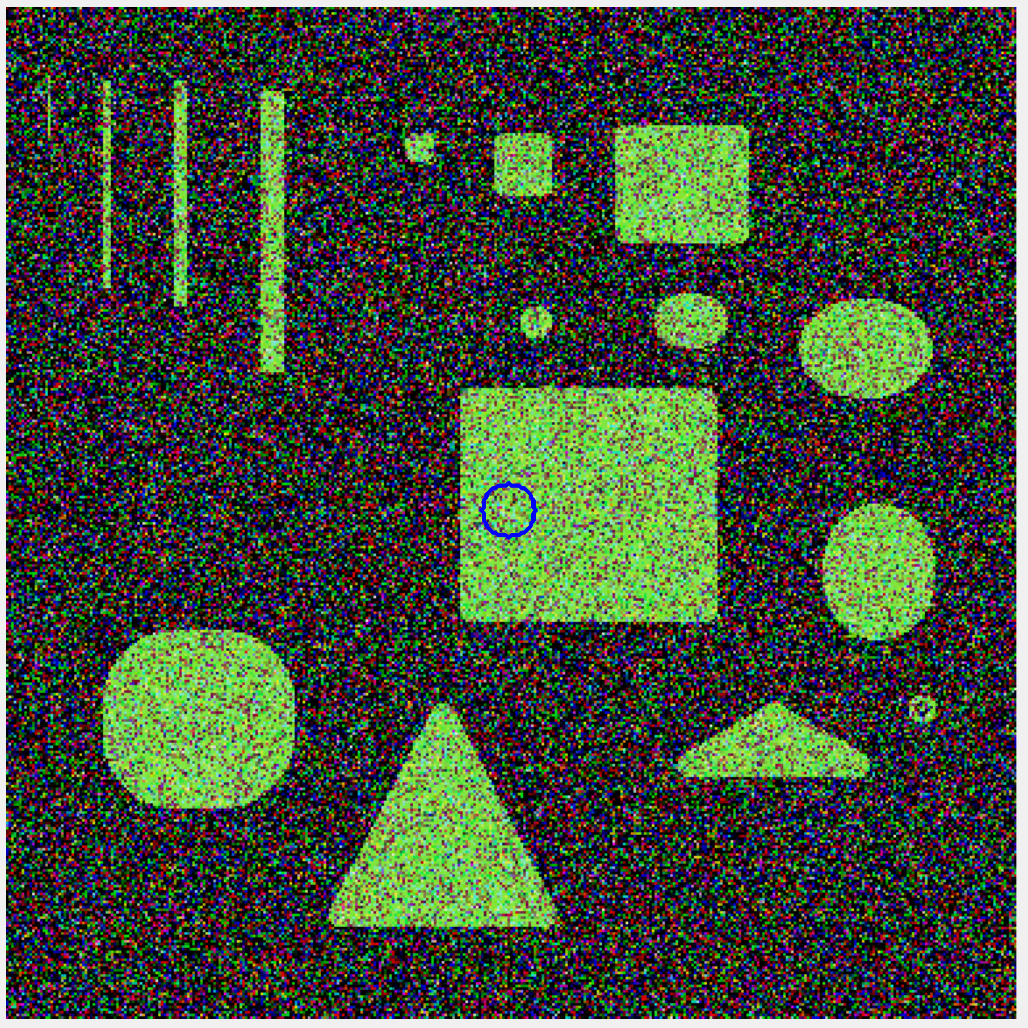}}&
					\subcaptionbox{$L_1-L_2$ CV}{\includegraphics[width = 1.00in]{./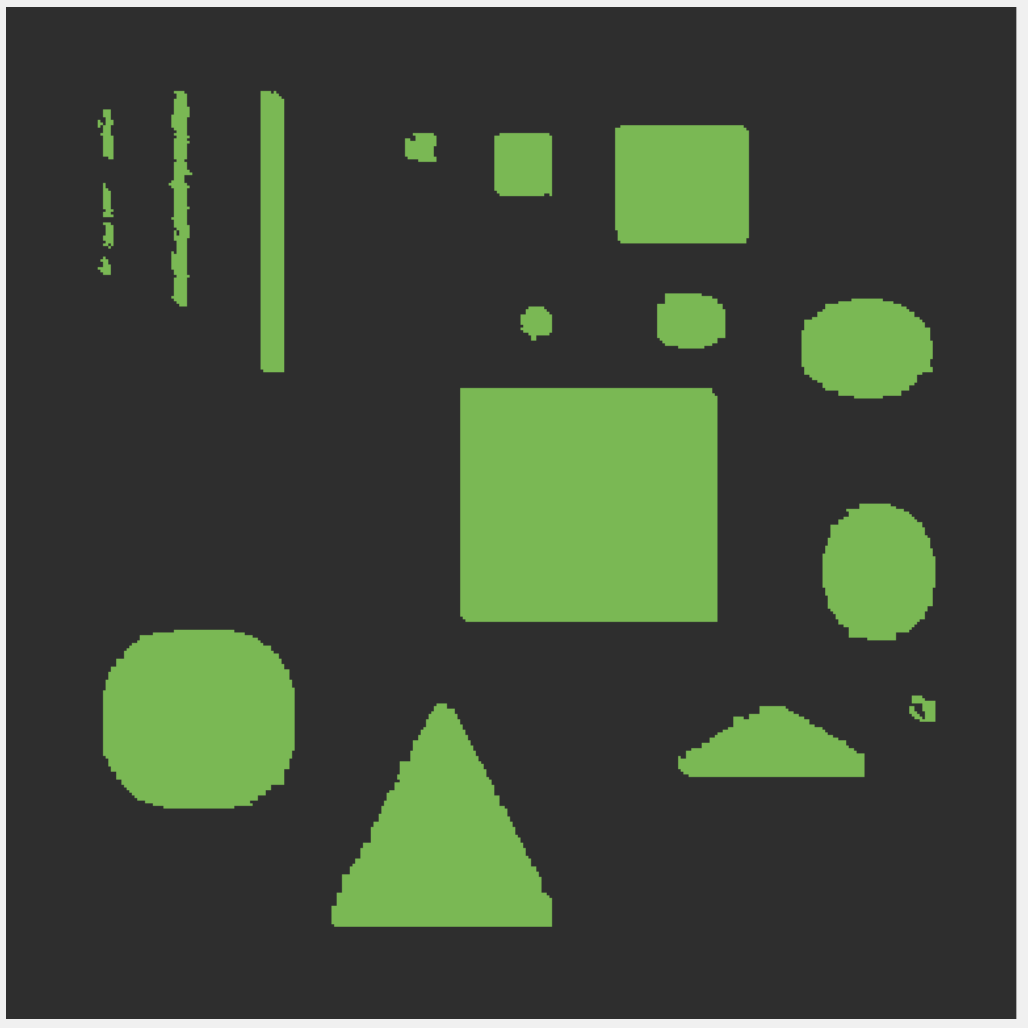}} & \subcaptionbox{$L_1$ CV}{\includegraphics[width = 1.00in]{./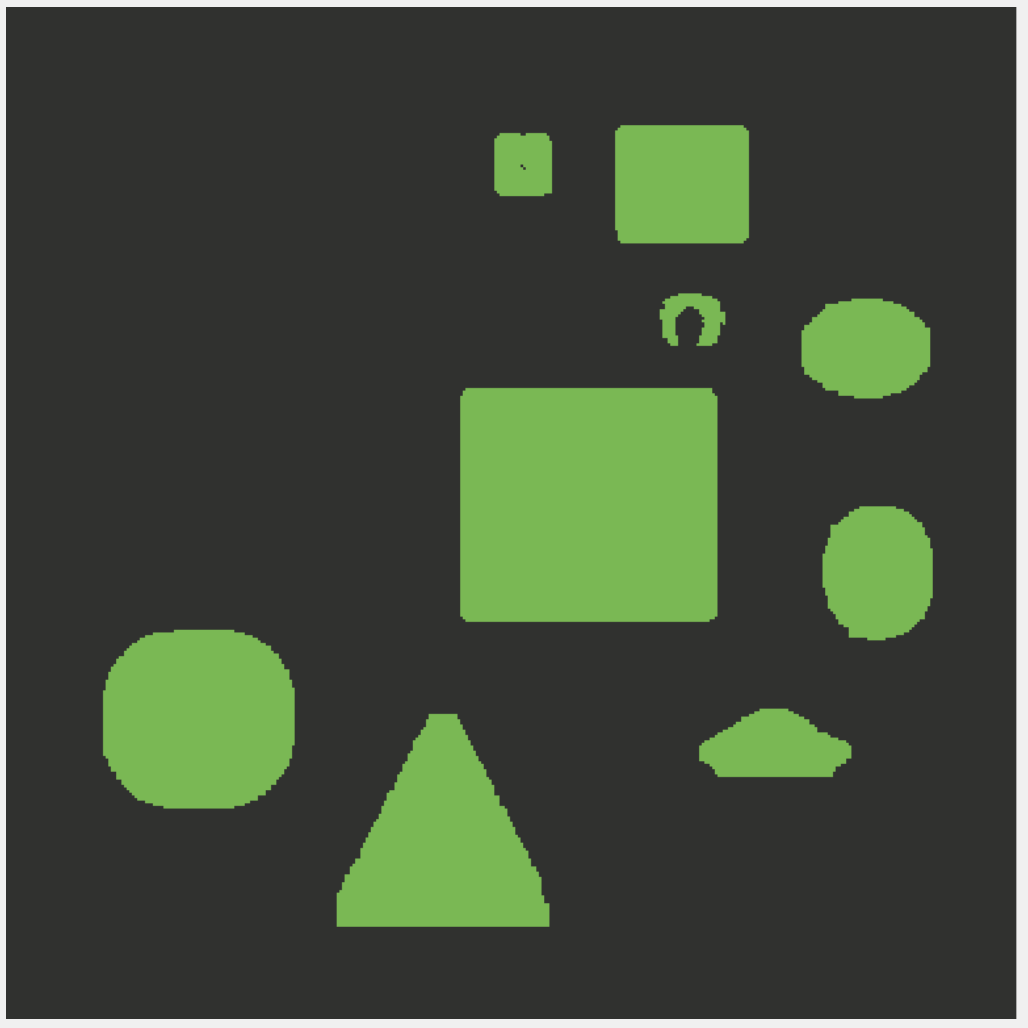}}  & \subcaptionbox{$L_1-0.75L_2$ FR}{\includegraphics[width = 1.00in]{./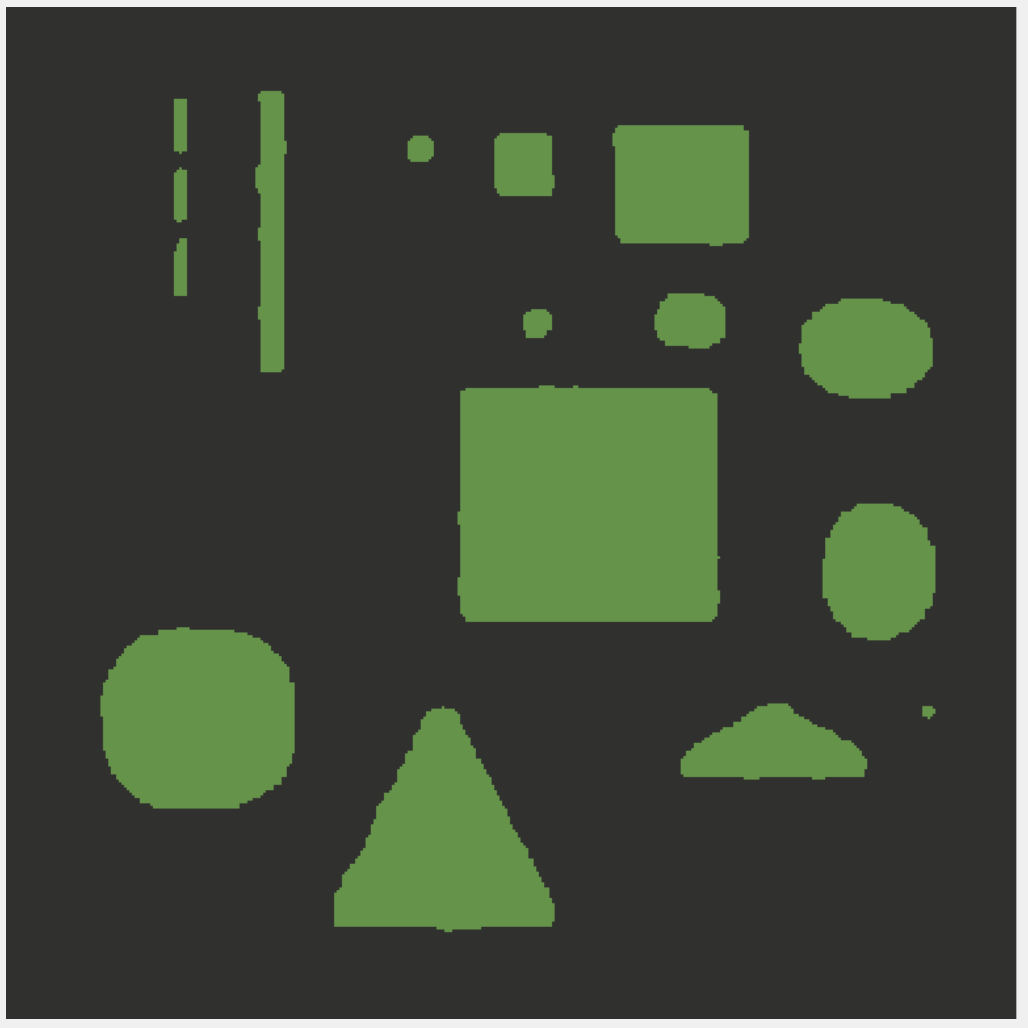}} & \subcaptionbox{$L_1$ FR}{\includegraphics[width = 1.00in]{./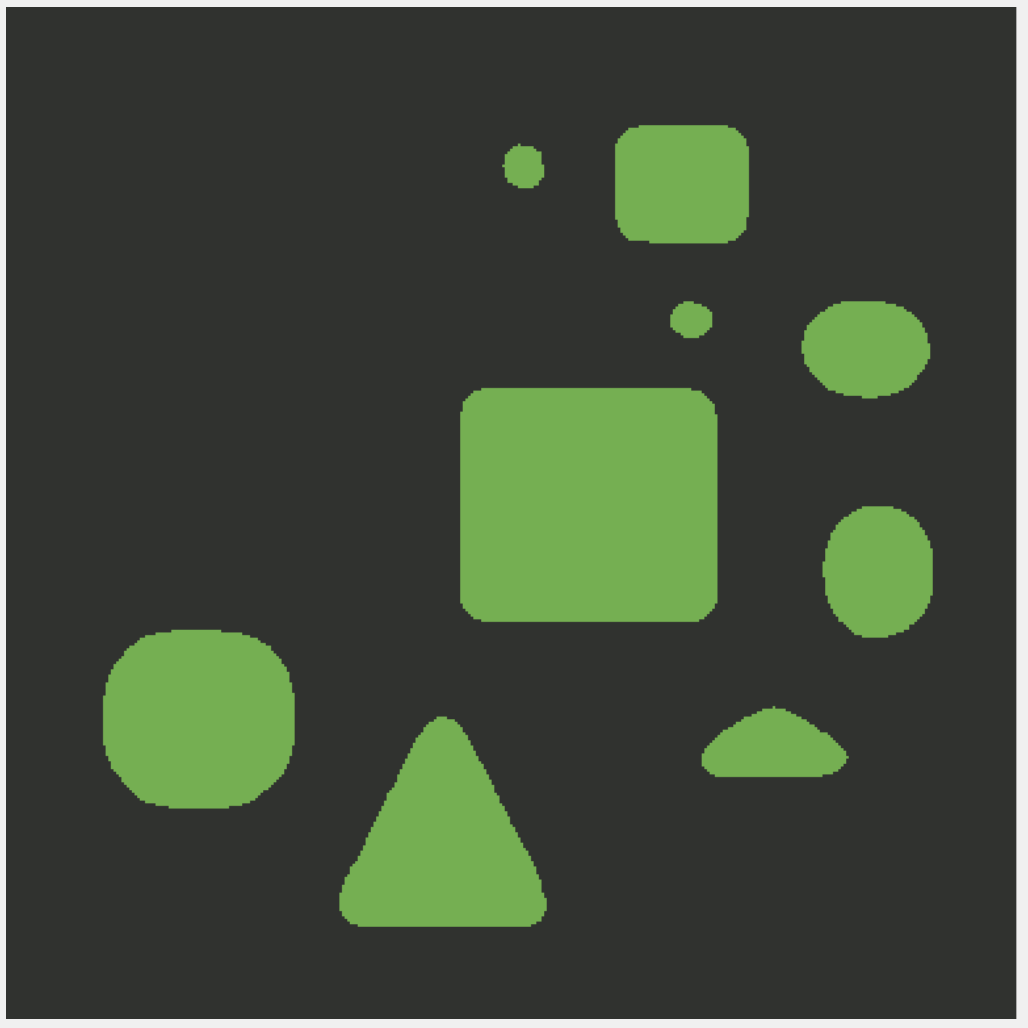}} &
				\subcaptionbox{$L_1+L_2^2$}{\includegraphics[width = 1.00in]{./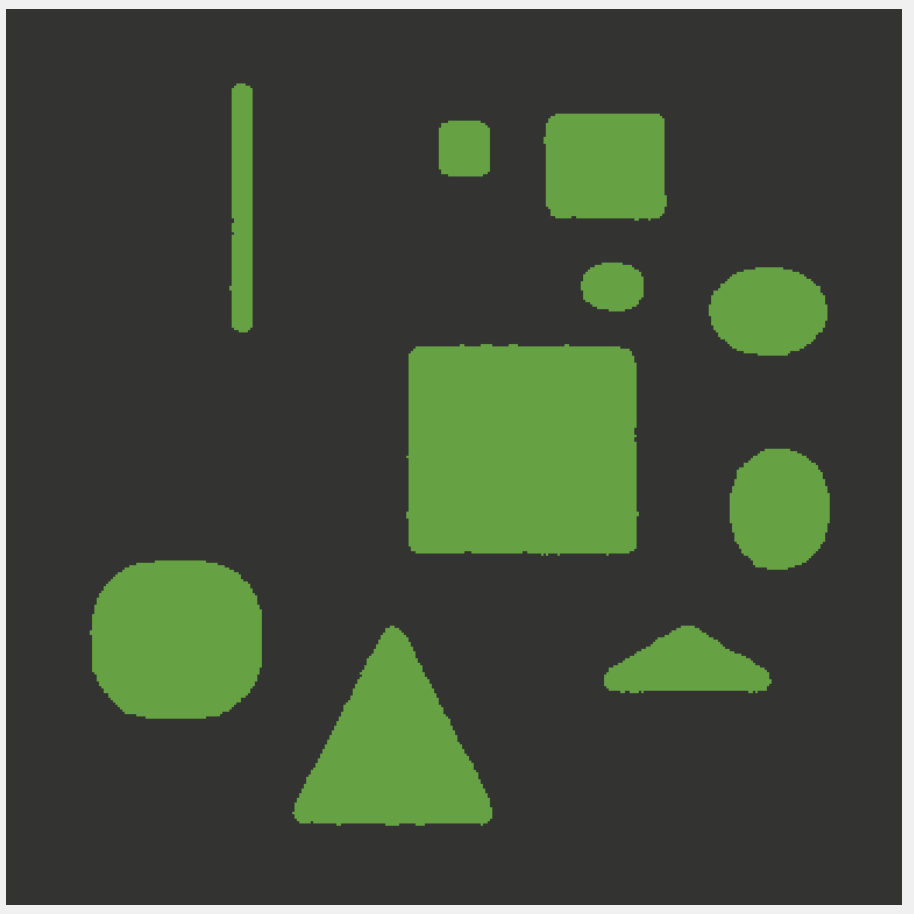}} 
		\end{tabular}}
			\caption{Reconstruction results on Figure \ref{fig:synthetic_2phase} corrupted with 40\% SPIN (top) and 40\% RVIN (bottom). }
					\label{fig:2phase_40_spin}
	\end{minipage}
\end{figure}
\begin{table}[th!]
		\caption{DICE indices of   various segmentation models applied to Figure \ref{fig:synthetic_2phase} corrupted with different levels of impulsive noise.}
	\begin{center}
		\begin{tabular}{l|c|c|c|c|c|c}\Xhline{5\arrayrulewidth}
			Salt \& Pepper (\%)  & 0 & 10 & 20 & 30 & 40 & 50 \\ \Xhline{5\arrayrulewidth}
			$L_1-L_2$ CV                 & \textbf{1}      & \textbf{0.9979} & 0.9952 & \textbf{0.9920} & \textbf{0.9867} & \textbf{0.9775} \\ \hline
			$L_1-0.75L_2$ CV             & 0.9994 & 0.9978 & \textbf{0.9957} & 0.9896 & 0.9856 & 0.9737 \\ \hline
			$L_1-0.5L_2$ CV              & 0.9992 & 0.9970 & 0.9910 & 0.9889 & 0.9826 & 0.9512 \\ \hline
			$L_1-0.25L_2$ CV             & 0.9982 & 0.9924 & 0.9904 & 0.9829 & 0.9726 & 0.9308 \\ \hline
			$L_1$ CV                    & 0.9938 & 0.9918 & 0.9808 & 0.9755 & 0.9457 & 0.9109 \\ \hline
			$L_1-L_2$ FR & 0.9977 & 0.9960 & 0.9931 & 0.9685 & 0.8187 & 0.7273 \\ \hline
			$L_1-0.75L_2$ FR & 0.9979 & 0.9955 & 0.9920 & 0.9873 & 0.9795 & 0.9626 \\ \hline
			$L_1-0.5L_2$ FR & 0.993  & 0.9908 & 0.9802 & 0.9720 & 0.9635 & 0.9409 \\ \hline
			$L_1-0.25L_2$ FR & 0.9818 & 0.9786 & 0.9690 & 0.9462 & 0.9441 & 0.9195 \\ \hline
			$L_1$ FR & 0.9774 & 0.9705 & 0.9524 & 0.9383 & 0.9301 & 0.8906 \\ \hline
			$L_1+L_2^2$ & 0.9931  & 0.9907 & 0.9874 & 0.9794 & 0.9726 & 0.9686 \\ \hline
			$L_0$ \cite{xu2011image} & \textbf{1}        & 0.8734 & 0.7687 & 0.6745 & 0.5945 & 0.4307 \\
			\hline
			$L_0$ \cite{storath2014fast} & 0.9939& 0.9904 & 0.9823 & 0.9762 & 0.9543 & 0.9266 \\
			\hline$R_{MS}$ & 0.9853& 0.9801 & 0.9676 & 0.9444 & 0.9116 & 0.8225 \\
			\hline
			\Xhline{5\arrayrulewidth}
			Random-valued (\%) & 0 & 10 & 20 & 30 & 40 & 50 \\ \Xhline{5\arrayrulewidth}
			$L_1-L_2$ CV                 & \textbf{1}        & \textbf{0.9987} & \textbf{0.9966} & \textbf{0.9932} & \textbf{0.9887} & \textbf{0.9826} \\ \hline
			$L_1-0.75L_2$ CV             & 0.9994 & 0.9983 & 0.9960 & 0.9915 & 0.9877 & 0.9759 \\ \hline
			$L_1-0.5L_2$ CV              & 0.9992 & 0.9975 & 0.9916 & 0.9899 & 0.9815 & 0.9535 \\ \hline
			$L_1-0.25L_2$ CV             & 0.9982 & 0.9928 & 0.9913 & 0.9784 & 0.9748 & 0.9344 \\ \hline
			$L_1$ CV                    & 0.9938 & 0.9920 & 0.9798 & 0.9773 & 0.9493 & 0.9145 \\ \hline
			$L_1-L_2$ FR & 0.9977 & 0.9965 & 0.9943 & 0.9902 & 0.9071 & 0.7154 \\ \hline
			$L_1-0.75L_2$ FR & 0.9979 & 0.9960 & 0.9921 & 0.9879 & 0.9815 & 0.9520 \\ \hline
			$L_1-0.5L_2$ FR & 0.993  & 0.9907 & 0.9797 & 0.9742 & 0.9644 & 0.9526 \\ \hline
			$L_1-0.25L_2$ FR & 0.9818 & 0.9781 & 0.9702 & 0.9620 & 0.9534 & 0.9161 \\ \hline
			$L_1$ FR & 0.9774 & 0.9656 & 0.9533 & 0.9519 & 0.9316 & 0.8770 \\ \hline
			$L_1+L_2^2$ & 0.9931  & 0.9912 & 0.9877 & 0.9812 & 0.9755 & 0.9726 \\ \hline
			$L_0$ \cite{xu2011image} &\textbf{1}        & 0.9032 & 0.7991 & 0.6972 & 0.6089 & 0.5312\\ \hline
			$L_0$ \cite{storath2014fast} & 0.9939& 0.9852 & 0.9846 & 0.9786 & 0.9573 & 0.9298 \\
			\hline
			$R_{MS}$ & 0.9853& 0.9797 & 0.9782 & 0.9465 & 0.9074 & 0.8260 \\ \hline
		\end{tabular}
	\end{center}
	\label{tab:twophase_color}
\end{table}

\begin{table}
		\captionof{table}{DICE indices of   various segmentation models applied to Figure \ref{fig:synthetic_4phase} corrupted with different levels of impulsive noise.}
		\begin{center}
			\begin{tabular}{l|c|c|c|c|c}\Xhline{5\arrayrulewidth}
				Salt \& Pepper (\%)   & 0 & 10 & 20 & 30 & 40 \\ \Xhline{5\arrayrulewidth}
				$L_1-L_2$ CV                 & 0.9990 & 0.9762 & 0.9524 & 0.9245 & 0.8548 \\ \hline
				$L_1-0.75 L_2$ CV             & 0.9992 & 0.9763 & 0.9649 & 0.9288 & 0.8978 \\ \hline
				$L_1-0.5 L_2$ CV              & 0.9992 & 0.9789 & 0.9704 & 0.9509 & 0.9292 \\ \hline
				$L_1-0.25 L_2$ CV             & 0.9994 & 0.9852 & 0.9686 & 0.9608 & 0.9448 \\ \hline
				$L_1$ CV                    & 0.9987 & 0.9832 & 0.9788 & 0.9597 & 0.9496 \\ \hline
				$L_1-L_2$ FR             & 0.9994 & 0.7869 & 0.6566 & 0.5424 & 0.4552 \\ \hline
				$L_1-0.75L_2$  FR         & 0.9994 & 0.9328 & 0.8736 & 0.8058 & 0.6541 \\ \hline
				$L_1-0.5L_2$ FR  & 0.9980 & 0.9905 & 0.9847 & 0.9720 & 0.8976 \\ \hline
				$L_1-0.25 L_2$ FR & 0.9976 & 0.9921 & 0.9863 & 0.9801 & \textbf{0.9753} \\ \hline
				$L_1$ FR                & 0.9976 & \textbf{0.9924} & \textbf{0.9869} & \textbf{0.9804} & 0.9474 \\ \hline
				$L_1+L_2^2$ & 0.9984 & 0.9904 & 0.9691 & 0.8984 & 0.7562 \\ \hline
				$L_0$          \cite{xu2011image}             & \textbf{1} & 0.7611 & 0.6284 & 0.5134 & 0.4225\\ \hline			$L_0$ \cite{storath2014fast} & 0.9997& 0.9245 & 0.7977 & 0.6536 & 0.4884 \\\hline			$R_{MS}$ & \textbf{1} & 0.9900 & 0.9771 & 0.9649 & 0.9575 \\
			\hline
				\Xhline{5\arrayrulewidth}
				Random-valued (\%)   & 0      & 10    & 20    & 30    & 40    \\ \Xhline{5\arrayrulewidth}
				$L_1-L_2$ CV                 & 0.9990  & 0.9895 & 0.9757 & 0.9594 & 0.9261 \\ \hline
				$L_1-0.75 L_2$ CV             & 0.9992 & 0.9910 & 0.9831 & 0.9755 & 0.9664 \\ \hline
				$L_1-0.5L_2$ CV              & 0.9992 & 0.9934 & 0.9875 & 0.9797 & 0.9737 \\ \hline
				$L_1-0.25L_2$ CV             & 0.9994 & 0.9934 & 0.9876 & 0.9798 & 0.9771 \\ \hline
				$L_1$ CV                    & 0.9987 & 0.9941 & 0.9884 & 0.9789 & 0.9761 \\ \hline
				$L_1-L_2$ FR & 0.9994 & 0.8841 & 0.7118 & 0.6604 & 0.5972 \\ \hline
				$L_1-0.75L_2$ FR & 0.9994 & 0.9916 & 0.9875 & 0.9353 & 0.8790 \\ \hline
				$L_1-0.5L_2$ FR & 0.998  & 0.9947 & \textbf{0.9912} & \textbf{0.9851} & \textbf{0.9833} \\ \hline
				$L_1-0.25L_2$ FR & 0.9976 & 0.9942 & \textbf{0.9912} & 0.9849 & 0.9821 \\ \hline
				$L_1$ FR & 0.9976 & 0.9921 & 0.9892 & \textbf{0.9851} & 0.9553 \\ \hline
				$L_1+L_2^2$ & 0.9984 & 0.9949 & 0.9857 & 0.9803 & 0.9705\\ \hline
				$L_0$         \cite{xu2011image}              & \textbf{1}      & 0.7744 & 0.6932 & 0.5302 & 0.4478 \\ \hline			$L_0$ \cite{storath2014fast} & 0.9997& 0.9828 & 0.9614 & 0.9482 & 0.9311 \\ \hline
				$R_{MS}$ & \textbf{1} & \textbf{0.9953} & 0.9900 & 0.9849 & 0.9831 \\ \hline
			\end{tabular}
		\end{center}
		\label{tab:fourphase_color}
		\end{table}
		
\begin{figure}[h!!!!]
	\begin{minipage}{\linewidth}
		\centering
		\resizebox{\textwidth}{!}{%
			\begin{tabular}{c@{}c@{}c@{}c@{}c@{}c}
				\subcaptionbox{40\% SPIN}{\includegraphics[width = 1.00in]{./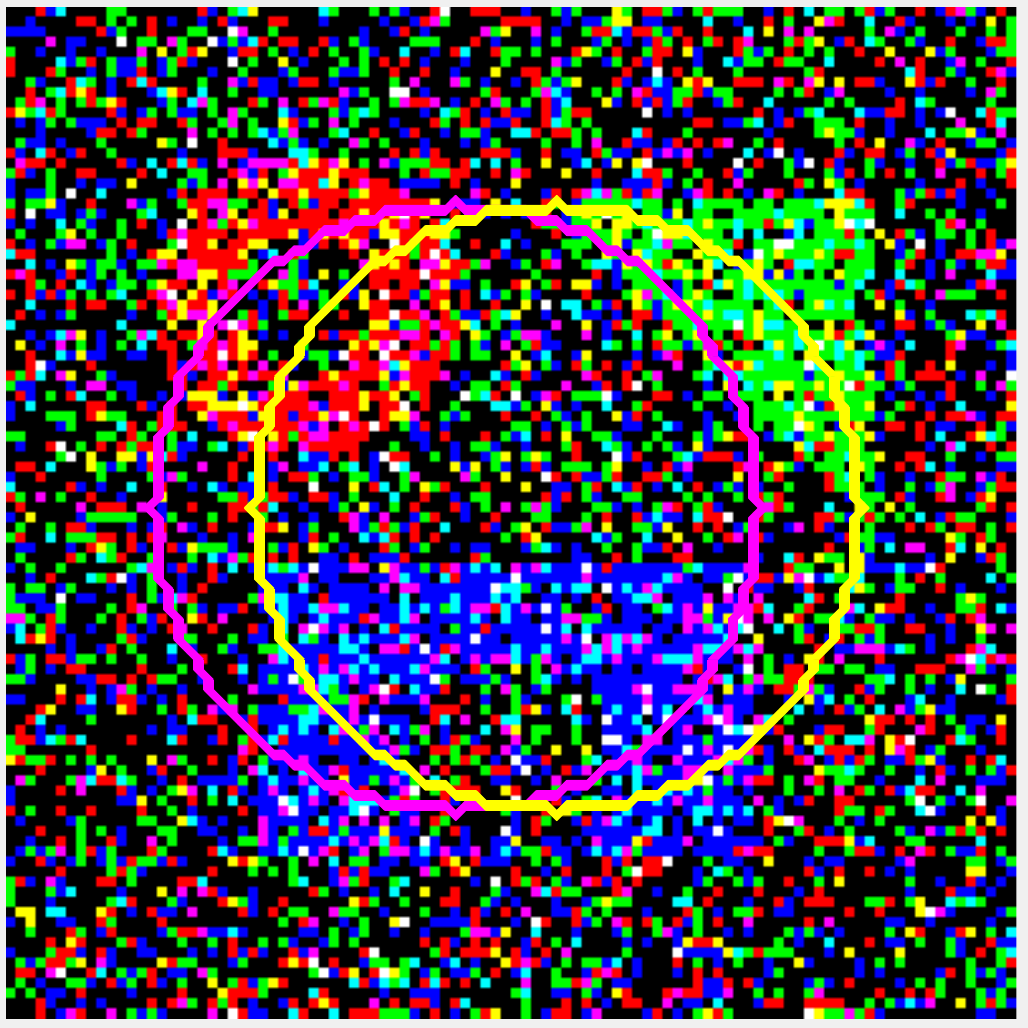}}& \subcaptionbox{$L_1-0.25L_2$ CV}{\includegraphics[width = 1.00in]{./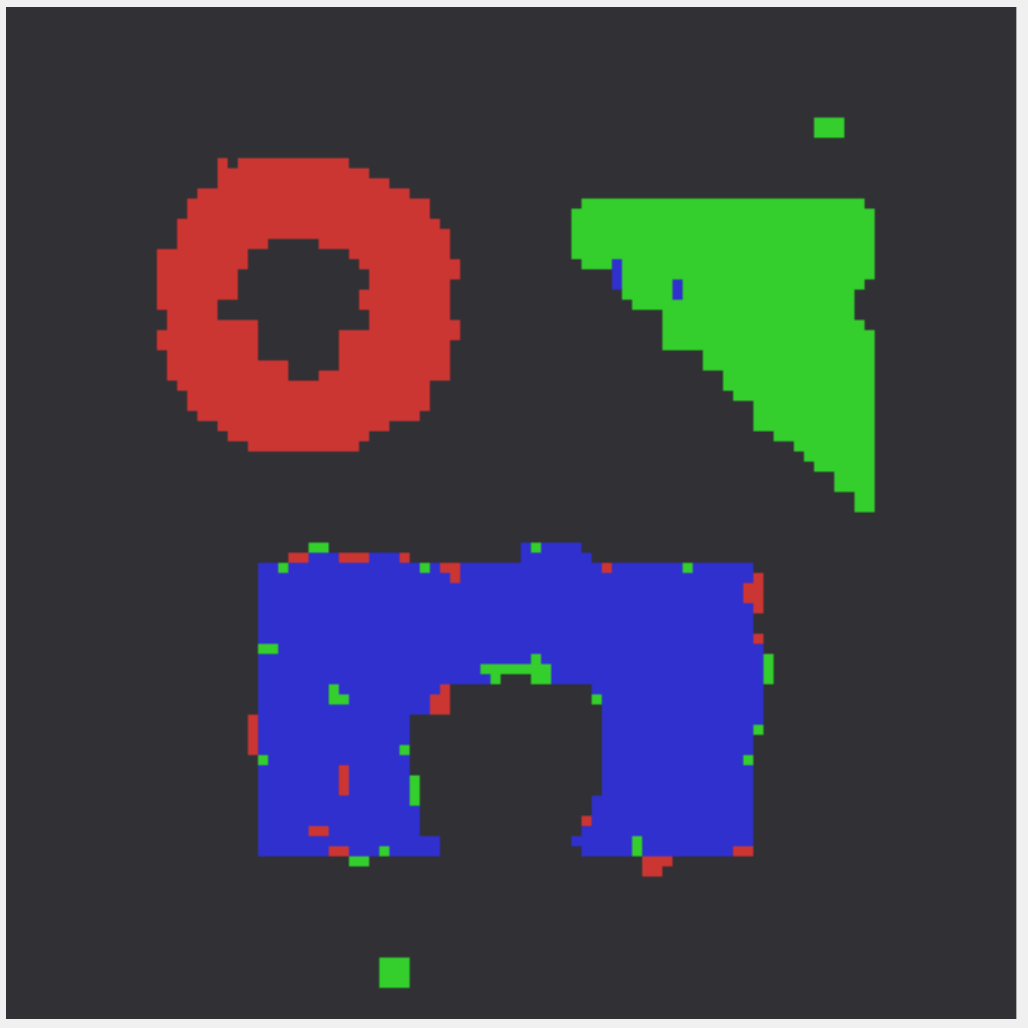}} & \subcaptionbox{$L_1$ CV}{\includegraphics[width = 1.00in]{./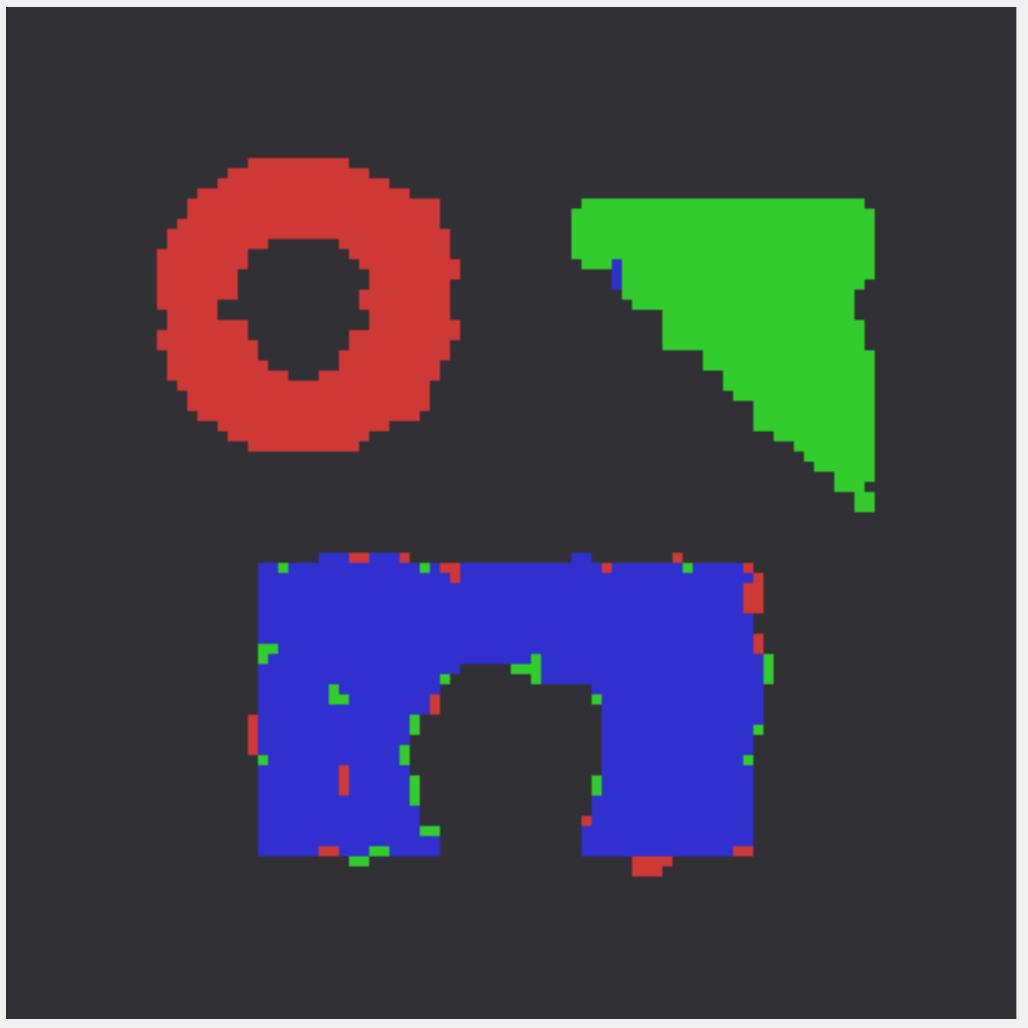}}  & \subcaptionbox{$L_1-0.25L_2$ FR}{\includegraphics[width = 1.00in]{./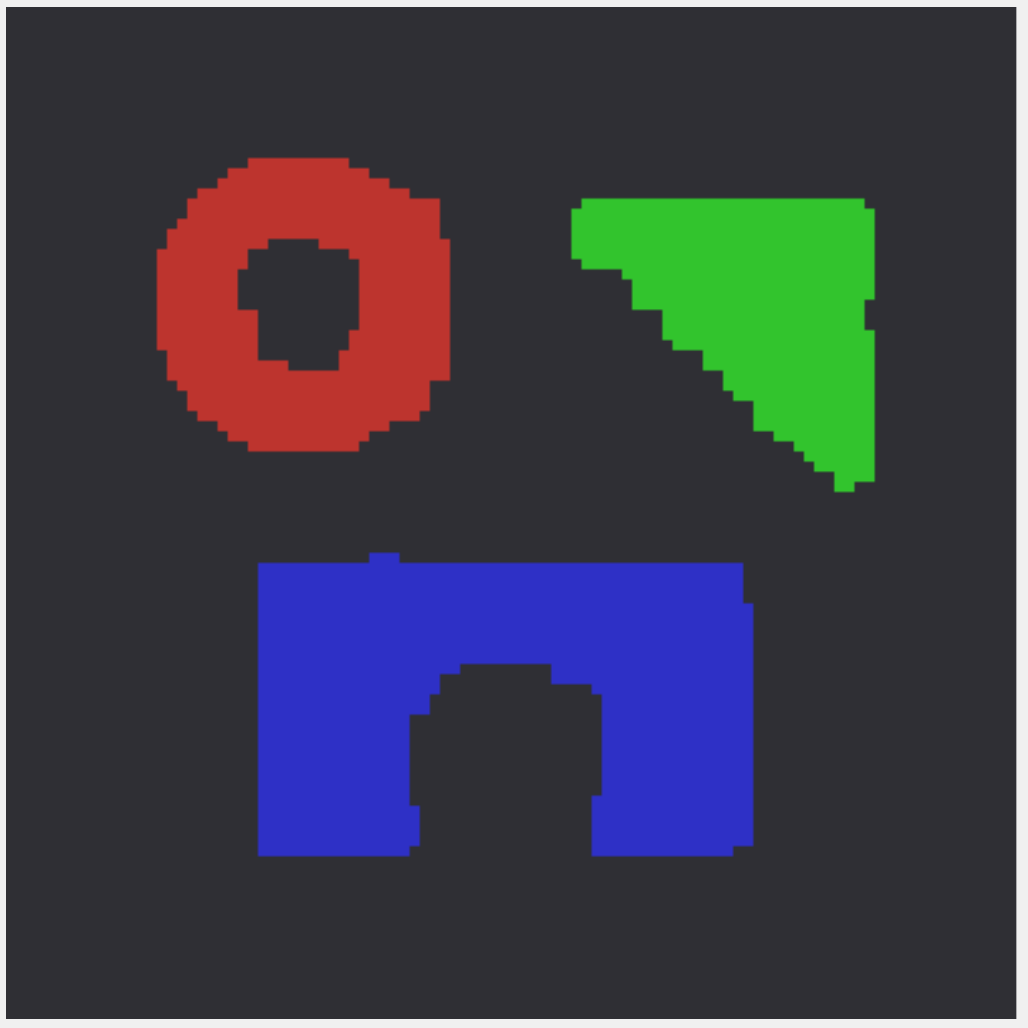}} & \subcaptionbox{$L_1$ FR}{\includegraphics[width = 1.00in]{./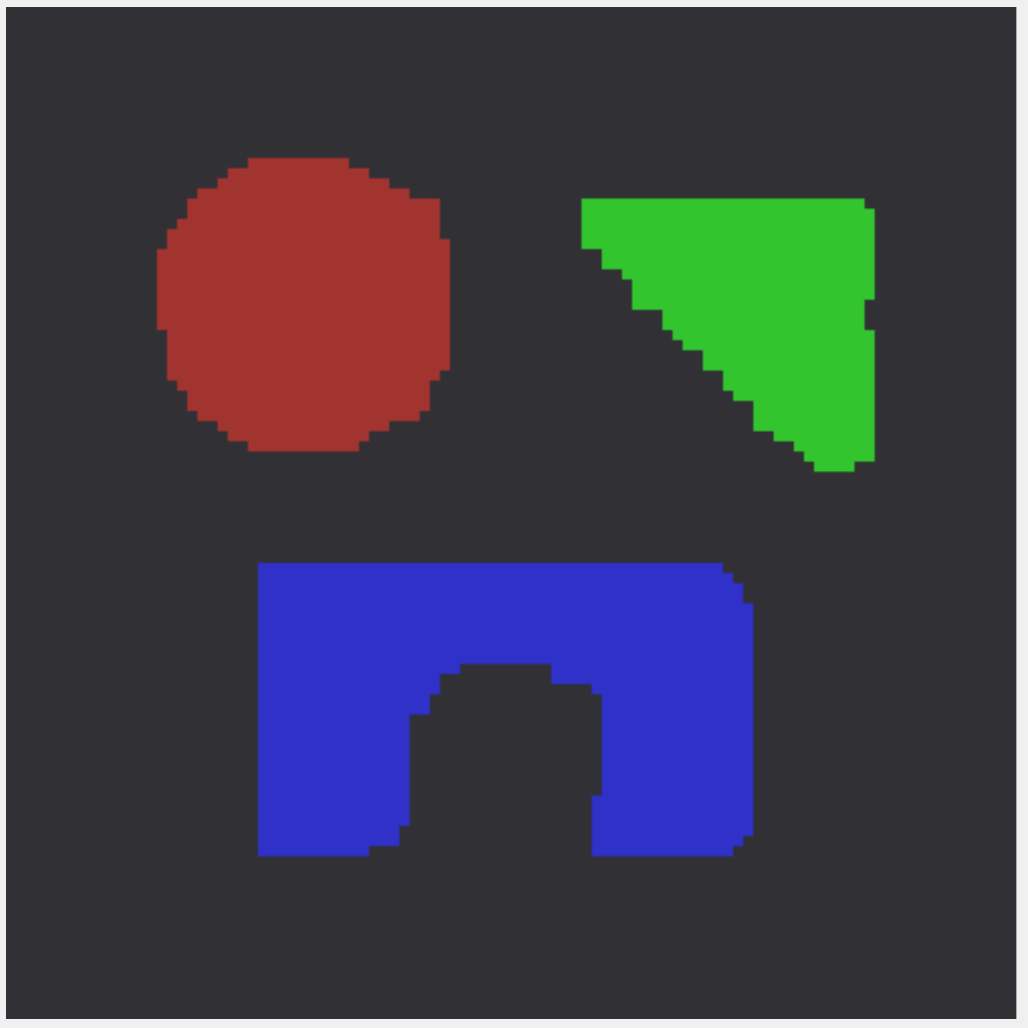}} &
				\subcaptionbox{$R_{MS}$}{\includegraphics[width = 1.00in]{./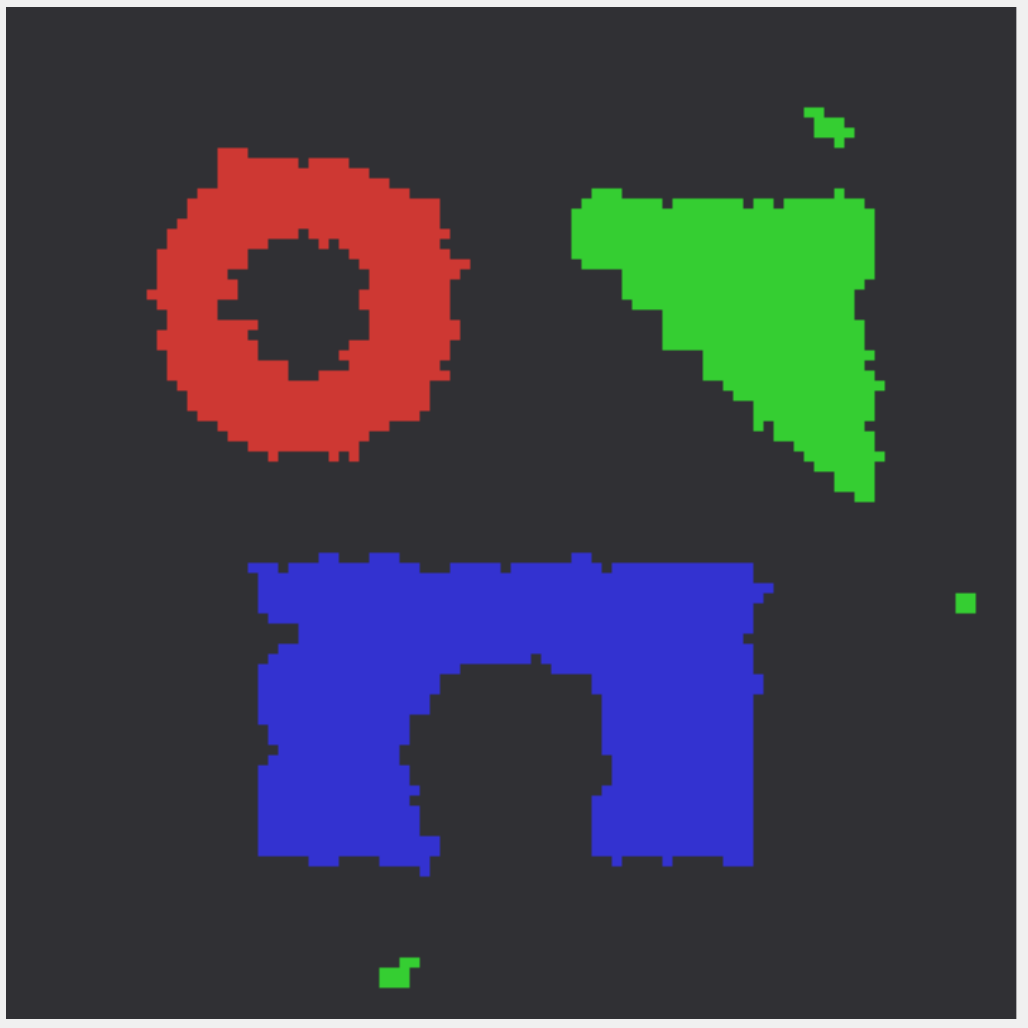}}\\
				\subcaptionbox{40\% RVIN}{\includegraphics[width = 1.00in]{./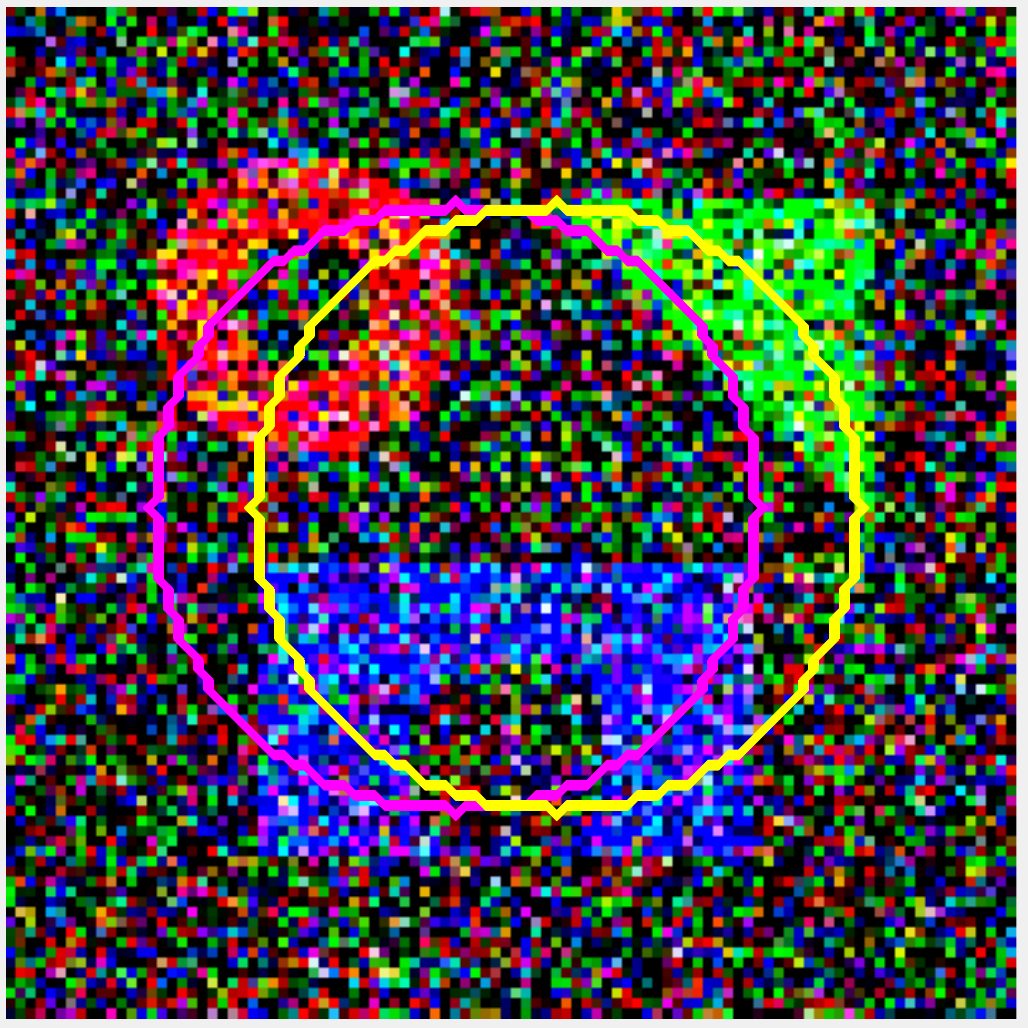}}& \subcaptionbox{$L_1-0.25L_2$ CV}{\includegraphics[width = 1.00in]{./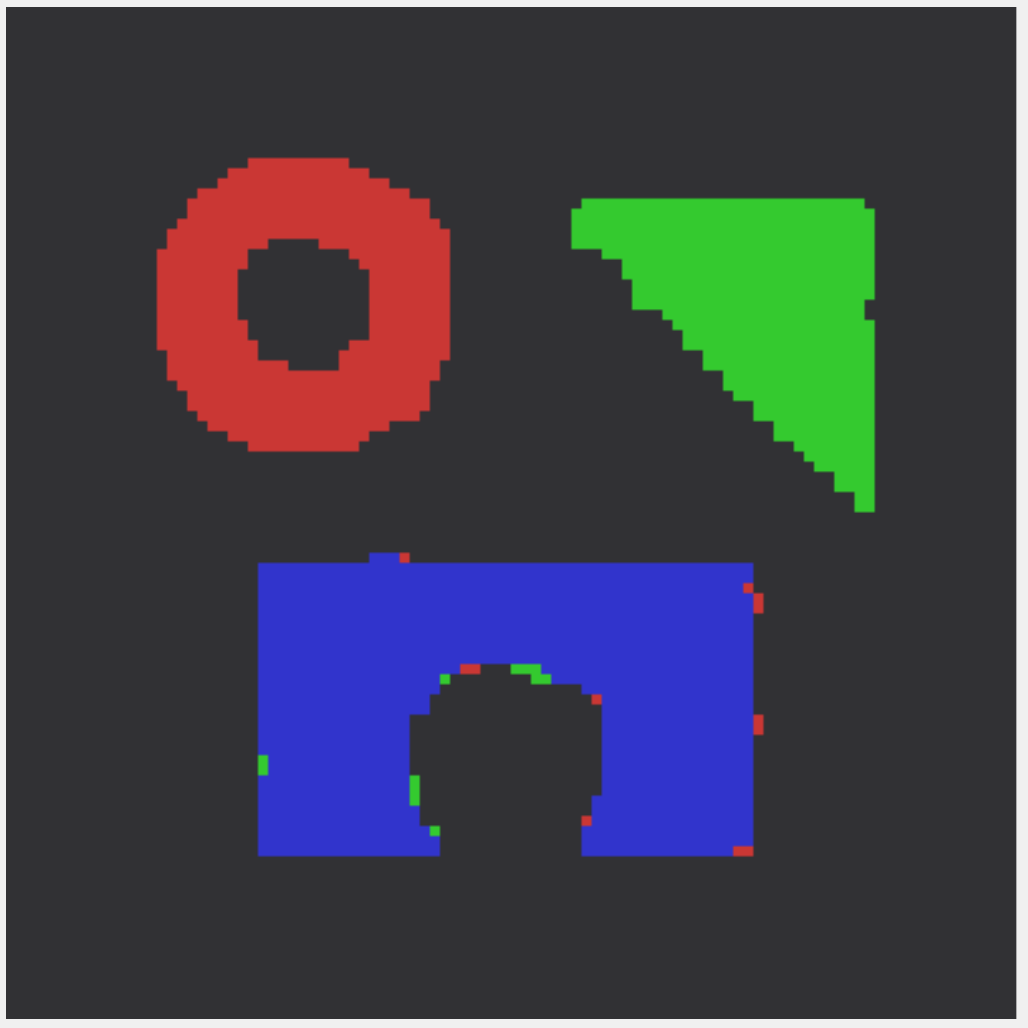}} & \subcaptionbox{$L_1$ CV}{\includegraphics[width = 1.00in]{./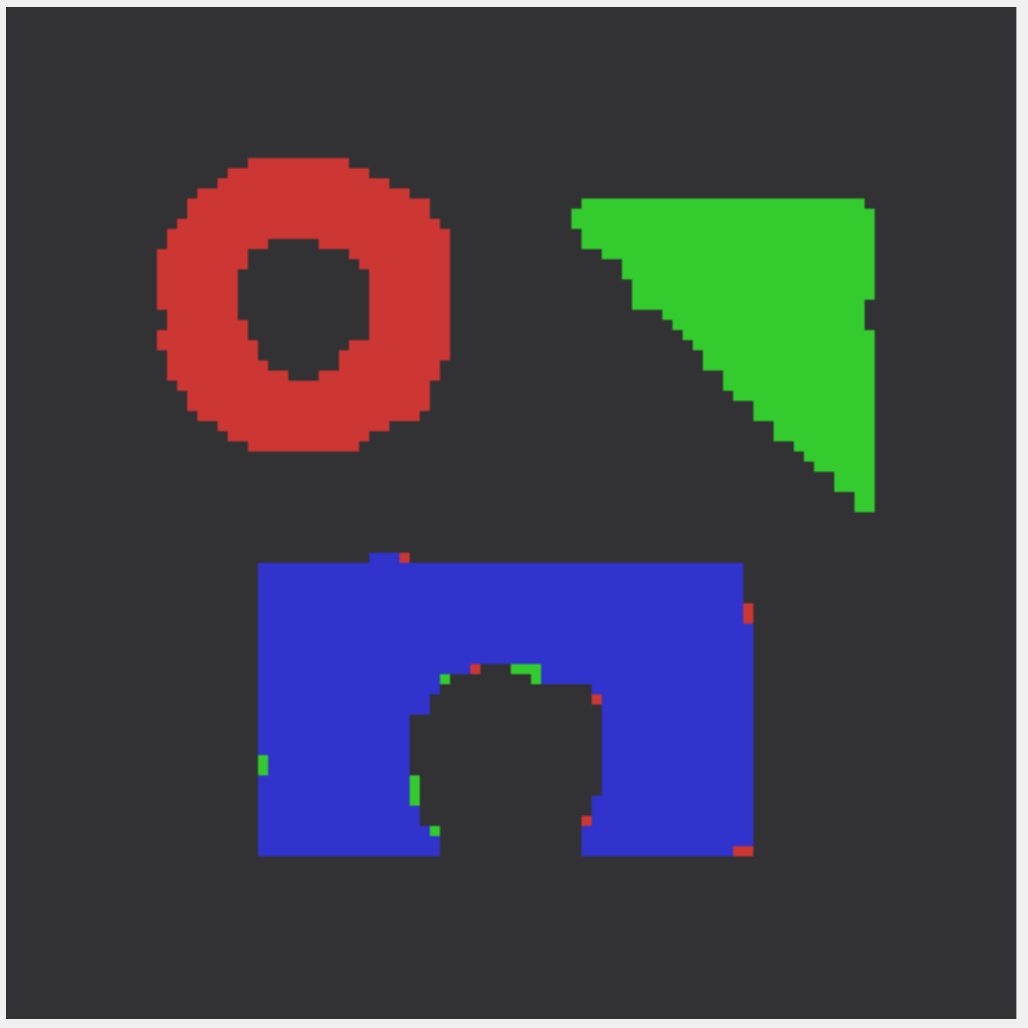}}  & \subcaptionbox{$L_1-0.5L_2$ FR}{\includegraphics[width = 1.00in]{./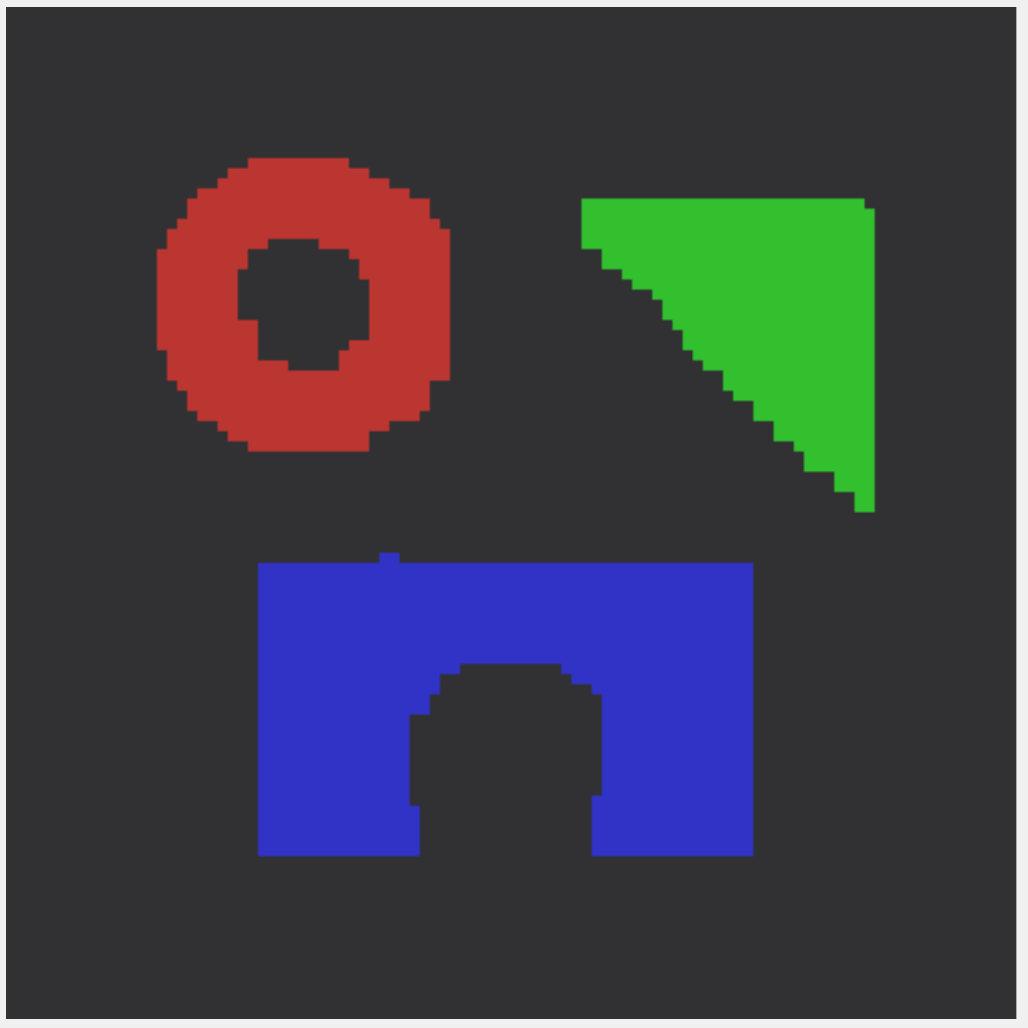}} & \subcaptionbox{$L_1$ FR}{\includegraphics[width = 1.00in]{./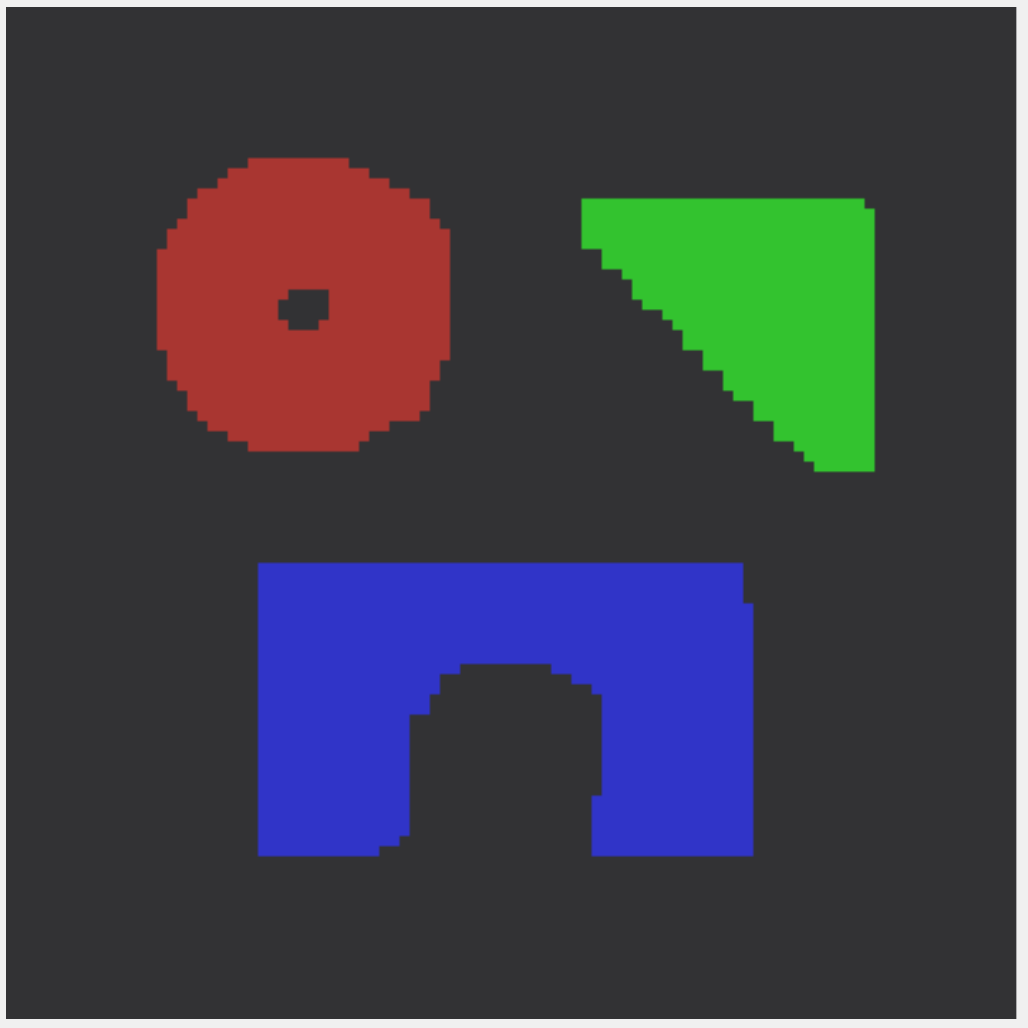}} &
				\subcaptionbox{$R_{MS}$}{\includegraphics[width = 1.00in]{./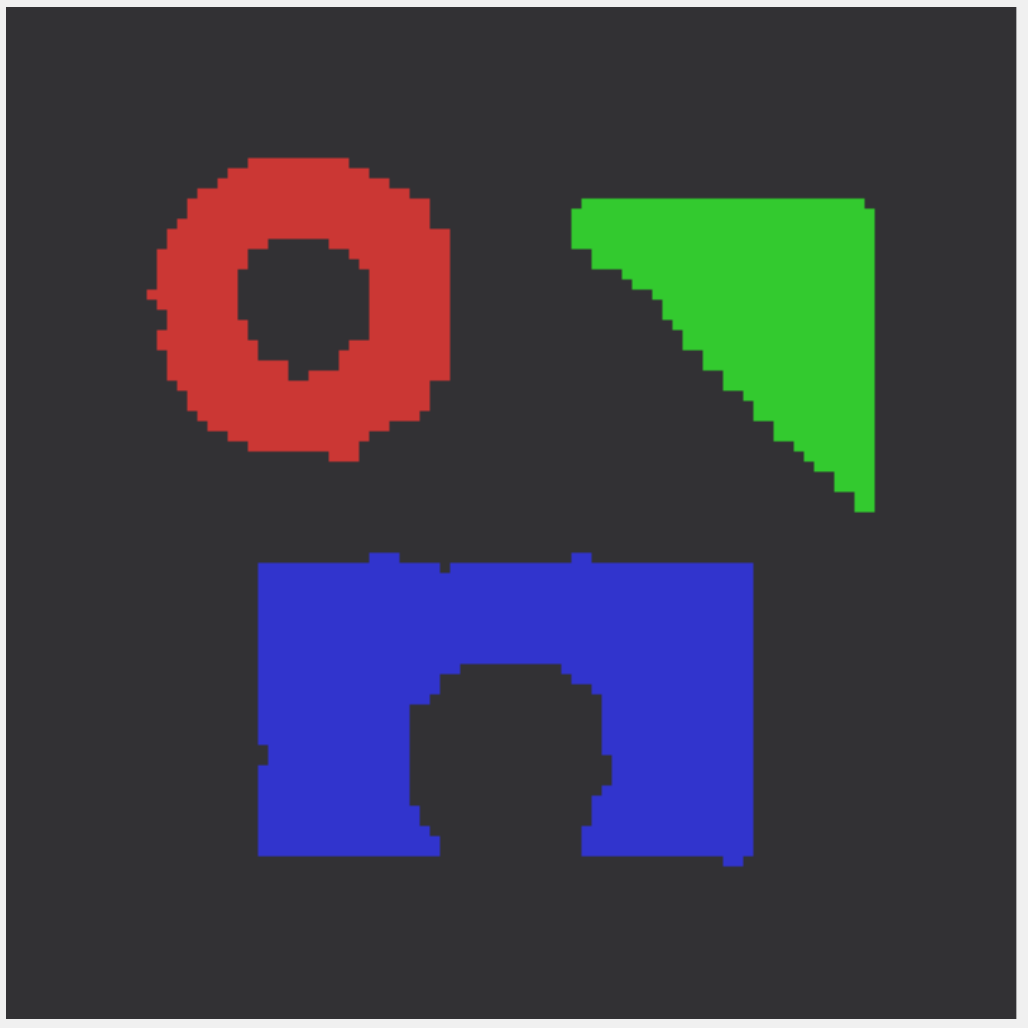}} 
		\end{tabular}}
		\caption{Reconstruction results on Figure \ref{fig:synthetic_4phase} corrupted with 40\% SPIN (top) and 40\% RVIN (bottom). }
		\label{fig:4phase_40_spin}
	\end{minipage}
\end{figure}

Figure \ref{fig:synthetic_2phase} is a color version of Figure \ref{fig:synthetic_grayscale}. We corrupt the image by 0\% to 50\% SPIN/RVIN  for each color channel. When a color image is corrupted with noise, one channel might be noisier than the others. In addition, image structures may vary with color channels, thus making the color extension of finding a balanced segmentation across all the color channels more challenging than for grayscale images. 
For Figure \ref{fig:synthetic_2phase}, we set $\lambda = 0.5$ for all methods, except for $L_0$ \cite{xu2011image} in which $\lambda = 50$. For the AIFR models, we set $\nu = 2.5$. The maximum number of inner/outer iterations are the same as the case for Figure \ref{fig:synthetic_grayscale}. The DICE indices of the segmentation results  are reported in Table \ref{tab:twophase_color}, which shows that $L_1-L_2$ CV generally yields the best results
and AIFR is slightly worse than its AICV counterpart but better than $L_1$ FR.  Figure \ref{fig:2phase_40_spin} presents the comparison results of  AICV (with optimal $\alpha$), $L_1$ CV, AIFR (with optimal $\alpha$), $L_1$ FR, and $L_1+L_2^2$ for 40\% SPIN and 40\% RVIN, showing that AICV and AIFR segment more salient regions than their $L_1$ counterparts and $L_1+L_1^2$. 

Figure \ref{fig:synthetic_4phase} is a color image for multiphase segmentation. 
We set $\lambda = 2.25$ for all methods, except for $L_0$ \cite{xu2011image} in which $\lambda = 50$. For the AIFR models, we set $\nu = 5$. The maximum number of inner iterations for the AITV models is 1000, while the maximum number of outer iterations is 40 for AICV and 160 for AIFR. Table \ref{tab:fourphase_color} presents the DICE indices of the segmentation results under 0\% to 40\% SPIN/RVIN contamination  for each color channel. 
For SPIN,  $L_1-0.25L_2$ FR is comparable to $L_1$ FR and outperforms it when the noise level is  40\% . For RVIN, $L_1-0.5L_2$ and $L_1-0.25L_2$ FR give the best results in general. We also observe that the smaller $\alpha$ is, the more robust AICV/AIFR are with respect to impulsive noise. The visual results are presented in Figure \ref{fig:4phase_40_spin} for 40\% SPIN/RVIN, clearly showing that AIFR provides the best segmentation. AICV and $L_1$ CV contain noise along the edges of the blue region, $L_1$ FR oversegments the red region, and $R_{MS}$ appears slightly worse than AIFR. 

Overall, the proposed AICV/AIFR methods are robust against impulsive noise, unlike the two-stage methods. For the three synthetic images, AICV and AIFR with  appropriately chosen $\alpha$  outperform their $L_1$ counterparts under a high level of impulsive noise. Unfortunately, there is no optimal choice of $\alpha$ that works for all images, as demonstrated by our experiments. For example, $\alpha = 1.0$ yields the highest DICE indices for Figure \ref{fig:synthetic_2phase} according to Table \ref{tab:twophase_color}, but it does not perform as well for Figure \ref{fig:synthetic_grayscale} according to Table \ref{tab:twophase_grayscale}.


\subsection{Real Images}

\begin{figure}[t]
	\begin{minipage}{\linewidth}
	\centering
	\begin{tabular}{c@{}c@{}c@{}}
		\subcaptionbox{\label{fig:sign}}{\includegraphics[scale = 0.25]{./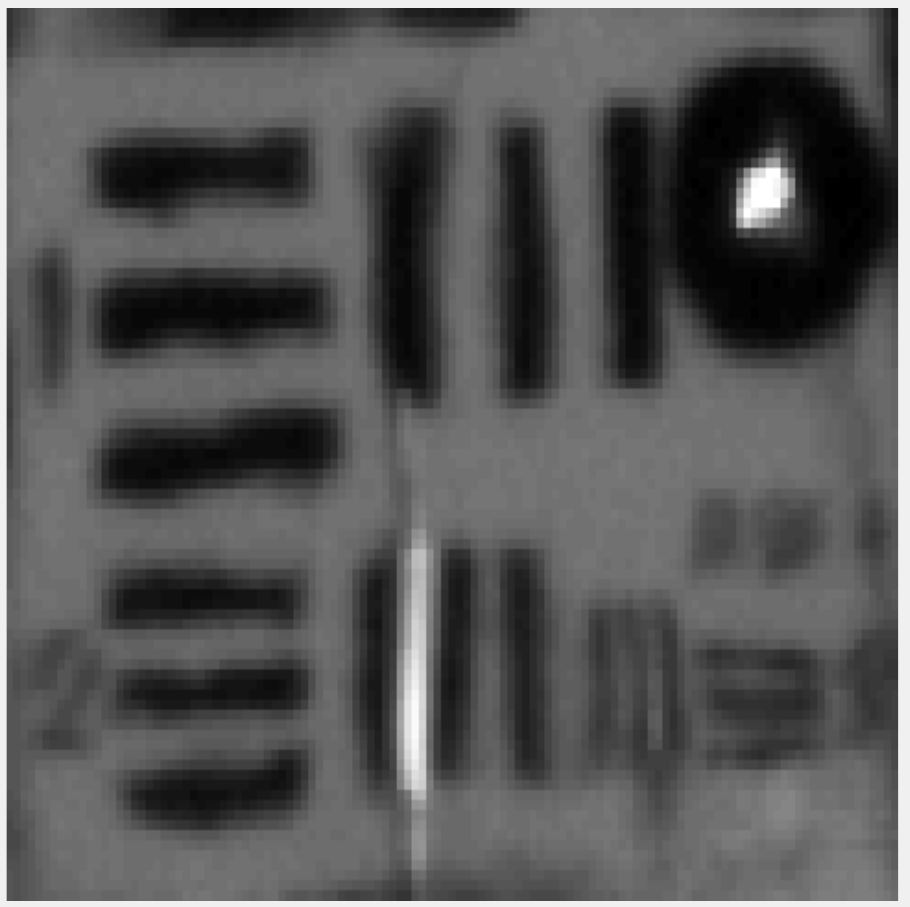}} &
		\subcaptionbox{\label{fig:hawk}}{\includegraphics[scale = 0.25]{./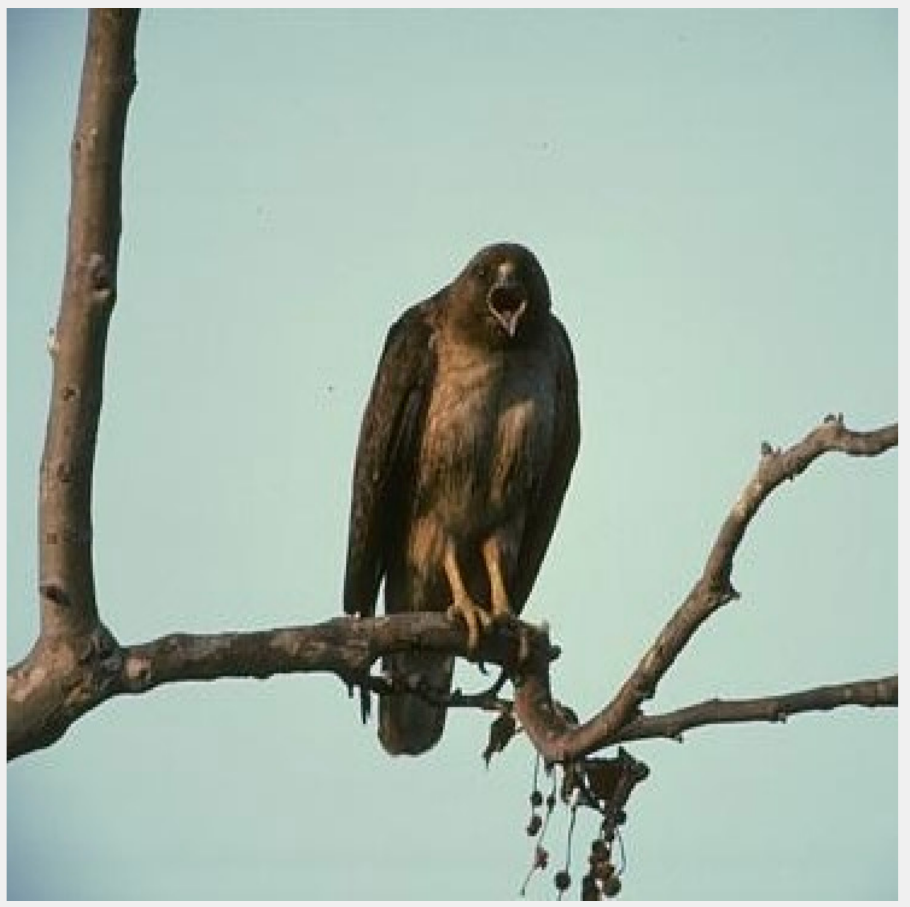}} &  \subcaptionbox{\label{fig:butterfly}}{\includegraphics[scale = 0.25]{./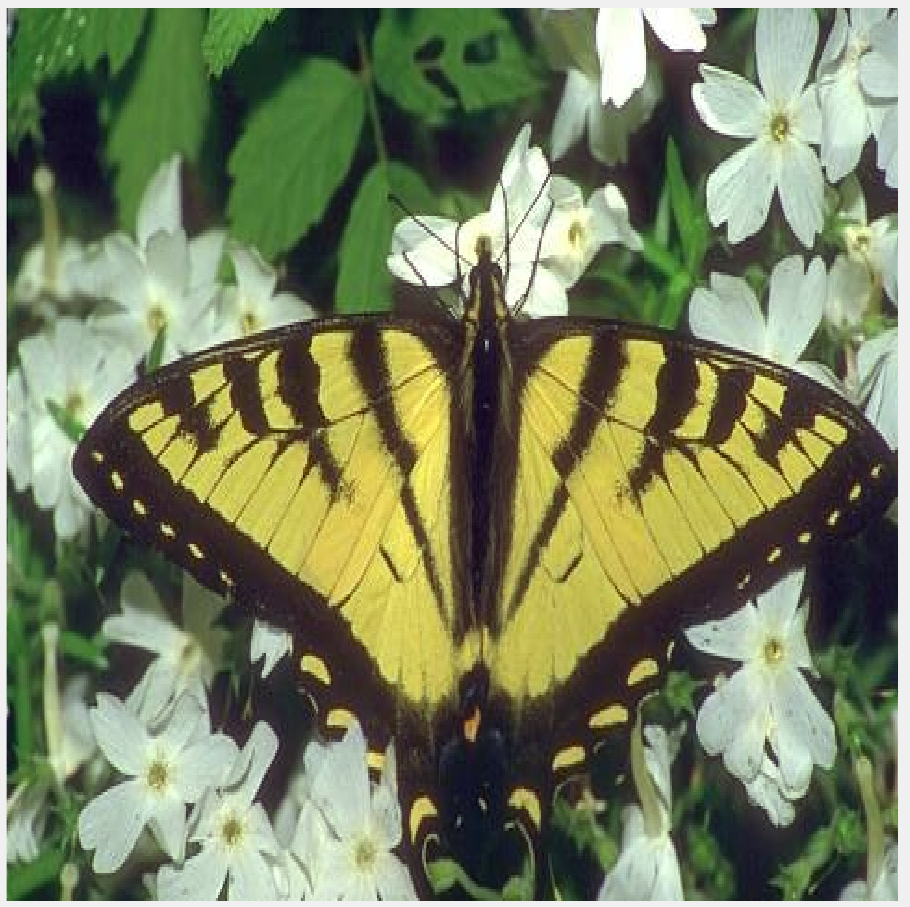}} 
		\subcaptionbox{\label{fig:flower}}{\includegraphics[scale = 0.25]{./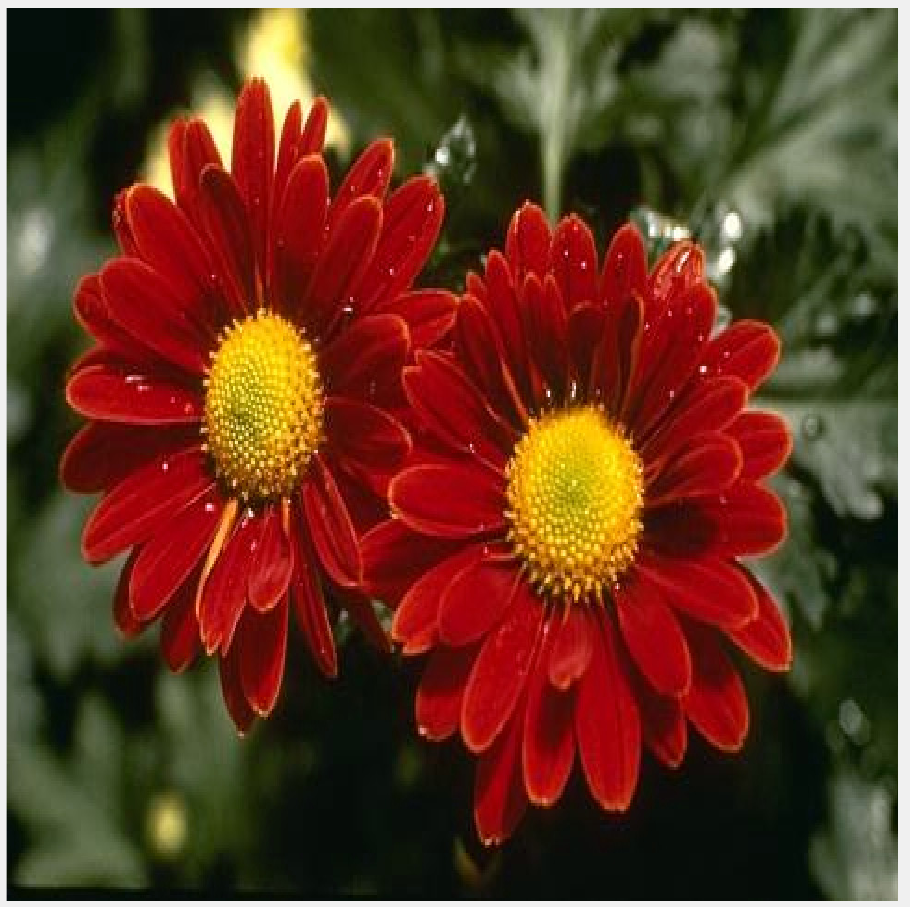}} 
				\subcaptionbox{\label{fig:pepper}}{\includegraphics[scale = 0.25]{./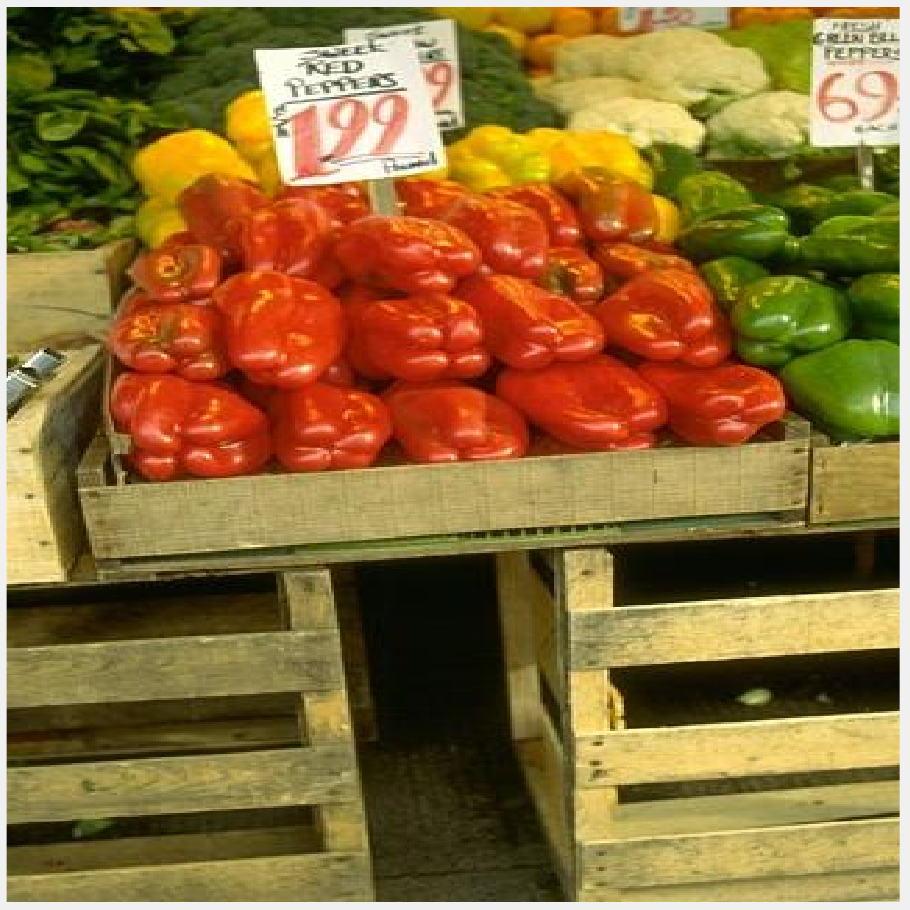}} 
	\end{tabular}
	\caption{Real images for image segmentation. (a) Close-up of a target board in a video. Size:  $89 \times 121$. (b) Image of a hawk.  Size: $318\times 370$. (c) Image of a butterfly.  Size: $321 \times 481$. (d) Image of a flower. Size: $321 \times 481$. (e) Image of peppers. Size: $481 \times 321$. }
	\label{fig:real_image}
\end{minipage}	
\end{figure}

\begin{table}[t]
	\captionof{table}{PSNR values of segmentation methods applied to real color images. NA stands for ``not applicable.''}
			\begin{center}
			\begin{tabular}{l|c|c|c|c}\hline
				& Figure \ref{fig:hawk}    & Figure \ref{fig:butterfly}   & Figure \ref{fig:flower} & Figure \ref{fig:pepper}  \\ \Xhline{5\arrayrulewidth}
				$L_1-L_2$ CV                 & 23.3949 &  21.9000   & NA & NA\\ \hline
				$L_1-0.75L_2$ CV             & 23.3933 &  21.9001   & NA & NA\\\hline
				$L_1-0.5L_2$ CV              & \textbf{23.4001} &  21.8976   & NA & NA\\\hline
				$L_1-0.25L_2$ CV             & 23.3913 &  21.8985   & NA & NA \\\hline
				$L_1$ CV                    & 23.3690 &  21.8977   & NA & NA\\\hline
				$L_1-L_2$ FR              & 23.4223 &  22.2574   & 21.8283 & 22.2597\\ \hline
				$L_1-0.75L_2$ FR          & 23.4014 &  \textbf{22.2578}   & 21.8383 & 22.4880\\ \hline
				$L_1-0.5L_2$ FR           & 23.3814 &  22.2576   & \textbf{21.8418} & \textbf{22.4901}\\ \hline
				$L_1-0.25L_2$ FR          & 23.3523 &  22.2575   &\textbf{21.8418} & 22.4672\\ \hline
				$L_1$ FR                 & 23.3173 &  22.2570   & 21.8409 & 21.9482\\ \hline
				$L_1+L_2^2$ & 23.2601 &  21.6077   & 21.1802 & 21.0277\\ \hline
				$L_0$        \cite{xu2011image}               & 23.2419 &  22.2570   & 21.7914 & 22.0361\\ \hline
				$L_0$        \cite{storath2014fast}               & 23.1985 &  17.7573   & 21.8129 & 21.9703\\ \hline$R_{MS}$               & 23.0865 &  17.7140   & 21.7832 & 22.0904\\ \hline
			\end{tabular}
		\end{center}
		\label{tab:psnr}
\end{table}

\begin{table}[t]
	\captionof{table}{Computational time (seconds) of segmentation methods applied to real color images. NA stands for ``not applicable.''}
			\begin{center}
			\begin{tabular}{l|c|c|c|c|c}\hline
				& Figure \ref{fig:sign} & Figure \ref{fig:hawk}    & Figure \ref{fig:butterfly}   & Figure \ref{fig:flower} & Figure \ref{fig:pepper}  \\ \Xhline{5\arrayrulewidth}
				$L_1-L_2$ CV                 & 2.06 & 16.09& 49.27 & NA & NA\\ \hline
				$L_1-0.75L_2$ CV             & 1.86 &  15.91   & 55.91 & NA & NA\\\hline
				$L_1-0.5L_2$ CV              & 2.08 &  15.89   & 70.68 & NA & NA\\\hline
				$L_1-0.25L_2$ CV             & 2.17 & 16.09 &  71.23   & NA & NA \\\hline
				$L_1$ CV                    & 1.78 &  16.23 & 54.94   & NA & NA\\\hline
				$L_1-L_2$ FR              & 2.51 &  43.65   & 66.27  & 191.30 & 212.28\\ \hline
				$L_1-0.75L_2$ FR          & 1.91 &  46.26   & 64.98 & 185.26 & 233.79 \\ \hline
				$L_1-0.5L_2$ FR           & 1.23 &  15.29   & 68.3 & 175.67 &  263.52\\ \hline
				$L_1-0.25L_2$ FR          & 0.92 &  13.18   & 69.49 & 182.08 & 227.62\\ \hline
				$L_1$ FR                 & 0.72 &  13.18   & 69.49 & 182.08 & 227.62\\ \hline
				$L_1+L_2^2$ & 0.24 &  1.8   & \textbf{1.2} & 1.75& 2.48\\ \hline
				$L_0$        \cite{xu2011image}               & \textbf{0.15} & \textbf{0.92}   & 1.71 & \textbf{1.6} & \textbf{1.97}\\ \hline
				$L_0$        \cite{storath2014fast}               & 0.17 &  2.96   & 3.06 & 3.05 & 4.26\\ \hline$R_{MS}$               & 0.61 &  6.60  & 17.71 & 17.24 & 20.10\\ \hline
			\end{tabular}
		\end{center}
		\label{tab:sec}
\end{table}

We apply the proposed methods and the two-stage methods on real images (all rescaled to $[0,1]$ for the pixel values) shown in Figure \ref{fig:real_image} without additive noise. Figure \ref{fig:sign} is provided in \cite{lou-2013} while Figures \ref{fig:hawk}-\ref{fig:pepper} are provided by the Berkeley Segmentation Dataset and Benchmark \cite{martin-2001}. Specifically, Figures \ref{fig:sign} and \ref{fig:hawk} are for two-phase segmentation,  Figure \ref{fig:butterfly} is for four-phase segmentation, and Figures \ref{fig:flower} and \ref{fig:pepper} are for five-phase and seven-phase segmentation, respectively. We set the maximum number of inner iterations for CV/FR methods as 300, and the maximum number of outer iterations for CV as 20. The maximum outer iteration number of  the FR methods depends on images, which is set to 40 for Figures \ref{fig:sign}-\ref{fig:hawk}, 80 for Figure \ref{fig:butterfly}, and 160 for Figures \ref{fig:flower}-\ref{fig:pepper}. 
Following the work of \cite{jung2017piecewise},
we compute the peak signal-to-noise ratio (PSNR) between the reconstructed image $\tilde{\mathbf{f}}$ derived by \eqref{eq:color_reconstruction_eq} and the original image $\mathbf{f}$. PSNR is defined by $10 \log_{10} \frac{3mn}{\sum_{\iota \in \{r,g,b\}} \|\tilde{f}_{\iota}-f_{\iota}\|_X^2}$, and it quantitatively measures the quality of the segmentation results for real color images without ground truth.  The PSNR values are recorded in Table \ref{tab:psnr}. As the CV methods are inapplicable to non-power-of-2 segmentation examples, we indicate by NA (``not applicable'') their results on Figures \ref{fig:flower}-\ref{fig:pepper} in Table \ref{tab:psnr}.

For Figure \ref{fig:sign}, we set $\lambda =100$ for all methods, except for $L_0$ \cite{xu2011image}  in which $\lambda = 10000$. For all FR methods, we set $\nu = 35$. The initialization for the CV and FR methods is a step function of a circle in the image center with radius 10. The segmentation results of these competing methods are displayed in Figure \ref{fig:sign_result}, each equipped with a zoomed-in region of the bottom right of the image.
We observe that as $\alpha$ decreases, the CV methods segment lesser regions, while the FR methods identify lesser gaps. The results of the two-stage methods are not as detailed as the results provided by $L_1-L_2$ CV and FR.   

For Figure \ref{fig:hawk}, we set $\lambda = 50$ for $L_0$ \cite{xu2011image}, $\lambda =10$ for the other methods, and $\nu = 10.0$ for the FR methods. The initialization for the CV and FR methods is the same as Figure \ref{fig:sign}. Quantitative comparison of these methods is listed in   Table \ref{tab:psnr}, showing that the AICV and AIFR methods outperform their $L_1$ counterparts. The visual results in Figure \ref{fig:hawk_result} demonstrate that  AICV and AIFR can segment finer details, especially on the branch on the left side of the image and on the hawk, than their $L_1$ counterparts, which thereby explains their higher PSNR values. 

For Figure \ref{fig:butterfly}, we set $\lambda = 1000$ for all methods and $\nu = 650$ for the FR methods. Initialization for the CV methods are two step functions of circles both with radius 10, one shifted 5 pixels to the left of the image center and the other shifted 5 pixels to the right. For the FR methods, the initialization of the membership functions are uniformly distributed in $[0,1]$ and then normalized. 
Figure \ref{fig:butterfly_result} compares the AIFR and AICV methods (using the optimal $\alpha$ value that corresponds to the highest PSNR in Table \ref{tab:psnr}) with their $L_1$ counterparts. 
As PSNR values are all similar, we do not observe much visual differences between the images in Figure \ref{fig:butterfly_result}.

\begin{figure}[t!]
\begin{minipage}{\linewidth}
\centering
\resizebox{\textwidth}{!}{%
	\begin{tabular}{c@{}c@{}c@{}c@{}c@{}c}
		\subcaptionbox{Original}{\includegraphics[width = 1.00in]{./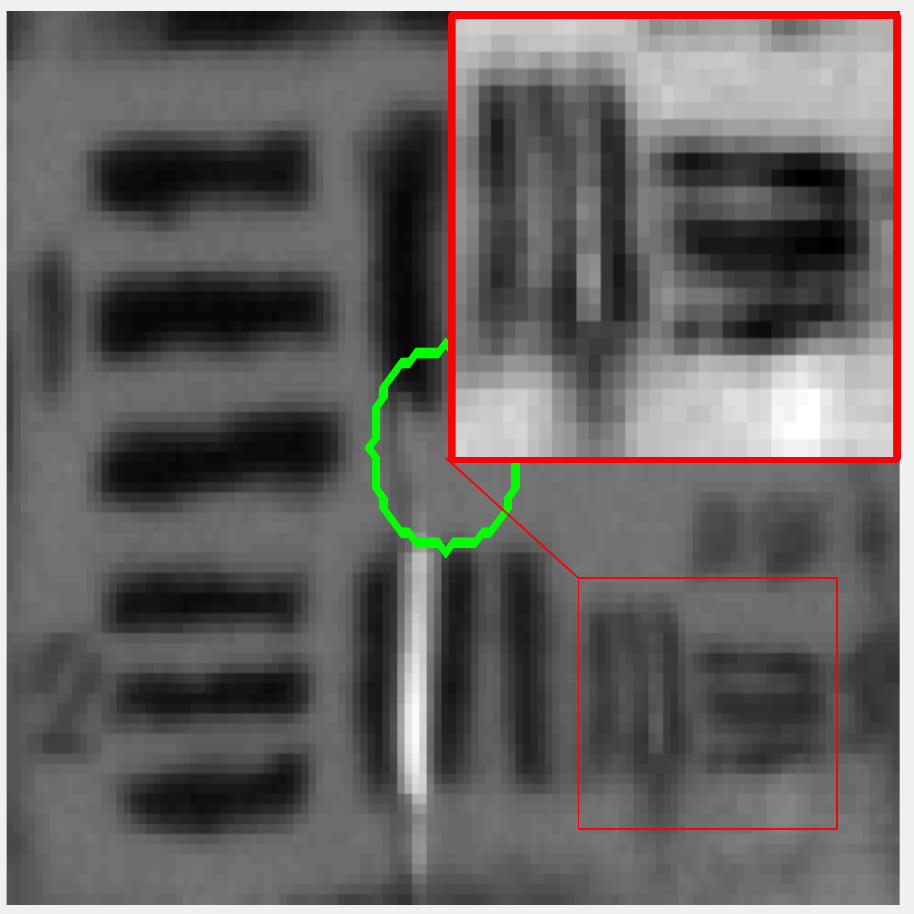}} & 		\subcaptionbox{$L_1+L_2^2$}{\includegraphics[width = 1.00in]{./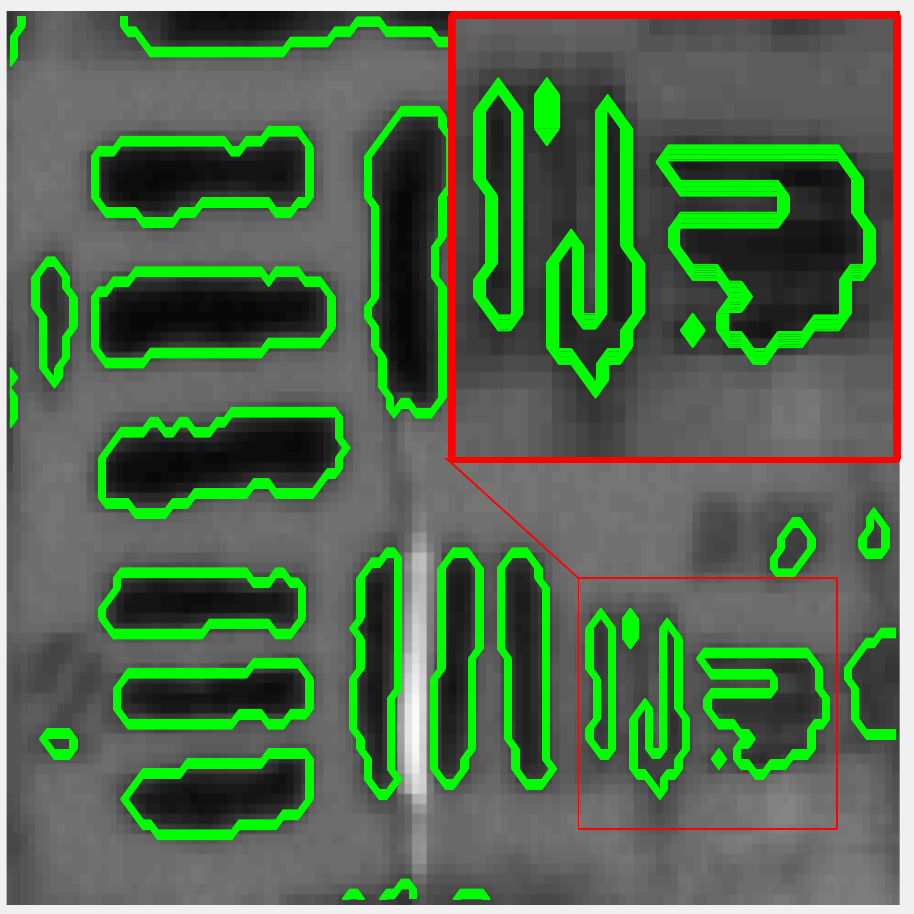}} &
		\subcaptionbox{$L_0$ \cite{xu2011image}}{\includegraphics[width = 1.00in]{./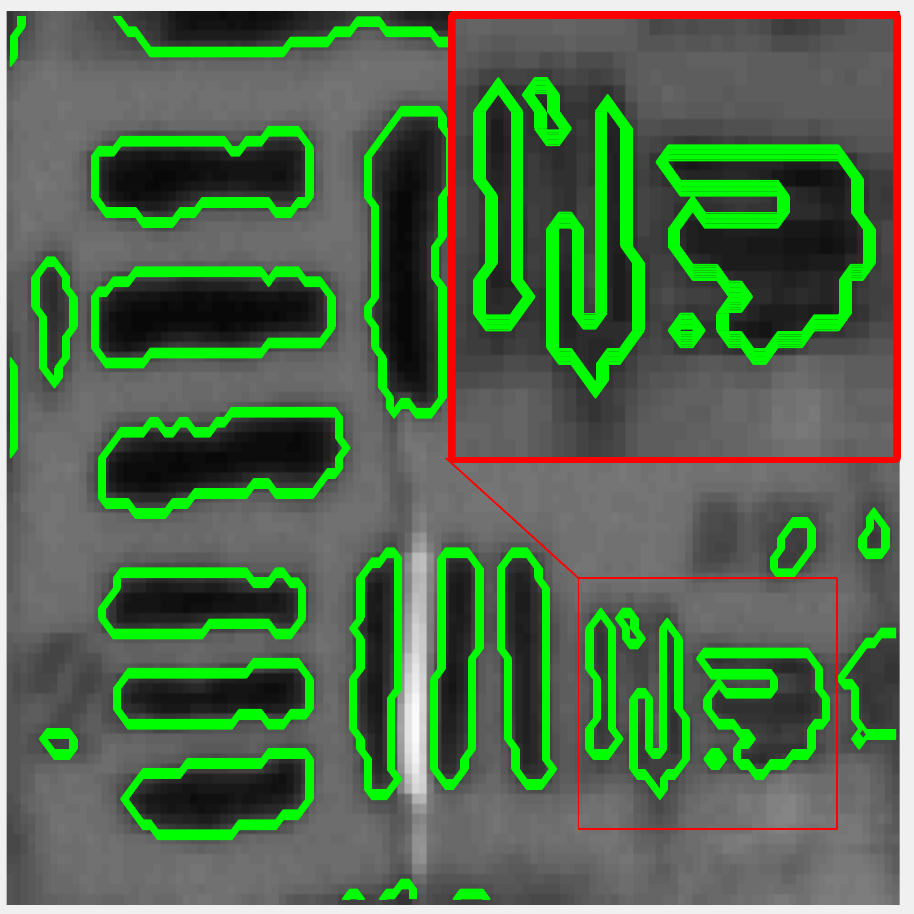}}& \subcaptionbox{$L_0$ \cite{storath2014fast}}{\includegraphics[width = 1.00in]{./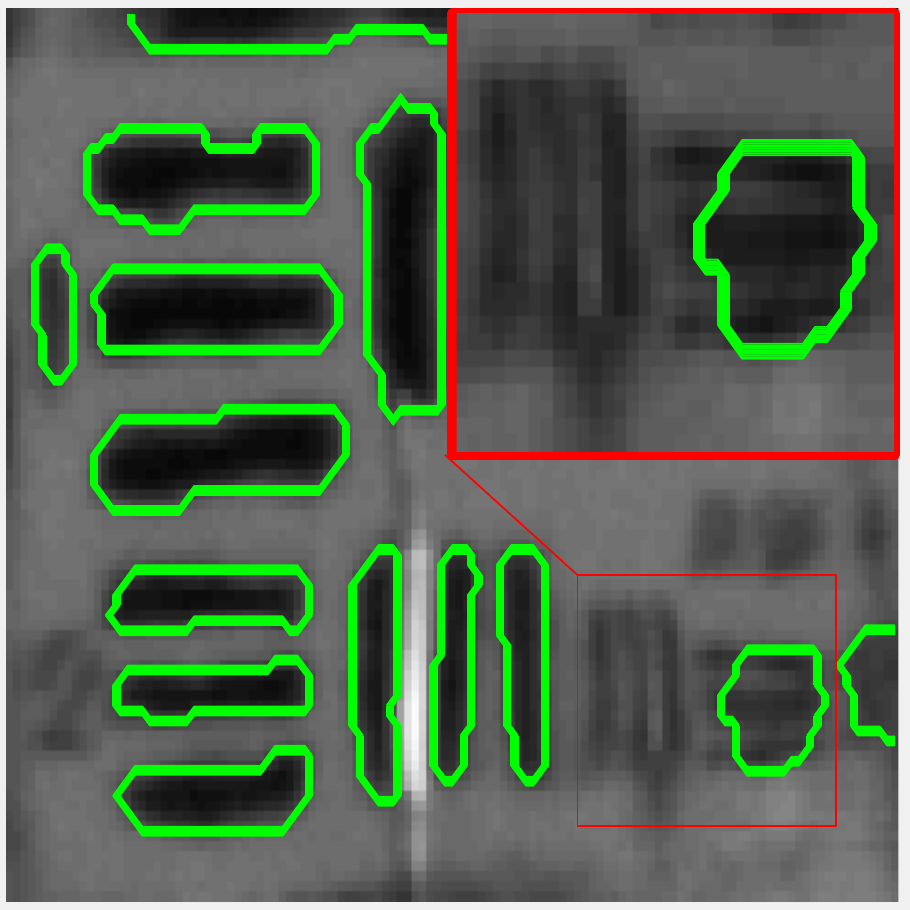}}& \subcaptionbox{$R_{MS}$}{\includegraphics[width = 1.00in]{./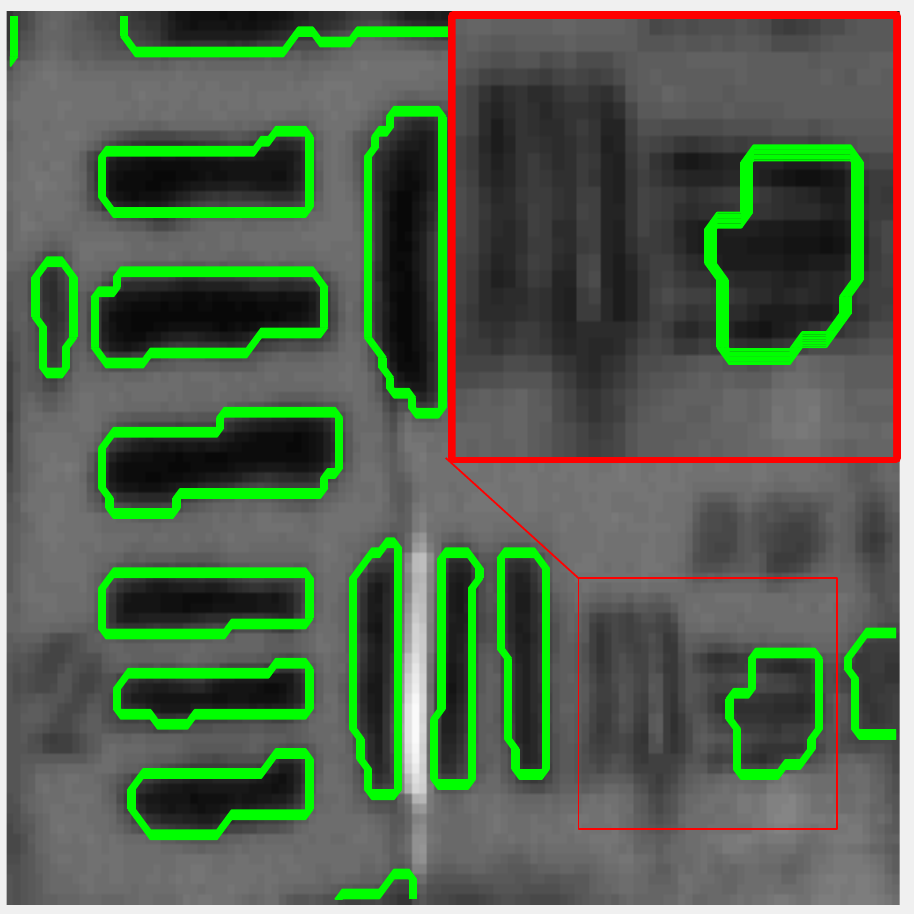}}  \\ \subcaptionbox{$L_1-L_2$ CV }{\includegraphics[width = 1.00in]{./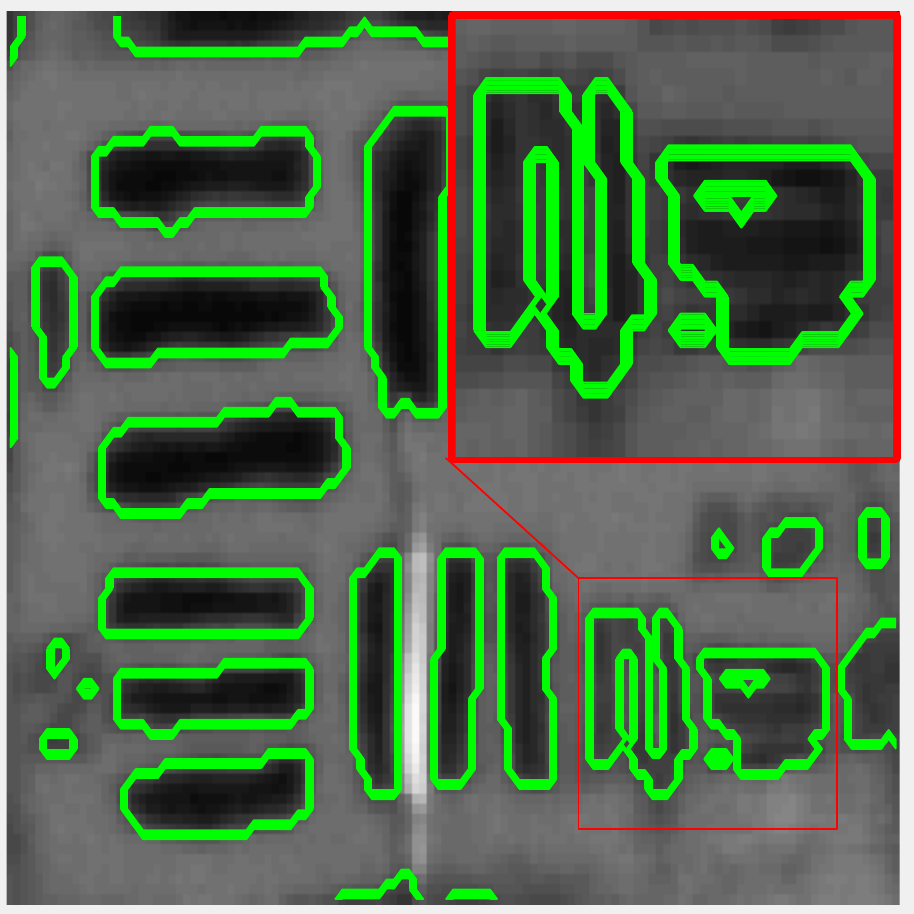}} & \subcaptionbox{$L_1-0.75L_2$ CV}{\includegraphics[width = 1.00in]{./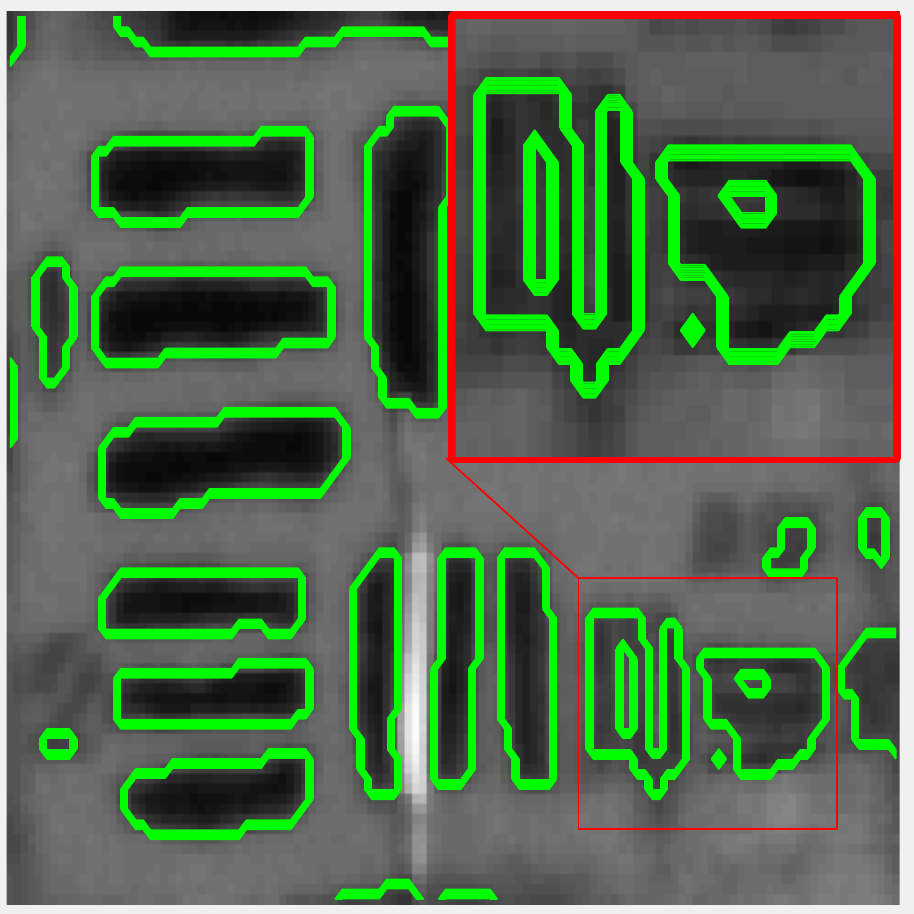}} & \subcaptionbox{$L_1-0.5L_2$ CV}{\includegraphics[width = 1.00in]{./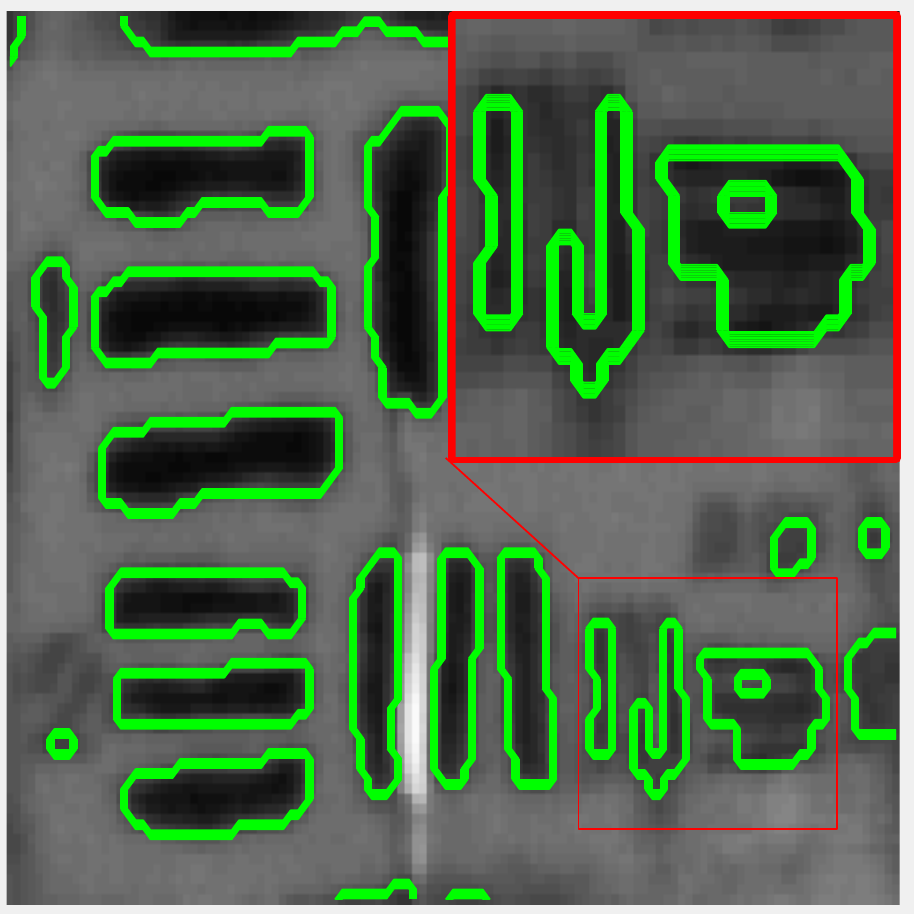}} &
		\subcaptionbox{$L_1-0.25L_2$ CV}{\includegraphics[width = 1.00in]{./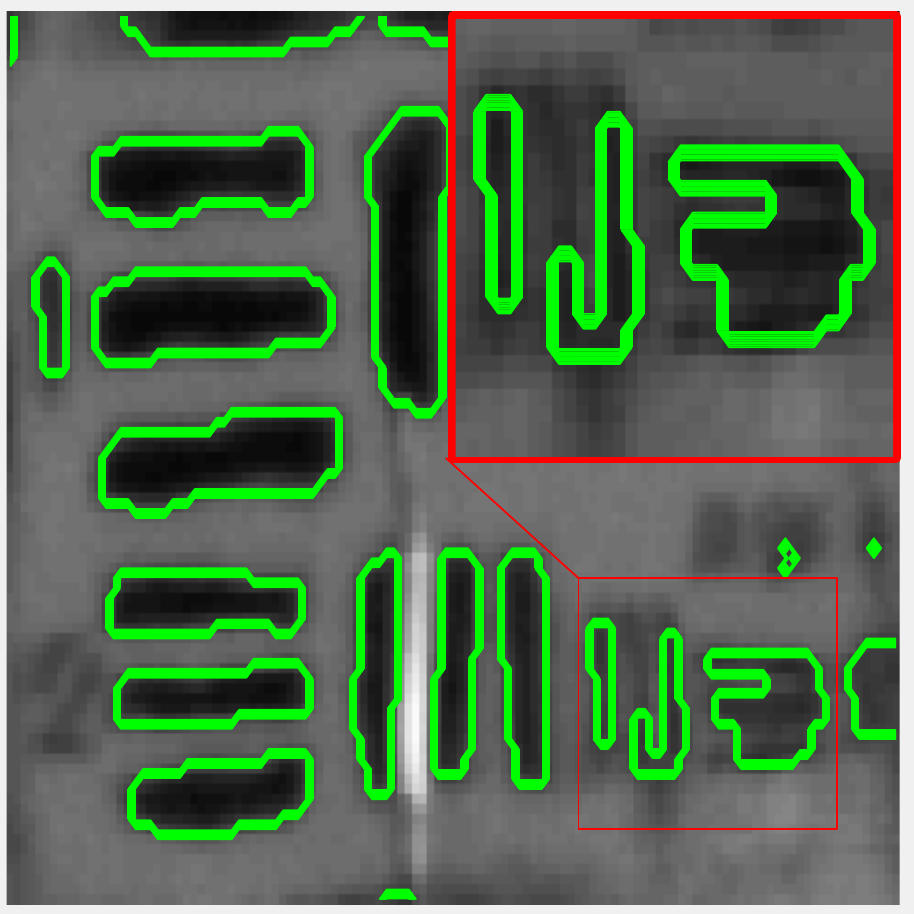}} & \subcaptionbox{$L_1$ CV}{\includegraphics[width = 1.00in]{./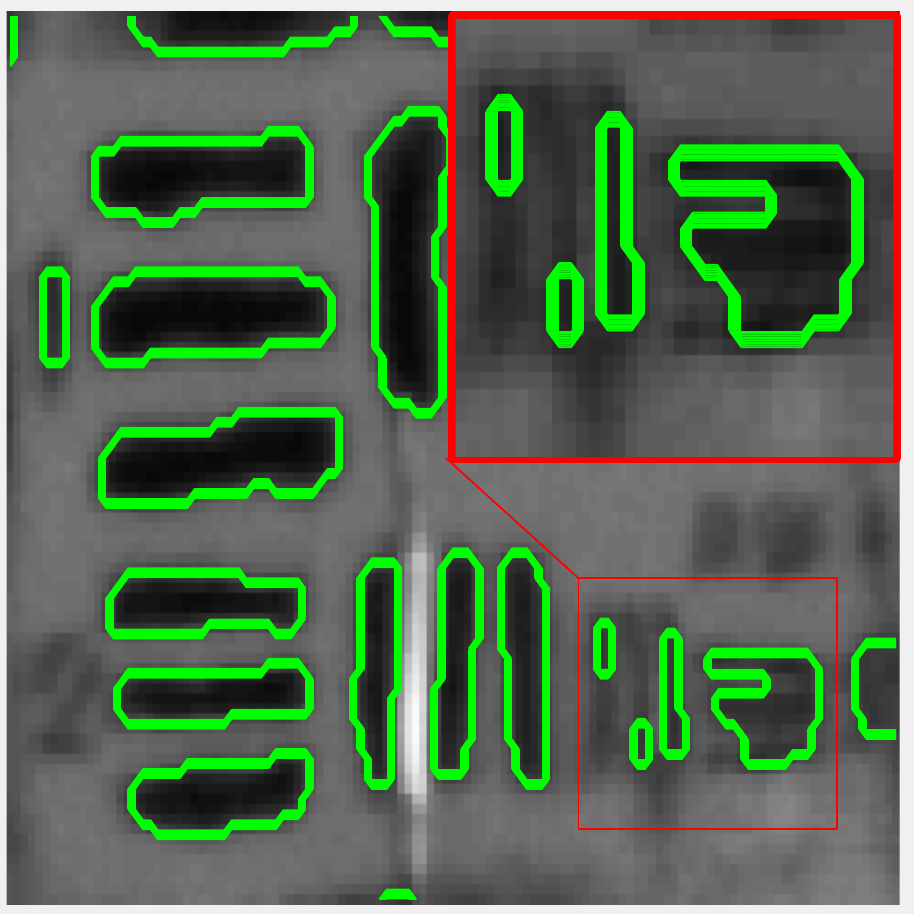}} \\ \subcaptionbox{$L_1-L_2$ FR}{\includegraphics[width = 1.00in]{./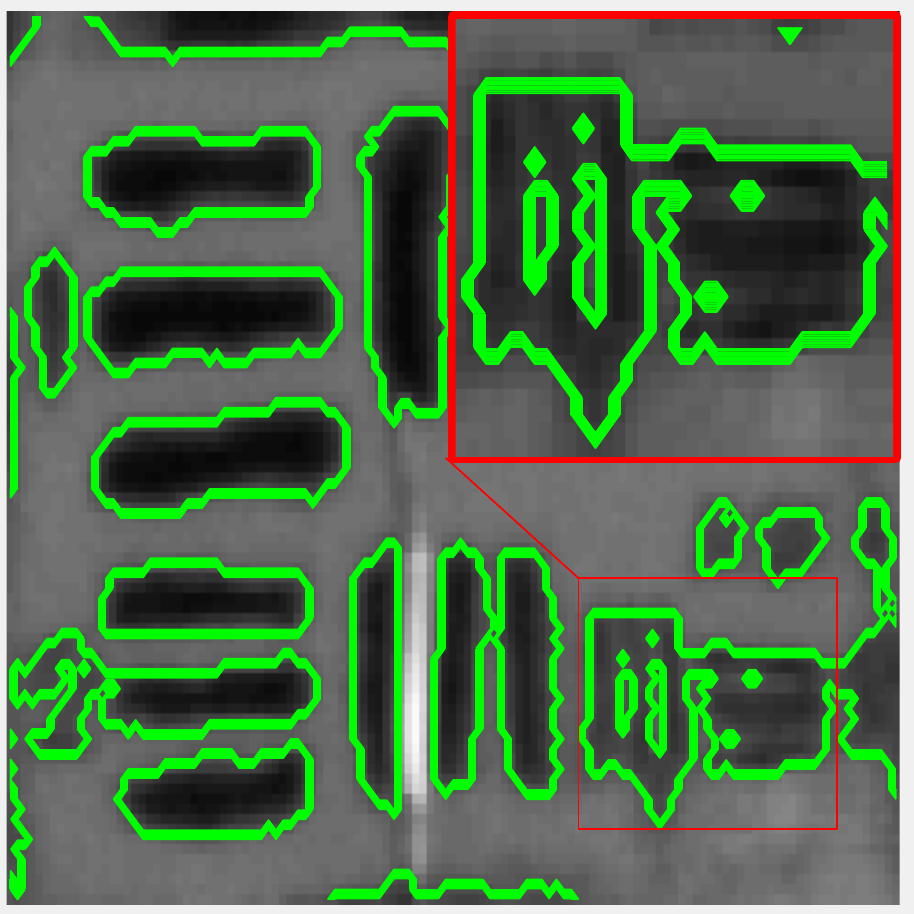}} & \subcaptionbox{$L_1-0.75L_2$ FR}{\includegraphics[width = 1.00in]{./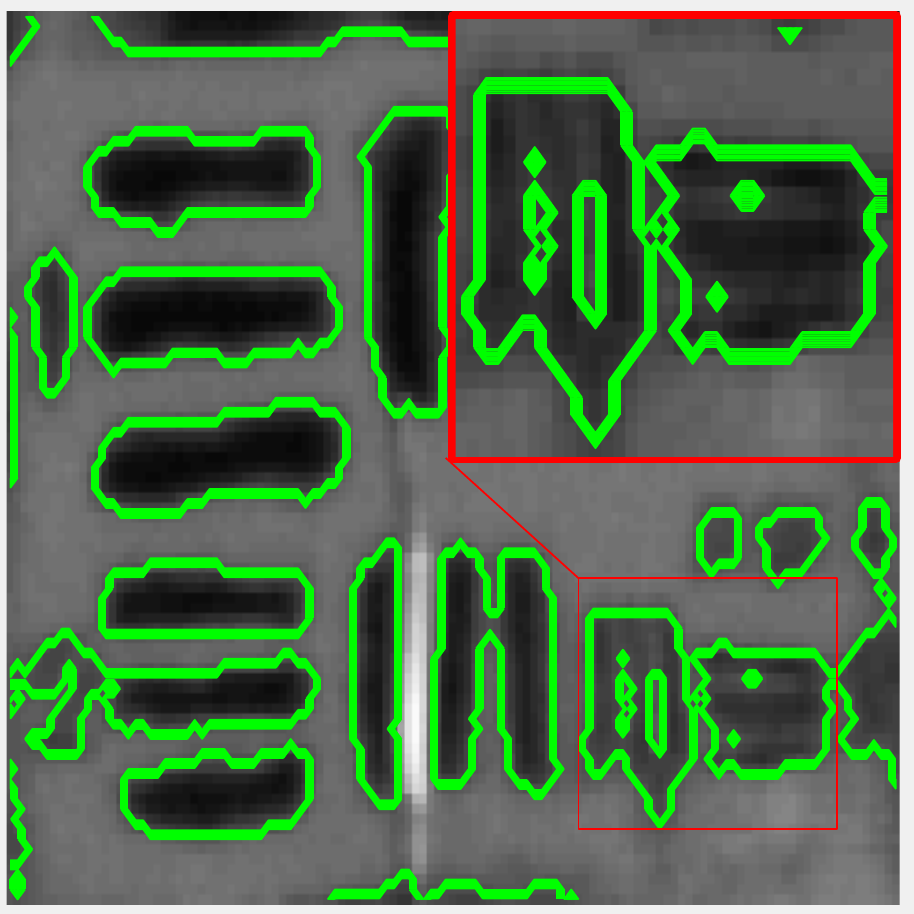}} & \subcaptionbox{$L_1-0.5L_2$ FR}{\includegraphics[width = 1.00in]{./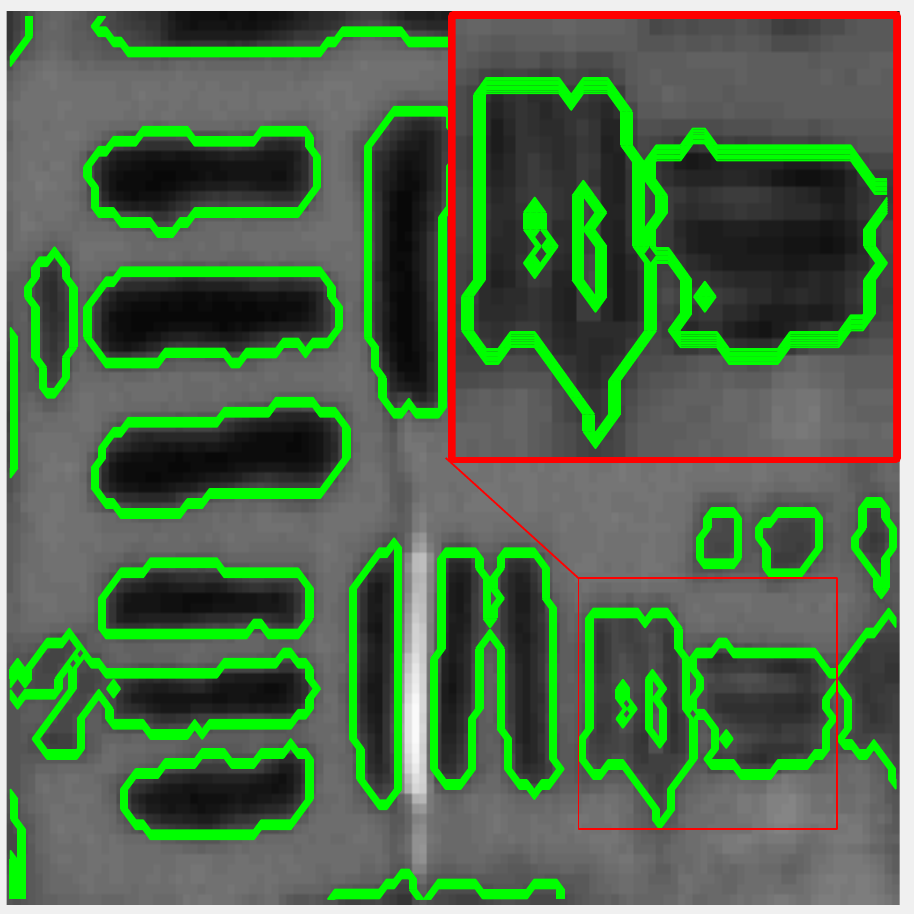}} &
		\subcaptionbox{$L_1-0.25L_2$ FR}{\includegraphics[width = 1.00in]{./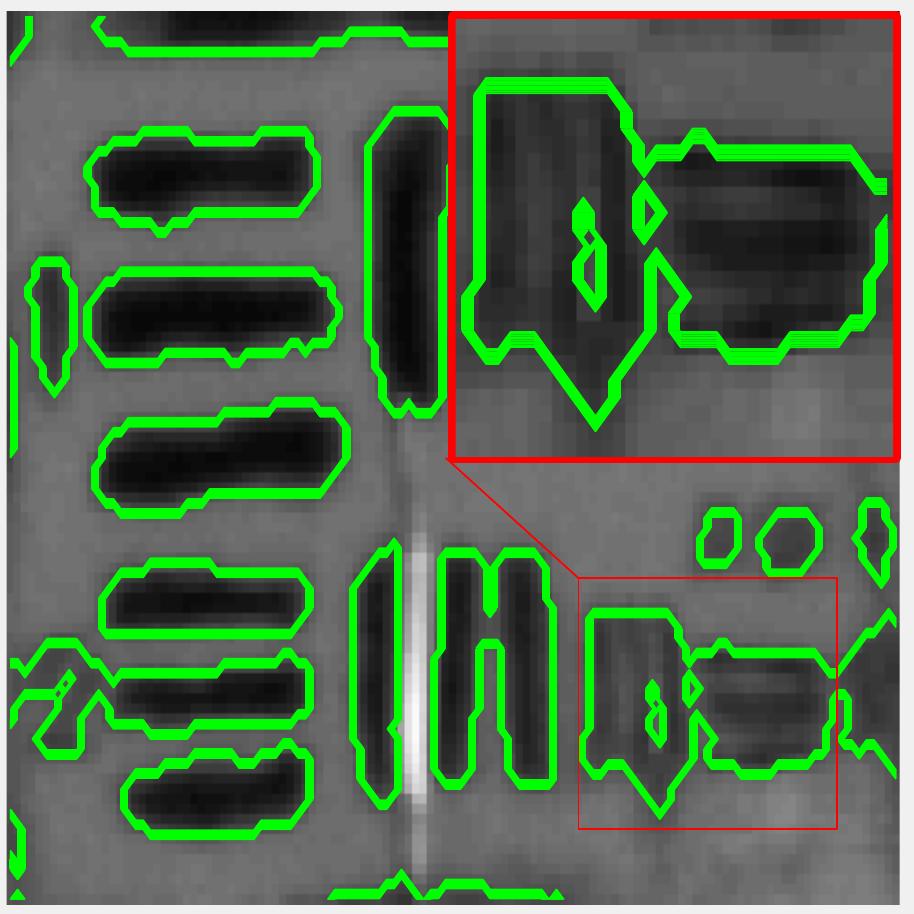}} & \subcaptionbox{$L_1$ FR}{\includegraphics[width = 1.00in]{./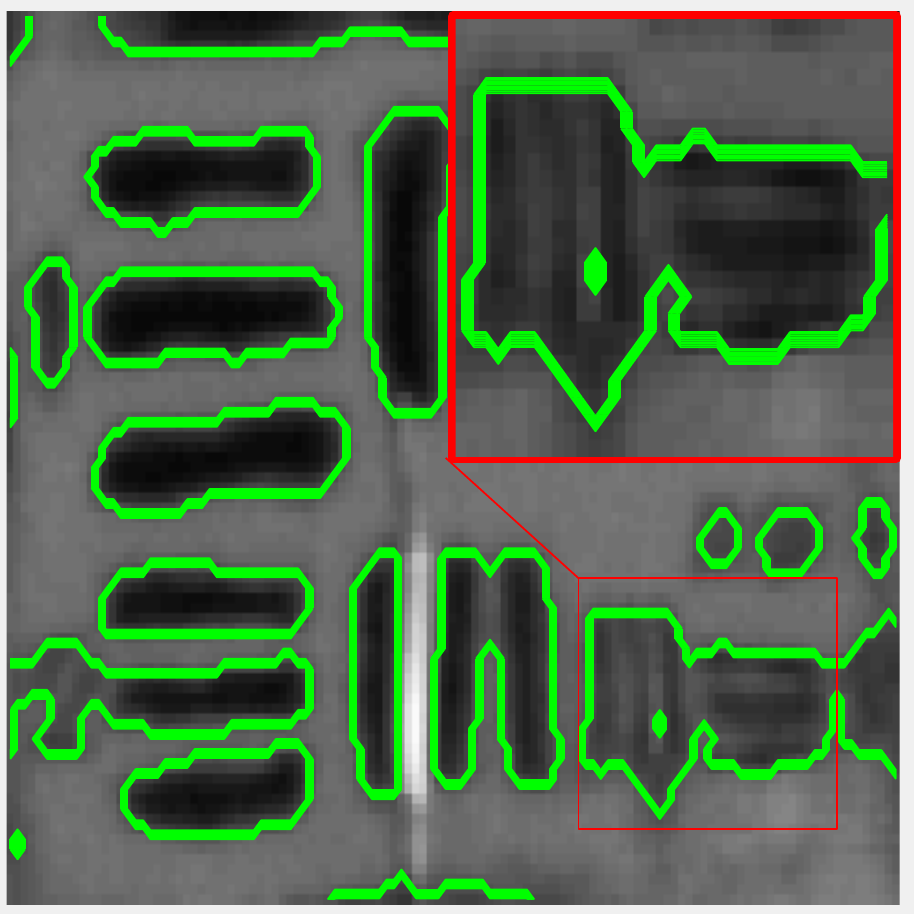}} 
\end{tabular}}
\caption{Segmentation results on Figure \ref{fig:sign}. (The images may need to be zoomed in on a pdf reader to see the differences.) }
\label{fig:sign_result}
\end{minipage}
\end{figure}

For Figure \ref{fig:flower}, we set $\lambda = 650$ for all methods, except $L_0$ \cite{xu2011image} in which $\lambda = 1000$. For the FR methods, we set $\nu = 1050$. For Figure \ref{fig:pepper}, we set $\lambda = 500$ for all methods and $\nu = 400$ for the FR methods.  Initialization of the membership functions for the FR methods is the same as for Figure \ref{fig:butterfly}. The segmentation results of the FR methods and the two-stage methods are shown in Figures \ref{fig:flower_result} and \ref{fig:pepper_result}. In Figure \ref{fig:flower_result}, the results of the FR methods have better contrast than the result of $L_1+L_2^2$ and thus they look more similar to the original image. In Figure \ref{fig:pepper_result}, $L_1-L_2$ FR, $L_1$ FR, and $L_0$ are unable to identify the yellow/orange peppers behind the red peppers, which explains their lower PSNR values. Although the results of the AIFR methods for $\alpha = 0.25,0.5, 0.75$ appear similar to $L_1+L_2^2$ and $R_{MS}$, $L_1-0.5L_2$ attains the best segmentation based on its PSNR value. 

Last, we report the computational times of the segmentation methods in Table \ref{tab:sec}. Admittedly, the proposed methods are slower compared to other segmentation methods. Besides, our computational times largely depend on the image size, the number of channels, and the number of $u_k$'s needed to segment. The acceleration of the proposed scheme will be left for future investigation.

In summary, given particular choices of $\alpha$, the AITV models outperform their $L_1$ counterparts and the two-stage methods. For Figure \ref{fig:sign}, larger values of $\alpha$ provide better segmentation results, but this may not be the case for other images. Thus, the optimal $\alpha$ value in an AITV model varies for an individual image. In addition, although the AITV methods tend to be slower than the two-stage methods, they are consistently more accurate based on their PSNR values. This observation is apparent in Figures \ref{fig:butterfly}-\ref{fig:pepper}, the most complex images tested in this section.

\begin{figure}[t!]

	\begin{minipage}{\linewidth}
		\centering
		\resizebox{\textwidth}{!}{%
			\begin{tabular}{c@{}c@{}c@{}c@{}c}
				\subcaptionbox{Original}{\includegraphics[width = 1.25in]{./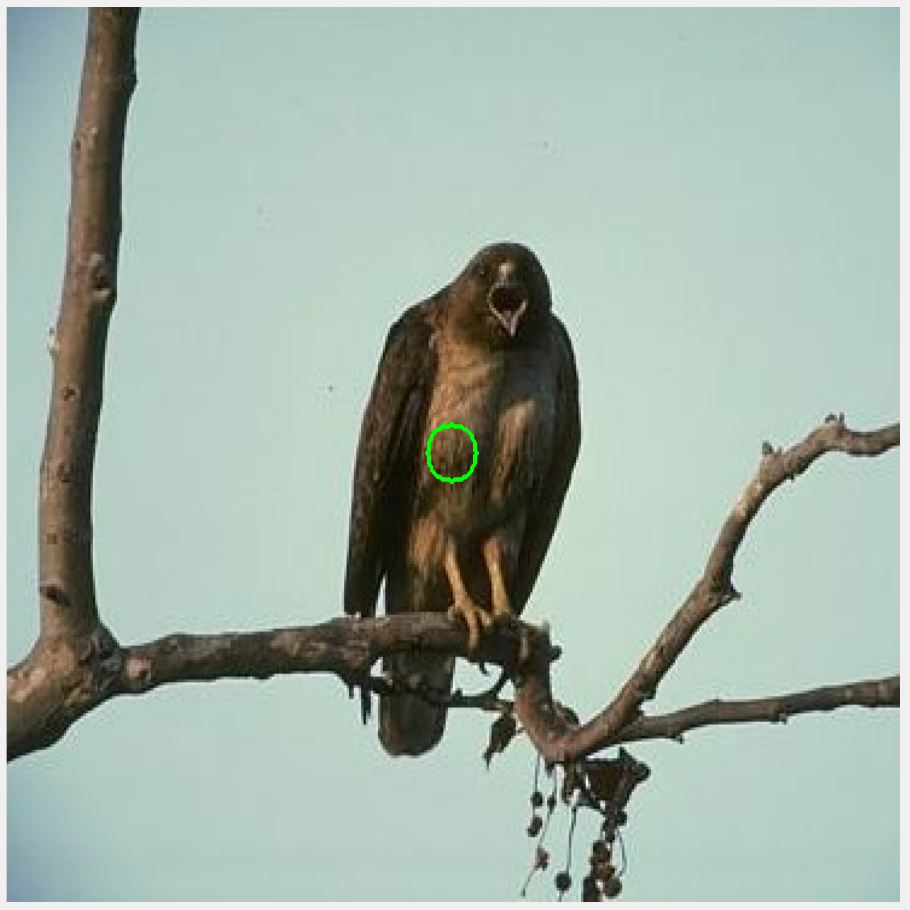}} & \subcaptionbox{$L_1-0.5L_2$ CV}{\includegraphics[width = 1.25in]{./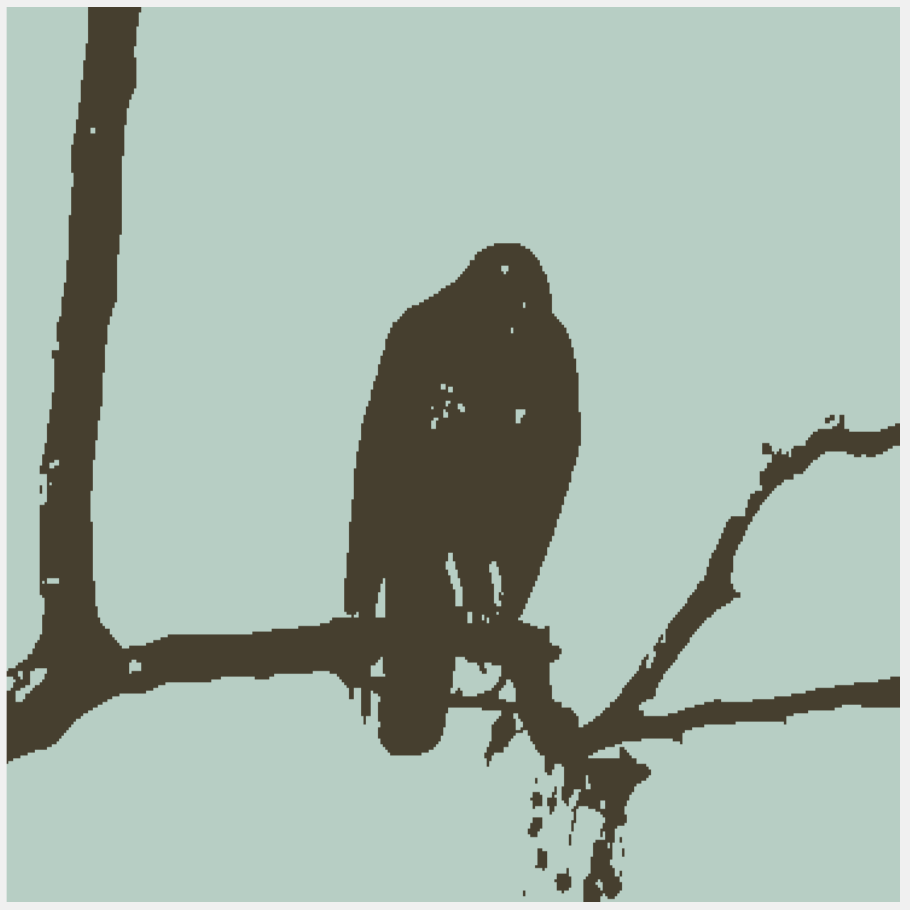}}  & \subcaptionbox{$L_1$ CV}{\includegraphics[width = 1.25in]{./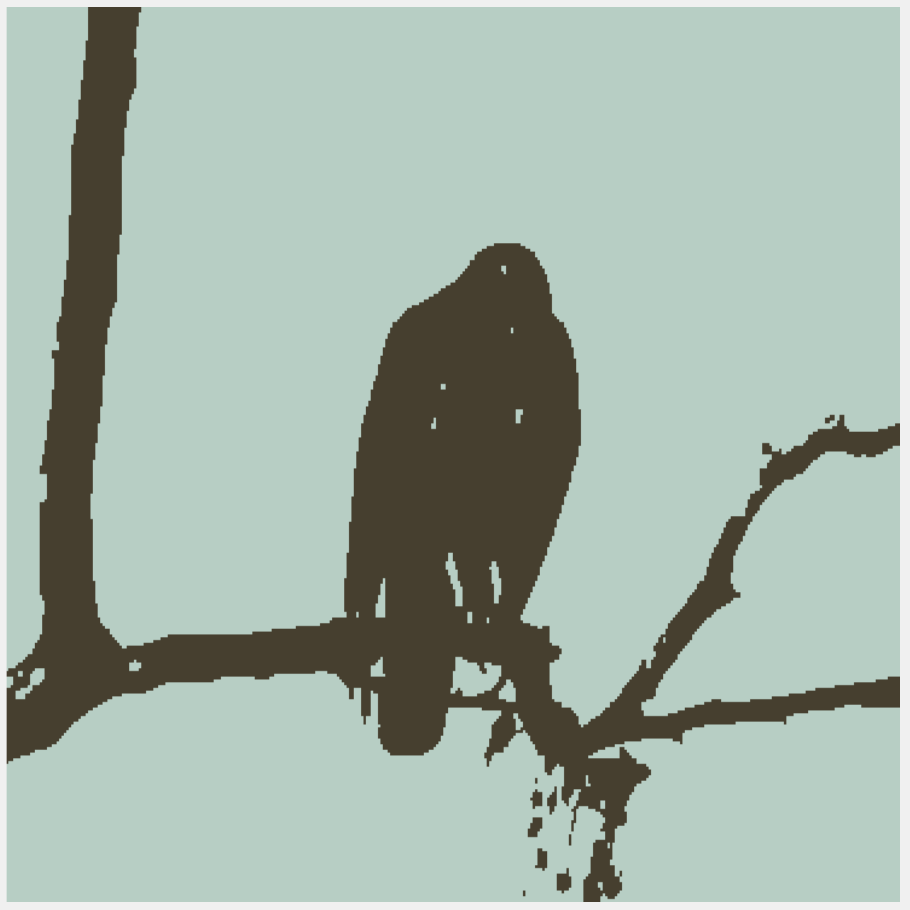}} & \subcaptionbox{$L_1 - 0.75L_2$ FR}{\includegraphics[width = 1.25in]{./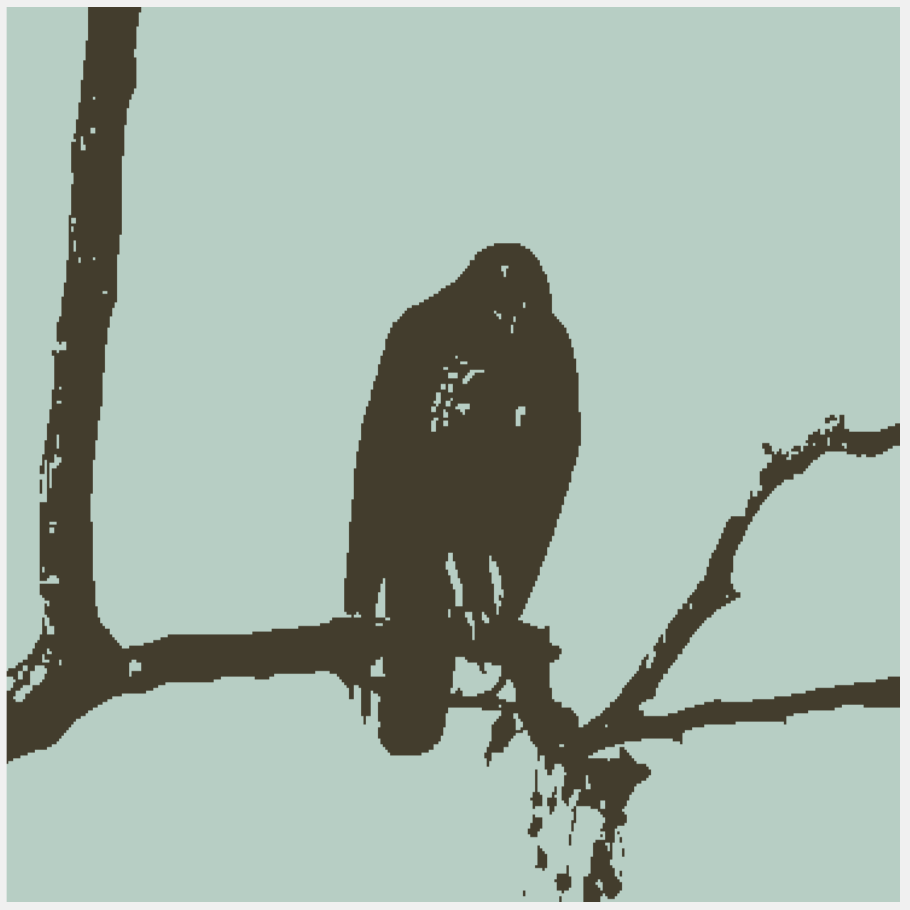}} & \subcaptionbox{$L_1$ FR}{\includegraphics[width = 1.25in]{./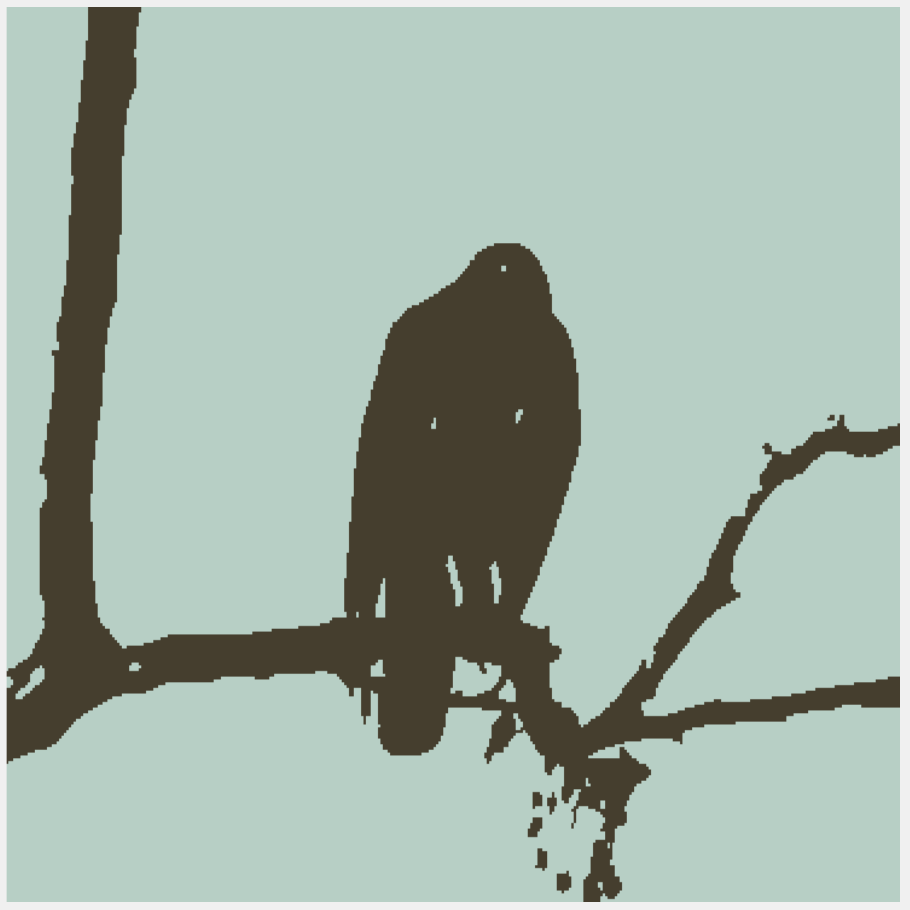}}
		\end{tabular}}
		\caption{Reconstruction results on Figure \ref{fig:hawk}.}
		\label{fig:hawk_result}
	\end{minipage}
	\begin{minipage}{\linewidth}
	\centering
	\resizebox{\textwidth}{!}{%
		\begin{tabular}{c@{}c@{}c@{}c@{}c}
			\subcaptionbox{Original}{\includegraphics[width = 1.00in]{./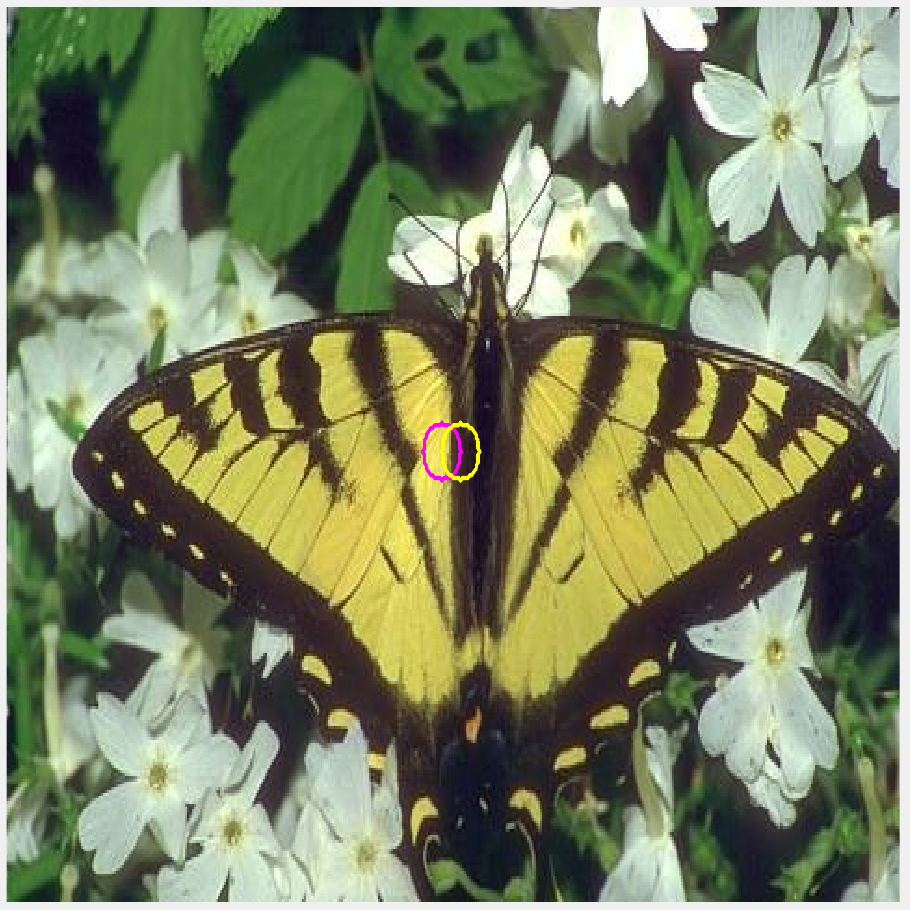}} &			\subcaptionbox{$L_1-0.75L_2$ CV}{\includegraphics[width = 1.00in]{./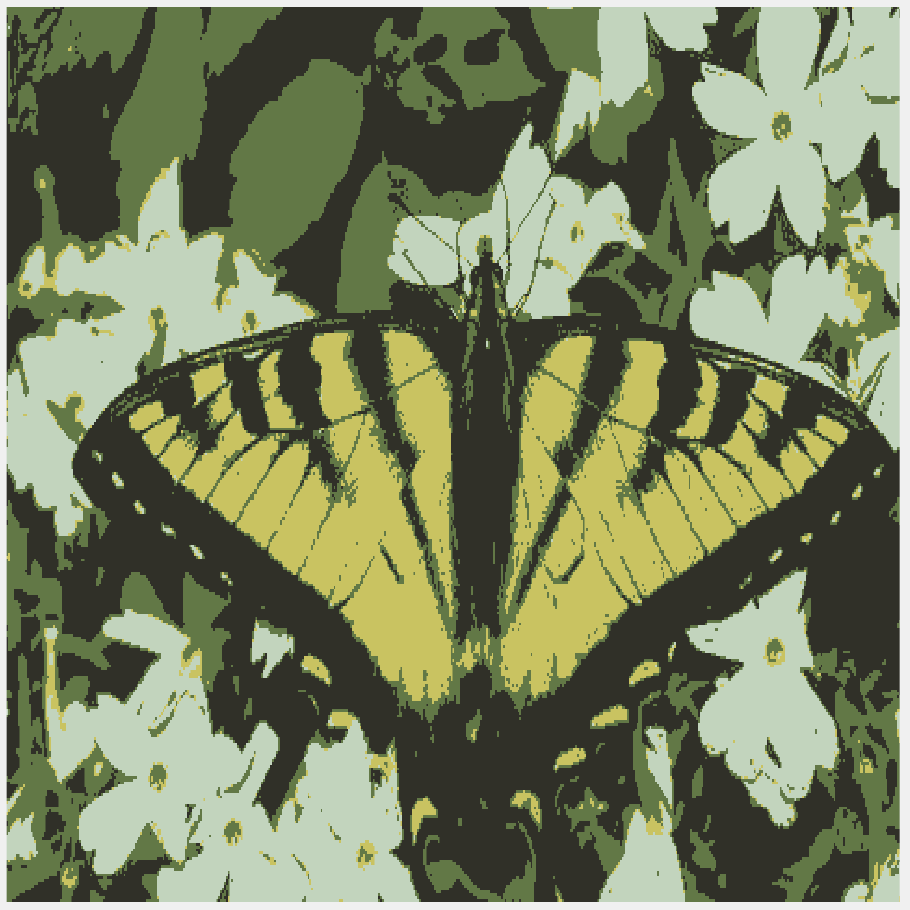}} &
			\subcaptionbox{$L_1$ CV}{\includegraphics[width = 1.00in]{./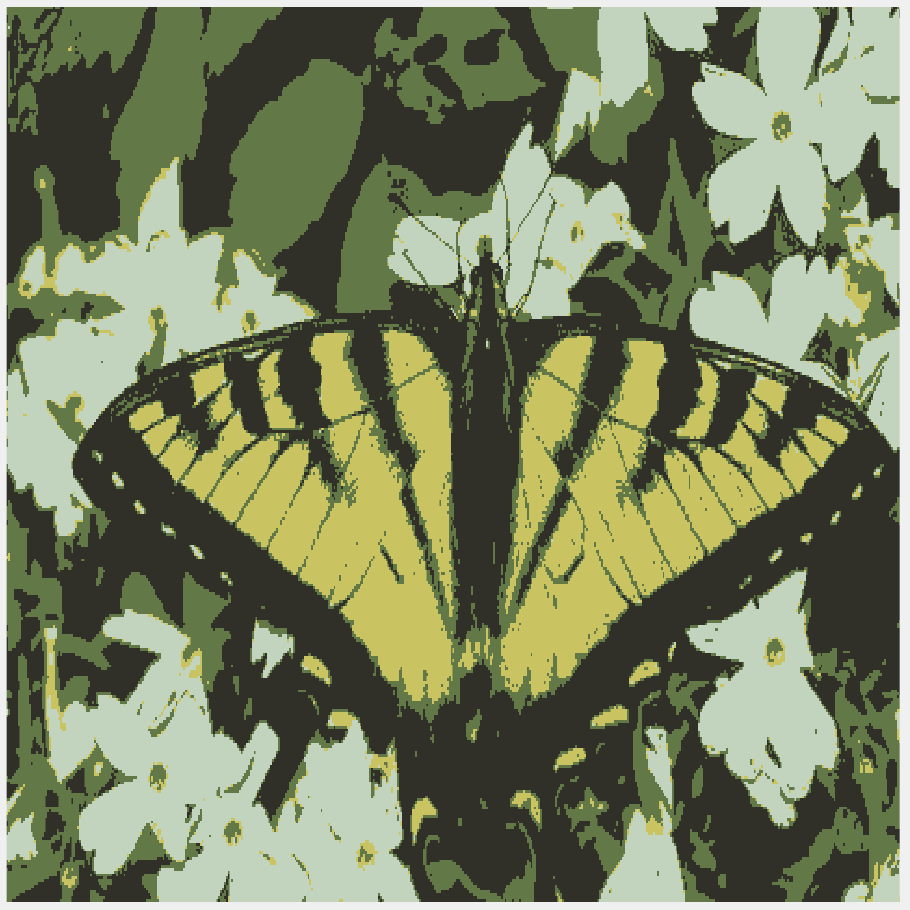}} & 			\subcaptionbox{$L_1 -0.75L_2$ FR}{\includegraphics[width = 1.00in]{./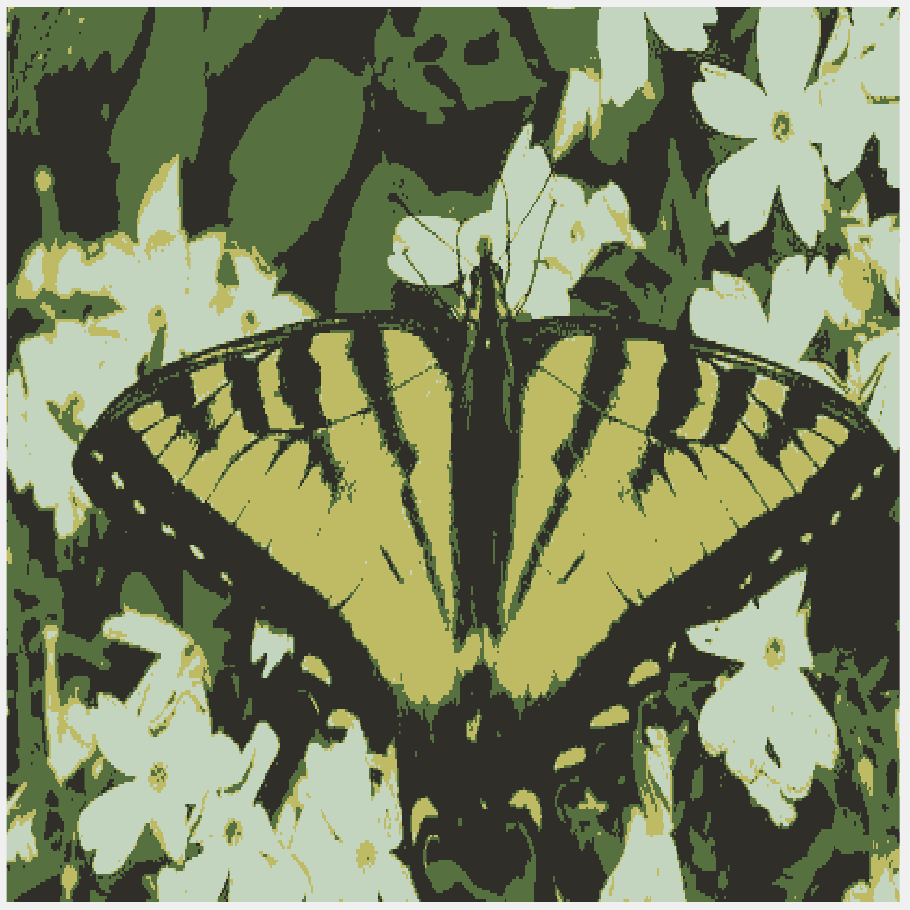}} & \subcaptionbox{$L_1$ FR}{\includegraphics[width = 1.00in]{./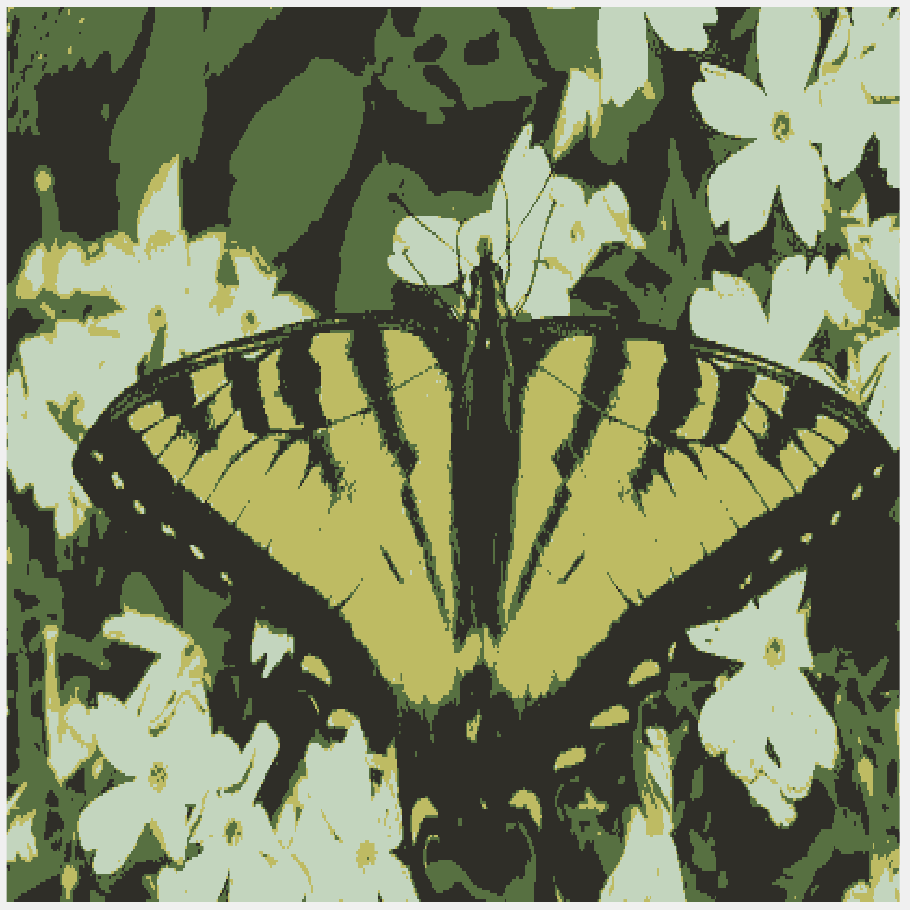}} 
		\end{tabular}}
		\caption{Reconstruction results on Figure \ref{fig:butterfly}. }
		\label{fig:butterfly_result}
	\end{minipage}
\end{figure}

\begin{figure}[t!]	
	\begin{minipage}{\linewidth}
		\centering
		\resizebox{\textwidth}{!}{%
			\begin{tabular}{c@{}c@{}c@{}c@{}c@{}}
				\subcaptionbox{Original}{\includegraphics[width = 1.00in]{./Figure/real_image/flower/flower-eps-converted-to.pdf}} & \subcaptionbox{$L_1-L_2$ FR }{\includegraphics[width = 1.00in]{./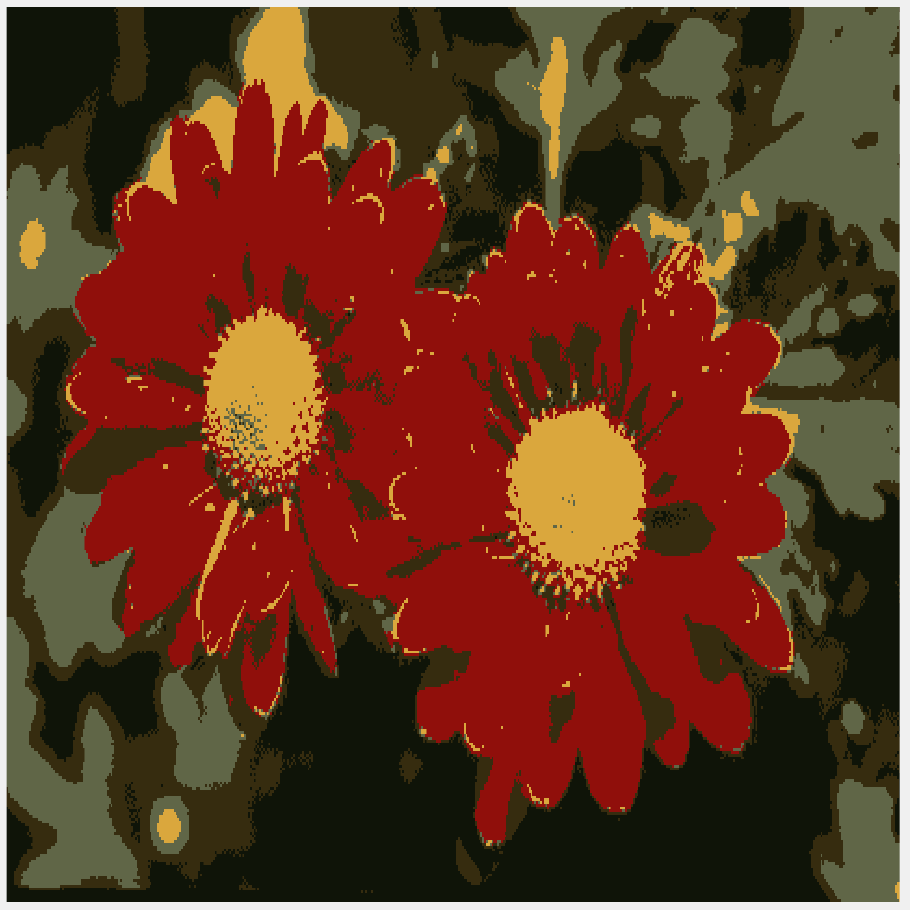}} & \subcaptionbox{$L_1-0.75L_2$ FR}{\includegraphics[width = 1.00in]{./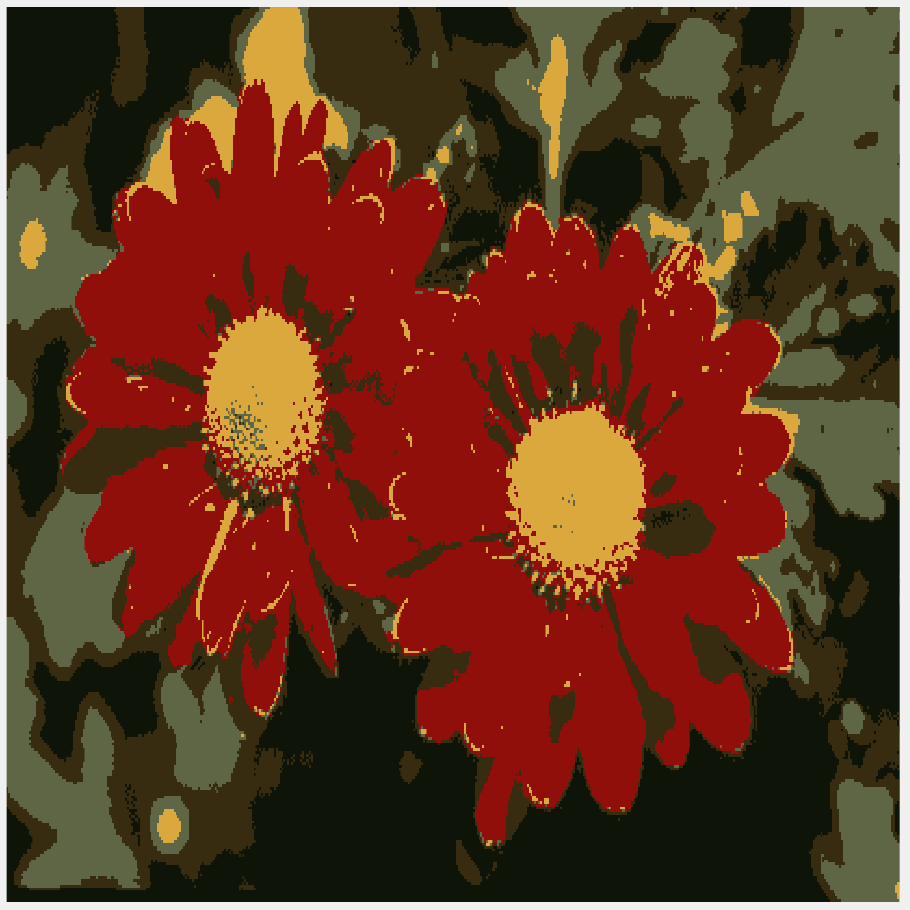}} & \subcaptionbox{$L_1-0.5L_2$ FR}{\includegraphics[width = 1.00in]{./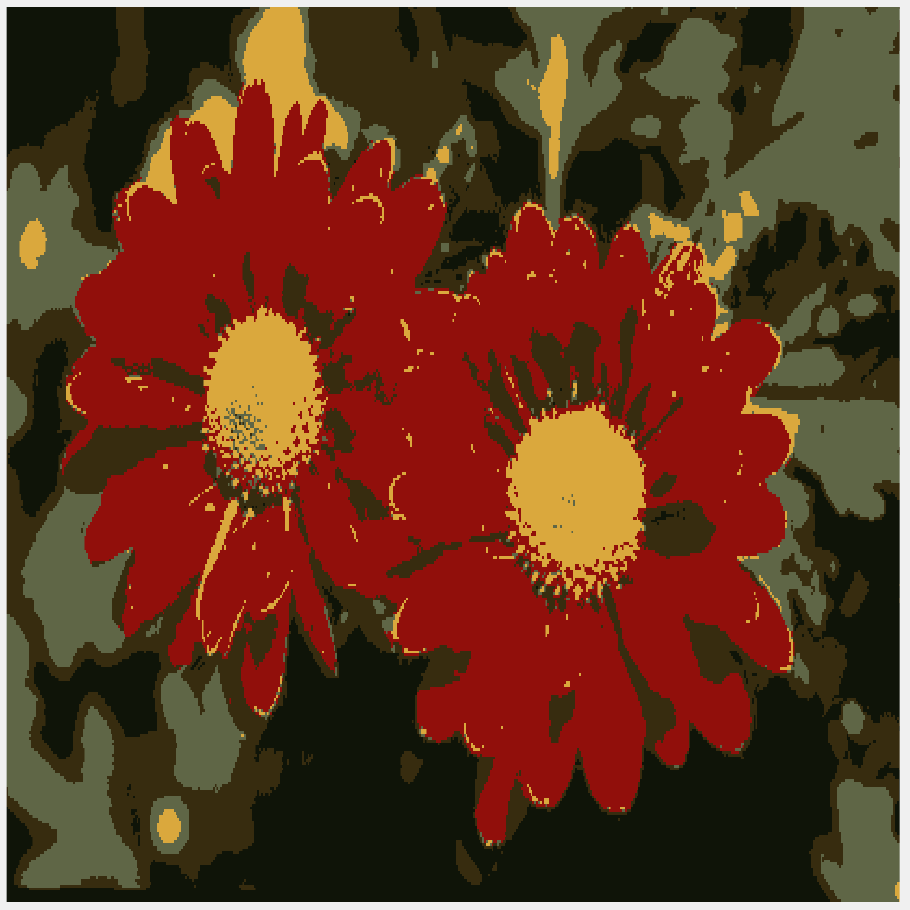}} &
				\subcaptionbox{$L_1-0.25L_2$ FR}{\includegraphics[width = 1.00in]{./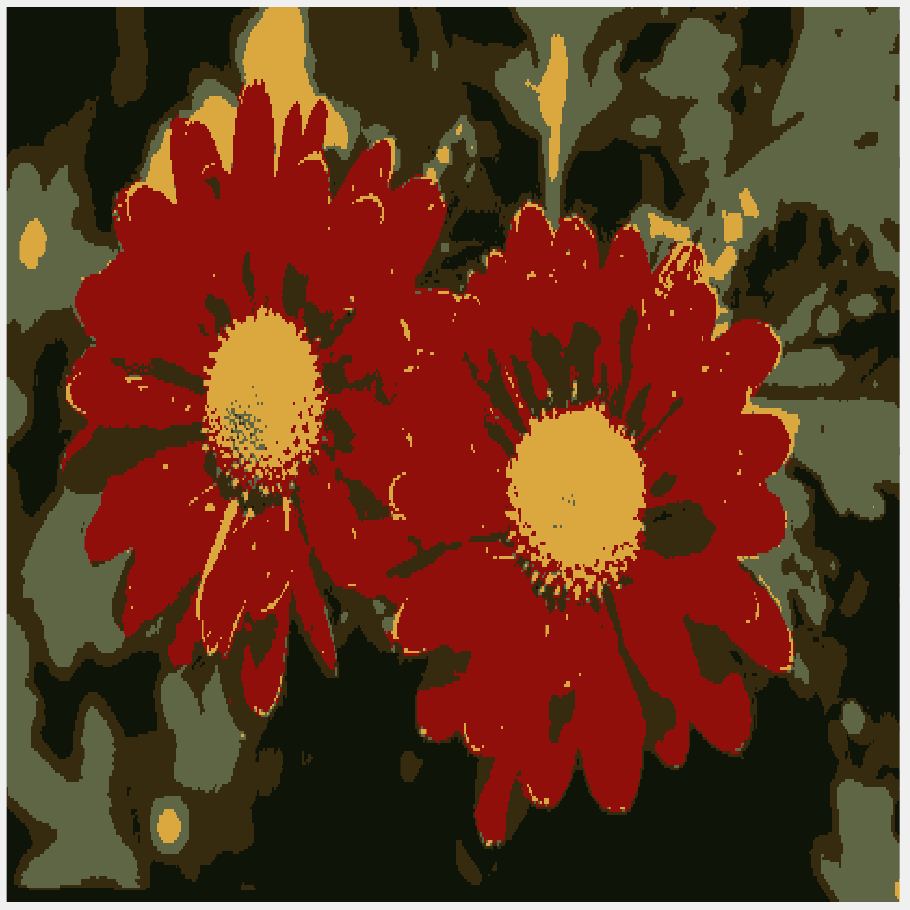}} \\ \subcaptionbox{$L_1$ FR}{\includegraphics[width = 1.00in]{./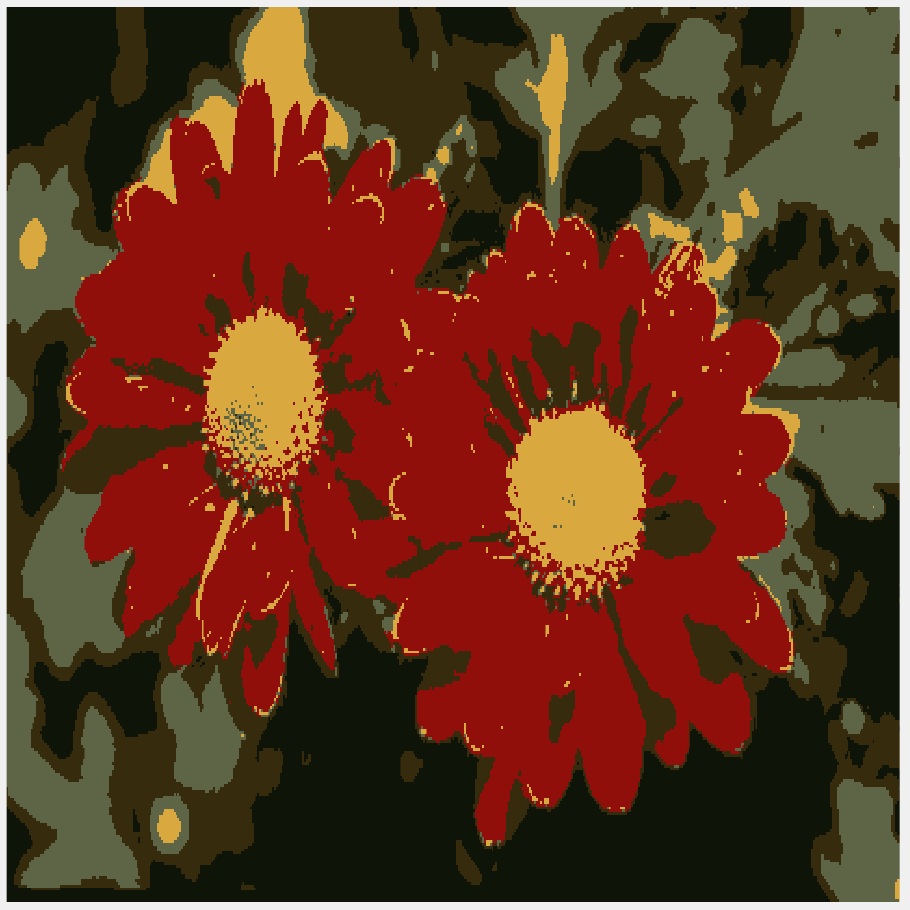}} &
				\subcaptionbox{$L_1+L_2^2$}{\includegraphics[width = 1.00in]{./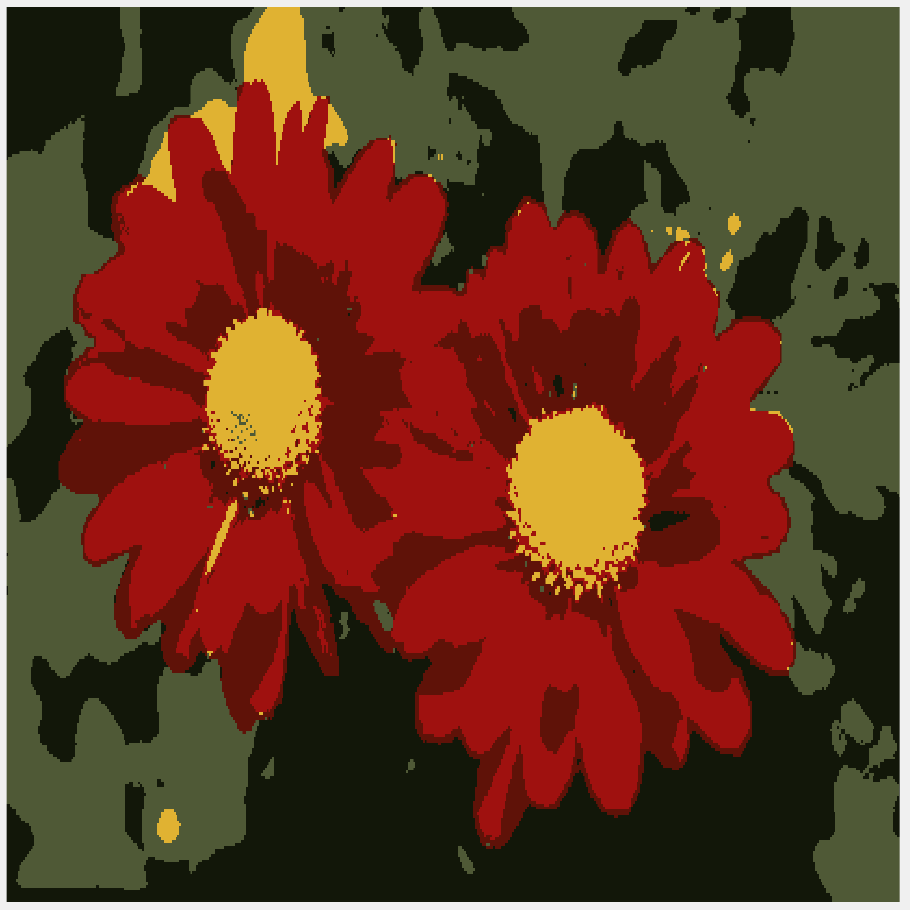}} & \subcaptionbox{$L_0$ \cite{xu2011image}}{\includegraphics[width = 1.00in]{./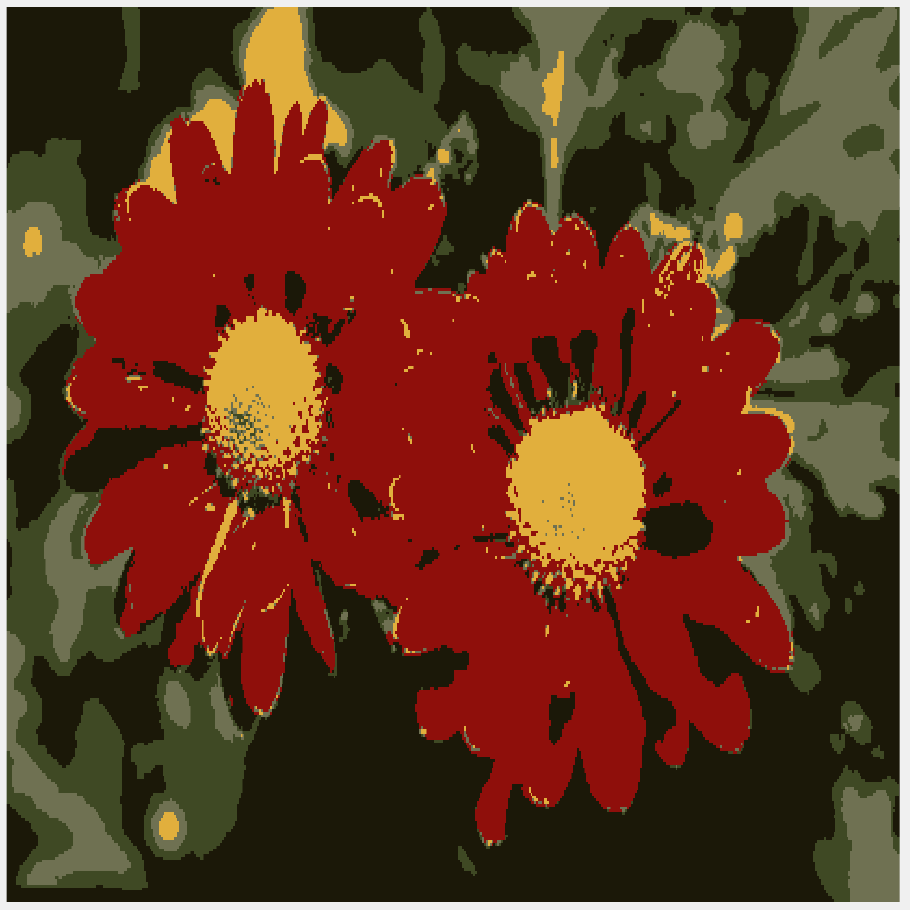}}& \subcaptionbox{$L_0$ \cite{storath2014fast}}{\includegraphics[width = 1.00in]{./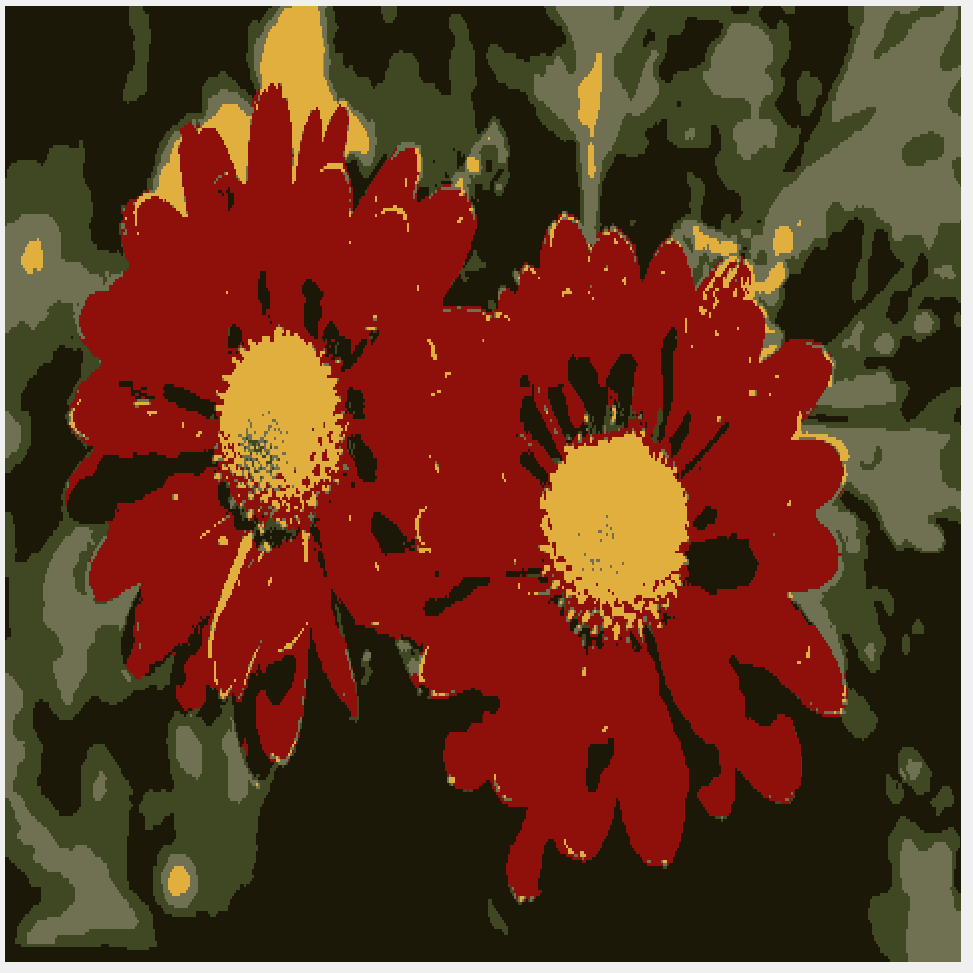}}
				& \subcaptionbox{$R_{MS}$}{\includegraphics[width = 1.00in]{./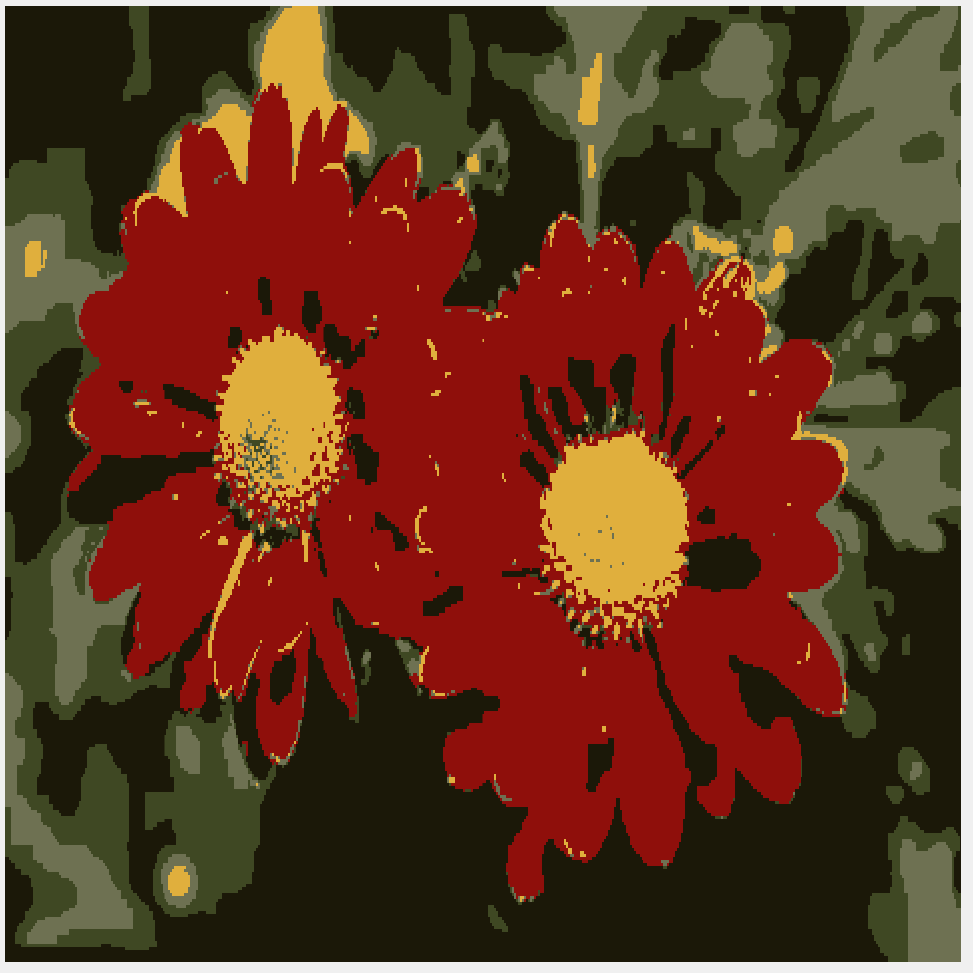}}
		\end{tabular}}
		\caption{Reconstruction results on Figure \ref{fig:flower}. }
		\label{fig:flower_result}
	\end{minipage}
	\begin{minipage}{\linewidth}
	\centering
	\resizebox{\textwidth}{!}{%
		\begin{tabular}{c@{}c@{}c@{}c@{}c@{}}
			\subcaptionbox{Original}{\includegraphics[width = 1.00in]{./Figure/real_image/pepper/pepper-eps-converted-to.pdf}} & \subcaptionbox{$L_1-L_2$ FR }{\includegraphics[width = 1.00in]{./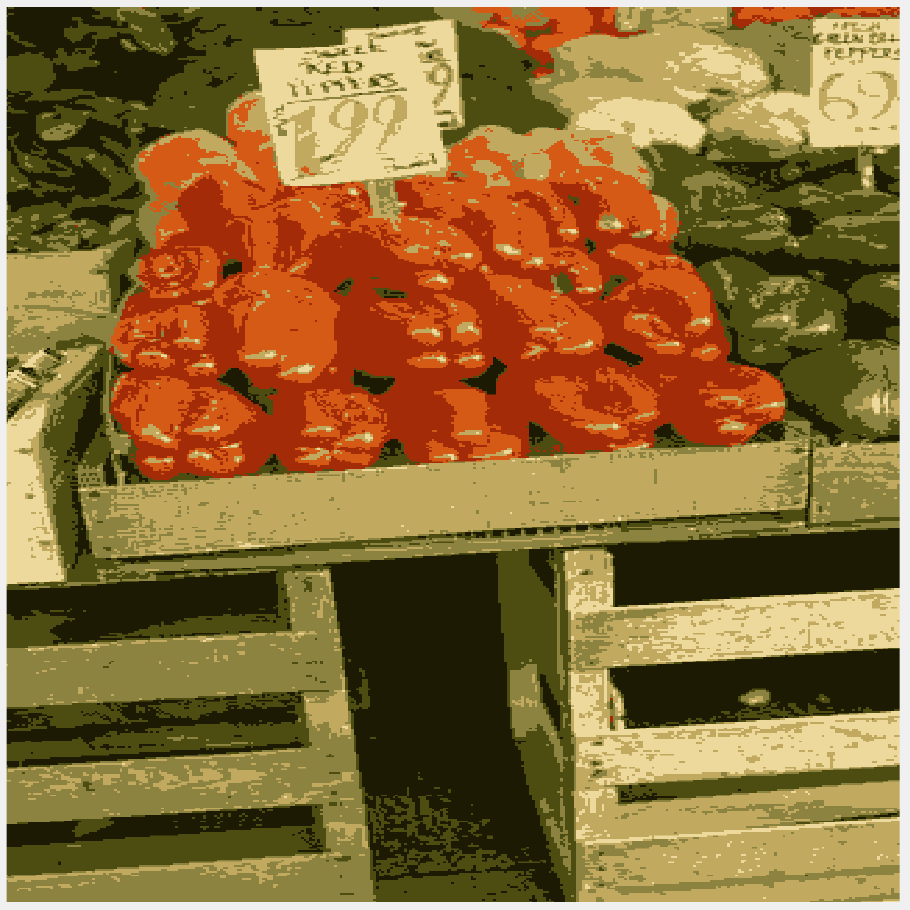}} & \subcaptionbox{$L_1-0.75L_2$ FR}{\includegraphics[width = 1.00in]{./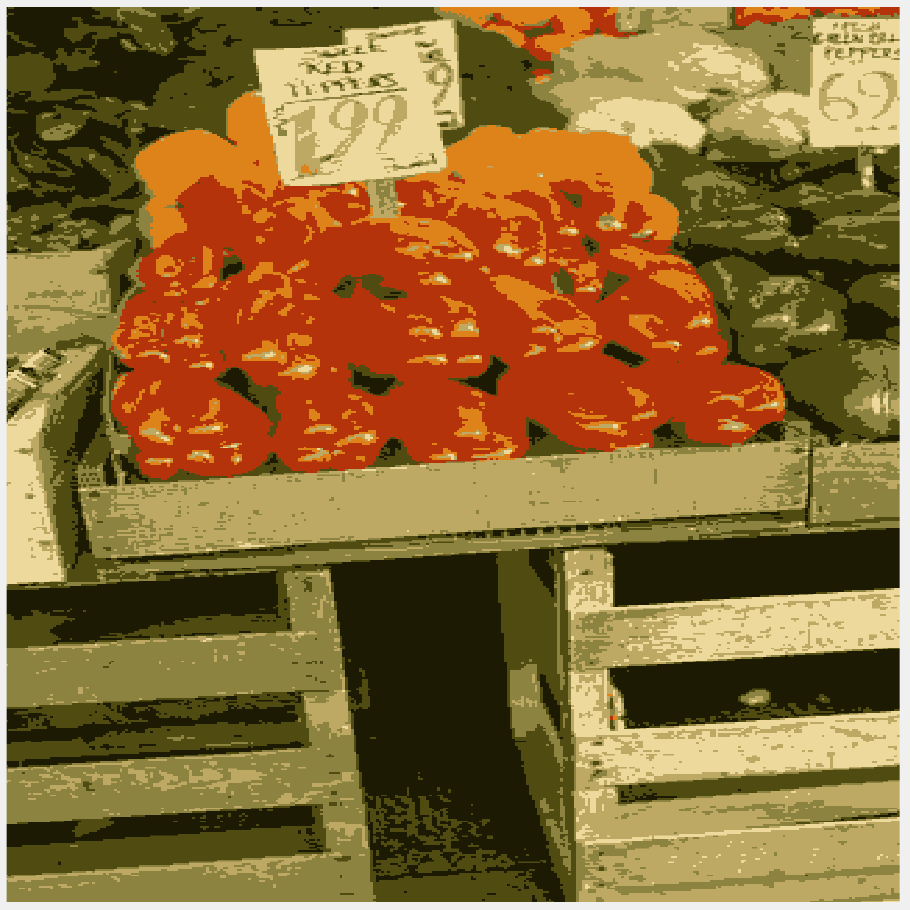}} & \subcaptionbox{$L_1-0.5L_2$ FR}{\includegraphics[width = 1.00in]{./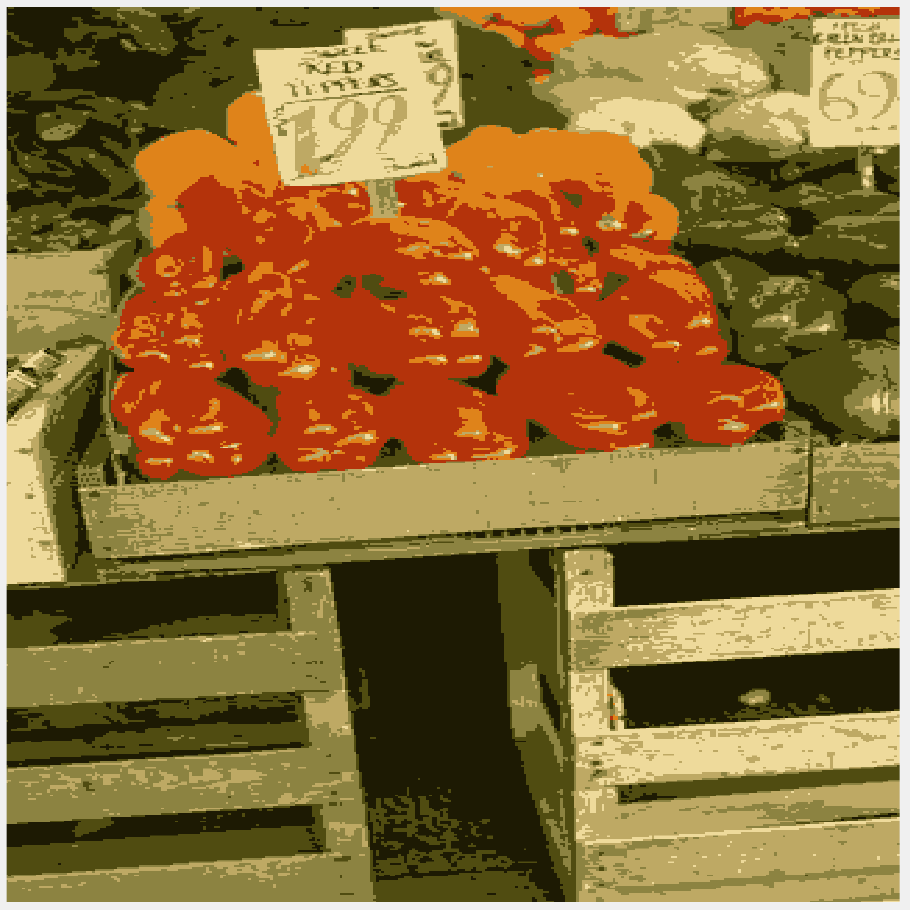}} &
			\subcaptionbox{$L_1-0.25L_2$ FR}{\includegraphics[width = 1.00in]{./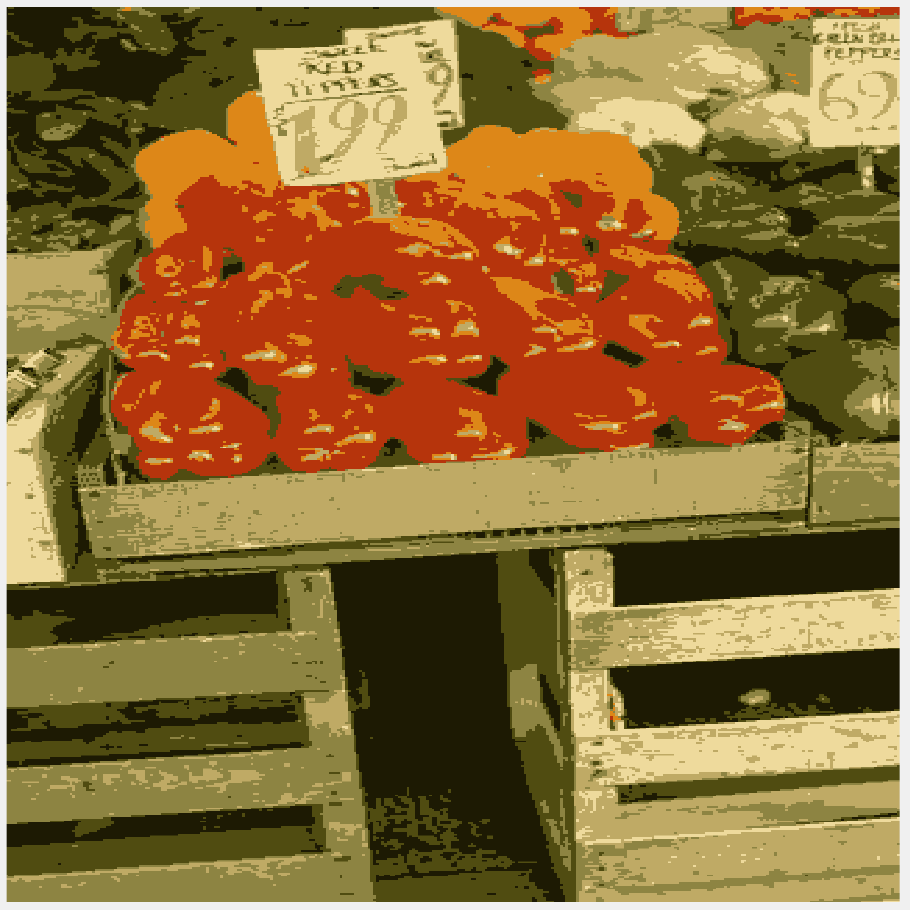}} \\ \subcaptionbox{$L_1$ FR}{\includegraphics[width = 1.00in]{./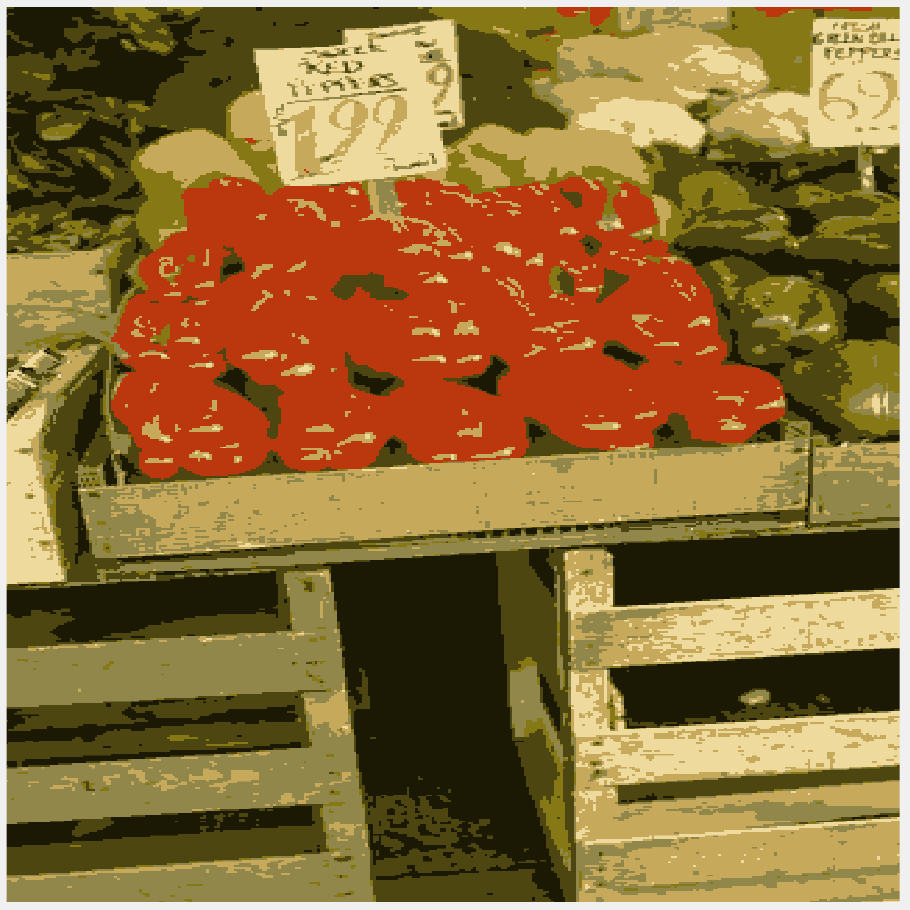}} &
			\subcaptionbox{$L_1+L_2^2$}{\includegraphics[width = 1.00in]{./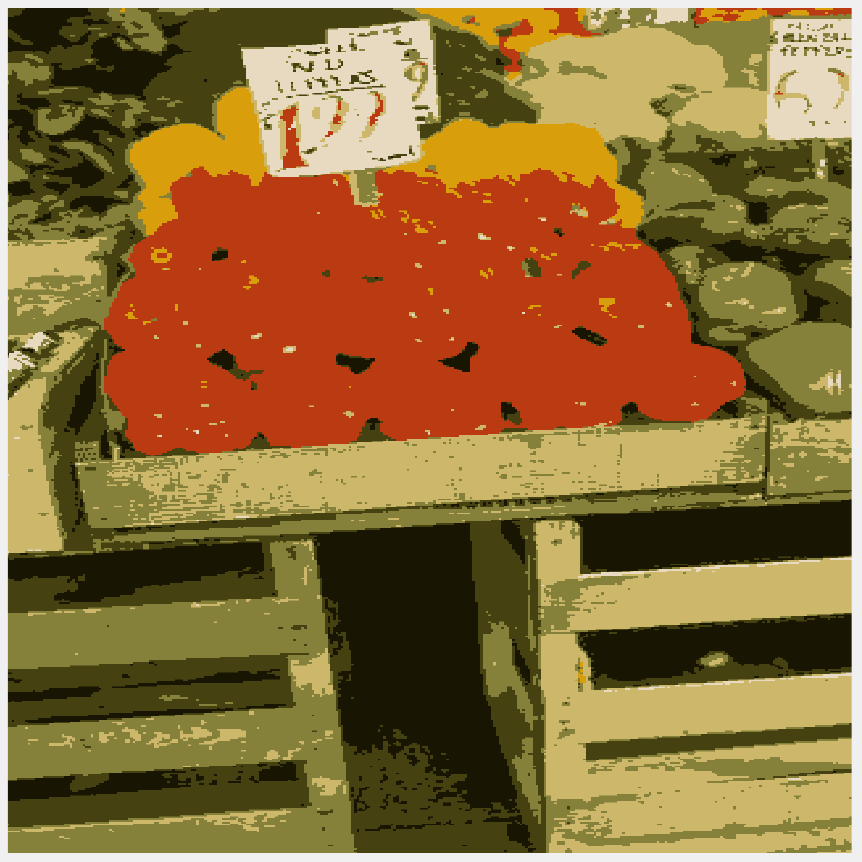}} & \subcaptionbox{$L_0$ \cite{xu2011image}}{\includegraphics[width = 1.00in]{./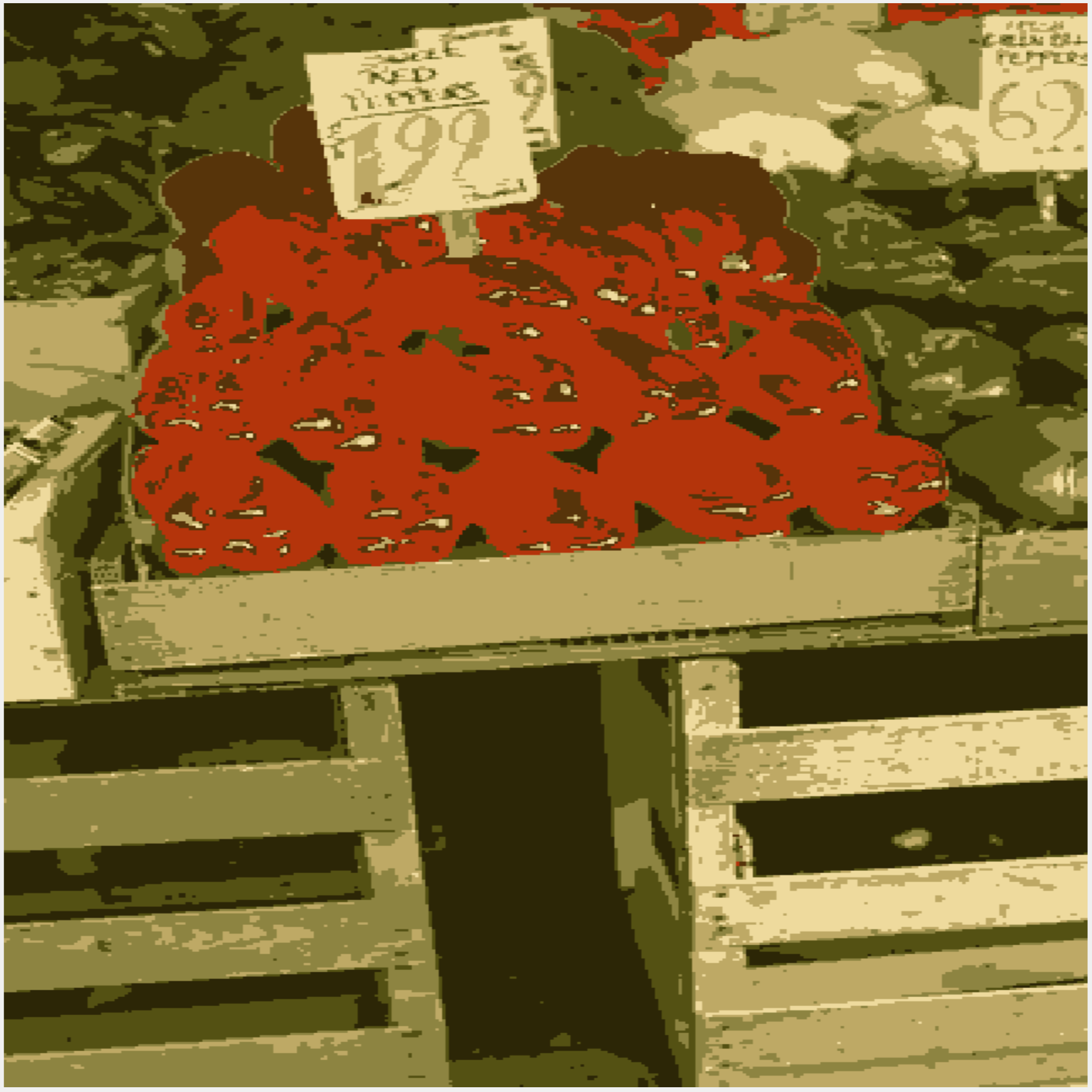}}& \subcaptionbox{$L_0$ \cite{storath2014fast}}{\includegraphics[width = 1.00in]{./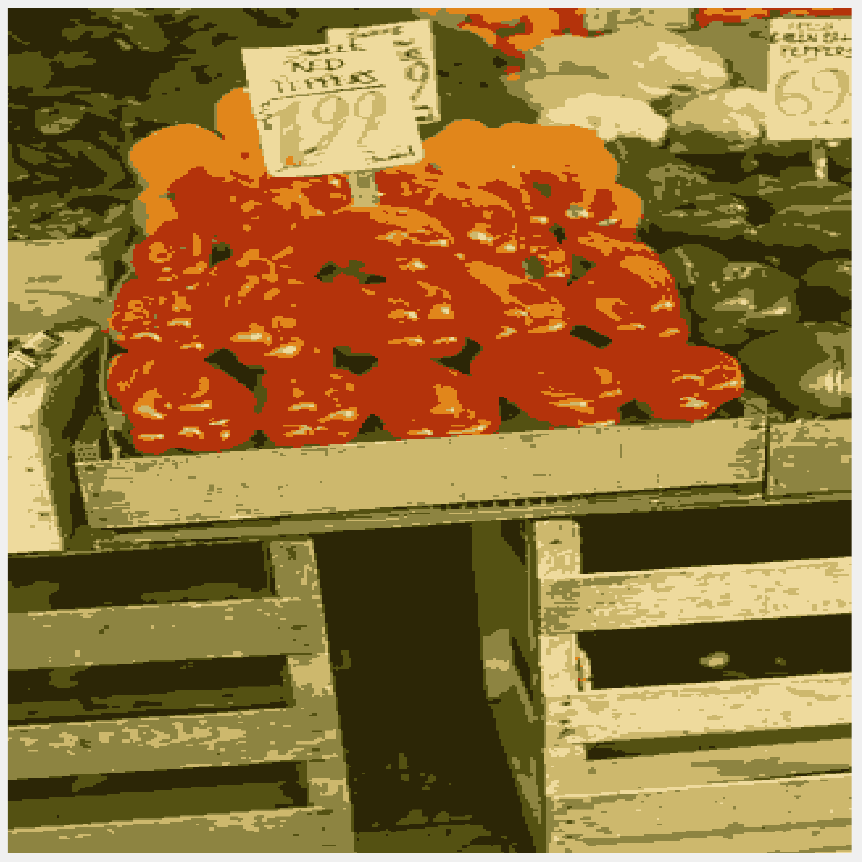}}& \subcaptionbox{$R_{MS}$}{\includegraphics[width = 1.00in]{./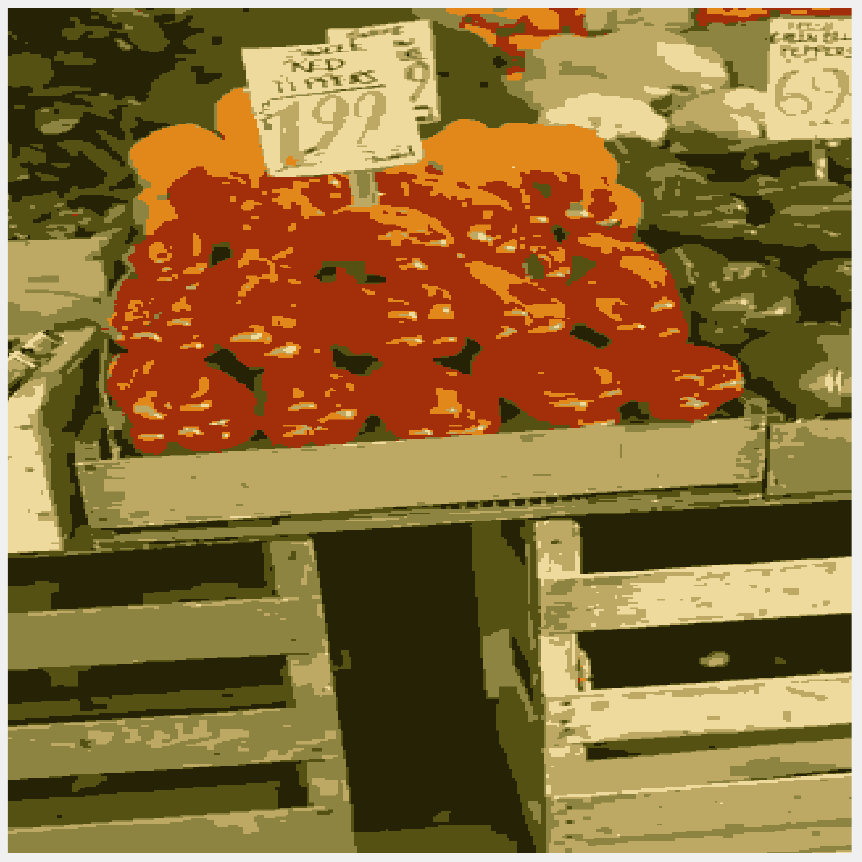}}
	\end{tabular}}
	\caption{Reconstruction results on Figure \ref{fig:pepper}. }
	\label{fig:pepper_result}
\end{minipage}
\end{figure}

\section{Conclusions and Future Works}
\label{sec:conclusions}
In this paper, we proposed AICV and AIFR models for piecewise-constant segmentation that can deal with both grayscale and color images. We developed alternating minimization algorithms utilizing DCA and PDHGLS to efficiently solve the models. Convergence analyses were provided to demonstrate that the objective functions were monotonically decreasing and to validate the efficacy of the algorithms. Numerical results illustrated that the AICV/AIFR models outperform their anisotropic counterparts on various  images in a robust manner. The segmentation results are comparable and sometimes better than those of the two-stage segmentation methods.

    In the future, we will consider the application of the weighted anisotropic-isotropic penalty to other types of segmentation approaches, such as piecewise-smooth formulations \cite{jung2017piecewise,le2007additive}, the Potts models \cite{pock2008convex,storath2015joint,wei2017new}, the fuzzy region model \cite{li2016multiphase}, and deep learning techniques \cite{jia2021regularized,jia2020nonlocal,kim2019mumford}. Since the two-stage methods are generally faster to run than our methods, we will leave the acceleration as a future work. Another future direction involves segmenting blurry images by combining our proposed models with some deblurring techniques. The numerical experiments demonstrated that there is no optimal, universal $\alpha$ for all images, which motivates us to develop an automatic method 
to select $\alpha$ for any given image in the future. As AICV/AIFR models indicate the success of using nonconvex penalty terms in image processing, we aim at other nonconvex penalties, such as transformed $L_1$ \cite{nikolova2000local, zhang2018minimization} and $L_1/L_2$ \cite{rahimi2019scale,wang2019accelerated}, for  image segmentation and other imaging problems including denoising and deblurring.
\section*{Acknowledgments}
We would like to thank the anonymous referees for their useful suggestions and feedback, which significantly improved the presentation of the paper.
\bibliographystyle{plain}
\bibliography{references}
\end{document}